\documentclass[twoside,11pt]{article}

\usepackage{amsmath}
\usepackage[preprint]{jmlr2e}
\usepackage{bm}
\usepackage{algorithm}
\usepackage{algpseudocode}
\usepackage{titletoc}

\hypersetup{hidelinks}

\newcommand{\stchlm}{\mathrm{STCH}^{\mu}_{\boldsymbol{\lambda}}}
\DeclareMathOperator*{\argmin}{arg\,min}

\newtheorem{assumption}[theorem]{Assumption}

\newfloat{subroutine}{tbp}{los}
\floatname{subroutine}{Subroutine}

\newcommand{\bw}{\bm{w}}
\DeclareMathOperator*{\argmax}{arg\,max}
\DeclareMathOperator*{\minimize}{minimize}

\usepackage{lastpage}
\jmlrheading{}{}{}{ Revised}{}{}{2026 Chenxi Li, Yao Shu, Siyang Gao, Shuang Qiu, Guangxin Jiang}

\ShortHeadings{Safe Multi-Objective Reinforcement Learning}{Li, Shu, Gao, Qiu and Jiang}
\firstpageno{1}

\begin{document}

\title{Safe Multi-Objective Reinforcement Learning: Zero Constraint Violation and Primal-Averaging Convergence}

\author{\name Chenxi Li \email ling112358@gmail.com \\
\addr 
Harbin Institute of Technology\\
Harbin, Heilongjiang, China
\AND
\name Yao Shu \email yaoshu@hkust-gz.edu.cn \\
\addr 
The Hong Kong University of Science and Technology (Guangzhou)\\
Guangzhou, Guangdong, China
\AND
\name Siyang Gao \email siyangao@cityu.edu.hk \\
\addr 
City University of Hong Kong\\
Hong Kong, China
\AND
\name Shuang Qiu$^\dagger$ \email shuanqiu@cityu.edu.hk \\
\addr 
City University of Hong Kong\\
Hong Kong, China
\AND
\name Guangxin Jiang$^\dagger$ \email gxjiang@hit.edu.cn \\
\addr 
Harbin Institute of Technology\\
Harbin, Heilongjiang, China
}

\editor{}

\maketitle

\begingroup
\renewcommand{\thefootnote}{\textdagger}
\footnotetext[0]{Corresponding authors}
\endgroup

\begin{abstract}
Many applications of multi-objective reinforcement learning (MORL) require policies that balance conflicting objectives while satisfying safety constraints.
We study safe MORL with zero constraint violation throughout learning, covering both cumulative and stepwise constraints.
A key challenge is to explore sufficiently under unknown dynamics and constraint costs without violating the constraints.
For a given preference over objectives, we develop an algorithm to learn a safe Pareto optimal policy by combining optimistic reward estimates with pessimistic cost bounds.
With high probability, every policy executed over $T$ episodes incurs zero constraint violation.
The algorithm returns a safe Markov policy induced by the averaged occupancy measure and achieves an $\widetilde{\mathcal O}\!(1/\sqrt T)$ suboptimality gap under smooth Tchebycheff scalarization.
We further develop a preference-free algorithm that first explores safely without a prespecified preference and then computes a safe policy for any given preference without further interaction.
Over $T$ exploration episodes and $K$ planning iterations, it retains the same safety guarantee and achieves an $\widetilde{\mathcal O}\!(1/\sqrt T+1/\sqrt K)$ suboptimality gap.
Both algorithms require knowledge of a policy that strictly satisfies the constraints and its costs.
We then relax these requirements, incurring at most constant cumulative violation.
To our knowledge, we provide the first sample-efficient MORL algorithms that guarantee safety for every executed policy.
\end{abstract}

\begin{keywords}
  Multi-Objective Reinforcement Learning,
  Safe Reinforcement Learning,
  Constrained Markov Decision Processes,
  Smooth Tchebycheff Scalarization
\end{keywords}

\section{Introduction}

Reinforcement learning (RL) studies sequential decision-making through interaction with an environment to maximize cumulative rewards~\citep{sutton1998reinforcement}.
In many real-world applications, however, the learner must simultaneously optimize multiple, potentially conflicting objectives such as efficiency, energy consumption, and fairness.
This motivates \emph{multi-objective reinforcement learning} (MORL)~\citep{roijers2013survey,liu2014multiobjective}, where the learner seeks policies that perform well under different trade-offs among objectives.
Since a single policy that maximizes all objectives rarely exists, MORL commonly targets \emph{Pareto optimal} policies, i.e., policies for which no objective can be improved without degrading another~\citep{van2014multi,qiu2024traversing}.
For a given preference over the objectives, the goal is to learn a Pareto optimal policy that reflects the desired trade-off.
In \emph{preference-based} learning, the preference is fixed before exploration.
Learning under a different preference may require additional environment interaction.
\emph{Preference-free} learning removes this need by exploring the environment once without a prespecified preference and then computing a policy for any given preference from the collected data~\citep{wu2021accommodating}.

In safety-critical applications of MORL, the learner must balance multiple objectives while satisfying explicit safety requirements, such as avoiding collisions in autonomous driving, respecting resource and actuation budgets in robotics, or keeping operating temperatures within prescribed limits in industrial control~\citep{he2023toward,tang2023constrained}.
Existing safe MORL methods can be broadly classified into two categories.
The first treats safety as an additional objective, or a set of objectives, augmenting the reward vector with costs and learning trade-offs via scalarization or preference learning~\citep{huang2022constrained}.
This strategy does not in general guarantee safety, since whether the learned policy satisfies the safety requirements depends on the chosen preference or scalarization~\citep{achiam2017constrained}.
The second category imposes safety as explicit constraints while optimizing multi-objective performance, leading to \emph{constrained MORL} frameworks~\citep{kim2025conflict,park2026constrained}.
Under cumulative safety, these constraints limit the expected total cost over an episode, whereas under stepwise safety, they limit the cost at each step.
A \emph{safe Pareto optimal policy} is Pareto optimal within the class of policies satisfying these constraints.
The objective value vectors of such policies form the \emph{safe Pareto front}.
However, existing constrained MORL methods may still violate the constraints during learning~\citep{gu2025safe}.
In constrained RL, prior work has established sublinear or constant bounds on cumulative constraint violation~\citep{qiu2020upper,liu2021learning,stradi2025learning}.
Some methods further assume knowledge of a strictly safe policy, which satisfies every constraint with strict inequality.
Under this assumption, they ensure that every executed policy satisfies the constraints with high probability, yielding zero constraint violation throughout learning~\citep{liu2021learning,bura2022dope,stradi2025learning}.
However, the literature on constrained MORL still lacks a unified, sample-efficient framework that guarantees zero constraint violation throughout learning under either cumulative or stepwise safety constraints.
This leads to the following question:

{\centering\emph{
Can we design provably sample-efficient algorithms for safe MORL that traverse the safe Pareto front while guaranteeing zero constraint violation?} \par}

This problem involves three main challenges.
First, safety requirements conflict with the need for exploration when costs and dynamics are unknown.
The learner may not know whether an exploratory policy is safe before execution, yet every policy executed during learning must satisfy the safety constraints.
If the learner executes only policies already known to be safe, the resulting data may be insufficient to identify safe Pareto optimal policies.
Second, the set of safe Pareto optimal policies cannot generally be obtained by removing unsafe policies from the unconstrained Pareto optimal set.
A policy can be Pareto optimal among safe policies even when it is dominated by an unsafe policy.
To obtain a safe Pareto optimal policy for a given preference, we adopt the smooth Tchebycheff scalarization~\citep{lin2024smooth,qiu2024traversing}, which measures each objective relative to its optimal value over the safe policy class.
These optimal values are unknown, and estimating them requires solving an auxiliary constrained RL problem for each objective while maintaining safety throughout learning.
Finally, it is even more challenging to explore the environment only once and then compute a safe Pareto optimal policy for any subsequently specified preference.
Since no preference is available during exploration, the collected data must suffice for every preference.
At the same time, these data should be collected only by safe policies, which restrict the state-action pairs that the learner can visit.

To address the above challenges, we propose \texttt{CSTRL} (Constrained Smooth Tchebycheff Reinforcement Learning) for preference-based learning.
Its mirror-ascent updates use optimistic reward estimates to encourage exploration, while pessimistic cost bounds are used to maintain safety.
We further develop \texttt{PF-CSTRL} (Preference-Free \texttt{CSTRL}), which uses reward and transition estimation uncertainty to guide safe exploration and collect data for planning under any subsequently specified preference.
Both algorithms require prior knowledge of a strictly safe policy and its constraint costs.
We also develop extensions that relax these requirements.

\vspace{5pt}
\noindent\textbf{Contributions.} 
Our main contributions are as follows.
\textbf{(i)} We propose \texttt{CSTRL} and \texttt{PF-CSTRL} for preference-based and preference-free safe MORL, respectively.
To our knowledge, these are the first sample-efficient MORL algorithms that guarantee zero constraint violation for every policy executed during learning.
Both algorithms combine optimistic mirror-ascent updates of occupancy measures with pessimistic cost bounds under either cumulative or stepwise safety constraints.
\textbf{(ii)} We establish that, over \(T\) learning episodes, \texttt{CSTRL} guarantees zero constraint violation with high probability under either safety formulation for every executed policy.
It returns a safe Markov policy induced by the \emph{primal average} of the occupancy measures of these policies and achieves a suboptimality gap of $\widetilde{\mathcal{O}}(1/\sqrt{T})$ under smooth Tchebycheff scalarization.
Using \(T\) exploration episodes followed by \(K\) planning iterations, \texttt{PF-CSTRL} maintains the same safety guarantee throughout both stages and achieves a suboptimality gap of $\widetilde{\mathcal{O}}(1/\sqrt{T}+1/\sqrt{K})$.
\textbf{(iii)} We extend both algorithms under weaker assumptions about prior knowledge.
When a strictly safe policy is known but its costs are unknown, we estimate its costs conservatively while preserving zero constraint violation.
When both the policy and its costs are unknown, we learn a strictly safe policy and a conservative estimate of its costs in an estimation phase and incur only constant cumulative violation.

\vspace{5pt}
\noindent\textbf{Related Work.}
MORL studies sequential decision-making problems with multiple, potentially conflicting objectives and seeks policies that represent different trade-offs among them~\citep{roijers2013survey,liu2014multiobjective,hayes2022practical}.
A substantial line of work characterizes the Pareto front using scalarization-based formulations, including linear and Tchebycheff scalarizations, and develops methods for learning Pareto optimal policy sets~\citep{roijers2013survey,van2014multi,hayes2022practical,kubo2026single}.
More recent work studies preference-based methods that learn policies for given preferences and preference-free methods that compute policies for any preference specified after exploration~\citep{yang2019generalized,qiu2024traversing,hu2026near}.
However, most existing MORL approaches are developed for unconstrained environments and therefore do not provide explicit safety guarantees~\citep{hayes2022practical,qiu2024traversing}.

Constrained reinforcement learning studies sequential decision-making under explicit safety or resource constraints, most commonly formalized as constrained Markov decision processes~\citep{altman2021constrained}.
Classical approaches enforce expected cumulative constraints over an episode via Lagrangian or primal-dual methods, providing asymptotic or averaged feasibility guarantees but allowing violations during learning~\citep{achiam2017constrained,tessler2018reward,wei2020online}.
To obtain stricter guarantees, recent works adopt conservative or pessimistic designs that achieve zero or bounded cumulative constraint violation, often by tightening constraints or assuming access to a strictly safe policy~\citep{liu2021learning,vaswani2022near,bai2023achieving,ni2026provably}.
Other works incorporate optimism in the face of uncertainty to enable efficient exploration under safety constraints~\citep{qiu2020upper,liu2021learning,ding2023provably,wendl2026safe}, and further investigate stepwise safety~\citep{xiong2023provably,zhao2024absolute}.
However, existing methods are largely limited to single-objective settings and do not address the challenges of multi-objective or preference-free learning.

\section{Preliminaries}
\label{sec:Pre}

We consider a finite-horizon constrained multi-objective Markov decision process.
It is defined by the tuple $(\mathcal{S}, \mathcal{A}, H, P, \boldsymbol{r}, \boldsymbol{c}, \rho)$, where $\mathcal{S}$ and $\mathcal{A}$ are the finite state and action spaces. 
The horizon $H$ is the number of steps in each episode.
The transition kernel is $P:=\{P_h\}_{h=1}^H$ with $P_h:\mathcal{S}\times\mathcal{A}\times\mathcal{S}\mapsto[0,1]$, and $P_h(s'|s,a)$ denotes the probability of transitioning to state $s'\in\mathcal{S}$ from state $s\in\mathcal{S}$ after taking action $a\in\mathcal{A}$ at step $h\in[H]$.
The reward function is vector-valued with $m$ objectives, denoted by $\boldsymbol{r}:=(r_1,\dots,r_m)$, where $r_i:=\{r_{i,h}\}_{h=1}^H$ for every $i\in[m]$.
Each $r_{i,h}:\mathcal{S}\times\mathcal{A}\mapsto[0,1]$ maps a state-action pair to the reward for the $i$-th objective at step $h$.
Moreover, the constraint cost function is vector-valued with $n$ constraints, denoted by $\boldsymbol{c}:=(c_1,\dots,c_n)$.
We further define $c_j:=\{c_{j,h}\}_{h=1}^H$, where each $c_{j,h}:\mathcal{S}\times\mathcal{A}\mapsto[0,1]$ maps a state-action pair to the cost for the $j$-th constraint at step $h$.
The initial state distribution is denoted by $\rho$.
The learner follows a policy $\pi:=\{\pi_h\}_{h=1}^H$, where $\pi_h(a|s)$ represents the probability of taking action $a\in\mathcal A$ in state $s\in\mathcal S$ at step $h\in[H]$.
Let $\Pi$ denote the class of all stochastic Markov policies.
For any $\pi\in\Pi$, we define the value of $\pi$ for the $i$-th objective as
$V_i^r(\pi)
:=
\mathbb{E}[
\sum_{h=1}^H r_{i,h}(s_h,a_h)|
\rho,\pi,P]$ and the value of $\pi$ for the $j$-th constraint as $V_j^c(\pi)
:=
\mathbb{E}[\sum_{h=1}^H c_{j,h}(s_h,a_h)|\rho,\pi,P]$.

In each episode $t\in[T]$, the learner follows the policy $\pi^t$ and generates a trajectory $\{(s_h^t,a_h^t)\}_{h=1}^H$.
At each step $h\in[H]$, the learner observes a stochastic reward $r_{i,h}^t\in[0,1]$ for each objective $i\in[m]$ and a stochastic constraint cost $c_{j,h}^t\in[0,1]$ for each constraint $j\in[n]$.
These observations satisfy $\mathbb{E}[r_{i,h}^t|s_h^t,a_h^t]=r_{i,h}(s_h^t,a_h^t)$ and $\mathbb{E}[c_{j,h}^t|s_h^t,a_h^t] =c_{j,h}(s_h^t,a_h^t)$.
The reward function $\boldsymbol{r}$, the constraint cost function $\boldsymbol{c}$, and the transition kernel $P$ are unknown. 
Under such a scenario, we aim to solve the following constrained optimization problem:
\begin{align*}
\max_{\pi\in\Pi}&~~ V_1^r(\pi),~V_2^r(\pi),\cdots, ~V_m^r(\pi)\\    
\text{s.t.} &~~V_j^c(\pi) \leq \tau_j, \quad \forall j\in[n], 
\end{align*}
where $\tau_j$ is the threshold for constraint $j$, and $\boldsymbol\tau:=(\tau_1,\ldots,\tau_n)$ collects these thresholds.

\vspace{5pt}
\noindent\textbf{Occupancy Measure.} 
In this work, we employ the occupancy measure, i.e., the joint distribution over the state, action, and next state, to formulate the optimization problem. 
Formally, an occupancy measure is defined as follows~\citep{rosenberg2019onlineb}:
\begin{definition}
\label{def:occupancy-measure}
Letting $\Delta_{|\mathcal{S}|\times|\mathcal{A}|\times|\mathcal{S}|}$ denote the probability simplex over $\mathcal{S}\times\mathcal{A}\times\mathcal{S}$, we say that $\theta:=\{\theta_h\}_{h=1}^H$ is an occupancy measure for an episodic MDP with transition kernel $P$ and initial state distribution
$\rho$ if $\theta_h\in
\Delta_{|\mathcal{S}|\times|\mathcal{A}|\times|\mathcal{S}|}$ and 
\[
\left\{
\begin{aligned}
&\sum_{a\in\mathcal{A}}\sum_{s'\in\mathcal{S}}
\theta_1(s,a,s')
=
\rho(s),
&&\forall s\in\mathcal{S},
\\
&\sum_{a\in\mathcal{A}}\sum_{s'\in\mathcal{S}}
\theta_h(s,a,s')
=
\sum_{s'\in\mathcal{S}}\sum_{a\in\mathcal{A}}
\theta_{h-1}(s',a,s),
&&\forall h\in[2,H],\ \forall s\in\mathcal{S},
\\
&\theta_h(s,a,s')
=
P_h(s'|s,a)\sum_{s''\in\mathcal S}\theta_h(s,a,s''),
&&\forall h\in[H],\ \forall (s,a,s')\in\mathcal S \times \mathcal A \times \mathcal S.
\end{aligned}
\right.
\]
\end{definition}
With a slight abuse of notation, we define $\theta_h(s,a):=\sum_{s'\in\mathcal S}\theta_h(s,a,s')$.
The policy induced by $\theta$ is given by
$\pi^\theta_h(a|s)
=
{\theta_h(s,a)}/
{\sum_{a\in\mathcal A}\theta_h(s,a)}.$
For a specific transition kernel \(P\) and policy \(\pi\), we denote the corresponding occupancy measure by \(\theta^{P,\pi}=\{\theta^{P,\pi}_h\}_{h=1}^H\). 
Then, we have $\theta^{P,\pi}_h(s,a,s') := \Pr(s_h=s, a_h=a, s_{h+1}=s'|\rho,\pi,P)$ and $\theta^{P,\pi}_h(s,a) := \Pr(s_h=s, a_h=a|\rho,\pi,P)$. 
Moreover, by the definition of the occupancy measure, we can reconstruct both the transition kernel and the policy as $P_h(s' | s,a) = \theta_h^{P,\pi}(s,a,s') / \theta_h^{P,\pi}(s,a)$ and $\pi_h(a | s) = \theta_h^{P,\pi}(s,a)\big/ \sum_{a\in\mathcal A}\theta_h^{P,\pi}(s,a)$.

For any occupancy measure $\theta$, overloading the notation $V$, the corresponding values for rewards and constraints are written as
\begin{align*}
V_i^r(\theta):=
\langle r_i,\theta\rangle = 
\sum_{h,s,a} r_{i,h}(s,a)\theta_h(s,a),
\quad V_j^c(\theta):=
\langle c_j,\theta\rangle = 
\sum_{h,s,a} c_{j,h}(s,a)\theta_h(s,a).
\end{align*}
Then, we have $V_i^r(\theta^{P,\pi}) = V_i^r(\pi)$ and $V_j^c(\theta^{P,\pi}) = V_j^c(\pi)$.

\vspace{5pt}
\noindent\textbf{Safety Constraints and Safe Policies.}
To formulate the safety constraints, we collect the constraint cost functions defined above into a matrix \(C\in[0,1]^{H|\mathcal S||\mathcal A|\times n}\), whose \(j\)-th column is \((c_{j,h}(s,a))_{h\in[H],\,(s,a)\in\mathcal S\times\mathcal A}\).
We use \(d\in\{1,H\}\) to denote the number of time steps to which each safety constraint applies.
For each constraint \(j\in[n]\), the learner knows the \(d\) time steps to which the constraint applies, and we set \(c_{j,h}\equiv 0\) at all other steps \(h\in[H]\).
When \(d=1\), each constraint applies to a single time step, which gives stepwise safety.
When \(d=H\), each constraint applies to all \(H\) steps, which gives cumulative safety.
Accordingly, we have \(\boldsymbol\tau\in[0,d]^n\).
In both cases, a policy \(\pi\in\Pi\) is \emph{safe} if \(C^\top\theta^{P,\pi}\le\boldsymbol\tau\).
The corresponding safe policy class and occupancy measure set are \(\Pi_{\mathrm{safe}}:=\{\pi\in\Pi:C^\top\theta^{P,\pi}\le\boldsymbol\tau\}\) and \(\Theta_{\mathrm{safe}}:=\{\theta^{P,\pi}:\pi\in\Pi_{\mathrm{safe}}\}\).
We adopt the following standard Slater condition, which requires that the constraints can be satisfied strictly.

\begin{assumption}[Slater condition]
\label{ass:slater}
There exists a strictly safe policy $\pi^\dagger\in\Pi$ such that $C^\top\theta^{P,\pi^\dagger} < \boldsymbol\tau$.
\end{assumption}

Although Assumption~\ref{ass:slater} ensures that such a policy exists, this fact alone does not guarantee safety during learning when the policy is unknown to the learner.

\begin{assumption}
\label{ass:known}
The policy $\pi^\dagger$ in Assumption~\ref{ass:slater} is known to the learner.
Moreover, the learner knows its constraint costs
$\boldsymbol{\gamma}
:=
C^\top\theta^{P,\pi^\dagger},$
which satisfy \(0\le\gamma_j<\tau_j\le d\) for all \(j\in[n]\).
\end{assumption}

Assumption~\ref{ass:known} is standard in the constrained RL literature when zero constraint violation throughout learning is required~\citep{liu2021learning,stradi2025learning}.
In practice, a strictly safe policy is often available~\citep{huang2023safe}.
For example, in autonomous driving, a conservative rule-based controller that prioritizes collision avoidance and enforces strict speed limits provides a natural candidate for $\pi^\dagger$~\citep{likmeta2020combining}.

\vspace{5pt}
\noindent\textbf{Pareto Optimality.} 
In unconstrained MORL, the standard goal is to identify the Pareto optimal policy set, that is, the collection of policies for which no objective can be improved without degrading another~\citep{qiu2024traversing}. 
When explicit safety constraints are imposed, policies that are Pareto optimal in the unconstrained setting may violate them. 
We therefore consider Pareto optimality relative to the safe policy class.

\begin{definition}
[Safe Pareto optimal policy] 
\label{def:pareto}
For any two policies \(\pi \in \Pi_{\mathrm{safe}}\) and \(\pi' \in \Pi_{\mathrm{safe}}\), we say that \(\pi'\) dominates \(\pi\) if and only if $V_i^r(\pi')\geq V_i^r(\pi)$ for all $i \in [m]$ and there exists at least one \(i'\in[m]\) such that $V_{i'}^r(\pi')>V_{i'}^r(\pi).$
A policy \(\pi \in \Pi_{\mathrm{safe}}\) is a safe Pareto optimal policy if and only if no other policy \(\pi' \in \Pi_{\mathrm{safe}}\) dominates \(\pi\).
We denote the set of safe Pareto optimal policies by $\Pi_{\mathrm{safe}}^*$.
\end{definition}

\section{Scalarization of Multiple Objectives}
\label{sec:sca}
To characterize the trade-offs achieved by safe Pareto optimal policies, we seek a scalarization framework that covers the entire safe Pareto front and allows the learner to select a safe Pareto optimal policy according to a given preference.
Since this front may contain infinitely many value vectors, direct enumeration is generally infeasible.
We therefore consider a family of scalar optimization problems parameterized by a preference vector $\boldsymbol{\lambda}\in\Delta_m$.
Here, $\Delta_m$ is the probability simplex in $\mathbb{R}^m$, and $\Delta_m^o:=\{\boldsymbol{\lambda}\in\Delta_m: \lambda_i>0,\ \forall i\in[m]\}$ denotes its relative interior.
In unconstrained MORL, the smooth Tchebycheff scalarization~\citep{lin2024smooth,qiu2024traversing} has been shown to provide both \emph{controllability} with respect to the preference vector $\boldsymbol{\lambda}$ and \emph{full coverage} of the Pareto front.
In our safe MORL setting, however, the policy space is restricted to the safe class \(\Pi_{\mathrm{safe}}\), and we adapt the scalarization to this class as follows.
For notational simplicity, let $V_i(\cdot):=V_i^r(\cdot)$ and $\boldsymbol V(\theta):=(V_1(\theta),\ldots,V_m(\theta))$.

\begin{definition}
[Safe smooth Tchebycheff scalarization]
\label{def:stch}
Let $\boldsymbol{\lambda}:=(\lambda_1,\ldots,\lambda_m)\in\Delta_m$ be a preference vector,
let $\mu>0$ be a smoothing parameter,
and let $\iota\in(0,1]$ be a prespecified regularization parameter.
For each $i\in[m]$, we define $V_i^*:=\max_{\theta\in\Theta_{\mathrm{safe}}}V_i(\theta)$.
The safe smooth Tchebycheff scalarization of a policy $\pi\in\Pi_{\mathrm{safe}}$ is
\[
\stchlm(\pi)
:=
\mu \log\left[
\sum_{i=1}^{m}
\exp\left(
\frac{\lambda_i\bigl(V_i^*+\iota-V_i(\theta^{P,\pi})\bigr)}{\mu}
\right)
\right].
\]
\end{definition}

We first establish that the safe value set is a nonempty convex polytope.

\begin{proposition}
\label{prop:safe_value_polytope}
Under Assumption~\ref{ass:slater}, the safe value set
$\mathcal V_{\mathrm{safe}}
:=
\{
\boldsymbol V(\theta):
\theta\in\Theta_{\mathrm{safe}}\}$
is a nonempty convex polytope.
\end{proposition}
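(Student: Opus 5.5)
The plan is to show that $\Theta_{\mathrm{safe}}$ is itself a nonempty polytope in the finite-dimensional space $\mathbb R^{H|\mathcal S||\mathcal A||\mathcal S|}$. Then $\mathcal V_{\mathrm{safe}}$ is its image under the linear map $\theta\mapsto \boldsymbol V(\theta)=(\langle r_1,\theta\rangle,\ldots,\langle r_m,\theta\rangle)$. Linear images of polytopes are polytopes, since the image of the convex hull of finitely many vertices is the convex hull of their images. Nonemptiness is immediate from Assumption~\ref{ass:slater}: $\pi^\dagger\in\Pi_{\mathrm{safe}}$, so $\theta^{P,\pi^\dagger}\in\Theta_{\mathrm{safe}}$ and $\boldsymbol V(\theta^{P,\pi^\dagger})\in\mathcal V_{\mathrm{safe}}$.

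\textbf{Step 1: characterize the achievable occupancy measures.} Let $\Theta_P$ be the set of all $\theta$ satisfying Definition~\ref{def:occupancy-measure} with the true kernel $P$. I would show that $\Theta_P=\{\theta^{P,\pi}:\pi\in\Pi\}$.

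\emph{Inclusion $\supseteq$.} For any $\pi$, the measure $\theta^{P,\pi}$ satisfies the initial-state, flow-conservation and transition constraints directly from the Markov property.

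\emph{Inclusion $\subseteq$.} Given $\theta\in\Theta_P$, take the induced policy $\pi^\theta$. At states with $\sum_a\theta_h(s,a)=0$, choose an arbitrary action distribution. Then prove by induction on $h$ that $\theta^{P,\pi^\theta}_h=\theta_h$:
\begin{itemize}
\item The state marginals agree at $h=1$ by the first constraint.
\item If they agree at step $h$, the state-action marginals agree by the definition of $\pi^\theta$.
\item The triples $(s,a,s')$ then agree by the third constraint.
\item The state marginals at $h+1$ then agree by the flow constraint.
\end{itemize}
This is the standard argument of \citet{rosenberg2019onlineb}. The care needed with zero-mass states is the only delicate point: those states contribute zero to both sides regardless of the chosen action distribution.

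\textbf{Step 2: polyhedrality and boundedness.} All constraints in Definition~\ref{def:occupancy-measure} are linear equalities in $\theta$ once $P$ is fixed; in particular the third constraint is linear. Together with $\theta_h\ge 0$ and $\sum\theta_h=1$, this makes $\Theta_P$ a bounded polyhedron. By Step~1 and the definition of $\Pi_{\mathrm{safe}}$, we have $\Theta_{\mathrm{safe}}=\Theta_P\cap\{\theta: C^\top\theta\le\boldsymbol\tau\}$. Here $C^\top\theta$ means the linear functionals $\langle c_j,\theta\rangle$ acting on the $(s,a)$-marginals. Adding finitely many linear inequalities keeps the set a bounded polyhedron, hence a polytope by Minkowski–Weyl. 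It is nonempty by the remark above.

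\textbf{Step 3: conclude.} Apply the linear map $\boldsymbol V$. Convexity and polytope structure transfer to the image, and nonemptiness transfers as well.

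I expect the main obstacle to be the $\subseteq$ direction of Step~1. Without it, $\Theta_{\mathrm{safe}}$, which is defined as an image over policies, is not obviously given by linear constraints; the induction there is what makes the polyhedral description legitimate. Everything else is routine convex geometry.
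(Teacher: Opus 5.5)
Your proposal is correct and follows essentially the same route as the paper. Both establish $\Theta(P)=\{\theta^{P,\pi}:\pi\in\Pi\}$ by the induced-policy induction with care at zero-mass states, intersect with the linear safety inequalities to get a nonempty bounded polyhedron (nonempty via $\pi^\dagger$), and push it forward under the linear map $\boldsymbol V$; the only cosmetic difference is that the paper parametrizes by the state-action marginals $\theta_h(s,a)$ rather than the full triples $\theta_h(s,a,s')$.
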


The safe smooth Tchebycheff scalarization retains both controllability and full coverage on the safe policy class.

\begin{proposition}
\label{prop:stch}
Let $\mu>0$. Under Assumption~\ref{ass:slater}, we have
\[\bigcup_{\boldsymbol{\lambda}\in\Delta_m^o}
\argmin_{\pi\in\Pi_{\mathrm{safe}}}
\stchlm(\pi)
=
\Pi_{\mathrm{safe}}^*.\]
\end{proposition}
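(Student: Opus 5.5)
We prove both inclusions, working in value space. The reduction is that \(\stchlm(\pi)\) depends on \(\pi\) only through \(\boldsymbol V(\theta^{P,\pi})\), and \(\pi\) is safe Pareto optimal exactly when \(\boldsymbol V(\theta^{P,\pi})\) is a Pareto maximal point of \(\mathcal V_{\mathrm{safe}}\). Define \(g_{\boldsymbol\lambda}(\boldsymbol v):=\mu\log\sum_i\exp(\lambda_i(V_i^*+\iota-v_i)/\mu)\) on \(\mathcal V_{\mathrm{safe}}\). By Proposition~\ref{prop:safe_value_polytope}, this set is a nonempty compact convex polytope. Hence the minimum of the continuous function \(g_{\boldsymbol\lambda}\) is attained, and the \(V_i^*\) are finite and attained. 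Since \(\mathcal V_{\mathrm{safe}}\subseteq\{\boldsymbol v: v_i\le V_i^*\}\), the quantity \(z_i:=V_i^*+\iota-v_i\) satisfies \(z_i\ge\iota>0\) on \(\mathcal V_{\mathrm{safe}}\).

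\emph{Inclusion \(\subseteq\).} Fix \(\boldsymbol\lambda\in\Delta_m^o\) and a minimizer \(\pi\). Suppose \(\pi'\in\Pi_{\mathrm{safe}}\) dominates \(\pi\). Then every \(z_i\) weakly decreases and some \(z_{i'}\) strictly decreases. Because \(\lambda_{i'}>0\) and log-sum-exp is strictly increasing in each coordinate, \(g_{\boldsymbol\lambda}\) strictly decreases, which contradicts minimality. So \(\pi\in\Pi^*_{\mathrm{safe}}\).

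\emph{Inclusion \(\supseteq\).} Fix \(\pi\in\Pi^*_{\mathrm{safe}}\) with \(\boldsymbol v^\circ=\boldsymbol V(\theta^{P,\pi})\) and \(z_i^\circ=V_i^*+\iota-v_i^\circ\ge\iota\). We choose \(\lambda_i\propto 1/z_i^\circ\), which lies in \(\Delta_m^o\). This choice makes all products \(\lambda_i z_i^\circ\) equal to a common value \(c\), so \(\boldsymbol v^\circ\) minimizes the hard Tchebycheff function \(\max_i\lambda_i z_i\); otherwise some feasible point would strictly improve every coordinate and dominate \(\boldsymbol v^\circ\). For the smooth version, \(g_{\boldsymbol\lambda}\) is convex in \(\boldsymbol v\): it is log-sum-exp of affine maps. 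So it suffices to verify the first-order optimality condition on the convex polytope. The gradient at \(\boldsymbol v^\circ\) is \(-\boldsymbol w\), where \(w_i=\lambda_i\,\mathrm{softmax}_i\). With all \(\lambda_iz_i^\circ\) equal, the softmax weights are uniform, so \(w_i\propto\lambda_i\). The optimality condition therefore reads \(\sum_i\lambda_i(v_i-v_i^\circ)\le0\) for all \(\boldsymbol v\in\mathcal V_{\mathrm{safe}}\).

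\emph{Main obstacle.} This last condition says \(\boldsymbol\lambda\) is a normal vector to \(\mathcal V_{\mathrm{safe}}\) at \(\boldsymbol v^\circ\). That need not hold for the particular \(\boldsymbol\lambda\propto 1/z^\circ\), so this choice is too naive and the weights must instead be selected by a fixed-point argument.

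\emph{Fixed-point argument.} Because \(\mathcal V_{\mathrm{safe}}\) is a polytope and \(\boldsymbol v^\circ\) is Pareto maximal, the normal cone at \(\boldsymbol v^\circ\) contains a strictly positive vector \(\boldsymbol w\). This is the polyhedral fact that Pareto maximal points of polytopes are properly efficient, obtained via LP duality or Arrow–Barankin–Blackwell; it is where the polytope structure from Proposition~\ref{prop:safe_value_polytope} is essential. We then need \(\boldsymbol\lambda\in\Delta_m^o\) with \(\nabla_{\boldsymbol v}g_{\boldsymbol\lambda}(\boldsymbol v^\circ)\propto-\boldsymbol w\). Equivalently, with \(p=\mathrm{softmax}(\lambda_i z_i^\circ/\mu)\), we need \(\lambda_i p_i\propto w_i\).

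Consider the map \(\boldsymbol\lambda\mapsto\boldsymbol\lambda'\) defined by \(\lambda'_i\propto w_i/p_i(\boldsymbol\lambda)\), renormalized to the simplex. It is continuous on \(\Delta_m\) because each \(p_i\) is bounded below by a positive constant. It maps into a compact subset of \(\Delta_m^o\), since every \(w_i>0\) and the \(p_i\) are bounded away from zero. Brouwer's theorem gives a fixed point \(\boldsymbol\lambda\), which satisfies \(\lambda_ip_i\propto w_i\). This is the first-order condition, and by convexity of \(g_{\boldsymbol\lambda}\) the point \(\boldsymbol v^\circ\) is then a global minimizer. Hence \(\pi\) minimizes \(\stchlm\) over \(\Pi_{\mathrm{safe}}\). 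The delicate step is establishing the strictly positive normal vector at every Pareto maximal point, for which we rely on the polyhedrality of \(\mathcal V_{\mathrm{safe}}\).
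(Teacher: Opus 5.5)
Your proof is correct, and its skeleton matches the paper's. The inclusion $\subseteq$ is the same domination argument. For $\supseteq$, both you and the paper take a strictly positive supporting normal at the Pareto point (your $\boldsymbol w$, the paper's $\boldsymbol a$; the paper also treats its existence as a standard polyhedral fact). Both then choose $\boldsymbol\lambda\in\Delta_m^o$ so that the gradient of the scalarization at $\boldsymbol v^\circ$ is a negative multiple of that normal.

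The routes differ in how $\boldsymbol\lambda$ is produced. You apply Brouwer to the normalized map $\boldsymbol\lambda\mapsto(w_i/p_i(\boldsymbol\lambda))_i$. The paper avoids any fixed-point theorem by noting that the softmax denominator is common to all coordinates. Hence $\lambda_ip_i\propto w_i$ is equivalent to $\lambda_i\exp(\lambda_iz_i^\circ/\mu)=\mu s\,w_i$ for a single scalar $s>0$, which decouples across coordinates. Since $z_i^\circ\ge\iota>0$, each map $x\mapsto x\exp(xz_i^\circ/\mu)$ is a strictly increasing bijection of $[0,\infty)$. So $\lambda_i(s)$ is obtained by inversion, and $s$ is fixed by the intermediate value theorem so that $\sum_i\lambda_i(s)=1$. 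That construction is elementary and shows the matching preference is unique. Yours is heavier but never uses the sign of $z_i^\circ$.

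The routes also differ in how optimality is concluded, and here yours is more direct. You check $\nabla g_{\boldsymbol\lambda}(\boldsymbol v^\circ)^\top(\boldsymbol v-\boldsymbol v^\circ)\ge0$ on all of $\mathcal V_{\mathrm{safe}}$ and use convexity once. The paper takes two steps: it first minimizes over the hyperplane $\{\boldsymbol a^\top\boldsymbol v=\kappa\}$, then transfers to $\mathcal V_{\mathrm{safe}}$ by shifting each $\boldsymbol v$ along $\boldsymbol 1$ onto that hyperplane and using coordinatewise monotonicity.

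One small correction to your citation: Arrow--Barankin--Blackwell by itself only gives density of positively supported points in the Pareto set. For a polytope, the exact statement you need follows most cleanly from LP duality. Alternatively, combine ABB with the observation that positively supported points form a finite, hence closed, union of faces.
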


The proof of Proposition~\ref{prop:stch} uses the fact that $\mathcal V_{\mathrm{safe}}$ is a nonempty convex polytope, as established in Proposition~\ref{prop:safe_value_polytope}.
This convexity relies on $\Pi$ containing all stochastic Markov policies.
The analysis for deterministic policy classes differs substantially, and we leave this extension to future work.
We next present a proposition that further characterizes the safe Pareto optimal policy set.

\begin{proposition}
\label{prop:stch-unique}
Let $\mu>0$ and $\boldsymbol{\lambda}\in\Delta_m^o$.
Under Assumption~\ref{ass:slater}, all minimizers of $\stchlm(\pi)$ over $\Pi_{\mathrm{safe}}$ achieve the same values on all objectives.
\end{proposition}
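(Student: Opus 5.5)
The plan is to move the problem from policies to value vectors and then use strict convexity of the smoothed objective. Since $\stchlm(\pi)$ depends on $\pi$ only through $\boldsymbol V(\theta^{P,\pi})$, minimizing over $\Pi_{\mathrm{safe}}$ is the same as minimizing
\[
g(\boldsymbol v):=\mu\log\Bigl[\sum_{i=1}^m \exp\bigl(\lambda_i(V_i^*+\iota-v_i)/\mu\bigr)\Bigr]
\]
over $\mathcal V_{\mathrm{safe}}$. By Proposition~\ref{prop:safe_value_polytope}, this set is a nonempty convex polytope, so it is compact and $g$ attains its minimum there. The claim then becomes: all minimizers $\boldsymbol v$ of $g$ over $\mathcal V_{\mathrm{safe}}$ coincide.

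Now let $\boldsymbol v\neq\boldsymbol v'$ be two minimizers with common value $g^*$. Their midpoint $\bar{\boldsymbol v}$ lies in $\mathcal V_{\mathrm{safe}}$ by convexity. Write $a_i:=\lambda_i(V_i^*+\iota-v_i)/\mu$ and $b_i$ analogously for $\boldsymbol v'$. Since $\lambda_i>0$ for all $i$, the map $\boldsymbol v\mapsto \boldsymbol a$ is an injective affine map, so $\boldsymbol a\neq\boldsymbol b$.

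The function $\mathrm{LSE}(\boldsymbol x)=\log\sum_i e^{x_i}$ is convex but not strictly convex: it is affine along the direction $\mathbf 1$. So the argument splits into two cases.

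\textbf{Case 1: $\boldsymbol a-\boldsymbol b$ is not a multiple of $\mathbf 1$.} Hölder's inequality (Cauchy--Schwarz with exponents $1/2$) gives
\[
\sum_i e^{(a_i+b_i)/2}<\Bigl(\sum_i e^{a_i}\Bigr)^{1/2}\Bigl(\sum_i e^{b_i}\Bigr)^{1/2}.
\]
The inequality is strict because the vectors $(e^{a_i/2})_i$ and $(e^{b_i/2})_i$ are not proportional. Hence $g(\bar{\boldsymbol v})<g^*$, contradicting minimality.

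\textbf{Case 2: $\boldsymbol a-\boldsymbol b=\kappa\mathbf 1$ with $\kappa\neq0$, say $\kappa>0$.} Then $\mathrm{LSE}(\boldsymbol a)=\mathrm{LSE}(\boldsymbol b)+\kappa$, so $g(\boldsymbol v)=g(\boldsymbol v')+\mu\kappa>g(\boldsymbol v')$. This contradicts the assumption that $\boldsymbol v$ and $\boldsymbol v'$ have equal value.

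Thus $\boldsymbol a=\boldsymbol b$, and since every $\lambda_i>0$, this gives $\boldsymbol v=\boldsymbol v'$. So all minimizers share the same objective vector.

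The only real obstacle is the failure of strict convexity of log-sum-exp along $\mathbf 1$. The key observation is that in this direction $g$ is strictly monotone rather than flat, so two distinct minimizers there cannot have equal value. This case split also shows exactly where positivity of $\boldsymbol\lambda$ is needed: if some $\lambda_i=0$, the map $\boldsymbol v\mapsto\boldsymbol a$ is no longer injective, and minimizers could differ in the $i$-th objective.
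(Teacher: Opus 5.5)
Your proof is correct and follows the paper's argument essentially step for step. You pass to the convex value set, use strict convexity of log-sum-exp off the $\boldsymbol 1$ direction to rule out non-parallel differences, use the translation identity to force any shift along $\boldsymbol 1$ to vanish, and conclude via $\lambda_i>0$; the only cosmetic difference is that you certify strict midpoint convexity through the equality case of Cauchy--Schwarz, whereas the paper appeals to the Hessian of log-sum-exp having null space spanned by $\boldsymbol 1$.
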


Together with Proposition~\ref{prop:stch}, this result implies that solving $\min_{\pi\in\Pi_{\mathrm{safe}}} \stchlm(\pi)$ once for each $\boldsymbol{\lambda}\in\Delta_m^o$ suffices to recover every value vector on the safe Pareto front.
For the safe policy class, the corresponding scalarized optimization problem is
\begin{align}
\min_{\pi\in\Pi_{\mathrm{safe}}} \stchlm(\pi).
\label{eq:constrained_stch}
\end{align}
Following~\citet{qiu2024traversing}, the safe smooth Tchebycheff scalarization admits the equivalent reformulation
\begin{align}
\label{eq:eq-stch}
\stchlm(\pi)
&=
\max_{\boldsymbol w\in\Delta_m}
\max_{\{\theta_i\in\Theta_{\mathrm{safe}}\}_{i=1}^m}
\sum_{i=1}^{m}
\left[
w_i\lambda_i
\bigl(V_i(\theta_i)+\iota-V_i(\theta^{P,\pi})\bigr)
-\mu w_i\log w_i
\right].
\end{align}
Consequently, problem~\eqref{eq:constrained_stch} admits an equivalent min-max-max formulation.
This reformulation is well suited to the design of algorithms based on upper confidence bounds.

\vspace{5pt}
\noindent\textbf{Performance Metric.} 
For a given preference vector $\boldsymbol{\lambda}\in\Delta_m$ and smoothing parameter $\mu>0$, we define an optimal safe policy as
$\pi_{\mu,\boldsymbol{\lambda}}^*
\in
\argmin_{\pi\in\Pi_{\mathrm{safe}}}
\stchlm(\pi)$.
The performance of a safe policy $\pi\in\Pi_{\mathrm{safe}}$ is then measured by the suboptimality gap
\[
\mathrm{SubOpt}(\pi)
:=
\stchlm(\pi) - \stchlm(\pi_{\mu,\boldsymbol{\lambda}}^*).
\]
Note that $\mathrm{SubOpt}(\pi)\geq 0$ for all $\pi\in\Pi_{\mathrm{safe}}$, with equality if and only if $\pi$ is optimal under the smooth Tchebycheff scalarization. 

We also assess the safety of any policy $\pi\in\Pi$ by its constraint violation
\[
\mathrm{Violation}(\pi)
:=
\max_{j\in[n]}
\Big[
\bigl(C^\top\theta^{P,\pi}\bigr)_j-\tau_j
\Big]^+,
\]
where $[x]^+:=\max\{0,x\}$.
For stepwise safety, $\bigl(C^\top\theta^{P,\pi}\bigr)_j$ gives the cost of constraint $j$ at the single step to which it applies.
For cumulative safety, it aggregates the costs of constraint $j$ across all $H$ steps.
In both cases, $\pi\in\Pi_{\mathrm{safe}}$ if and only if $\mathrm{Violation}(\pi)=0$.

Our goal is to design a sample-efficient algorithm that outputs a policy $\widehat{\pi}$ minimizing $\mathrm{SubOpt}(\widehat{\pi})$ while guaranteeing $\mathrm{Violation}(\widehat{\pi})=0$ and $\mathrm{Violation}(\pi)=0$ for every policy $\pi$ executed during learning.

\section{Safe Multi-Objective RL}
\label{sec:cstrl}
We introduce \texttt{CSTRL}, our preference-based algorithm to solve~\eqref{eq:constrained_stch} under a fixed preference vector $\boldsymbol{\lambda}\in\Delta_m$.
\texttt{CSTRL} combines optimistic exploration with pessimistic cost bounds to ensure safety with high probability.
The full procedure is summarized in Algorithm~\ref{alg:CSTRL}.
In episode $t$, Algorithm~\ref{alg:CSTRL} collects one trajectory under the main policy $\pi^t$ and one under each auxiliary policy $\nu_i^t$ for $i\in[m]$, yielding $m+1$ trajectories.
The main trajectory is used to update the main empirical estimates, while the trajectory generated by $\nu_i^t$ is used only to update the $i$-th auxiliary empirical estimates.
The counts are
\begin{equation}
\label{eq:rollout-counts}
\begin{aligned}
N_h^t(s,a)
&:=
\sum_{k=1}^{t}
\mathbf{1}_{\{(s_h^k,a_h^k)=(s,a)\}},
\quad
N_h^t(s,a,s')
:=
\sum_{k=1}^{t}
\mathbf{1}_{\{(s_h^k,a_h^k,s_{h+1}^k)=(s,a,s')\}}.
\end{aligned}
\end{equation}
Here, $\mathbf 1_{\{\mathcal E\}}$ is the indicator of event $\mathcal E$.
For each objective $i\in[m]$ and each constraint $j\in[n]$, the reward and cost estimators are
\begin{equation}
\widehat r^{\,t}_{i,h}(s,a)
:=
\frac{
\displaystyle
\sum_{k=1}^{t}
r^k_{i,h}\,
\mathbf{1}_{\{(s_h^k,a_h^k)=(s,a)\}}
}
{\max\{1,N_h^t(s,a)\}},
\quad
\widehat c_{j,h}^{\,t}(s,a)
:=
\frac{
\displaystyle
\sum_{k=1}^{t}
c_{j,h}^k\,
\mathbf{1}_{\{(s_h^k,a_h^k)=(s,a)\}}
}
{\max\{1,N_h^t(s,a)\}}.
\label{eq:reward_cost}
\end{equation}
The empirical transition model is
\begin{equation}
\widehat P_h^t(s'|s,a)
:=
\frac{N_h^t(s,a,s')}{\max\{1,N_h^t(s,a)\}}.
\label{eq:P}
\end{equation}
To quantify estimation uncertainty, we fix $\delta\in(0,1)$ and construct confidence sets for the rewards, costs, and transition kernel.
For each reward function $r_{i,h}$, we define a bonus
\begin{equation}
\sigma_h^t(s,a)
:= \min\!\left\{1,\;
\sqrt{\frac{2\ln(8m|\mathcal{S}||\mathcal{A}|HT/\delta)}
{\max\{1, N_h^{t}(s,a)\}}}
\right\}.
\label{eq:reward_confidence}
\end{equation}
By Lemma~\ref{lem:reward_concentration}, $|\widehat r^{\,t}_{i,h}(s,a) - r_{i,h}(s,a)| \le \sigma_h^t(s,a)$ holds with high probability.
For the cost functions, we define the bonus for every \(j\in[n]\), every time step \(h\) to which constraint \(j\) applies, and every \((s,a)\in\mathcal S\times\mathcal A\) as
\begin{equation}
\xi_{j,h}^t(s,a)
:=
\min\!\left\{1,\;
\sqrt{\frac{\ln(4mndT|\mathcal S||\mathcal A|/\delta)}
{2\max\{1,N_h^t(s,a)\}}}
\right\}.
\label{eq:cost_confidence}
\end{equation}
At all other steps, we set \(\widehat c_{j,h}^t(s,a)=\xi_{j,h}^t(s,a):=0\) for every \((s,a)\in\mathcal S\times\mathcal A\).
For each \(j\in[n]\) and \(h\in[H]\), we collect the cost estimates and bonuses at step \(h\) into the vectors \(\widehat c_{j,h}^t:=(\widehat c_{j,h}^t(s,a))_{(s,a)\in\mathcal S\times\mathcal A}\) and \(\xi_{j,h}^t:=(\xi_{j,h}^t(s,a))_{(s,a)\in\mathcal S\times\mathcal A}\), respectively.
We then define \(\widehat C^t,\Xi^t \in[0,1]^{H|\mathcal S||\mathcal A|\times n}\) as the matrices whose \(j\)-th columns are \((\widehat c_{j,h}^t)_{h\in[H]}\) and \((\xi_{j,h}^t)_{h\in[H]}\), respectively.
Lemma~\ref{lem:gen_confidence} ensures that $|\widehat C^t-C|\le\Xi^t$ holds with high probability.
For the transition kernel, we define a confidence set $\mathcal P^t$ by imposing concentration bounds $\bigl|P_h(s'|s,a)-\widehat P^{t}_h(s'|s,a)\bigr| \le \epsilon^t_h(s,a)$,
where
\begin{equation}
\epsilon_h^t(s,a)
:=
\sqrt{
\frac{2\ln\!\bigl(2mTH|\mathcal{S}|^2|\mathcal{A}|/\delta\bigr)}
{\max\{1,N_h^{t}(s,a)\}}
}.
\label{eq:transition_confidence}
\end{equation}
By Lemma~\ref{lem:confidenceset}, with probability at least $1-\delta$, the true kernel lies in $\mathcal P^t$ for all $t\in[T]$.
For the occupancy measure updates, we define the optimistic reward estimates by
\begin{equation}
\bar r_{i,h}^t(s,a)
:=
\min\!\left\{1,\widehat r_{i,h}^t(s,a)+\sigma_h^t(s,a)\right\},
\quad \forall i\in[m].
\label{eq:optimistic_reward}
\end{equation}
Given the preference vector $\boldsymbol\lambda$ and the smooth Tchebycheff dual variable $\boldsymbol w^t$, we define the scalarized optimistic reward
\begin{equation}
\bar R_h^t(s,a) := \sum_{i=1}^m w_i^{t}\lambda_i \bar r_{i,h}^t(s,a).
\label{eq:scalarized_reward}
\end{equation}

\begin{table}[!t]
\begin{minipage}{\linewidth}
\begin{algorithm}[H]
\caption{\texttt{CSTRL}: Constrained Smooth Tchebycheff RL}
\label{alg:CSTRL}
\small
\begin{algorithmic}[1]
\Require{$\{\alpha_t\}_{t=1}^{T-1},\{\beta_t\}_{t=1}^{T-1},\boldsymbol\lambda,\mu$}
\State Initialize
$\widetilde\theta_h^1(s,a,s')
=\frac{1}{|\mathcal S|^2|\mathcal A|}$;
set
$\widetilde\pi^1=\mathrm{Unif}(\mathcal A)$, $\psi^0:=\max_{j\in[n]}\{\frac{d-\tau_j}{d-\gamma_j}\}$,
and
$w_i^1=1/m,\  \widetilde\theta_i^1=\widetilde\theta^1$ for all $i\in[m]$; select $\pi^1$ as $\pi^\dagger$ w.p. $\psi^0$ and as $\widetilde\pi^1$ w.p. $1-\psi^0$; for each $i\in[m]$, select $\nu_i^1$ as $\pi^\dagger$ w.p. $\psi^0$ and as $\widetilde\pi^1$ w.p. $1-\psi^0$; roll out $\pi^1$ and $\{\nu_i^1\}_{i=1}^m$ 
\For{$t=1,2,\ldots,T-1$}
\For{$i=1,\ldots,m$}
\State Construct $(\bar r_i^t,\widehat C^t,\mathcal P^t,\Xi^t)$ from the trajectories generated by $\{\nu_i^k\}_{k=1}^t$ using~\eqref{eq:rollout-counts}--\eqref{eq:optimistic_reward}
\State Set
$\widehat\theta_{i,h}^{t+1}(s,a,s')
=\widetilde\theta_{i,h}^t(s,a,s')
\exp\{\eta\,\bar r_{i,h}^t(s,a)\}$
for all $h\in[H]$
\State
$(\widetilde\theta_i^{t+1},
\nu_i^{t+1},
\psi_i^t)
\gets
\texttt{SafeProj}\!(
\widehat\theta_i^{t+1},
\mathcal P^t,
\widehat C^t,
\Xi^t)$
\State Compute
$\widetilde V_i^{t+1}
=\langle\bar r_i^t,\widetilde\theta_i^{t+1}\rangle$
\EndFor
\State Construct $(\{\bar r_i^t\}_{i=1}^m,\widehat C^t,\mathcal P^t,\Xi^t)$ from the trajectories generated by $\{\pi^k\}_{k=1}^t$ using~\eqref{eq:rollout-counts}--\eqref{eq:optimistic_reward}
\State Compute $\bar R^t$ by~\eqref{eq:scalarized_reward} and set
$\widehat\theta_h^{t+1}(s,a,s')
=\widetilde\theta_h^t(s,a,s')
\exp\{\eta\,\bar R_h^t(s,a)\}$
for all $h\in[H]$
\State
$(\widetilde\theta^{t+1},
\pi^{t+1},
\psi^t)
\gets
\texttt{SafeProj}\!(
\widehat\theta^{t+1},
\mathcal P^t,
\widehat C^t,
\Xi^t)$
\State Compute
$V_i^{t+1}
=\langle\bar r_i^t,\widetilde\theta^{t+1}\rangle$
for all $i\in[m]$
\State\label{line:stch-w}
$\widetilde w_i^t=(1-\alpha_t)w_i^t+\alpha_t/m$,
$w_i^{t+1}\propto(\widetilde w_i^t)^{1-\mu\beta_t}
\exp\{\beta_t\lambda_i(\widetilde V_i^{t+1}+\iota-V_i^{t+1})\}$
for all $i\in[m]$
\State Roll out $\pi^{t+1}$ and $\{\nu_i^{t+1}\}_{i\in[m]}$
\EndFor
\State {\bfseries Return:}
$\bar{\theta}
=
\frac{1}{T}\sum_{t=1}^{T}\theta^{P,\pi^{t}}$
\end{algorithmic}
\end{algorithm}
\vspace{-18pt}
\begin{subroutine}[H]
\caption{\texttt{SafeProj}: Safe Projection}
\label{alg:safe-proj}
\small
\begin{algorithmic}[1]
\Require{
$\widehat\theta$,
$\mathcal P$,
$\widehat C$,
$\Xi$}
\If{$\texttt{Proj}(
\widehat\theta,
\mathcal P,
\widehat C,
\Xi)$ is feasible}
\State Set
$\widetilde\theta
=
\texttt{Proj}(
\widehat\theta,
\mathcal P,
\widehat C,
\Xi)$
and $\widetilde\pi=\pi^{\widetilde\theta}$
\State Compute $\psi$ by~\eqref{eq:eta_t}
\Else
\State Set
$\widetilde\theta
=
\texttt{Proj}(
\widehat\theta,
\mathcal P)$,
$\widetilde\pi=\pi^{\widetilde\theta}$,
and $\psi=1$
\EndIf
\State
Select $\pi$ as $\pi^\dagger$ w.p. $\psi$
and as $\widetilde\pi$ w.p. $1-\psi$
\State {\bfseries Return:}
$(\widetilde\theta,\pi,\psi)$
\end{algorithmic}
\end{subroutine}
\end{minipage}
\end{table}
\texttt{CSTRL} learns directly on occupancy measures and maintains a dual variable \(\boldsymbol w^t\in\Delta_m\) for the smooth Tchebycheff scalarization.
At episode \(t\in[T-1]\), it applies exponential updates with step size \(\eta>0\) to the main and auxiliary occupancy measures using \(\bar R^t\) and \(\bar r_i^t\), respectively, yielding \(\widehat\theta^{\,t+1}\) and \(\widehat\theta_i^{\,t+1}\) for each \(i\in[m]\).
These updated occupancy measures may violate the safety constraints.
We therefore project them onto an optimistic feasible occupancy measure set.
We define the occupancy measure set associated with $\mathcal P$ by $\Theta(\mathcal P):=\{\theta^{P',\pi}:P'\in\mathcal P,\ \pi\in\Pi\}.$
For a single transition kernel \(P\), we define \(\Theta(P):=\Theta(\{P\})\).
The projection minimizes the KL divergence to the exponentially updated iterate over $\Theta(\mathcal P^t)$, subject to an optimistic version of the safety constraints:
\begin{equation}
\texttt{Proj}\big(
\widehat{\theta}^{\,t+1},
\mathcal{P}^t,
\widehat{C}^{\,t},
\Xi^t
\big)
:=
\left\{
\begin{aligned}
&\arg\min_{\theta \in \Theta(\mathcal P^t)}
D_{\mathrm{KL}}\!(\theta \Vert \widehat{\theta}^{\,t+1})
\\
&\text{s.t.}\quad
\bigl(\widehat C^{\,t}-\Xi^t\bigr)^\top\theta
\leq\boldsymbol\tau,
\end{aligned}
\right.
\label{eq:proj-def}
\end{equation}
where $D_{\mathrm{KL}}(\cdot\Vert\cdot)$ is the KL divergence. 
By Lemma~\ref{lem:occ-polytope}, $\Theta(\mathcal P^t)$ is a convex polytope.
Since the safety constraints in~\eqref{eq:proj-def} are linear in \(\theta\) and $D_{\mathrm{KL}}(\theta\Vert\widehat\theta^{t+1})$ is convex in $\theta$, problem~\eqref{eq:proj-def} is a convex program.
We denote the projected occupancy measures for the main and auxiliary updates in episode $t+1$ by $\widetilde\theta^{t+1}$ and $\{\widetilde\theta_i^{t+1}\}_{i=1}^m$, respectively.
We define their induced policies as
$\widetilde\pi^{t+1}
:=
\pi^{\widetilde\theta^{t+1}}$
and
$\widetilde\nu_i^{t+1}
:=
\pi^{\widetilde\theta_i^{t+1}}$
for each $i\in[m]$.
The projection and the choice of the executed policy are implemented by Subroutine~\ref{alg:safe-proj} (\texttt{SafeProj}).
When the projection in~\eqref{eq:proj-def} is feasible, we use pessimistic cost bounds for $\widetilde\pi^{t+1}$ to define the probability of selecting $\pi^\dagger$ as
\begin{equation}
\begin{aligned}
\psi^t
:=
\max_{j\in[n]}
\frac{
\left[
\min\left\{
\max_{P'\in\mathcal P^t}
\sum_{h=1}^H
\bigl(\widehat c_{j,h}^t+\xi_{j,h}^t\bigr)^\top
\theta_h^{P',\widetilde\pi^{t+1}},
d
\right\}
-\tau_j
\right]^+
}{
\max\left\{
\min\left\{
\max_{P'\in\mathcal P^t}
\sum_{h=1}^H
\bigl(\widehat c_{j,h}^t+\xi_{j,h}^t\bigr)^\top
\theta_h^{P',\widetilde\pi^{t+1}},
d
\right\}
-\gamma_j,
\,
\tau_j-\gamma_j
\right\}
}.
\end{aligned}
\label{eq:eta_t}
\end{equation}
When \(d=1\), each constraint applies to a single step, while \(\widehat c_{j,h}^t=\xi_{j,h}^t=0\) at all other steps.
The sum in~\eqref{eq:eta_t} therefore reduces to a single term, and the truncated quantity is bounded by \(1\).
When \(d=H\), each constraint applies to every step.
In this case, the sum aggregates the contributions from all \(H\) steps, and the truncated quantity is bounded by \(H\).
If the problem~\eqref{eq:proj-def} is infeasible, it reduces to
$\texttt{Proj}(\widehat{\theta}^{\,t+1},\mathcal P^t)
:=
\arg\min_{\theta\in\Theta(\mathcal P^t)}
D_{\mathrm{KL}}\!(\theta\Vert\widehat{\theta}^{\,t+1})$.
In this case, we set $\psi^t=1$ and execute $\pi^\dagger$.
For each $i\in[m]$, the probability $\psi_i^t$ in the auxiliary update is defined analogously, with $\widetilde\pi^{t+1}$ replaced by $\widetilde\nu_i^{t+1}$.
To state \texttt{SafeProj} in a generic form, we suppress iteration and objective indices.
Given the inputs $\widehat\theta$, $\mathcal P$, $\widehat C$, and $\Xi$, \texttt{SafeProj} constructs the projected occupancy measure $\widetilde\theta$ and its induced policy $\widetilde\pi=\pi^{\widetilde\theta}$.
It then selects $\pi^\dagger$ with probability $\psi$ and $\widetilde\pi$ with probability $1-\psi$ at the beginning of the episode, and follows the selected policy throughout the episode.
Throughout the algorithms, ``w.p.'' stands for ``with probability''.
The subroutine returns $(\widetilde\theta,\pi,\psi)$, where $\pi$ is the policy executed throughout the episode.
After the occupancy measure updates, \texttt{CSTRL} updates the smooth Tchebycheff dual variable.
It first mixes $\boldsymbol w^t$ with the uniform vector on $\Delta_m$,
which keeps the logarithmic terms bounded as in~\citet{wei2020online} and~\mbox{\citet{qiu2023gradient}}.
It then applies an entropic mirror-ascent update to obtain $\boldsymbol w^{t+1}$.
This update uses the optimistic single-objective value
$\widetilde V_i^{t+1}
=
\langle\bar r_i^t,\widetilde\theta_i^{t+1}\rangle$
from the $i$-th auxiliary update and the optimistic value
$V_i^{t+1}
=
\langle\bar r_i^t,\widetilde\theta^{t+1}\rangle$
from the main update, where the two $\bar r_i^t$ come from the auxiliary and main trajectories, respectively.
The returned main policy $\pi^{t+1}$ and auxiliary policies $\{\nu_i^{t+1}\}_{i=1}^m$ are then rolled out in episode $t+1$.
Algorithm~\ref{alg:CSTRL} returns $\bar{\theta}:=\frac{1}{T}\sum_{t=1}^{T}\theta^{P,\pi^{t}}$, the primal average of the occupancy measures of the executed policies.

\vspace{5pt}
\noindent\textbf{Theoretical Results.} 
The following theorem quantifies the suboptimality of \texttt{CSTRL}.

\begin{theorem}[Suboptimality gap of \texttt{CSTRL}]
\label{thm:subopt}
Suppose Assumptions~\ref{ass:slater} and~\ref{ass:known} hold.
For any preference vector \(\boldsymbol{\lambda}\in\Delta_m\) and smoothing parameter \(\mu>0\), set \(\eta=T^{-1/2}\), \(\alpha_t=1/t^2\), and \(\beta_t=1/(\mu t)\) for \(t\in[T-1]\).
Then, with probability at least \(1-\delta\), the suboptimality gap of \texttt{CSTRL} satisfies
\[\mathrm{SubOpt}(\widehat \pi)
\le
\widetilde{\mathcal O}\Biggl(
H^2|\mathcal S|^{3/2}\sqrt{\frac{|\mathcal A|}{T}}
+
\phi_{\mathrm{safe}}H^3|\mathcal S|^{3/2}\sqrt{\frac{|\mathcal A|}{T}} 
+
\frac{H^2}{\mu T}
+
\frac{\mu}{T}
\Biggr),\]
where $\widehat{\pi}=\pi^{\bar\theta}$ is the Markov policy induced by the returned occupancy measure $\bar\theta$,
$\phi_{\mathrm{safe}}
:=
\max_{j\in[n]}
\{\frac{d^2}{(\tau_j-\gamma_j)^2}\}$, and $\widetilde{\mathcal{O}}(\cdot)$ hides logarithmic factors in $m$, $n$, $|\mathcal{S}|$, $|\mathcal{A}|$, $H$, $T$, and $1/\delta$.
\end{theorem}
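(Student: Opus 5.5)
Work on a high-probability good event and use the min-max-max reformulation~\eqref{eq:eq-stch} to split $\mathrm{SubOpt}(\widehat\pi)$ into online-learning regrets plus estimation errors.

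\textbf{Good event.} Intersect the concentration events of Lemmas~\ref{lem:reward_concentration},~\ref{lem:gen_confidence} and~\ref{lem:confidenceset}, so that $P\in\mathcal P^t$, $|\widehat C^t-C|\le\Xi^t$, and $\bar r_i^t\ge r_i$ for all $t$. On this event the true safe set satisfies $\Theta_{\mathrm{safe}}\subseteq\{\theta\in\Theta(\mathcal P^t):(\widehat C^t-\Xi^t)^\top\theta\le\boldsymbol\tau\}$. Hence the projection~\eqref{eq:proj-def} is always feasible and contains every comparator, in particular $\theta^{P,\pi^*_{\mu,\boldsymbol\lambda}}$ and each maximizer $\theta_i^*$ of $V_i$ over $\Theta_{\mathrm{safe}}$.

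\textbf{Decomposition.} Since $\widehat\pi=\pi^{\bar\theta}$ and $\theta^{P,\widehat\pi}=\bar\theta$ by convexity of $\Theta_{\mathrm{safe}}$, write $\stchlm(\widehat\pi)=\max_{\boldsymbol w}\sum_i[w_i\lambda_i(V_i^*+\iota-\langle r_i,\bar\theta\rangle)-\mu w_i\log w_i]$. I would compare this with the iterates $\boldsymbol w^t$ using the standard saddle-point argument. The expression is $\mu$-strongly concave in $\boldsymbol w$ for the entropy regularizer, and $\beta_t=1/(\mu t)$ together with the mixing $\alpha_t=1/t^2$ yields logarithmic regret. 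This gives the $\widetilde{\mathcal O}(H^2/(\mu T)+\mu/T)$ terms, following \citet{qiu2024traversing} and \citet{wei2020online}, where the mixing keeps $\log\widetilde w_i^t\gtrsim-\log(mt^2)$.

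The remaining terms are the following.
\begin{enumerate}
\item The auxiliary regret $\sum_t\lambda_i w_i^t(V_i^*-\widetilde V_i^{t+1})$. Optimism gives $V_i^*\le\langle\bar r_i^t,\theta_i^*\rangle$, and then the OMD/KL-projection regret over $\Theta(\mathcal P^t)\cap\{\text{optimistic constraints}\}$ with $\eta=T^{-1/2}$ applies. The comparator lies in every feasible set, and the shrinking confidence sets are handled as in \citet{rosenberg2019onlineb}, costing $\widetilde{\mathcal O}(H\sqrt{T}\log|\mathcal S||\mathcal A|)$.
\item The main-update regret of $\widetilde\theta^{t}$ against $\theta^{P,\pi^*}$ for the scalarized reward $\bar R^t$, which has the same form.
\item The gap between optimistic and true values, $\sum_t\langle\bar r_i^t,\widetilde\theta^{t+1}\rangle-\langle r_i,\theta^{P,\pi^{t+1}}\rangle$.
\end{enumerate}

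\textbf{Estimation gap and mixing with $\pi^\dagger$.} Term~3 splits in two. First, $\langle\bar r^t,\widetilde\theta^{t+1}\rangle-\langle r,\theta^{P,\widetilde\pi^{t+1}}\rangle$ is bounded by reward bonuses plus $\|\widetilde\theta^{t+1}-\theta^{P,\widetilde\pi^{t+1}}\|_1$. Both are controlled by the usual sum of $\sqrt{1/N_h^t}$ along executed trajectories, via a pigeonhole argument and Azuma, giving $\widetilde{\mathcal O}(H^2|\mathcal S|^{3/2}\sqrt{|\mathcal A|T})$ after the $|\mathcal S|$ factor from $\epsilon_h^t$ summed over $s'$. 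Second, the executed policy is $\pi^\dagger$ with probability $\psi^t$, so $\theta^{P,\pi^{t+1}}=\psi^t\theta^{P,\pi^\dagger}+(1-\psi^t)\theta^{P,\widetilde\pi^{t+1}}$ in expectation, which costs at most $H\psi^t$ in value.

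\textbf{Bounding $\sum_t\psi^t$ (main obstacle).} Because $\widetilde\theta^{t+1}$ satisfies the optimistic constraint, the numerator of~\eqref{eq:eta_t} is at most the pessimistic-minus-optimistic gap. That gap is $\max_{P'}\langle\widehat c+\xi,\theta^{P',\widetilde\pi}\rangle-\langle\widehat c-\xi,\widetilde\theta\rangle$, which is at most $2\langle\xi,\cdot\rangle$ plus an occupancy-difference term of order $d\cdot$(transition-error). Dividing by $\tau_j-\gamma_j$ gives $\psi^t\lesssim\max_j\frac{\text{gap}_j^t}{\tau_j-\gamma_j}$.

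The difficulty is that bonuses shrink only on data from the executed policy $\widetilde\pi$, which is run with probability $1-\psi^t$. To handle this, I would argue $\psi^t(\tau_j-\gamma_j)\le\text{gap}^t$ and use $1-\psi^t\ge$ const when the gap is small. Alternatively, I would bound $\sum_t(1-\psi^t)\text{gap}^t$ by the concentration sum and square via Cauchy--Schwarz. This squaring is what produces the factor $\phi_{\mathrm{safe}}=d^2/(\tau_j-\gamma_j)^2$ and an extra $H$, giving $\sum_t\psi^t\lesssim\phi_{\mathrm{safe}}H^2|\mathcal S|^{3/2}\sqrt{|\mathcal A|T}$.

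\textbf{Combining.} Multiply the $\psi$ bound by $H$, add the two regrets and the $\boldsymbol w$-regret, and divide by $T$. Lipschitzness of the log-sum-exp in the values, with constant $\max_i\lambda_i\le1$, converts the value errors into STCH errors, which yields the stated bound.
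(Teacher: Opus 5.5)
Your overall architecture matches the paper's: the good event, KL-projected OMD onto the optimistic feasible set with comparators feasible in every round, entropic dual updates with mixing, and the $H\psi^t$ cost of mixing with $\pi^\dagger$. However, the step you flag as the crux is not closed. The two facts you have are $\psi^t\le\max_j\mathrm{gap}_j^t/(\tau_j-\gamma_j)$ and that $\sum_t(1-\psi^t)\mathrm{gap}^t$ is small. Together they say nothing about $\sum_t\psi^t$: both hold with $\psi^t=1-1/T$ and $\mathrm{gap}^t=\tau_j-\gamma_j$ in every round. So neither Cauchy--Schwarz nor a case split that only treats small gaps can finish the argument.

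What rules this out is the exact form of \eqref{eq:eta_t}. Suppose $\psi^t>0$ and $j$ attains the maximum. Set $X_j^t:=\min\{\max_{P'\in\mathcal P^t}\sum_{h}(\widehat c_{j,h}^t+\xi_{j,h}^t)^\top\theta_h^{P',\widetilde\pi^{t+1}},d\}$, so that $X_j^t>\tau_j$. Then $\psi^t=(X_j^t-\tau_j)/(X_j^t-\gamma_j)$, and rearranging gives
\[
\psi^t(\tau_j-\gamma_j)=(1-\psi^t)(X_j^t-\tau_j)\le(1-\psi^t)\,\mathrm{gap}_j^t .
\]
The factor $1-\psi^t$ thus comes for free; this is Lemma~\ref{lem:mixing-bound}. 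Since $\mathrm{gap}_j^t$ is linear in $\theta^{P,\widetilde\pi^{t+1}}$ and $(1-\psi^t)\theta^{P,\widetilde\pi^{t+1}}\le\theta^{P,\pi^{t+1}}$, the right-hand side moves to the executed occupancy and is summed by the visitation/Azuma argument. This yields $\sum_t\psi^t\le\max_j(\tau_j-\gamma_j)^{-1}\,\widetilde{\mathcal O}(H^2|\mathcal S|^{3/2}\sqrt{|\mathcal A|T})$ with no squaring:
\begin{itemize}
\item $\phi_{\mathrm{safe}}$ merely upper-bounds $\max_j(\tau_j-\gamma_j)^{-1}$;
\item $H^2|\mathcal S|^{3/2}$ comes from Lemma~\ref{lem:simulation} and the visitation sum;
\item the third power of $H$ in the final bound comes from the value loss $H\psi^t$.
\end{itemize}
Alternatively, the truncation at $d$ gives the unconditional bound $1-\psi^t\ge\min_j(\tau_j-\gamma_j)/d$, which would repair your first route. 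The same reweighting is needed for your transition term $\|\widetilde\theta^{t+1}-\theta^{P,\widetilde\pi^{t+1}}\|_1$. That term lives under $\widetilde\pi^{t+1}$ rather than the executed policy, so it costs an extra $2H\psi^t$.

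Second, you list only the auxiliary regret $V_i^*-\widetilde V_i^{t+1}$, which optimism and OMD do control. But the saddle-point comparison has two sides. You lower-bound $\stchlm(\pi^*_{\mu,\boldsymbol\lambda})$ by the payoff at $\boldsymbol w^t$ and compare it with the dual signal $\boldsymbol\lambda\odot(\widetilde{\boldsymbol V}^{t+1}+\boldsymbol\iota-\boldsymbol V^{t+1})$. That step also requires the reverse bound: $\frac1T\sum_t\sum_i w_i^t\lambda_i(\widetilde V_i^{t+1}-V_i^*)$ must be small. This is not automatic. The iterate $\widetilde\theta_i^{t+1}$ lies in the optimistic feasible set, which contains $\Theta_{\mathrm{safe}}$ and is generally larger, so $\widetilde V_i^{t+1}$ can exceed $V_i^*$ by more than the reward-estimation error.

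The paper controls this term (in $\mathrm{(II.2)}$ and $\mathrm{(IV)}$) in three steps:
\begin{itemize}
\item it uses the safety of the executed mixture $\nu_i^{t+1}$ to get $V_i(\theta^{P,\widetilde\nu_i^{t+1}})\le V_i^*+H\psi_i^t$;
\item it bounds $|V_i(\theta^{P,\widetilde\nu_i^{t+1}})-\widetilde V_i^{t+1}|$ by bonuses and transition errors along the auxiliary trajectories;
\item it bounds $\sum_t\psi_i^t$ for each $i$ by the argument above.
\end{itemize}
Your sketch never invokes the auxiliary rollouts or the $\psi_i^t$, and these are exactly what this term needs.
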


Theorem~\ref{thm:subopt} guarantees that \texttt{CSTRL} achieves a suboptimality gap of order $\widetilde{\mathcal{O}}(T^{-1/2})$ with respect to the smooth Tchebycheff scalarization objective.
Compared with existing work, our result offers several improvements.
First, \citet{qiu2024traversing} study unconstrained MORL and output a policy mixture that randomly selects from $\{\pi^1,\ldots,\pi^T\}$, which does not yield a single Markov policy. 
In contrast, \texttt{CSTRL} constructs a single Markov policy $\widehat{\pi}$ from the primal average $\bar{\theta}$ of the occupancy measures, and the suboptimality guarantee applies to this policy.
Second, \citet{xiong2023provably} address stepwise safety constraints but are restricted to the single-objective setting.
Our framework extends this line of work to the multi-objective setting, covers both cumulative and stepwise safety, and provably converges to the corresponding safe Pareto optimal policy set.
Third, \citet{gu2025safe} consider constrained MORL but only guarantee an $\mathcal{O}(1/\sqrt{T})$ expected constraint violation bound. 
In contrast, Theorem~\ref{thm:safety} shows that \texttt{CSTRL} achieves zero constraint violation with high probability throughout learning, providing a strictly stronger safety guarantee.
\begin{theorem}[Safety of \texttt{CSTRL}]
\label{thm:safety}
Suppose Assumptions~\ref{ass:slater} and~\ref{ass:known} hold.
Let $\{\pi^t\}_{t=1}^T$ and $\{\nu_i^t\}_{t\in[T],\,i\in[m]}$ be the policies executed by Algorithm~\ref{alg:CSTRL}, and let $\widehat{\pi}$ be the Markov policy induced by the returned occupancy measure $\bar\theta$.
Then, with probability at least $1-\delta$,
every executed policy is safe, i.e.,
$\mathrm{Violation}(\pi^t)=0$ and
$\mathrm{Violation}(\nu_i^t)=0$
for all $t\in[T]$ and $i\in[m]$,
and the output policy satisfies
$\mathrm{Violation}(\widehat{\pi})=0$.
\end{theorem}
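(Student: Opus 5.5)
We would prove Theorem~\ref{thm:safety} on the good event $\mathcal E$ on which all confidence statements hold simultaneously: the cost bound $|\widehat C^t-C|\le\Xi^t$ (Lemma~\ref{lem:gen_confidence}) and $P\in\mathcal P^t$ (Lemma~\ref{lem:confidenceset}), for every $t\in[T]$ and for both the main and the $m$ auxiliary data streams. A union bound with the logarithmic factors already built into~\eqref{eq:cost_confidence} and~\eqref{eq:transition_confidence} gives $\Pr(\mathcal E)\ge1-\delta$. All remaining steps are deterministic on $\mathcal E$.

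The key step is linearity of the constraint value in the occupancy measure under mixing. A policy that selects $\pi^\dagger$ with probability $\psi$ and $\widetilde\pi$ with probability $1-\psi$ at the start of the episode has true occupancy $\psi\theta^{P,\pi^\dagger}+(1-\psi)\theta^{P,\widetilde\pi}$. Hence for each $j\in[n]$ its cost is
\[
\psi\gamma_j+(1-\psi)\bigl(C^\top\theta^{P,\widetilde\pi}\bigr)_j .
\]
On $\mathcal E$ we have $C\le\widehat C^t+\Xi^t$ entrywise and $P\in\mathcal P^t$. Writing $u_j$ for the truncated pessimistic quantity $\min\{\max_{P'\in\mathcal P^t}\sum_h(\widehat c^t_{j,h}+\xi^t_{j,h})^\top\theta_h^{P',\widetilde\pi},\,d\}$, we get $(C^\top\theta^{P,\widetilde\pi})_j\le u_j$. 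The truncation at $d$ is harmless because true costs never exceed $d$, since $c_{j,h}\in[0,1]$ on at most $d$ steps. It then suffices to show $\psi\gamma_j+(1-\psi)u_j\le\tau_j$, and we split into cases.
\begin{itemize}
\item If $u_j\le\tau_j$, the inequality holds for any $\psi\in[0,1]$, since it is a convex combination of two quantities each at most $\tau_j$ (recall $\gamma_j<\tau_j$).
\item If $u_j>\tau_j$, the denominator in~\eqref{eq:eta_t} equals $u_j-\gamma_j>0$, so $\psi\ge(u_j-\tau_j)/(u_j-\gamma_j)$. Rearranging gives $\psi\gamma_j+(1-\psi)u_j\le\tau_j$. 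Because $\psi$ is a maximum over $j$ and the bound is monotone in $\psi$, it holds for every $j$ simultaneously.
\end{itemize}
Two remaining situations are easy. In the infeasible branch $\psi=1$, so the executed policy is $\pi^\dagger$, which is safe. At $t=1$, $\psi^0=\max_j(d-\tau_j)/(d-\gamma_j)$; this is the same computation with $u_j$ replaced by the trivial bound $d$. The argument applies verbatim to each $\nu_i^t$ with its own auxiliary estimates and $\psi_i^t$.

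For the output policy, each executed policy satisfies $C^\top\theta^{P,\pi^t}\le\boldsymbol\tau$, so by linearity the average $\bar\theta$ also satisfies $C^\top\bar\theta\le\boldsymbol\tau$. Moreover, $\bar\theta$ is a convex combination of valid occupancy measures for $(P,\rho)$. The constraints of Definition~\ref{def:occupancy-measure} are linear, so $\bar\theta$ is itself a valid occupancy measure and $\theta^{P,\pi^{\bar\theta}}=\bar\theta$. This uses the standard fact that the Markov policy induced by a valid occupancy measure reproduces it, proved by induction on $h$. It follows that $\mathrm{Violation}(\widehat\pi)=0$.

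The main obstacle is justifying that the mixture policy's occupancy is the convex combination. The random selection happens once per episode, so the executed object is a mixture of trajectory distributions and not a Markov policy. We would argue directly: the state-action marginal probabilities of a per-episode mixture are the convex combination of the two marginals, and the cost is linear in those marginals. A secondary care point is making the union bound cover both the main and auxiliary estimators.
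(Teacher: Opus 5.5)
Your proposal is correct and is essentially the paper's proof. It uses the same confidence event on costs and transitions for the main and each auxiliary stream, the same $\psi^0$ computation for the first episode, the same case analysis, and the same convexity argument for $\bar\theta$; your split on whether the truncated pessimistic cost exceeds $\tau_j$ just merges the paper's two feasible cases $\psi^t=0$ and $\psi^t\in(0,1)$, and the remaining infeasible branch with $\psi^t=1$ is identical.
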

Theorem~\ref{thm:safety} establishes stepwise safety when \(d=1\) and cumulative safety when \(d=H\).

\section{Preference-Free Safe Multi-Objective RL}
\label{sec:pfcstrl}
We note that \texttt{CSTRL} is preference-based, since its preference vector \(\boldsymbol\lambda\) is fixed before exploration and a new exploration process is required whenever \(\boldsymbol\lambda\) changes.
To avoid repeated environment interaction across preferences while maintaining safety under either cumulative or stepwise constraints, we propose \texttt{PF-CSTRL}.
The algorithm consists of a safe exploration stage followed by a preference-dependent planning stage.
Its exploration stage collects a dataset using safe policies without a prespecified preference.
The planning stage then computes a safe policy for any \(\boldsymbol\lambda\in\Delta_m\) from this dataset without further environment interaction.
The exploration stage collects one trajectory per episode, and the counts and estimators in~\eqref{eq:rollout-counts}--\eqref{eq:P} are computed from these trajectories.
Compared with \texttt{CSTRL}, the main change is in Line~\ref{line:ex_r} of Algorithm~\ref{alg:PFCSTRL}, where the preference-weighted reward is replaced by the exploration reward \(\overline r_h^t(s,a)\).
This reward combines the reward bonus \(\sigma_h^t(s,a)\) and the transition bonus \(\zeta_h^t(s,a)\) to encourage broad state-action coverage for subsequent planning.
We define the transition bonus
\begin{equation}
\zeta_h^t(s,a)
:=
\min\!\biggl\{
H,
H\sqrt{
\frac{
8|\mathcal S|
\ln\!\bigl(2mTH|\mathcal S||\mathcal A|/\delta\bigr)
}{
\max\{1,N_h^t(s,a)\}
}
}
\biggr\}.
\label{eq:transition-width-bonus}
\end{equation}

After the exploration stage, we omit the episode superscript and use $(\widehat r,\widehat C,\mathcal P,\Xi,\sigma,\zeta)$ to denote the corresponding quantities evaluated at the final counts $N^T$.
The fixed optimistic rewards \(\{\bar r_i\}_{i=1}^m\) are then given by~\eqref{eq:optimistic_reward}.
At planning iteration \(k\in[K]\), the scalarized optimistic reward \(\bar R^k\) is given by~\eqref{eq:scalarized_reward} with the dual variable \(\boldsymbol w^k\).
Both stages use \texttt{SafeProj} and exponential updates with \(\eta>0\).
We denote the policies induced by the projected occupancy measures in exploration episode \(t+1\) and planning iteration \(k\) by
$\widetilde\pi^{t+1}
:=
\pi^{\widetilde\theta^{t+1}}$
and
$\widetilde\pi^{k+1}
:=
\pi^{\widetilde\theta^{k+1}}$,
respectively.

{\algrenewcommand\algorithmicindent{0.8em}
\begin{table}[!t]
\begin{minipage}[b]{0.505\linewidth}
\begin{algorithm}[H]
\caption{Exploration Stage for \texttt{PF-CSTRL}}
\label{alg:PFCSTRL}
{\fontsize{8pt}{17.83pt}\selectfont
\begin{algorithmic}[1]
\State Initialize
$\widetilde\theta_h^1(s,a,s')
=
\frac{1}{|\mathcal S|^2|\mathcal A|}$;
set
$\widetilde\pi^1=\mathrm{Unif}(\mathcal A)$
and
$\psi^0
:=
\max_{j\in[n]} \{\frac{d-\tau_j}{d-\gamma_j}\}$;
select $\pi^1$ as $\pi^\dagger$ w.p. $\psi^0$
and as $\widetilde\pi^1$ w.p. $1-\psi^0$;
roll out $\pi^1$, initialize $\mathcal D$,
and construct $N^1$ by~\eqref{eq:rollout-counts}
\For{$t=1,2,\ldots,T-1$}
\State Construct \((\widehat C^t,\mathcal P^t,\Xi^t)\) by~\eqref{eq:reward_cost}, \eqref{eq:P}, \eqref{eq:cost_confidence}, and~\eqref{eq:transition_confidence}
\State Compute $\sigma_h^t$ and $\zeta_h^t$ by~\eqref{eq:reward_confidence} and~\eqref{eq:transition-width-bonus}
\State \label{line:ex_r} Uncertainty-guided exploration reward: 
\Statex \quad $\overline r_h^t(\cdot,\cdot)
= \max\{\zeta_h^t(\cdot,\cdot)/H,
\sigma_h^t(\cdot,\cdot)\}$
\State \label{alg3:theta}
\(\widehat\theta_h^{t+1}(s,a,s')
=
\widetilde\theta_h^{t}(s,a,s')\exp\{\eta\overline r_h^t(s,a)\}\)
\State
$(\widetilde\theta^{t+1},
\pi^{t+1},
\psi^t)
\gets
\texttt{SafeProj}(
\widehat\theta^{t+1},
\mathcal P^t,
\widehat C^t,
\Xi^t)$
\State Roll out
$\pi^{t+1}$, update \(\mathcal D\), and compute \(N^{t+1}\) by~\eqref{eq:rollout-counts}
\EndFor
\State {\bfseries Return:} \(\mathcal D\)
\end{algorithmic}}
\end{algorithm}
\end{minipage}
\hfill
\begin{minipage}[b]{0.485\linewidth}
\begin{algorithm}[H]
\caption{Planning Stage for \texttt{PF-CSTRL}}
\label{alg:pfcstrl-plan-rf}
{\fontsize{8pt}{12pt}\selectfont
\begin{algorithmic}[1]
\Require{\(\mathcal D\) from Algorithm~\ref{alg:PFCSTRL}, 
\(\boldsymbol\lambda\), 
\(\{\alpha_k\}_{k=1}^K\), 
\(\{\beta_k\}_{k=1}^K\),
\(\mu\)}
\State Initialize \(w_i^1=1/m\) and
$\widetilde\theta_h^1(s,a,s')
=
\frac{1}{|\mathcal S|^2|\mathcal A|}$
\State Construct the fixed
\(\widehat C\), \(\mathcal P\), \(\Xi\), and
\(\bar r_i\) from \(\mathcal D\) by
\eqref{eq:reward_cost}--\eqref{eq:optimistic_reward}
\For{$i=1,\ldots,m$} \label{line:aux-proj-start}
\State $
\widehat\theta_{i,h}(s,a,s')
=
\widetilde\theta_h^1(s,a,s')
\exp\{\eta^{-1}\bar r_{i,h}(s,a)\}$
\If{\(\texttt{Proj}(\widehat\theta_{i},\mathcal P,\widehat C,\Xi)\) is feasible}
\State Set
$\widetilde\theta_i
=
\texttt{Proj}(\widehat\theta_{i},\mathcal P,\widehat C,\Xi)$
\Else
\State Set
$\widetilde\theta_i
=
\texttt{Proj}(\widehat\theta_{i},\mathcal P)$
\EndIf
\State Set 
\(\widetilde{V}_i
=
\langle\bar r_i,\widetilde\theta_i\rangle\)
\EndFor \label{line:aux-proj-end}
\For{$k=1,2,\ldots,K$}
\State 
\(\widehat\theta_h^{k+1}(s,a,s')
=
\widetilde\theta_h^k(s,a,s')\exp\{\eta\bar R_h^k(s,a)\}\)
\State
$(\widetilde\theta^{k+1},
\pi^{k+1},
\psi^k)
\gets
\texttt{SafeProj}(
\widehat\theta^{k+1},
\mathcal P,
\widehat C,
\Xi)$
\State Set
$V_i^{k+1}
=
\langle\bar r_i,\widetilde\theta^{k+1}\rangle,
\, \forall i\in[m]$
\State 
\(\widetilde w_i^k=(1-\alpha_k)w_i^k+\alpha_k/m, \forall i\in[m]\)
\State 
$w_i^{k+1}
\! \propto \!
(\widetilde w_i^k)^{1-\mu\beta_k}
\!\exp\{\beta_k\lambda_i(\widetilde{V}_i+\!\iota\!-\!V_i^{k+1})\},
\!\forall i\!\in\![m]$
\EndFor
\State {\bfseries Return:} $\bar{\theta}=\frac{1}{K}\sum_{k=1}^{K}\theta^{P,\pi^{k+1}}$
\end{algorithmic}}
\end{algorithm}
\end{minipage}
\end{table}
}

\vspace{5pt}
\noindent\textbf{Theoretical Results.} 
The following theorem bounds the suboptimality of \texttt{PF-CSTRL}.

\begin{theorem}[Suboptimality gap of \texttt{PF-CSTRL}]
\label{thm:pfcstrl_subopt}
Suppose Assumptions~\ref{ass:slater} and~\ref{ass:known} hold and \texttt{PF-CSTRL} is run for \(T\) exploration episodes and \(K\) planning iterations.
Let $\widehat{\pi}$ denote the policy induced by the averaged occupancy measure output at the end of the planning stage.
For any preference vector \(\boldsymbol\lambda\in\Delta_m\) and smoothing parameter \(\mu>0\), set \(\alpha_k=1/k^2\), \(\beta_k=1/(\mu k)\), \(\eta=T^{-1/2}\) for exploration, and \(\eta=K^{-1/2}\) for planning.
Then, with probability at least \(1-\delta\), we have
\[
\mathrm{SubOpt}(\widehat\pi)
\le
\widetilde{\mathcal O}\!\left(
\phi_{\mathrm{safe}}H^4|\mathcal S|^{2}
\sqrt{\frac{|\mathcal A|}{T}}
+
\frac{H}{\sqrt K}
+
\frac{H^2}{\mu K}
+
\frac{\mu}{K}
\right),
\]
where
$\phi_{\mathrm{safe}}
=
\max_{j\in[n]}
\left\{
\frac{d^2}{(\tau_j-\gamma_j)^2}
\right\}$,
and \(\widetilde{\mathcal O}(\cdot)\) hides logarithmic factors in $m$, $n$, $|\mathcal S|$, $|\mathcal A|$, $H$, $T$, $K$, and \(1/\delta\).
\end{theorem}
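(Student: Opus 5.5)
The proof splits the gap into a statistical error from exploration and an optimization error from planning. Throughout, we work on the good event $\mathcal E$ where $P\in\mathcal P$, $|\widehat r_{i,h}-r_{i,h}|\le\sigma_h$, and $|\widehat C-C|\le\Xi$ hold for all exploration episodes. This event comes from Lemmas~\ref{lem:reward_concentration}, \ref{lem:gen_confidence}, \ref{lem:confidenceset}, with a union bound adjusting $\delta$.

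\textbf{Planning stage as an optimistic game.} On $\mathcal E$ we write $\stchlm(\widehat\pi)$ through the min--max--max form~\eqref{eq:eq-stch}. Let $\widehat\pi=\pi^{\bar\theta}$. Because $\bar\theta$ is an average of true occupancy measures $\theta^{P,\pi^{k+1}}$, it lies in $\Theta(P)$, so $V_i(\widehat\pi)=\langle r_i,\bar\theta\rangle$. The value $V_i(\widehat\pi)$ is therefore linear in the iterates, and convexity of $\stchlm$ in $\boldsymbol V$ lets us pass to the averaged iterates.

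The next step compares the executed mixture $\theta^{P,\pi^{k+1}}=\psi^k\theta^{P,\pi^\dagger}+(1-\psi^k)\theta^{P,\widetilde\pi^{k+1}}$ with the projected iterate $\widetilde\theta^{k+1}$ evaluated under $\bar r_i$. This leaves three error pieces:
\begin{itemize}
\item[(a)] the optimism gap $\langle \bar r_i,\widetilde\theta^{k+1}\rangle-\langle r_i,\theta^{P,\widetilde\pi^{k+1}}\rangle$;
\item[(b)] the mixing loss $\psi^k H$;
\item[(c)] the gap $V_i^*+\iota$ versus $\widetilde V_i+\iota$.
\end{itemize}
For (c) we need $\widetilde V_i\ge V_i^*$ up to statistical error. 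We intend to get this by showing the true-safe optimizer is feasible for the optimistic projection (since $\widehat C-\Xi\le C$ and $P\in\mathcal P$). The auxiliary projection with step $\eta^{-1}$ should then be near-greedy for $\bar r_i$.

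The optimization error is handled by standard analyses. Online mirror descent on $\Theta(\mathcal P)$ with a KL projection against the linear rewards $\bar R^k$ gives $\widetilde{\mathcal O}(H/\sqrt K)$ with $\eta=K^{-1/2}$; for this we use that $\Theta(\mathcal P)$ and its constraint-tightened subset are convex (Lemma~\ref{lem:occ-polytope}). The entropy-regularized mirror ascent on $\boldsymbol w$ with $\alpha_k=1/k^2$, $\beta_k=1/(\mu k)$ uses strong concavity from $-\mu w\log w$, as in \citet{qiu2024traversing}, and gives $\widetilde{\mathcal O}(H^2/(\mu K)+\mu/K)$. Combining these via a saddle-point argument yields the last three terms.

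\textbf{Statistical error, which is the main obstacle.} Terms (a), (b), (c) must be controlled uniformly over every policy used in planning and every $\boldsymbol\lambda$, using only data gathered under preference-free exploration. This is a reward-free argument:
\begin{itemize}
\item For any $\pi$ and any $P'\in\mathcal P$, simulation gives $|\langle \bar r_i,\theta^{P',\pi}\rangle-\langle r_i,\theta^{P,\pi}\rangle|\lesssim \mathbb E_{P,\pi}\bigl[\sum_h(\sigma_h+\zeta_h)\bigr]$, with each term bounded by $H\cdot V^{\overline r}(\pi)$ where $\overline r=\max\{\zeta/H,\sigma\}$.
\item The pessimistic cost value used in $\psi$ is likewise at most $\langle \widehat c+\xi,\theta\rangle$ plus transition error. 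From~\eqref{eq:eta_t}, $\psi^k\le (\text{excess})/(\tau_j-\gamma_j)$, and the excess is bounded by $2\langle\xi,\theta\rangle$ plus a $\zeta$ term. Hence $\psi^k H\lesssim \frac{dH}{\tau_j-\gamma_j}\cdot H\,V^{\overline r}$.
\end{itemize}

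It remains to show $\max_{\pi\in\Pi_{\rm safe}}V^{\overline r^T}(\pi)\lesssim \frac{1}{T}\sum_t V^{\overline r^t}(\pi^t)+$ optimization error. This holds because bonuses are monotone in counts and exploration runs mirror ascent on $\overline r^t$, achieving regret $\widetilde{\mathcal O}(H\sqrt T)$ against the safe comparator. The catch is that executed policies mix in $\pi^\dagger$ with probability $\psi^t$, and these mixture episodes cost regret $\psi^t H$. Applying the same $\psi$ bound recursively, this self-bounding feature is where the factor $\phi_{\mathrm{safe}}=d^2/(\tau-\gamma)^2$ enters.

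Finally, the sum of visited bonuses is bounded by a pigeonhole argument, $\sum_t\mathbb E\sum_h\zeta_h^t\lesssim H\sqrt{|\mathcal S|}\sqrt{H|\mathcal S||\mathcal A|T}$ up to the martingale concentration needed to pass from expected to realized visits. Dividing by $T$ and multiplying by the $H$, $\phi_{\mathrm{safe}}$, and extra $\sqrt{|\mathcal S|}$ factors should give $\phi_{\mathrm{safe}}H^4|\mathcal S|^2\sqrt{|\mathcal A|/T}$.

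I expect the hardest step to be the circular dependence: the regret of exploration controls $\psi$, while $\psi$ feeds back into that regret. I would first resolve it with the \citet{liu2021learning}-style argument that the executed mixture is safe-feasible, after which $\psi^t$ is bounded by bonus sums directly.
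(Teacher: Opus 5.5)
\textbf{Gap 1: the upper bound on the reference values in (c).} For (c) you plan only the lower bound $\widetilde V_i\ge V_i^*-\widetilde{\mathcal O}(H/\sqrt K)$. The reduction also needs the upper bound $\widetilde V_i\le V_i^*+{}$(statistical error).

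\emph{Why it is needed.} The planning game uses the reference point $\widetilde{\boldsymbol V}$, whereas $\mathrm{SubOpt}$ subtracts $\stchlm(\pi^*_{\mu,\boldsymbol\lambda})$, which is computed with reference $\boldsymbol V^*$. After the primal and dual regret bounds, you are left with the difference between the $\widetilde{\boldsymbol V}$-referenced and $\boldsymbol V^*$-referenced objectives at $\theta^*_{\boldsymbol\lambda}$. That difference is controlled only by $\max_i\lambda_i(\widetilde V_i-V_i^*)^+$, which can be of order $H$ unless exploration coverage is used.

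\emph{Why optimism does not give it.} Optimism together with feasibility of $\theta_i^*$ does not yield this direction. The auxiliary iterate $\widetilde\theta_i$ satisfies only the optimistic constraint under some $P'\in\mathcal P$. Hence $\widetilde\nu_i=\pi^{\widetilde\theta_i}$ may be unsafe under $P$, and its true value may exceed $V_i^*$, which is a maximum over safe policies only. Your coverage statement over $\Pi_{\mathrm{safe}}$ therefore does not apply to $\widetilde\nu_i$. Moreover, $\widetilde\nu_i$ is never mixed or executed, so it is not among the planning policies your argument controls.

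\emph{How the paper handles it.} In Lemma~\ref{lem:pf-fixed-aux-excess}, the paper introduces a virtual mixture $\nu_i$. It plays $\pi^\dagger$ with probability $\psi_i$, given by~\eqref{eq:eta_t} applied to $\widetilde\nu_i$. The three-case safety argument shows that $\nu_i$ is safe, so $V_i(\theta^{P,\widetilde\nu_i})\le V_i^*+H\psi_i$. It then bounds $\widetilde V_i-V_i(\theta^{P,\widetilde\nu_i})\le 2\langle\sigma,\widetilde\theta_i\rangle+2\sqrt{|\mathcal S|}\langle\zeta,\theta^{P,\widetilde\nu_i}\rangle$. Both these bonus terms and $\psi_i$ (via Lemma~\ref{lem:mixing-bound}) are controlled by coverage applied to the safe policy $\nu_i$. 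This upper bound is one of the two sources of the leading $\phi_{\mathrm{safe}}H^4|\mathcal S|^2\sqrt{|\mathcal A|/T}$ term.

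\textbf{Gap 2: transferring bonuses from unsafe iterates.} The same problem affects (a). It is bounded by bonuses under $\widetilde\pi^{k+1}$, which again need not be safe. The missing tool follows from the truncation at $d$ in~\eqref{eq:eta_t}: whenever the projection is feasible, $1-\psi\ge(\tau_j-\gamma_j)/d$. Hence $\theta^{P,\widetilde\pi}\le\sqrt{\phi_{\mathrm{safe}}}\,\theta^{P,\pi}$ entrywise, where $\pi$ is the safe executed mixture. This moves any bonus from an unsafe iterate onto a safe policy at cost $\sqrt{\phi_{\mathrm{safe}}}$.

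\textbf{No circularity.} The same inequality removes the circularity you anticipate in exploration. In Lemma~\ref{lem:pf-exploration-coverage}, the paper divides by $1-\psi^t$ instead of paying $\psi^tH$ as regret, so no self-bounding recursion arises. The factor $(1-\psi)$ in Lemma~\ref{lem:mixing-bound} then bounds $\psi^k$ directly by bonuses under the safe $\pi^{k+1}$. In the paper, $\phi_{\mathrm{safe}}$ is the product of a $\sqrt{\phi_{\mathrm{safe}}}$ from exploration coverage and a $\sqrt{\phi_{\mathrm{safe}}}$ from the planning-stage mixing and transfer steps. Your additive $\psi^tH$ route is valid, but combined with Lemma~\ref{lem:mixing-bound} it gives a coverage bound scaling as $H/(\tau_j-\gamma_j)$ rather than $d/(\tau_j-\gamma_j)$.
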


The bound in Theorem~\ref{thm:pfcstrl_subopt} separates the statistical error of preference-free exploration from the optimization error of planning on the fixed dataset.
The \(T\)-dependent term arises from the fixed reward and transition bonuses and the probabilities of selecting $\pi^\dagger$ during planning.
The \(K\)-dependent terms arise from the main occupancy measure updates, the optimistic single-objective values \(\widetilde V_i\), and the smooth Tchebycheff dual updates.
\texttt{PF-CSTRL} guarantees safety under either formulation throughout both stages.

\begin{theorem}
[Safety of \texttt{PF-CSTRL}]
\label{thm:safety-pfcstrl}
Suppose Assumptions~\ref{ass:slater} and~\ref{ass:known} hold.
Let $\{\pi^t\}_{t=1}^{T}$ and $\{\pi^{k+1}\}_{k=1}^{K}$ denote the policies generated in the exploration and planning stages, respectively, by Algorithms~\ref{alg:PFCSTRL} and~\ref{alg:pfcstrl-plan-rf}.
Then, with probability at least \(1-\delta\), we have $\mathrm{Violation}(\pi^t)=0, \forall t\in[T],$ and $\mathrm{Violation}(\pi^{k+1})=0, \forall k\in[K].$
Consequently, the returned policy \(\widehat\pi\) induced by \(\bar\theta\) satisfies \(\mathrm{Violation}(\widehat\pi)=0\).
\end{theorem}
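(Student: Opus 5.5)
We work on the high-probability event $\mathcal E$ on which the true kernel satisfies $P\in\mathcal P^t$ for all $t$ (Lemma~\ref{lem:confidenceset}) and $|\widehat C^t-C|\le\Xi^t$ entrywise for all $t$ (Lemma~\ref{lem:gen_confidence}). Splitting $\delta$ and taking a union bound over the exploration counts gives $\Pr(\mathcal E)\ge1-\delta$. The planning stage reuses the fixed quantities $(\widehat C,\mathcal P,\Xi)$ evaluated at $N^T$, so the same event covers it. On $\mathcal E$ the argument is deterministic and is the same for every policy returned by \texttt{SafeProj}, whether in exploration or planning. It therefore suffices to show that, whenever \texttt{SafeProj} is called with inputs $(\widehat\theta,\mathcal P,\widehat C,\Xi)$ satisfying $P\in\mathcal P$ and $|\widehat C-C|\le\Xi$, the executed policy $\pi$ is safe.

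Fix a call to \texttt{SafeProj} and a constraint $j$. The policy $\pi$ selects $\pi^\dagger$ with probability $\psi$ and $\widetilde\pi$ with probability $1-\psi$ at the start of the episode. Its occupancy measure under $P$ is therefore the mixture $\psi\,\theta^{P,\pi^\dagger}+(1-\psi)\theta^{P,\widetilde\pi}$, so $(C^\top\theta^{P,\pi})_j=\psi\gamma_j+(1-\psi)(C^\top\theta^{P,\widetilde\pi})_j$. If the projection is infeasible, then $\psi=1$ and the cost equals $\gamma_j<\tau_j$. Otherwise define the pessimistic estimate
\[
U_j:=\min\Bigl\{\max_{P'\in\mathcal P}\sum_h(\widehat c_{j,h}+\xi_{j,h})^\top\theta_h^{P',\widetilde\pi},\,d\Bigr\}.
\]
On $\mathcal E$ we have $C\le\widehat C+\Xi$ and $P\in\mathcal P$. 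Each $c_{j,h}\le1$ and is nonzero at only $d$ steps, so the true cost is at most $d$. Together these give $(C^\top\theta^{P,\widetilde\pi})_j\le U_j$. It remains to show $\psi\gamma_j+(1-\psi)U_j\le\tau_j$. If $U_j\le\tau_j$, this holds for any $\psi\in[0,1]$, since $\gamma_j<\tau_j$. If $U_j>\tau_j$, then the denominator in~\eqref{eq:eta_t} is $U_j-\gamma_j>0$, and $\psi\ge\psi_j=(U_j-\tau_j)/(U_j-\gamma_j)$. Because $\gamma_j<U_j$, the map $\psi\mapsto\psi\gamma_j+(1-\psi)U_j$ is decreasing. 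Evaluating at $\psi_j$ gives $\gamma_j\psi_j+U_j(1-\psi_j)=\tau_j$, so the cost at $\psi$ is at most $\tau_j$. We also check that $\psi\le1$, which holds because $U_j-\tau_j\le U_j-\gamma_j$. The initial policy $\pi^1$ is covered by the same computation with the crude bound $U_j=d$, which yields $\psi^0$. This proves $\mathrm{Violation}(\pi^t)=0$ for all $t\in[T]$ and $\mathrm{Violation}(\pi^{k+1})=0$ for all $k\in[K]$. The auxiliary projections in lines~\ref{line:aux-proj-start}--\ref{line:aux-proj-end} of Algorithm~\ref{alg:pfcstrl-plan-rf} are never executed, so no safety claim is needed for them.

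For the returned policy, $\bar\theta=\frac1K\sum_k\theta^{P,\pi^{k+1}}$ is a convex combination of occupancy measures under the true $P$. By the linearity of the constraints in Definition~\ref{def:occupancy-measure}, $\bar\theta$ is itself an occupancy measure for $P$. Since $\bar\theta$ satisfies the flow and transition constraints, the induced Markov policy $\widehat\pi=\pi^{\bar\theta}$ has $\theta^{P,\widehat\pi}=\bar\theta$. By linearity, $C^\top\bar\theta=\frac1K\sum_kC^\top\theta^{P,\pi^{k+1}}\le\boldsymbol\tau$. The main points requiring care are the two technical ones. The first is justifying that the mixed policy's occupancy is exactly the convex mixture, which holds because the randomization is done once per episode. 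The second is the identity $\theta^{P,\pi^{\bar\theta}}=\bar\theta$, which follows by induction over $h$ from the flow equations, with arbitrary action choice at states where $\bar\theta_h(s)=0$. Neither is a real obstacle.
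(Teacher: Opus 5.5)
Your proof is correct and follows the paper's argument. You work on the cost and transition confidence event and verify safety of every \texttt{SafeProj} output through the infeasible/feasible split and the pessimistic bound $U_j$ (merging the paper's Cases 2 and 3 via monotonicity of $\psi\mapsto\psi\gamma_j+(1-\psi)U_j$), handle $\pi^1$ through $\psi^0$, and conclude for $\widehat\pi$ by convexity of $\Theta_{\mathrm{safe}}$ together with $\theta^{P,\pi^{\bar\theta}}=\bar\theta$. The only cosmetic difference is that you cover the planning stage with the same anytime event (Lemmas~\ref{lem:confidenceset} and~\ref{lem:gen_confidence} hold for all $t\in[T]$, including the final counts $N^T$), whereas the paper intersects separately with a fixed-dataset event $\mathcal E_{\mathrm{fix}}$.
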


\section{Extensions}
\label{sec:extensions}
Both \texttt{CSTRL} and \texttt{PF-CSTRL} assume that the learner knows a strictly safe policy $\pi^\dagger$ and its costs $\boldsymbol{\gamma}$.
We relax these requirements in this section.

\subsection{Known \texorpdfstring{$\pi^\dagger$}{pi-dagger} and Unknown \texorpdfstring{$\boldsymbol{\gamma}$}{gamma}}
\label{sec:unknown-gamma}
We first remove the requirement that the learner knows $\boldsymbol\gamma$.
\begin{assumption}
\label{ass:unknown-gamma}
The policy $\pi^\dagger$ in Assumption~\ref{ass:slater} is known to the learner, but its costs $\boldsymbol{\gamma}$ are unknown.
\end{assumption}
Since $\pi^\dagger$ is known, the learner can estimate $\boldsymbol\gamma$ by repeatedly executing $\pi^\dagger$.
We define the Slater margin of $\pi^\dagger$ as \(\varrho^\dagger:=\min_{j\in[n]}(\tau_j-\gamma_j)\).
The empirical costs of $\pi^\dagger$ after \(t\) episodes are $\bar\gamma_j^t:=\frac{1}{t}\sum_{k=1}^t\sum_{h=1}^H c_{j,h}^k$.
The stopping time \(t_s\) is the first episode \(t\in[T]\) with $\tau_j-\bar\gamma_j^t\ge3d\sqrt{\ln(4nT/\delta)/(2t)}$ for all $j\in[n]$, or \(T\) if no such episode exists.
The learner executes $\pi^\dagger$ for $t_s$ episodes, which we call the estimation phase, and then constructs a conservative upper confidence bound $\widehat{\boldsymbol\gamma}$ on $\boldsymbol\gamma$ with $\widehat\gamma_j:=\bar\gamma_j^{t_s}+d\sqrt{\ln(4nT/\delta)/(2t_s)}$.
Our analysis shows that, with high probability,
$t_s
\le
1+{8d^2\ln(4nT/\delta)}/{(\varrho^\dagger)^2}
=
\widetilde{\mathcal O}(d^2/(\varrho^\dagger)^2).$
The resulting preference-based extension is \texttt{CSTRL-UC} (\texttt{CSTRL} with Unknown Constraint Costs).
It runs \texttt{CSTRL} for the remaining \(T-t_s\) episodes with $\widehat{\boldsymbol\gamma}$ in place of $\boldsymbol\gamma$.
To compute a safe policy for any subsequently specified preference, we also combine this estimation phase with \texttt{PF-CSTRL}.
The resulting preference-free extension is \texttt{PF-CSTRL-UC} (\texttt{PF-CSTRL} with Unknown Constraint Costs).
It runs \texttt{PF-CSTRL} with $\widehat{\boldsymbol\gamma}$ in place of $\boldsymbol\gamma$ for the remaining \(T-t_s\) exploration episodes, followed by \(K\) planning iterations.

\vspace{5pt}
\noindent\textbf{Theoretical Results.} 
We now present the guarantees for \texttt{CSTRL-UC} and \texttt{PF-CSTRL-UC}.

\begin{theorem}
[Suboptimality gap of \texttt{CSTRL-UC}]
\label{thm:unknown-gamma-subopt}
Suppose Assumptions~\ref{ass:slater} and~\ref{ass:unknown-gamma} hold.
Let $\widehat{\pi}$ be the policy returned by \texttt{CSTRL-UC}.
Assume that $T>1+8d^2\ln(4nT/\delta)/(\varrho^\dagger)^2$.
For any preference vector $\boldsymbol{\lambda}\in\Delta_m$ and smoothing parameter $\mu>0$, set \(\eta=(T-t_s)^{-1/2}\), \(\alpha_t=1/t^2\), and \(\beta_t=1/(\mu t)\).
Then, with probability at least $1-\delta$, we have $t_s<T$ and
\[
\mathrm{SubOpt}(\widehat{\pi})
\!\leq\!
\widetilde{\mathcal{O}}\!\left(
H^2|\mathcal S|^{3/2}\sqrt{\frac{|\mathcal A|}{T-t_s}}
+
\widehat{\phi}_{\mathrm{safe}}\,H^3|\mathcal{S}|^{3/2}\sqrt{\frac{|\mathcal A|}{T-t_s}}
+
\frac{H^2}{\mu (T-t_s)}
+
\frac{\mu}{T-t_s}\right),
\]
where
$\widehat{\phi}_{\mathrm{safe}}
:=
\max_{j\in[n]}
\frac{d^2}{(\tau_j-\widehat{\gamma}_j)^2}$
satisfies $\widehat{\phi}_{\mathrm{safe}}\le 4\phi_{\mathrm{safe}}$, and $\widetilde{\mathcal{O}}(\cdot)$ hides logarithmic factors in $m$, $n$, $|\mathcal{S}|$, $|\mathcal{A}|$, $H$, $T$, and $1/\delta$.
\end{theorem}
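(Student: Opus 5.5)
The proof reduces \texttt{CSTRL-UC} to \texttt{CSTRL} started after the estimation phase, with $\widehat{\boldsymbol\gamma}$ in place of $\boldsymbol\gamma$. Theorem~\ref{thm:subopt} is then invoked with horizon $T-t_s$ and $\phi_{\mathrm{safe}}$ replaced by $\widehat\phi_{\mathrm{safe}}$. Three facts are needed on a high-probability event: (a) $\boldsymbol\gamma\le\widehat{\boldsymbol\gamma}<\boldsymbol\tau$; (b) $t_s\le 1+8d^2\ln(4nT/\delta)/(\varrho^\dagger)^2<T$; and (c) $\tau_j-\widehat\gamma_j\ge(\tau_j-\gamma_j)/2$, which gives $\widehat\phi_{\mathrm{safe}}\le4\phi_{\mathrm{safe}}$. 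The failure probability $\delta$ is split between the estimation event and the event of Theorem~\ref{thm:subopt}; this only changes logarithmic factors.

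\textbf{Step 1 (concentration).} For fixed $j$ and $t$, the per-episode cost $\sum_h c^k_{j,h}$ under $\pi^\dagger$ lies in $[0,d]$, since only $d$ steps carry cost. The episodes are i.i.d.\ with mean $\gamma_j$. Hoeffding's inequality plus a union bound over $j\in[n]$ and $t\in[T]$ give $|\bar\gamma_j^t-\gamma_j|\le d\sqrt{\ln(4nT/\delta)/(2t)}=:b_t$ for all $j,t$, with probability at least $1-\delta/2$. Then $\widehat\gamma_j=\bar\gamma^{t_s}_j+b_{t_s}\ge\gamma_j$. The stopping rule gives $\tau_j-\widehat\gamma_j\ge 2b_{t_s}>0$, which proves (a).

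\textbf{Step 2 (stopping time).} On this event, $\tau_j-\bar\gamma_j^t\ge\varrho^\dagger-b_t$. The stopping condition $\tau_j-\bar\gamma^t_j\ge3b_t$ therefore holds as soon as $\varrho^\dagger\ge4b_t$, i.e.\ $t\ge 8d^2\ln(4nT/\delta)/(\varrho^\dagger)^2$. So $t_s\le 1+8d^2\ln(4nT/\delta)/(\varrho^\dagger)^2$, and the assumption on $T$ yields $t_s<T$, which proves (b).

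\textbf{Step 3 (margin).} This is the step I expect to be the main obstacle. We have $\tau_j-\widehat\gamma_j=(\tau_j-\gamma_j)-(\widehat\gamma_j-\gamma_j)$ and $\widehat\gamma_j-\gamma_j\le 2b_{t_s}$. From Step 1, $\tau_j-\widehat\gamma_j\ge2b_{t_s}$, hence $\tau_j-\gamma_j\ge(\tau_j-\widehat\gamma_j)+\ldots$; more usefully, $\tau_j-\gamma_j=(\tau_j-\widehat\gamma_j)+(\widehat\gamma_j-\gamma_j)\le(\tau_j-\widehat\gamma_j)+2b_{t_s}\le2(\tau_j-\widehat\gamma_j)$. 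This gives $(\tau_j-\widehat\gamma_j)^{-2}\le4(\tau_j-\gamma_j)^{-2}$, which is (c). The delicate point is that the factor $3$ in the stopping rule is exactly what makes these constants close.

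\textbf{Step 4 (rerunning the \texttt{CSTRL} analysis).} Condition on the estimation data, which is independent of the later trajectories. Run the proof of Theorem~\ref{thm:subopt} over $T-t_s$ episodes with $\widehat{\boldsymbol\gamma}$. That proof uses $\boldsymbol\gamma$ only in two places. First, safety of the mixture $\psi\pi^\dagger+(1-\psi)\widetilde\pi$: by linearity, its cost is at most $\psi\widehat\gamma_j+(1-\psi)\cdot(\text{pessimistic bound})\le\tau_j$, which uses $\gamma_j\le\widehat\gamma_j$ and the definition~\eqref{eq:eta_t} with $\widehat\gamma_j$. Second, bounding $\sum_t\psi^t$ by $(\tau_j-\widehat\gamma_j)^{-1}$ times the pessimism–optimism width. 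Each place where $\tau_j-\gamma_j$ appeared now uses the surrogate margin, so the constant becomes $\widehat\phi_{\mathrm{safe}}$. Finally, the returned $\bar\theta$ averages only post-estimation iterates. The bound of Theorem~\ref{thm:subopt} then holds with $T$ replaced by $T-t_s$, giving the stated rate.
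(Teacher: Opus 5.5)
Your proposal is correct and follows the paper's proof essentially step for step. The paper's Lemma~\ref{lem:uc-estimation} establishes the same Hoeffding event with a union bound over $j\in[n]$ and $t\in[T]$, the same bound on $t_s$, and $\gamma_j\le\widehat\gamma_j\le(\gamma_j+\tau_j)/2$, which is equivalent to your Step~3; it then conditions on the estimation history and reruns the proof of Theorem~\ref{thm:subopt} over $T-t_s$ episodes with $\widehat{\boldsymbol\gamma}$, splitting $\delta$ in half. One thing to tidy is the order of Steps~1 and~2: the inequality $\tau_j-\widehat\gamma_j\ge 2b_{t_s}$ uses that the stopping condition actually fired at $t_s$, and this is only known after Step~2 shows $t_s<T$ rather than the default value $t_s=T$.
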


\begin{theorem}
[Safety of \texttt{CSTRL-UC}]
\label{thm:unknown-gamma-safety}
Suppose Assumptions~\ref{ass:slater} and~\ref{ass:unknown-gamma} hold.
Assume that $T>1+8d^2\ln(4nT/\delta)/(\varrho^\dagger)^2$.
Let $\widehat{\pi}$ be the policy returned by \texttt{CSTRL-UC}. 
With probability at least $1-\delta$, the stopping time satisfies $t_s<T$, and every policy executed is safe.
The returned policy satisfies $\mathrm{Violation}(\widehat{\pi}) = 0$.
\end{theorem}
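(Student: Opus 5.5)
We split the proof into the estimation phase and the subsequent \texttt{CSTRL} phase, and intersect a small number of high-probability events via a union bound. The first event, $\mathcal E_\gamma$, states that for every $t\in[T]$ and $j\in[n]$ we have $|\bar\gamma_j^t-\gamma_j|\le d\sqrt{\ln(4nT/\delta)/(2t)}$. This follows from Hoeffding's inequality applied to the i.i.d.\ episode costs $\sum_h c_{j,h}^k\in[0,d]$ under $\pi^\dagger$, together with a union bound over $t$ and $j$; it holds with probability at least $1-\delta/2$. We then allocate the remaining $\delta/2$ to the confidence events used in Theorem~\ref{thm:safety}, namely Lemmas~\ref{lem:gen_confidence} and~\ref{lem:confidenceset}, with $\delta$ replaced by $\delta/2$. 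This changes only logarithmic constants.

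On $\mathcal E_\gamma$, every policy executed during the estimation phase is $\pi^\dagger$. It is therefore safe by Assumption~\ref{ass:slater}, since $C^\top\theta^{P,\pi^\dagger}<\boldsymbol\tau$. Next we show $t_s<T$. Let $b_t:=d\sqrt{\ln(4nT/\delta)/(2t)}$. On $\mathcal E_\gamma$ we have $\tau_j-\bar\gamma_j^t\ge\varrho^\dagger-b_t$, so the stopping test $\tau_j-\bar\gamma_j^t\ge 3b_t$ holds as soon as $4b_t\le\varrho^\dagger$, i.e.\ as soon as $t\ge 8d^2\ln(4nT/\delta)/(\varrho^\dagger)^2$. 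Hence $t_s\le 1+8d^2\ln(4nT/\delta)/(\varrho^\dagger)^2<T$ by the assumption on $T$.

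At $t_s$ we then have $\widehat\gamma_j=\bar\gamma_j^{t_s}+b_{t_s}\ge\gamma_j$ on $\mathcal E_\gamma$. The stopping rule also gives $\tau_j-\widehat\gamma_j\ge 2b_{t_s}>0$, and $\widehat\gamma_j\ge 0$. So $\widehat{\boldsymbol\gamma}$ satisfies $\gamma_j\le\widehat\gamma_j<\tau_j\le d$. The bound $\widehat\phi_{\mathrm{safe}}\le4\phi_{\mathrm{safe}}$ is not needed for this safety statement.

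The main step is to rerun the safety argument of Theorem~\ref{thm:safety} with $\widehat{\boldsymbol\gamma}$ in place of the true $\boldsymbol\gamma$. In that proof, the key inequality is that the mixture selecting $\pi^\dagger$ with probability $\psi$ and $\widetilde\pi$ otherwise has cost $\psi\gamma_j+(1-\psi)V_j^c(\widetilde\pi)$. On the confidence events we have $V_j^c(\widetilde\pi)\le U_j:=\min\{\max_{P'\in\mathcal P^t}\sum_h(\widehat c_{j,h}^t+\xi_{j,h}^t)^\top\theta_h^{P',\widetilde\pi},d\}$, and the choice of $\psi$ in~\eqref{eq:eta_t} makes $\psi\gamma_j+(1-\psi)U_j\le\tau_j$. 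This expression is nondecreasing in the value assigned to the safe policy's cost. Replacing $\gamma_j$ by $\widehat\gamma_j\ge\gamma_j$ when computing $\psi$ therefore yields $\psi\widehat\gamma_j+(1-\psi)U_j\le\tau_j$, and hence the true cost satisfies
\[
\psi\gamma_j+(1-\psi)V_j^c(\widetilde\pi)\le\psi\widehat\gamma_j+(1-\psi)U_j\le\tau_j .
\]
The denominator in~\eqref{eq:eta_t} stays positive because $\widehat\gamma_j<\tau_j$, so $\psi\in[0,1]$ is well defined. If $U_j\le\tau_j$ then $\psi=0$, and if $U_j>\tau_j$ then $\psi=(U_j-\tau_j)/(U_j-\widehat\gamma_j)$, so the mixture inequality holds with equality. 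The same holds for $\psi^0$ and for each $\psi_i^t$ used for the auxiliary policies. When the projection is infeasible, $\psi=1$ and the executed policy is $\pi^\dagger$, which is safe.

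I expect the main obstacle to be checking that no other step of the proof of Theorem~\ref{thm:safety} uses the exact value of $\boldsymbol\gamma$, for instance a feasibility argument for the projection. If such a step exists, I would verify that it only requires an upper bound on $\boldsymbol\gamma$ strictly below $\boldsymbol\tau$.

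Finally, every executed $\pi^t$ is safe, so each $\theta^{P,\pi^t}$ lies in $\Theta_{\mathrm{safe}}$. The returned $\bar\theta$ is an average of these, and $\Theta_{\mathrm{safe}}$ is convex because it is the intersection of $\Theta(P)$, a convex polytope by Lemma~\ref{lem:occ-polytope}, with linear constraints. Hence $\bar\theta\in\Theta_{\mathrm{safe}}$. Since $\theta^{P,\pi^{\bar\theta}}=\bar\theta$ for a valid occupancy measure under $P$, we get $\mathrm{Violation}(\widehat\pi)=0$.
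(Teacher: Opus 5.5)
Your proposal is correct and follows essentially the same route as the paper: the Hoeffding/stopping-rule step (the content of Lemma~\ref{lem:uc-estimation}) gives $t_s<T$ and $\boldsymbol\gamma\le\widehat{\boldsymbol\gamma}<\boldsymbol\tau$, and the proof of Theorem~\ref{thm:safety} then goes through because it uses $\boldsymbol\gamma$ only as an upper bound on $C^\top\theta^{P,\pi^\dagger}$ strictly below $\boldsymbol\tau$, which you check explicitly through the monotonicity of the mixture bound where the paper simply asserts it. One small imprecision: since $\psi$ in~\eqref{eq:eta_t} is a maximum over $j$, for a fixed $j$ you only know that $\psi$ is at least the $j$-th ratio (not equal to it, and not $\psi=0$ when $U_j\le\tau_j$), but this still yields $\psi\widehat\gamma_j+(1-\psi)U_j\le\tau_j$ for every $j$, so your conclusion is unaffected.
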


\begin{theorem}[Suboptimality gap of \texttt{PF-CSTRL-UC}]
\label{thm:pfcstrl-uc-subopt}
Under Assumptions~\ref{ass:slater} and~\ref{ass:unknown-gamma}, let $\widehat{\pi}$ be the policy returned by \texttt{PF-CSTRL-UC}.
Assume that $T>1+8d^2\ln(4nT/\delta)/(\varrho^\dagger)^2$.
For any preference vector \(\boldsymbol\lambda\in\Delta_m\) and smoothing parameter \(\mu>0\), set \(\alpha_k=1/k^2\), \(\beta_k=1/(\mu k)\), \(\eta=(T-t_s)^{-1/2}\) for exploration, and \(\eta=K^{-1/2}\) for planning.
Then, with probability at least \(1-\delta\), we have \(t_s<T\) and
\[
\mathrm{SubOpt}(\widehat{\pi})
\le
\widetilde{\mathcal O}\!\left(
\widehat{\phi}_{\mathrm{safe}}H^4|\mathcal S|^{2}
\sqrt{\frac{|\mathcal A|}{T-t_s}}
+
\frac{H}{\sqrt K}
+
\frac{H^2}{\mu K}
+
\frac{\mu}{K}
\right),
\]
where \(\widehat{\phi}_{\mathrm{safe}}\) is defined in Theorem~\ref{thm:unknown-gamma-subopt} and satisfies \(\widehat{\phi}_{\mathrm{safe}} \le 4\phi_{\mathrm{safe}}\).
The notation \(\widetilde{\mathcal O}(\cdot)\) hides logarithmic factors in \(m\), \(n\), $|\mathcal S|$, $|\mathcal A|$, \(H\), \(T\), \(K\), and \(1/\delta\).
\end{theorem}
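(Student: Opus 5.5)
The plan is a reduction to Theorem~\ref{thm:pfcstrl_subopt}, run on a good event. The only thing that changes relative to \texttt{PF-CSTRL} is that the known costs $\boldsymbol\gamma$ are replaced by the estimate $\widehat{\boldsymbol\gamma}$, and the exploration horizon shrinks from $T$ to $T-t_s$. We therefore define the event $\mathcal E_0$ on which the estimation phase behaves well, intersect it with the concentration event $\mathcal E$ used for \texttt{PF-CSTRL}, and run both at confidence $\delta/2$. This costs only constant factors inside the logarithms.

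\textbf{Step 1 (estimation phase).} By Hoeffding's inequality, each per-episode cumulative cost of $\pi^\dagger$ lies in $[0,d]$. A union bound over $j\in[n]$ and $t\in[T]$ then gives, with probability at least $1-\delta/2$,
\[
|\bar\gamma_j^t-\gamma_j|\le d\sqrt{\ln(4nT/\delta)/(2t)}
\quad\text{for all } j,t.
\]
On this event we argue three things.
\begin{itemize}
\item Take $t\ge 1+8d^2\ln(4nT/\delta)/(\varrho^\dagger)^2$. Then $4d\sqrt{\ln(4nT/\delta)/(2t)}\le\varrho^\dagger$, so $\tau_j-\bar\gamma_j^t\ge \varrho^\dagger-d\sqrt{\cdot}\ge 3d\sqrt{\cdot}$. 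Hence the stopping rule fires, and the assumption on $T$ gives $t_s<T$.
\item We have $\widehat\gamma_j\ge\gamma_j$, and at $t_s$, $\tau_j-\widehat\gamma_j\ge 2d\sqrt{\ln(4nT/\delta)/(2t_s)}>0$. So $0\le\gamma_j\le\widehat\gamma_j<\tau_j$.
\item From $\tau_j-\gamma_j\le \tau_j-\widehat\gamma_j+2d\sqrt{\cdot}\le 2(\tau_j-\widehat\gamma_j)$ we get $\tau_j-\widehat\gamma_j\ge(\tau_j-\gamma_j)/2$, which yields $\widehat\phi_{\mathrm{safe}}\le 4\phi_{\mathrm{safe}}$.
\end{itemize}

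\textbf{Step 2 (transfer of the \texttt{PF-CSTRL} analysis).} Next we check where $\boldsymbol\gamma$ enters the proof of Theorem~\ref{thm:pfcstrl_subopt}: only through $\psi^0$ and the mixing probabilities $\psi^t,\psi^k$ in~\eqref{eq:eta_t}.

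\emph{Safety.} The mixture is safe because, with the pessimistic cost bound $U_j$, we have
\[
\psi\gamma_j+(1-\psi)U_j\le\tau_j.
\]
Replacing $\gamma_j$ by $\widehat\gamma_j\ge\gamma_j$ keeps this inequality valid, since $\psi\gamma_j\le\psi\widehat\gamma_j$. This also needs $\widehat\gamma_j<\tau_j$ so that the denominators stay positive, which Step 1 provides.

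\emph{Suboptimality.} The suboptimality analysis bounds $\psi$ by roughly $(U_j-\tau_j)^+/(\tau_j-\widehat\gamma_j)$. Summed over episodes, this produces the factor $d^2/(\tau_j-\widehat\gamma_j)^2$, that is, $\widehat\phi_{\mathrm{safe}}$ in place of $\phi_{\mathrm{safe}}$.

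The mirror-ascent and optimism arguments do not involve $\boldsymbol\gamma$. The data collected during the estimation phase can either be discarded or kept; keeping it only enlarges the counts. For a clean reduction, we take the exploration counts from the last $T-t_s$ episodes only, so that the bound is the \texttt{PF-CSTRL} bound with $T$ replaced by $T-t_s$. The comparator $\pi^*_{\mu,\boldsymbol\lambda}$ and the values $V_i^*$ are defined over $\Theta_{\mathrm{safe}}$, which is independent of $\widehat{\boldsymbol\gamma}$. One point deserves care: the \texttt{PF-CSTRL} argument uses optimism against a comparator that is feasible for the optimistic projection, and this does not depend on $\boldsymbol\gamma$.

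\textbf{Step 3 (conditioning).} Here we expect the main obstacle. The number of remaining episodes $T-t_s$ is random, and $\widehat{\boldsymbol\gamma}$ depends on data. We would handle this by proving the \texttt{PF-CSTRL} concentration lemmas uniformly over all start times and horizons up to $T$, which the union bounds already indexed by $T$ essentially allow. Alternatively, we would condition on the $\sigma$-field of the estimation phase: $t_s$ is a stopping time, and the subsequent trajectories are fresh, so the \texttt{PF-CSTRL} high-probability guarantee holds conditionally for any fixed $\widehat{\boldsymbol\gamma}$ with $\gamma_j\le\widehat\gamma_j<\tau_j$.

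Combining Steps 1--3 on the intersection of $\mathcal E_0$ and $\mathcal E$, whose probability is at least $1-\delta$, gives $t_s<T$ together with the stated bound with $T-t_s$ and $\widehat\phi_{\mathrm{safe}}\le4\phi_{\mathrm{safe}}$. The planning terms $H/\sqrt K+H^2/(\mu K)+\mu/K$ are unchanged.
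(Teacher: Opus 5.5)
Your proposal is correct and follows essentially the same route as the paper. The paper's Lemma~\ref{lem:uc-estimation} is your Step 1: Hoeffding plus a union bound over $j$ and $t$, giving $t_s<T$, $\gamma_j\le\widehat\gamma_j\le(\gamma_j+\tau_j)/2$, and $\widehat\phi_{\mathrm{safe}}\le4\phi_{\mathrm{safe}}$. The paper then does what your second option in Step 3 describes. It conditions on the estimation-phase history and notes that the \texttt{PF-CSTRL} analysis uses $\boldsymbol\gamma$ only as an upper bound on $C^\top\theta^{P,\pi^\dagger}$ strictly below $\boldsymbol\tau$. It applies Theorem~\ref{thm:pfcstrl_subopt} at level $\delta/2$ to the remaining $T-t_s$ episodes and finishes with a union bound.

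The only slip is in your forward argument for $t_s<T$. The inequality you use needs only $t\ge 8d^2\ln(4nT/\delta)/(\varrho^\dagger)^2$, so take $t=\lceil 8d^2\ln(4nT/\delta)/(\varrho^\dagger)^2\rceil$, which is strictly less than $T$. An integer $t\ge1+8d^2\ln(4nT/\delta)/(\varrho^\dagger)^2$ could equal $T$, which would only give $t_s\le T$. The paper avoids this by arguing backward from the failure of the stopping rule at $t_s-1$.
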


\begin{theorem}[Safety of \texttt{PF-CSTRL-UC}]
\label{thm:pfcstrl-uc-safety}
Suppose Assumptions~\ref{ass:slater} and~\ref{ass:unknown-gamma} hold.
Assume that $T>1+8d^2\ln(4nT/\delta)/(\varrho^\dagger)^2$.
With probability at least \(1-\delta\), the stopping time satisfies \(t_s<T\).
On the same event, every policy executed during the estimation phase or the exploration stage is safe, as is every main policy generated during the planning stage.
The returned policy \(\widehat{\pi}\) satisfies \(\mathrm{Violation}(\widehat{\pi})=0\).
\end{theorem}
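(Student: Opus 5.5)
The proof splits into three parts: the estimation phase, the exploration stage, and the planning stage. All are carried out on one high-probability event $\mathcal E$, obtained by a union bound over the concentration events (Lemma~\ref{lem:reward_concentration}, Lemma~\ref{lem:gen_confidence}, Lemma~\ref{lem:confidenceset}) together with a Hoeffding event for the empirical costs of $\pi^\dagger$. On the latter event,
\[
|\bar\gamma_j^t-\gamma_j|\le d\sqrt{\ln(4nT/\delta)/(2t)}\quad\text{for all }t\in[T],\ j\in[n],
\]
which uses $\sum_h c_{j,h}^k\in[0,d]$. The failure probabilities are allotted so that the total is at most $\delta$.

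\textbf{Estimation phase.} Every policy executed here is $\pi^\dagger$, which is strictly safe by Assumption~\ref{ass:slater}, so these episodes incur no violation deterministically. For the bound $t_s<T$, let $\epsilon_t:=d\sqrt{\ln(4nT/\delta)/(2t)}$. On $\mathcal E$,
\[
\tau_j-\bar\gamma_j^t\ge \varrho^\dagger-\epsilon_t.
\]
The stopping rule $\tau_j-\bar\gamma_j^t\ge 3\epsilon_t$ therefore holds once $\varrho^\dagger\ge 4\epsilon_t$, that is, once $t\ge 8d^2\ln(4nT/\delta)/(\varrho^\dagger)^2$. This gives $t_s\le 1+8d^2\ln(4nT/\delta)/(\varrho^\dagger)^2<T$ by the hypothesis on $T$. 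At the stopping time, $\widehat\gamma_j=\bar\gamma_j^{t_s}+\epsilon_{t_s}\ge\gamma_j$ on $\mathcal E$. Moreover,
\[
\tau_j-\widehat\gamma_j\ge 3\epsilon_{t_s}-\epsilon_{t_s}=2\epsilon_{t_s}>0,
\]
so $\widehat{\boldsymbol\gamma}$ satisfies $\gamma_j\le\widehat\gamma_j<\tau_j\le d$. This is the property needed in place of Assumption~\ref{ass:known}.

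\textbf{Exploration and planning stages.} I would re-run the safety argument of Theorem~\ref{thm:safety-pfcstrl} with $\widehat{\boldsymbol\gamma}$ in place of $\boldsymbol\gamma$. Each executed policy is the mixture that plays $\pi^\dagger$ with probability $\psi$ and $\widetilde\pi$ otherwise, so its cost is
\[
\psi\gamma_j+(1-\psi)V_j^c(\widetilde\pi)\le \psi\widehat\gamma_j+(1-\psi)U_j,
\]
where $U_j$ is the truncated pessimistic bound appearing in~\eqref{eq:eta_t}. The inequality uses $\gamma_j\le\widehat\gamma_j$ and, on $\mathcal E$, $V_j^c(\widetilde\pi)\le U_j$, since $P\in\mathcal P^t$ and $c\le\widehat c+\xi$. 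The choice of $\psi$ with $\widehat\gamma_j$ in the denominator makes $\psi\widehat\gamma_j+(1-\psi)U_j\le\tau_j$. Take the maximizing index $j$ and consider the case $U_j>\tau_j$, where the denominator equals $U_j-\widehat\gamma_j$; the inequality then reduces to an identity. The case where the projection is infeasible ($\psi=1$) and the initial choice $\psi^0$, which uses the bound $U_j\le d$, are handled in the same way. The planning stage only reuses the final confidence sets, on which $\mathcal E$ still holds, so the same argument covers every main policy $\pi^{k+1}$.

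\textbf{Returned policy.} Finally, $\bar\theta$ is an average of occupancy measures $\theta^{P,\pi^{k+1}}$ of the true MDP, so it lies in $\Theta(P)$, and the induced policy satisfies $\theta^{P,\widehat\pi}=\bar\theta$. The constraints $C^\top\theta\le\boldsymbol\tau$ are linear in $\theta$, so the average inherits feasibility and $\mathrm{Violation}(\widehat\pi)=0$.

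The main obstacle is the bookkeeping. The stopping time $t_s$ is random, so the Hoeffding bound must be applied uniformly over all $t\in[T]$; this is the source of the $T$ inside the logarithm. The confidence sets in the later stages are built only from post-estimation trajectories, so the lemmas must be invoked with episode counts $T-t_s\le T$, keeping the union bound within $\delta$. I would check that the failure probabilities of the original PF-CSTRL analysis, which total $\delta$, are split so as to leave room for the new Hoeffding event, for example by running that analysis at level $\delta/2$.
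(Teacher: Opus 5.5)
Your proposal is correct and follows the paper's route: a Hoeffding event, taken uniformly over $t\in[T]$, yields $t_s<T$ and $\gamma_j\le\widehat\gamma_j<\tau_j$, after which the safety argument of Theorem~\ref{thm:safety-pfcstrl} carries over because it uses $\boldsymbol\gamma$ only as an upper bound on the costs of $\pi^\dagger$ that lies strictly below $\boldsymbol\tau$. Like you, the paper runs the PF-CSTRL analysis at level $\delta/2$, handling the random $t_s$ by conditioning on the estimation-phase history. One small point: for indices $j$ other than the maximizer in~\eqref{eq:eta_t}, the bound $\psi\widehat\gamma_j+(1-\psi)U_j\le\tau_j$ is not an identity; it follows from $\psi\ge(U_j-\tau_j)/(U_j-\widehat\gamma_j)$ and monotonicity in $\psi$, as in Case~3 of the proof of Theorem~\ref{thm:safety}.
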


\subsection{Unknown \texorpdfstring{$\pi^\dagger$}{pi-dagger} and Unknown \texorpdfstring{$\boldsymbol{\gamma}$}{gamma}}
\label{sec:cstrl-us}
Both \texttt{CSTRL-UC} and \texttt{PF-CSTRL-UC} still require the learner to know \(\pi^\dagger\).
We now remove this requirement and assume only Assumption~\ref{ass:slater}, under which a strictly safe policy exists but the learner knows neither this policy nor its costs.
\texttt{CSTRL-US} (\texttt{CSTRL} with an Unknown Safe Policy) first estimates a strictly safe policy and its costs and then runs \texttt{CSTRL}.
Algorithm~\ref{alg:cstrl_us} in Appendix~\ref{app:us} gives the full procedure.
We define the Slater parameter as
$\varrho
:=
\max_{\pi\in\Pi}
\min_{j\in[n]}
\{\tau_j-(C^\top\theta^{P,\pi})_j\}.$
For the estimation phase, we define $\mathtt{Alg}^P$ and $\mathtt{Alg}^D$ as the primal and dual regret minimizers, respectively.
\begin{itemize}
\item \textbf{Primal algorithm $\mathtt{Alg}^P$.}
For each episode $t$ in the estimation phase, let $\phi^t\in\Delta_n$ denote the probability distribution over the $n$ constraints, where $\Delta_n$ is the probability simplex in $\mathbb R^n$.
For each $h\in[H]$, we define the observed loss as $\ell_h^t:=\sum_{j=1}^n\phi_j^t c_{j,h}^t-\frac{1}{H}\sum_{j=1}^n\phi_j^t\tau_j$.
For any policy $\pi\in\Pi$, we define its expected cumulative loss as $\bar\ell^t(\pi):=\sum_{j=1}^n\phi_j^t[(C^\top\theta^{P,\pi})_j-\tau_j]$.
At the beginning of episode $t$, $\mathtt{Alg}^P$ takes $\{(s_h^k,a_h^k,\ell_h^k)\}_{k<t,\,h\in[H]}$ as input and outputs a policy $\pi^t\in\Pi$.
After executing $\pi^t$, the learner executes $\mathtt{Alg}^P$ using $\{(s_h^t,a_h^t,\ell_h^t)\}_{h=1}^H$.
$\mathtt{Alg}^P$ can be implemented using the algorithm of~\citet{jin2020learning} together with a standard doubling trick~\citep{lattimore2020bandit}.
We assume that, with probability at least $1-\delta$, its regret satisfies
$\sum_{k=1}^t[\bar\ell^k(\pi^k)-\bar\ell^k(\pi)]
\le
\mathcal O\bigl(H^2|\mathcal S|\sqrt{|\mathcal A|t\ln(2H|\mathcal S||\mathcal A|t/\delta)}\bigr)$
simultaneously for every $t\ge1$ and every fixed $\pi\in\Pi$.

\item \textbf{Dual algorithm $\mathtt{Alg}^D$.}
For each episode $t$ in the estimation phase, we define $\varrho_j^t:=\tau_j-\sum_{h=1}^H c_{j,h}^t$ for every $j\in[n]$ and set $\varrho^t:=(\varrho_1^t,\ldots,\varrho_n^t)\in\mathbb R^n$.
At the beginning of episode $t$, $\mathtt{Alg}^D$ takes $\{\varrho^k\}_{k=1}^{t-1}$ as input and outputs the distribution $\phi^t\in\Delta_n$.
After observing the constraint costs in episode $t$, the learner executes $\mathtt{Alg}^D$ using $\varrho^t$.
$\mathtt{Alg}^D$ can be implemented using online gradient descent~\citep{orabona2019modern}, whose regret satisfies $\sum_{k=1}^t\langle\phi^k-\phi,\varrho^k\rangle\le\mathcal O(d\sqrt{nt})$ for every $t\ge1$ and every fixed $\phi\in\Delta_n$.
\end{itemize}
We define the stopping time of the estimation phase as
$t_s
\!:=\!
\min\{
t\in[T]\!:\!
\min_{j\in[n]}
\sum_{k=1}^t \varrho_j^k
\ge
\mathcal O((H^2|\mathcal S|\sqrt{|\mathcal A|}+d\sqrt n)\sqrt{t\ln(2nH|\mathcal S||\mathcal A|t/\delta)})\},$
where the right-hand side of the inequality is the stopping threshold.
At the end of episode \(t_s\), the algorithm constructs the policy $\widehat\pi^\dagger$, which selects $\pi^t$ with probability $1/t_s$ for each $t\in[t_s]$.
It then computes an estimate $\widehat\varrho$ of $\varrho$ and sets $\widehat{\boldsymbol\gamma} := (\widehat\gamma_1,\ldots,\widehat\gamma_n)$, where $\widehat\gamma_j:=\tau_j-\widehat\varrho$ for every $j\in[n]$.
The algorithm runs \texttt{CSTRL} for the remaining $T-t_s$ episodes with $\widehat\pi^\dagger$ and $\widehat{\boldsymbol\gamma}$ in place of $\pi^\dagger$ and $\boldsymbol\gamma$.
To avoid repeated environment interaction across preferences, we also extend \texttt{PF-CSTRL} to this setting using the same estimation phase.
The resulting extension is \texttt{PF-CSTRL-US} (\texttt{PF-CSTRL} with an Unknown Safe Policy).
It runs \texttt{PF-CSTRL} with $\widehat\pi^\dagger$ and $\widehat{\boldsymbol\gamma}$ in place of $\pi^\dagger$ and $\boldsymbol\gamma$ for the remaining \(T-t_s\) exploration episodes, followed by \(K\) planning iterations.
For both \texttt{CSTRL-US} and \texttt{PF-CSTRL-US}, we assume that $T$ is sufficiently large, and it suffices to take $T=\widetilde{\mathcal O}((H^4|\mathcal S|^2|\mathcal A|+nd^2)/\varrho^2)$, where $\widetilde{\mathcal O}(\cdot)$ hides logarithmic factors in $n$, $H$, $|\mathcal S|$, $|\mathcal A|$, $1/\delta$, and $1/\varrho$.
Under this condition, Lemma~\ref{lem:us_stop_time} gives $t_s<T$ with high probability.

\vspace{5pt}
\noindent\textbf{Theoretical Results.} 
We now present the guarantees for \texttt{CSTRL-US} and \texttt{PF-CSTRL-US}.
\begin{theorem}[Suboptimality gap of \texttt{CSTRL-US}]
\label{thm:us_subopt}
Suppose Assumption~\ref{ass:slater} holds.
Let $\widehat{\pi}$ be the policy returned by Algorithm~\ref{alg:cstrl_us}.
For any preference vector $\boldsymbol{\lambda}\in\Delta_m$ and smoothing parameter $\mu>0$, set \(\eta=(T-t_s)^{-1/2}\), \(\alpha_t=1/t^2\), and \(\beta_t=1/(\mu t)\).
Then, with probability at least $1-\delta$, we have $T>t_s$ and
\[
\mathrm{SubOpt}(\widehat{\pi})
\!\leq \!
\widetilde{\mathcal{O}}\!\left(
H^2|\mathcal{S}|^{3/2}\sqrt{\frac{|\mathcal A|}{T-t_s}}
\!+\!
\widehat{\phi}_{\mathrm{safe}}\,H^3|\mathcal{S}|^{3/2}\sqrt{\frac{|\mathcal A|}{T-t_s}}
\!+\!
\frac{H^2}{\mu (T-t_s)}
\!+\!
\frac{\mu}{T-t_s}
\right),
\]
where $\widehat{\phi}_{\mathrm{safe}}
:=
\max_{j\in[n]}
\frac{d^2}{(\tau_j-\widehat{\gamma}_j)^2}$ satisfies $\widehat{\phi}_{\mathrm{safe}} \le 4d^2/\varrho^2$, and $\widetilde{\mathcal{O}}(\cdot)$ hides logarithmic factors in $m$, $n$, $|\mathcal{S}|$, $|\mathcal{A}|$, $H$, $T$, and $1/\delta$.
\end{theorem}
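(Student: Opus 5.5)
We prove Theorem~\ref{thm:us_subopt} by reducing it to Theorem~\ref{thm:subopt}, applied to the last $T-t_s$ episodes with $(\widehat\pi^\dagger,\widehat{\boldsymbol\gamma})$ in place of $(\pi^\dagger,\boldsymbol\gamma)$. The proof of Theorem~\ref{thm:subopt} uses Assumption~\ref{ass:known} only in two places. First, $\pi^\dagger$ must be strictly safe with known costs $\boldsymbol\gamma$. Second, the mixing weights $\psi^t$ in~\eqref{eq:eta_t} must give a safe mixture and be controlled through $\phi_{\mathrm{safe}}$. The argument therefore goes through once we show two things on a high-probability event. We need $t_s<T$, which is Lemma~\ref{lem:us_stop_time} given the stated lower bound on $T$. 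We also need $C^\top\theta^{P,\widehat\pi^\dagger}\le\widehat{\boldsymbol\gamma}<\boldsymbol\tau$, with margin $\tau_j-\widehat\gamma_j=\widehat\varrho\ge\varrho/2$.

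The key step is to certify $\widehat\pi^\dagger$. It is the uniform mixture of $\pi^1,\dots,\pi^{t_s}$, so its occupancy measure is the average $\frac1{t_s}\sum_k\theta^{P,\pi^k}$. Its cost slack is therefore $\tau_j-(C^\top\theta^{P,\widehat\pi^\dagger})_j=\frac1{t_s}\sum_k(\tau_j-(C^\top\theta^{P,\pi^k})_j)$. We apply Azuma--Hoeffding to the martingale differences $\varrho_j^k-\mathbb E[\varrho_j^k\mid\mathcal F_{k-1}]$, which are bounded by $d$, and take a union bound over $j$ and $t$. This shows that the empirical slack $\frac1{t_s}\sum_k\varrho_j^k$ differs from the true slack by at most $\mathcal O(d\sqrt{\ln(nT/\delta)/t_s})$. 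The stopping rule requires $\min_j\sum_{k\le t_s}\varrho_j^k$ to exceed a threshold that dominates this deviation. Setting $\widehat\varrho:=\min_j\frac1{t_s}\sum_k\varrho_j^k$ minus the concentration width then gives true slack at least $\widehat\varrho>0$ for every $j$. So $\widehat\gamma_j=\tau_j-\widehat\varrho$ is a valid upper bound on the true cost.

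For the lower bound $\widehat\varrho\ge\varrho/2$, we combine the primal and dual regret guarantees. Let $\pi^\star$ attain the Slater parameter $\varrho$. The primal regret bound against $\pi^\star$ gives $\sum_k\bar\ell^k(\pi^k)\le-\varrho t+\mathcal O(H^2|\mathcal S|\sqrt{|\mathcal A|t\ln})$. The dual regret bound compares $\sum_k\langle\phi^k,\varrho^k\rangle$ with $\min_j\sum_k\varrho_j^k$. Together with concentration linking $\bar\ell^k(\pi^k)$ to $-\langle\phi^k,\varrho^k\rangle$, this yields $\min_j\sum_k\varrho^k_j\ge\varrho t-\mathcal O((H^2|\mathcal S|\sqrt{|\mathcal A|}+d\sqrt n)\sqrt{t\ln})$. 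This is the content behind Lemma~\ref{lem:us_stop_time}. At $t_s$, if the stopping threshold is chosen as a constant multiple of these error terms, the empirical average slack is large compared with the subtracted width. That gives $\widehat\varrho\ge\varrho/2$, and hence $\widehat\phi_{\mathrm{safe}}=d^2/\widehat\varrho^2\le4d^2/\varrho^2$. I expect this calibration of constants to be the main obstacle. The threshold must be large enough to dominate the concentration width, yet $t_s$ must still be finite under the assumed size of $T$. The argument needs this relation even at the first time the threshold is crossed, not only for large $t$.

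With $(\widehat\pi^\dagger,\widehat{\boldsymbol\gamma})$ strictly safe and a valid upper bound, $0\le\widehat\gamma_j<\tau_j\le d$, Assumption~\ref{ass:known} holds for this pair. Running \texttt{CSTRL} for $T-t_s$ episodes with $\eta=(T-t_s)^{-1/2}$ then yields the bound of Theorem~\ref{thm:subopt}, with $T$ replaced by $T-t_s$ and $\phi_{\mathrm{safe}}$ replaced by $\widehat\phi_{\mathrm{safe}}$. One point needs care. Equation~\eqref{eq:eta_t} uses $\widehat\gamma_j$ as an upper bound rather than the exact cost. Since the mixture's cost is $\psi\cdot(\text{cost of }\widehat\pi^\dagger)+(1-\psi)\cdot(\text{pessimistic cost of }\widetilde\pi)$, and the first term is at most $\psi\widehat\gamma_j$, safety and the suboptimality analysis go through unchanged. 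The returned average uses only the post-estimation occupancies, so the suboptimality argument is unaffected. A final union bound over the concentration events, the two regret events, and the event of Theorem~\ref{thm:subopt}, with $\delta$ rescaled, completes the proof.
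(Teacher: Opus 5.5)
Your proposal is correct and follows the paper's route. The paper works on $\mathcal E_P(\delta/3)\cap\mathcal E_M(\delta/3)$, uses Lemma~\ref{lem:us_stop_time} for $t_s<T$, uses Lemma~\ref{lem:us_delta_hat} (time-uniform Azuma--Hoeffding for $\widehat\varrho\le\min_j\{\tau_j-(C^\top\theta^{P,\widehat\pi^\dagger})_j\}$, primal and dual regret for $\widehat\varrho\ge\varrho/2$), and then reruns the proof of Theorem~\ref{thm:subopt} on the last $T-t_s$ episodes with $\widehat{\boldsymbol\gamma}$ as a conservative upper bound; the calibration you flag is resolved exactly as you suggest, because the stopping condition holds at $t_s$ itself, so a threshold constant dominating the regret and concentration errors plus twice the subtracted width gives $t_s\varrho\le 2t_s\widehat\varrho$ directly.
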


\begin{theorem}[Constant violation of \texttt{CSTRL-US}]
\label{thm:us_safety}
Under Assumption~\ref{ass:slater}, let $\{\pi^t\}_{t=1}^{T}$ be the main policies executed by Algorithm~\ref{alg:cstrl_us}.
Then, with probability at least $1-\delta$, we have $T>t_s$ and $\sum_{t=1}^{T}\mathrm{Violation}(\pi^t)\le d t_s$.
On the same event, every policy executed after the estimation phase is safe, and the output policy $\widehat{\pi}$ satisfies $\mathrm{Violation}(\widehat{\pi})=0$.
\end{theorem}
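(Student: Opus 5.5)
We split the horizon at the stopping time $t_s$ and treat the estimation phase and the \texttt{CSTRL} phase separately, on the intersection of four high-probability events. The first is the regret guarantee of $\mathtt{Alg}^P$. The second is the regret guarantee of $\mathtt{Alg}^D$, which is deterministic. The third is an Azuma--Hoeffding bound on the martingale differences $\varrho_j^k-(\tau_j-(C^\top\theta^{P,\pi^k})_j)$, which are bounded by $d$. The fourth is the event on which the safety argument of Theorem~\ref{thm:safety} holds for the \texttt{CSTRL} phase. We allocate $\delta$ across these events and take a union bound.

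\emph{Step 1 (stopping time and violation during estimation).} On this event, Lemma~\ref{lem:us_stop_time} gives $t_s<T$ under the stated size condition on $T$. For the violation during the estimation phase, we use only the crude bound: every cost column satisfies $(C^\top\theta^{P,\pi})_j\le d$ and $\tau_j\ge0$, so $\mathrm{Violation}(\pi^t)\le d$ for each $t\le t_s$. Summing gives at most $d t_s$, so the whole claimed bound comes from the estimation phase. It therefore remains to show that every policy executed in episodes $t>t_s$ has zero violation.

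\emph{Step 2 (certifying the estimated safe policy).} This step shows that $\widehat\pi^\dagger$ together with $\widehat{\boldsymbol\gamma}$ satisfies Assumption~\ref{ass:known}. That is, $\widehat\pi^\dagger$ is strictly safe, and its true costs satisfy $(C^\top\theta^{P,\widehat\pi^\dagger})_j\le\widehat\gamma_j<\tau_j$ for every $j$.

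Because $\widehat\pi^\dagger$ is a uniform mixture of $\pi^1,\ldots,\pi^{t_s}$ chosen at the start of the episode, its cost is the average
\[
(C^\top\theta^{P,\widehat\pi^\dagger})_j=\frac{1}{t_s}\sum_{k\le t_s}(C^\top\theta^{P,\pi^k})_j .
\]
By the Azuma--Hoeffding event, $\sum_{k\le t_s}(\tau_j-(C^\top\theta^{P,\pi^k})_j)\ge\sum_{k\le t_s}\varrho_j^k-\mathcal O(d\sqrt{t_s\ln(\cdot)})$. The stopping rule guarantees that $\min_j\sum_{k\le t_s}\varrho_j^k$ exceeds a threshold that dominates this deviation. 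Dividing by $t_s$ then gives a strictly positive true margin. We should then check that $\widehat\varrho$ is defined as the empirical margin minus the concentration width. With that definition, $\tau_j-(C^\top\theta^{P,\widehat\pi^\dagger})_j\ge\widehat\varrho>0$, so that $\widehat\gamma_j=\tau_j-\widehat\varrho$ upper bounds the true cost. We also aim to show $\widehat\varrho\ge\varrho/2$, which comes from the primal--dual regret bounds comparing against the Slater policy, and this yields $\widehat\phi_{\mathrm{safe}}\le4d^2/\varrho^2$.

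\emph{Step 3 (safety after estimation).} Using $\widehat\gamma_j\ge\gamma_j(\widehat\pi^\dagger)$, we rerun the argument of Theorem~\ref{thm:safety}. In each later episode, the executed policy mixes $\widehat\pi^\dagger$ with weight $\psi$ and $\widetilde\pi$ with weight $1-\psi$. On the confidence events, the pessimistic bound $\min\{\max_{P'}\langle\widehat c_j+\xi_j,\theta^{P',\widetilde\pi}\rangle,d\}$ upper bounds the true cost of $\widetilde\pi$, and the formula~\eqref{eq:eta_t} makes the mixture's true cost at most $\tau_j$. Replacing $\gamma_j$ by a larger $\widehat\gamma_j$ only increases $\psi$, so monotonicity preserves safety. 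In the infeasible case, $\psi=1$, and we execute the strictly safe $\widehat\pi^\dagger$. Finally, $\widehat\pi$ is induced by the average of the occupancy measures of the post-estimation policies, each of which is safe. Since the safe set $\Theta_{\mathrm{safe}}$ is convex (Proposition~\ref{prop:safe_value_polytope}), this average is safe, and hence $\mathrm{Violation}(\widehat\pi)=0$.

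The main obstacle is Step 2. The stopping threshold must be matched precisely to the Azuma deviation so that the empirical margin certifies a \emph{true} strict margin for the mixture policy. The definition of $\widehat\varrho$ must also be conservative enough to upper bound the true costs.
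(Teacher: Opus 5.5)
Your proposal is correct and follows essentially the paper's proof. You bound $\mathrm{Violation}(\pi^t)\le d$ crudely on the estimation phase, certify $C^\top\theta^{P,\widehat\pi^\dagger}\le\widehat{\boldsymbol\gamma}<\boldsymbol\tau$ (the content of Lemma~\ref{lem:us_delta_hat}), rerun the proof of Theorem~\ref{thm:safety} with $\widehat{\boldsymbol\gamma}$ as a conservative upper bound, and use convexity of $\Theta_{\mathrm{safe}}$ for $\widehat\pi$. The one detail to state explicitly is that Azuma--Hoeffding is applied at the random time $t_s$, so it must hold uniformly over $t$; the paper gets this in Lemma~\ref{lem:us_martingale} by a union bound over all $t\ge1$ with width $\ln(8nt^2/\delta)$.
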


\begin{theorem}[Suboptimality gap of \texttt{PF-CSTRL-US}]
\label{thm:pfcstrl-us-subopt}
Suppose Assumption~\ref{ass:slater} holds.
Let $\widehat{\pi}$ be the policy returned by \texttt{PF-CSTRL-US}.
For any preference vector \(\boldsymbol\lambda\in\Delta_m\) and smoothing parameter \(\mu>0\), set \(\alpha_k=1/k^2\), \(\beta_k=1/(\mu k)\), \(\eta=(T-t_s)^{-1/2}\) for exploration, and \(\eta=K^{-1/2}\) for planning.
Then, with probability at least $1-\delta$, we have $T>t_s$ and
\[
\mathrm{SubOpt}(\widehat{\pi})
\le
\widetilde{\mathcal O}\!\left(
\widehat{\phi}_{\mathrm{safe}}H^4|\mathcal S|^{2}
\sqrt{\frac{|\mathcal A|}{T-t_s}}
+
\frac{H}{\sqrt K}
+
\frac{H^2}{\mu K}
+
\frac{\mu}{K}
\right),
\]
where \(\widehat{\phi}_{\mathrm{safe}}\) is defined in Theorem~\ref{thm:us_subopt} and satisfies \(\widehat{\phi}_{\mathrm{safe}}\le4d^2/\varrho^2\).
The notation \(\widetilde{\mathcal O}(\cdot)\) hides logarithmic factors in \(m\), \(n\), $|\mathcal S|$, $|\mathcal A|$, \(H\), \(T\), \(K\), and \(1/\delta\).
\end{theorem}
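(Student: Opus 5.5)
The plan is to reduce Theorem~\ref{thm:pfcstrl-us-subopt} to the argument behind Theorem~\ref{thm:pfcstrl_subopt}, run on the post-estimation exploration stage. Three ingredients are needed: a high-probability event for the estimation phase, feasibility of the surrogate pair, and a bound on the margin $\tau_j-\widehat\gamma_j$.

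\textbf{Step 1: the good event.} I define $\mathcal E$ as the intersection of four events. The first is the regret guarantee of $\mathtt{Alg}^P$, which holds uniformly in $t$. The second is the regret bound of $\mathtt{Alg}^D$. The third is an Azuma--Hoeffding bound relating the realized margins $\sum_{k\le t}\varrho^k_j$ to the expected margins $\sum_{k\le t}(\tau_j-(C^\top\theta^{P,\pi^k})_j)$, uniformly over $t$ and $j$; the increments are bounded by $d$. The fourth is the concentration event used in Theorem~\ref{thm:pfcstrl_subopt} for the remaining $T-t_s$ episodes. Allocating $\delta$ among these gives $\Pr(\mathcal E)\ge1-\delta$. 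On $\mathcal E$, Lemma~\ref{lem:us_stop_time} yields $t_s<T$ under the stated size condition on $T$. This proves the first claim.

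\textbf{Step 2: properties of $\widehat\pi^\dagger$ and $\widehat{\boldsymbol\gamma}$.} The mixture $\widehat\pi^\dagger$ has occupancy measure $\frac1{t_s}\sum_{t\le t_s}\theta^{P,\pi^t}$. The stopping rule together with the Azuma bound gives, for every $j$,
\[
\tau_j-(C^\top\theta^{P,\widehat\pi^\dagger})_j\ \ge\ \tfrac{1}{t_s}\textstyle\sum_{k\le t_s}\varrho^k_j - (\text{dev})\ \ge\ \widehat\varrho>0 .
\]
Here $\widehat\varrho$ is chosen as the empirical margin minus the deviation term. So $\widehat\pi^\dagger$ is strictly safe with costs at most $\widehat{\boldsymbol\gamma}$.

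The lower bound $\widehat\varrho\ge\varrho/2$ comes from the primal--dual argument. Take $\pi^\star$ to be the Slater maximizer and $\phi$ a vertex of $\Delta_n$. The regret bounds of $\mathtt{Alg}^P$ and $\mathtt{Alg}^D$ show that the average margin is at least $\varrho$ minus $\widetilde{\mathcal O}((H^2|\mathcal S|\sqrt{|\mathcal A|}+d\sqrt n)/\sqrt{t_s})$. The stopping threshold makes this error at most about $\varrho/2$. Hence $\tau_j-\widehat\gamma_j=\widehat\varrho\ge\varrho/2$, which gives $\widehat\phi_{\mathrm{safe}}\le 4d^2/\varrho^2$.

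\textbf{Step 3: replay the PF-CSTRL analysis.} The proof of Theorem~\ref{thm:pfcstrl_subopt} uses Assumption~\ref{ass:known} in only two ways. First, $\pi^\dagger$ can be executed and its true cost vector is at most $\boldsymbol\gamma$ with $\gamma_j<\tau_j$; this is the property that makes the mixing probability $\psi$ in~\eqref{eq:eta_t} yield safe executed policies. Second, the denominators of $\psi$ are controlled by $\tau_j-\gamma_j$, which produces the factor $\phi_{\mathrm{safe}}$.

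Step 2 shows that $(\widehat\pi^\dagger,\widehat{\boldsymbol\gamma})$ satisfies both properties, with $\widehat{\boldsymbol\gamma}$ a valid upper bound. I would therefore re-run the proof verbatim with $T$ replaced by $T-t_s$, using data only from the post-estimation exploration episodes and leaving the planning stage unchanged. This gives the bound with $\widehat\phi_{\mathrm{safe}}$ in place of $\phi_{\mathrm{safe}}$. Because Theorem~\ref{thm:pfcstrl_subopt} is proved with $\boldsymbol\gamma$ entering only through these inequalities, no new estimates are needed. The comparator $\pi^*_{\mu,\boldsymbol\lambda}$ and the safe set $\Pi_{\mathrm{safe}}$ are unchanged.

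\textbf{Main obstacle.} I expect Step 2 to be the hardest part, specifically showing $\widehat\varrho\ge\varrho/2$ at the data-dependent stopping time. It requires the regret and martingale bounds to hold uniformly over $t$, so that they can be evaluated at $t_s$. It also requires the constants in the stopping threshold to be matched to the deviation terms so that the lower and upper estimates close. I would handle this by union-bounding over $t\le T$ and invoking Lemma~\ref{lem:us_stop_time} for the matching of constants.
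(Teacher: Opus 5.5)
Your proposal is correct and follows the paper's proof essentially step for step. You establish $t_s<T$ via Lemma~\ref{lem:us_stop_time}, obtain $C^\top\theta^{P,\widehat\pi^\dagger}\le\widehat{\boldsymbol\gamma}<\boldsymbol\tau$ and $\widehat\varrho\ge\varrho/2$ (the content of Lemma~\ref{lem:us_delta_hat}), and then rerun the proof of Theorem~\ref{thm:pfcstrl_subopt} on the remaining $T-t_s$ episodes with $\widehat{\phi}_{\mathrm{safe}}=d^2/\widehat\varrho^2$; the paper does this last step conditionally on the fixed estimation-phase history. One small bookkeeping fix is needed. Your Step~1 martingale event only controls the per-coordinate margins, but the bound $\widehat\varrho\ge\varrho/2$ (through Lemma~\ref{lem:us_empirical_surplus}) also needs the $\phi^k$-weighted bound~\eqref{eq:us_weighted_concentration} to pass from the realized dual payoffs $\langle\phi^k,\varrho^k\rangle$ to their expectations, so your event should contain all of $\mathcal E_M$.
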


\begin{theorem}[Constant violation of \texttt{PF-CSTRL-US}]
\label{thm:pfcstrl-us-safety}
Suppose Assumption~\ref{ass:slater} holds.
Let $\{\pi^t\}_{t=1}^{T}$ be the policies executed during the estimation phase and the subsequent exploration stage of \texttt{PF-CSTRL-US}.
Then, with probability at least $1-\delta$, we have $T>t_s$ and $\sum_{t=1}^{T}\mathrm{Violation}(\pi^t)\le d t_s$.
On the same event, every policy executed during preference-free exploration and every main policy generated during planning is safe, and the output policy $\widehat{\pi}$ satisfies $\mathrm{Violation}(\widehat{\pi})=0$.
\end{theorem}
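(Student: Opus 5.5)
The plan is to split the horizon into the estimation phase $t\le t_s$ and the post-estimation phase $t>t_s$. First I establish a high-probability event on which $t_s<T$ and the pair $(\widehat\pi^\dagger,\widehat{\boldsymbol\gamma})$ satisfies the role that Assumption~\ref{ass:known} plays for \texttt{PF-CSTRL}. Then I reuse Theorem~\ref{thm:safety-pfcstrl} for everything after $t_s$.

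Let $\mathcal E$ be the intersection of four events: the regret guarantee of $\mathtt{Alg}^P$, the regret guarantee of $\mathtt{Alg}^D$, an Azuma--Hoeffding bound on $\sum_{k\le t}\varrho_j^k$ around its conditional mean $\sum_{k\le t}\bigl(\tau_j-(C^\top\theta^{P,\pi^k})_j\bigr)$ holding uniformly in $t$ and $j$, and the confidence events of Lemmas~\ref{lem:reward_concentration}, \ref{lem:gen_confidence} and~\ref{lem:confidenceset} used in the second phase. Allocating $\delta$ across these gives $\Pr(\mathcal E)\ge1-\delta$. On $\mathcal E$, Lemma~\ref{lem:us_stop_time} gives $t_s<T$ under the stated size of $T$.

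For the estimation phase, each $\pi^t$ is a Markov policy with costs in $[0,1]$ and only $d$ active steps per constraint. Hence $(C^\top\theta^{P,\pi^t})_j\le d$, and since $\tau_j\ge0$, $\mathrm{Violation}(\pi^t)\le d$. Summing over $t\le t_s$ gives at most $dt_s$. It remains to show the post-estimation policies contribute zero violation.

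The key step is to verify, on $\mathcal E$, that $\widehat\pi^\dagger$ is strictly safe with $C^\top\theta^{P,\widehat\pi^\dagger}\le\widehat{\boldsymbol\gamma}<\boldsymbol\tau$. This is where I expect the main difficulty. The occupancy measure of the uniform mixture $\widehat\pi^\dagger$ is $\frac1{t_s}\sum_{t\le t_s}\theta^{P,\pi^t}$, so its $j$-th cost equals $\tau_j-\frac1{t_s}\sum_{k\le t_s}\bigl(\tau_j-(C^\top\theta^{P,\pi^k})_j\bigr)$. The concentration event converts the observed sums $\sum_k\varrho_j^k$ into these expected sums up to an $\mathcal O(d\sqrt{t\ln(\cdot)})$ term. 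The stopping threshold is chosen to dominate this deviation, so I take $\widehat\varrho$ to be $\frac1{t_s}\min_j\sum_{k\le t_s}\varrho_j^k$ minus that deviation. Then $\widehat\varrho>0$ and $\tau_j-(C^\top\theta^{P,\widehat\pi^\dagger})_j\ge\widehat\varrho$, i.e. $C^\top\theta^{P,\widehat\pi^\dagger}\le\widehat{\boldsymbol\gamma}<\boldsymbol\tau$. The regret bounds of $\mathtt{Alg}^P$ and $\mathtt{Alg}^D$ are what guarantee the threshold is eventually crossed and that $\widehat\varrho\ge\varrho/2$. That is the content of Lemma~\ref{lem:us_stop_time}, which I assume.

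Next I check that the safety argument of Theorem~\ref{thm:safety-pfcstrl} only uses upper bounds on the costs of the fallback policy, not their exact values. In the mixing rule~\eqref{eq:eta_t} with $\widehat{\boldsymbol\gamma}$, on the confidence event the executed mixture has cost at most
\[
\psi\,(C^\top\theta^{P,\widehat\pi^\dagger})_j+(1-\psi)U_j\le\psi\widehat\gamma_j+(1-\psi)U_j,
\]
where $U_j$ is the truncated pessimistic bound. The choice of $\psi$ makes the right-hand side at most $\tau_j$, the same algebra as with $\boldsymbol\gamma$. The same holds at initialization with $\psi^0$ and in the infeasible branch where $\psi=1$.

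Therefore every exploration-stage policy after $t_s$, and every main planning policy, is safe. The planning policies use the same fixed confidence sets, which contain the truth on $\mathcal E$. Summing over the two phases gives $\sum_{t=1}^T\mathrm{Violation}(\pi^t)\le dt_s$.

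Finally, $\bar\theta$ is an average of occupancy measures of safe planning policies. Since $\Theta_{\mathrm{safe}}$ is convex, being the intersection of the convex set $\Theta(P)$ with linear constraints, $\bar\theta\in\Theta_{\mathrm{safe}}$. Also $\theta^{P,\pi^{\bar\theta}}=\bar\theta$, so $\mathrm{Violation}(\widehat\pi)=0$.
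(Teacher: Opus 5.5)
Your proposal is correct and follows essentially the same route as the paper. You work on a high-probability event on which Lemma~\ref{lem:us_stop_time} gives $t_s<T$, and you bound each estimation-phase violation by $d$. You then establish $C^\top\theta^{P,\widehat\pi^\dagger}\le\widehat{\boldsymbol\gamma}<\boldsymbol\tau$, which is the content of Lemma~\ref{lem:us_delta_hat}, here re-derived inline. Finally you rerun the \texttt{SafeProj} argument of Theorem~\ref{thm:safety-pfcstrl} with $\widehat{\boldsymbol\gamma}$ as a conservative upper bound.

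One bookkeeping fix is needed: your event must contain all of $\mathcal E_M(\delta/3)$, including the $\phi^k$-weighted martingale bound~\eqref{eq:us_weighted_concentration}. Lemma~\ref{lem:us_stop_time} uses that bound through Lemma~\ref{lem:us_empirical_surplus}, and it does not follow directly from the coordinatewise bound you listed. Relatedly, $\widehat\varrho\ge\varrho/2$ comes from Lemma~\ref{lem:us_delta_hat}, not Lemma~\ref{lem:us_stop_time}, although only $\widehat\varrho>0$ is needed for safety.
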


\section{Theoretical Analysis} 
\label{sec:theory}
This section outlines the main arguments used to establish our theoretical results.

\subsection{Proof Sketch of Theorem~\ref{thm:subopt}} 
\label{sec:sketch-subopt}
The full proof of Theorem~\ref{thm:subopt} is provided in Appendix~\ref{app:cstrl-subopt}. 
For each objective \(i\in[m]\), let
\(\theta_i^*
\in
\arg\max_{\theta\in\Theta_{\mathrm{safe}}}V_i(\theta)\) 
be a safe single-objective optimal occupancy measure.
Let \(\theta_{\boldsymbol{\lambda}}^*\in\Theta_{\mathrm{safe}}\) be an optimal safe occupancy measure for the smooth Tchebycheff objective under the preference \(\boldsymbol\lambda\).
Since $\stchlm(\pi^\theta)$ is convex in $\theta$ and changes by at most $H$ between any two occupancy measures, the contribution of $\pi^1$ to $\mathrm{SubOpt}(\widehat\pi)$ is at most $H/T$.
Combining this separation with the min-max reformulation of the smooth Tchebycheff scalarization yields
\begin{align*}
&\mathrm{SubOpt}(\widehat\pi)
\le
\frac{H}{T}
+
\underbrace{
\max_{\boldsymbol w\in\Delta_m}
\sum_{i=1}^m
w_i\lambda_i
\frac1T\sum_{t=1}^{T-1}
\Bigl(
V_i^*
-
V_i(\theta^{P,\widetilde\nu_i^{t+1}})
\Bigr)}_{\mathrm{(I)}}\displaybreak[1] \\
&\quad +
\underbrace{
\max_{\boldsymbol w\in\Delta_m}
\frac1T\sum_{t=1}^{T-1}
\sum_{i=1}^m
\left[
(w_i-w_i^t)\lambda_i
\left(
V_i(\theta^{P,\widetilde\nu_i^{t+1}})
+\iota
-
V_i(\theta^{P,\pi^{t+1}})
\right)
-\mu w_i\log w_i
+\mu w_i^t\log w_i^t
\right]}_{\mathrm{(II)}}\\
&\quad+
\underbrace{
\frac1T\sum_{t=1}^{T-1}
\sum_{i=1}^m
w_i^t\lambda_i
\left(
V_i(\theta_{\boldsymbol{\lambda}}^*)-V_i(\theta^{P,\pi^{t+1}})
\right)
}_{\mathrm{(III)}} 
+
\underbrace{
\frac{H}{T}\sum_{t=1}^{T-1}
\max_{\boldsymbol w\in\Delta_m}
\sum_{i=1}^m
w_i\lambda_i\psi_i^t
}_{\mathrm{(IV)}}.
\end{align*}
Terms \(\mathrm{(I)}\) and \(\mathrm{(III)}\) are the auxiliary and main occupancy measure learning errors.
Term \(\mathrm{(II)}\) contains the dual and estimation errors, while Term \(\mathrm{(IV)}\) arises from the probabilities of selecting $\pi^\dagger$ in the auxiliary updates.

\vspace{5pt}
\noindent\textbf{Bounding Term \(\mathrm{(I)}\).}
On the high-probability event, \(\theta_i^*\) is feasible for the auxiliary projection.
Lemma~\ref{lem:aux-occ-regret} therefore gives
\(\textstyle
\tfrac1T \! \sum_{t=1}^{T-1}
(V_i^*
\!-\!
V_i(\theta^{P,\widetilde\nu_i^{t+1}}))
\allowbreak
\!\le\!
\widetilde{\mathcal O}\!(
H^2|\mathcal S|^{3/2}\sqrt{|\mathcal A|/T})
\allowbreak
\!+\!
\tfrac{4H}{T}\sum_{t=1}^{T-1}\psi_i^t\).
The difference between the projected occupancy measure \(\widetilde\theta_i^{t+1}\in\Theta(\mathcal P^t)\) and the occupancy measure of \(\widetilde\nu_i^{t+1}\) under \(P\) is controlled by Lemma~\ref{lem:aux-weighted-transition-bridge}. 
The remaining error is controlled by the probabilities $\psi_i^t$.
For every \(i\in[m]\), we have
$\sum_{t=1}^{T-1}\psi_i^t
\le
\widetilde{\mathcal O}\!(
\phi_{\mathrm{safe}}H^2|\mathcal S|^{3/2}
\sqrt{|\mathcal A|T}).$
Since \(\boldsymbol w,\boldsymbol\lambda\in\Delta_m\), we obtain
$\mathrm{(I)}
\le
\widetilde{\mathcal O}\!(
(1+\phi_{\mathrm{safe}}H)
H^2|\mathcal S|^{3/2}\sqrt{|\mathcal A|/T}).$

\vspace{5pt}
\noindent\textbf{Bounding Term \(\mathrm{(II)}\).}
We control Term \(\mathrm{(II)}\) by comparing the true values in the smooth Tchebycheff objective with the optimistic values used in the dual update.
For each \(t\in[T-1]\), we define
\(\widetilde{\boldsymbol V}^{t+1}
:=
(\widetilde V_1^{t+1},\ldots,\widetilde V_m^{t+1})\)
and
\(\boldsymbol V^{t+1}
:=
(V_1^{t+1},\ldots,V_m^{t+1})\).
We also define the constant vector
\(\boldsymbol{\iota}
:=
(\iota,\ldots,\iota)\in\mathbb R^m\).
Then we obtain
\begin{align*}
\mathrm{(II)}
&\le
\underbrace{\max_{\boldsymbol w\in\Delta_m}
\frac1T\sum_{t=1}^{T-1}
\left[
\left\langle
\boldsymbol w-\boldsymbol w^t,\,
\boldsymbol\lambda\odot
\left(
\widetilde{\boldsymbol V}^{t+1}
+\boldsymbol \iota
-{\boldsymbol V}^{t+1}
\right)
\right\rangle
-\mu\langle\boldsymbol w,\log\boldsymbol w\rangle
+\mu\langle\boldsymbol w^t,\log\boldsymbol w^t\rangle
\right]}_{\mathrm{(II.1)}}\\
&\quad+
\underbrace{\frac{2}{T}\sum_{t=1}^{T-1}\sum_{i=1}^m
\lambda_i
\left|
V_i(\theta^{P,\widetilde\nu_i^{t+1}})
-
\widetilde V_i^{t+1}
\right|}_{\mathrm{(II.2)}}
+
\underbrace{\frac{2}{T}\sum_{t=1}^{T-1}\sum_{i=1}^m
\lambda_i
\left|
V_i^{t+1}
-
V_i(\theta^{P,\pi^{t+1}})\right|}_{\mathrm{(II.3)}}.
\end{align*} 
Lemma~\ref{lem:oco-stch} controls the dual mirror-ascent term from the entropic update of \(\{\boldsymbol w^t\}_{t=1}^{T-1}\).
Therefore, we have
$\mathrm{(II.1)}
\le
\widetilde{\mathcal O}\!(
\frac{H^2}{\mu T}
+
\frac{\mu}{T}).$
For every \(i\in[m]\) and \(t\in[T-1]\), 
the reward confidence event gives
\(0\le \bar r_i^t-r_i\le 2\sigma^t\). 
Thus, we obtain
$|
V_i(\theta^{P,\widetilde\nu_i^{t+1}})
-
\widetilde V_i^{t+1}|
\le
2\langle \sigma^t,\widetilde\theta_i^{t+1}\rangle
+
\|
\widetilde\theta_i^{t+1}
-
\theta^{P,\widetilde\nu_i^{t+1}}
\|_1 .$
The reward bonus term is controlled by summing the bonuses as in the proof of Lemma~\ref{lem:aux-occ-regret}.
The transition term is controlled by Lemma~\ref{lem:aux-weighted-transition-bridge}, together with the bound on $\sum_{t=1}^{T-1}\psi_i^t$ above.
Consequently, we have
$\mathrm{(II.2)}
\le
\widetilde{\mathcal O}\!(
H^2|\mathcal S|^{3/2}\sqrt{|\mathcal A|/T}
+
\phi_{\mathrm{safe}}H^3|\mathcal S|^{3/2}
\sqrt{|\mathcal A|/T}).$
Using \(0\le \bar r_i^t-r_i\le 2\sigma^t\) and
$\theta^{P,\pi^{t+1}}
=
(1-\psi^t)\theta^{P,\widetilde\pi^{t+1}}
+
\psi^t\theta^{P,\pi^\dagger},$
we have
$|
V_i^{t+1}
-
V_i(\theta^{P,\pi^{t+1}})|
\le
2\langle \sigma^t,\widetilde\theta^{t+1}\rangle
+
\|
\widetilde\theta^{t+1}
-
\theta^{P,\widetilde\pi^{t+1}}
\|_1
+
H\psi^t.$
The reward bonus term is bounded by the argument in the proof of Lemma~\ref{lem:main-occ-regret}, and the transition term is controlled by Lemma~\ref{lem:weighted-transition-bridge}.
The policy decomposition used in the proof of Theorem~\ref{thm:safety}, together with Lemma~\ref{lem:mixing-bound}, gives
$\sum_{t=1}^{T-1}\psi^t
\le
\widetilde{\mathcal O}\!(
\phi_{\mathrm{safe}}H^2|\mathcal S|^{3/2}
\sqrt{|\mathcal A|T})$.
Together, these bounds give
$\mathrm{(II.3)}
\le
\widetilde{\mathcal O}\!(
H^2|\mathcal S|^{3/2}\sqrt{|\mathcal A|/T}
+
\phi_{\mathrm{safe}}H^3|\mathcal S|^{3/2}
\sqrt{|\mathcal A|/T}).$
Combining the bounds for $\mathrm{(II.1)}$, $\mathrm{(II.2)}$, and $\mathrm{(II.3)}$, 
we obtain
$\mathrm{(II)}
\le
\widetilde{\mathcal O}\!(
H^2|\mathcal S|^{3/2}\sqrt{|\mathcal A|/T}
+
\phi_{\mathrm{safe}}H^3|\mathcal S|^{3/2}
\sqrt{|\mathcal A|/T}
+
\frac{H^2}{\mu T}
+
\frac{\mu}{T}).$

\vspace{5pt}
\noindent\textbf{Bounding Term \(\mathrm{(III)}\).}
Since \(V_i(\theta)=\langle r_i,\theta\rangle\), 
Term \(\mathrm{(III)}\) can be written as
\begin{align*}
\mathrm{(III)}
=
\underbrace{\frac1T\sum_{t=1}^{T-1}
\left\langle
\sum_{i=1}^m w_i^t\lambda_i r_i,\,
\theta_{\boldsymbol{\lambda}}^*-\widetilde\theta^{t+1}
\right\rangle}_{\mathrm{(III.1)}}
+
\underbrace{\frac1T\sum_{t=1}^{T-1}
\left\langle
\sum_{i=1}^m w_i^t\lambda_i r_i,\,
\widetilde\theta^{t+1}-\theta^{P,\pi^{t+1}}
\right\rangle}_{\mathrm{(III.2)}} .
\end{align*}
Since \(\theta_{\boldsymbol{\lambda}}^*\in\Theta_{\mathrm{safe}}\),
Lemma~\ref{lem:main-occ-regret} applies on the high-probability event and gives
$\mathrm{(III.1)}
\le
\widetilde{\mathcal O}\!(
H^2|\mathcal S|^{3/2}\sqrt{|\mathcal A|/T})
+
\frac{2H}{T}\sum_{t=1}^{T-1}\psi^t.$
For the second term, the scalarized reward \(\sum_{i=1}^m w_i^t\lambda_i r_i\) is bounded in \([0,1]\) since \(\boldsymbol w^t,\boldsymbol\lambda\in\Delta_m\) and \(r_i\in[0,1]\). 
Using the decomposition of \(\theta^{P,\pi^{t+1}}\) above and Lemma~\ref{lem:weighted-transition-bridge}, we obtain
${\mathrm{(III.2)}}
\le
\widetilde{\mathcal O}\!(
H^2|\mathcal S|^{3/2}\sqrt{|\mathcal A|/T})
+
\frac{3H}{T}\sum_{t=1}^{T-1}\psi^t .$
Thus, we obtain
$\mathrm{(III)}
\le
\widetilde{\mathcal O}\!(
H^2|\mathcal S|^{3/2}\sqrt{|\mathcal A|/T}
+
\phi_{\mathrm{safe}}H^3|\mathcal S|^{3/2}
\sqrt{|\mathcal A|/T}).$

\vspace{5pt}
\noindent\textbf{Bounding Term \(\mathrm{(IV)}\).}
Since \(\boldsymbol w,\boldsymbol\lambda\in\Delta_m\), we have
$\mathrm{(IV)}
\le
\frac{H}{T}\sum_{i=1}^m\lambda_i\sum_{t=1}^{T-1}\psi_i^t .$
Using the bound on $\sum_{t=1}^{T-1}\psi_i^t$ above,
we have
$\mathrm{(IV)}
\le
\widetilde{\mathcal O}\!(
\phi_{\mathrm{safe}}H^3|\mathcal S|^{3/2}
\sqrt{|\mathcal A|/T}).$

\vspace{5pt}
\noindent\textbf{Final Bound.}
Combining the bounds for Terms \(\mathrm{(I)}\), \(\mathrm{(II)}\), \(\mathrm{(III)}\), and \(\mathrm{(IV)}\)
with the $H/T$ contribution of the initial policy, 
we obtain, with probability at least $1-\delta$,
\[
\mathrm{SubOpt}(\widehat\pi)
\le
\widetilde{\mathcal O}\!\left(
H^2|\mathcal S|^{3/2}\sqrt{|\mathcal A|/T}
+
\phi_{\mathrm{safe}}H^3|\mathcal S|^{3/2}
\sqrt{|\mathcal A|/T}
+
\frac{H^2}{\mu T}
+
\frac{\mu}{T}
\right).
\]

\begin{remark}
In the unconstrained MORL analysis of~\citet{qiu2024traversing}, greedy updates and optimism bound the auxiliary and main learning errors using bonus sums.
This argument does not directly apply under safety constraints, since a policy chosen greedily with respect to optimistic Q-functions may violate the safety constraints.
We therefore apply mirror ascent followed by \texttt{SafeProj} to every auxiliary and main occupancy measure update.
Accordingly, we bound the auxiliary and main learning errors in Terms~$\mathrm{(I)}$ and~$\mathrm{(III)}$ directly using the occupancy measure updates rather than the optimistic Bellman recursions used by~\citet{qiu2024traversing}.
The probabilities of selecting $\pi^\dagger$ do not appear in their unconstrained analysis.
The probabilities $\psi_i^t$ in the auxiliary updates enter Terms~$\mathrm{(I)}$, $\mathrm{(II)}$, and~$\mathrm{(IV)}$, while the probabilities $\psi^t$ in the main updates enter Terms~$\mathrm{(II)}$ and~$\mathrm{(III)}$.
Bounding their sums relies on Lemma~\ref{lem:mixing-bound} and yields the additional term involving $\phi_{\mathrm{safe}}$ in the suboptimality bound.
\end{remark}

\subsection{Proof Sketch of Theorem~\ref{thm:safety}}
\label{sec:sketch-safety}

The full proof is given in Appendix~\ref{app:safety}.
We work on the high-probability event on which \(|\widehat C^t-C|\le\Xi^t\) and \(P\in\mathcal P^t\) for all \(t\in[T-1]\).
In the first episode, the constraint costs of $\widetilde\pi^1$ are each at most $d$.
The choice of $\psi^0$ therefore gives $(1-\psi^0)d+\psi^0\gamma_j\le\tau_j$ for every $j\in[n]$.
The same argument applies to the initial auxiliary policies.
For each later episode, \texttt{SafeProj} uses $\pi^\dagger$ if the projection in~\eqref{eq:proj-def} is infeasible.
Otherwise, the cost confidence bound and $P\in\mathcal P^t$ provide the cost bound for $\widetilde\pi^{t+1}$.
The choice of $\psi^t$ in~\eqref{eq:eta_t} then ensures $C^\top\theta^{P,\pi^{t+1}}\le\boldsymbol\tau$.
The same argument applies to each auxiliary policy $\nu_i^{t+1}$.
Since $\theta^{P,\pi^t}\in\Theta_{\mathrm{safe}}$ for every $t\in[T]$, convexity gives $\bar\theta\in\Theta_{\mathrm{safe}}$.
Thus, the induced Markov policy $\widehat\pi=\pi^{\bar\theta}$ is safe.

\subsection{Proof Sketch of Theorem~\ref{thm:pfcstrl_subopt}}
\label{sec:sketch-pfcstrl}

The complete proof of Theorem~\ref{thm:pfcstrl_subopt} is provided in Appendix~\ref{app:pfcstrl-subopt}. 
The analysis is carried out on the joint event defined in Appendix~\ref{app:pfcstrl-subopt}, which holds with probability at least \(1-\delta\).
Let $\bar\theta$ be the averaged occupancy measure returned by Algorithm~\ref{alg:pfcstrl-plan-rf}.
We use the optimal safe occupancy measure $\theta_{\boldsymbol\lambda}^*$ introduced in Section~\ref{sec:sketch-subopt}.
The first step is to replace the true single-objective optimal values \(\{V_i^*\}_{i=1}^m\) by the fixed optimistic single-objective values \(\{\widetilde{V}_i\}_{i=1}^m\) computed in the planning stage. 
By Lemma~\ref{lem:pf-fixed-aux-excess}, on the joint event, for all \(i\in[m]\), we have
$|
\widetilde{V}_i
-
V_i^*|
\le
\widetilde{\mathcal O}\!(
\phi_{\mathrm{safe}}H^4|\mathcal S|^{2}
\sqrt{|\mathcal A|/T}
+
H/\sqrt{K}).$
Since
$V_i(\bar\theta)
=
\frac1K\sum_{k=1}^K
V_i(\theta^{P,\pi^{k+1}})$
and
$V_i^{k+1}
=
\langle\bar r_i,\widetilde\theta^{k+1}\rangle$,
we obtain
\begin{align}
\mathrm{SubOpt}(\widehat\pi)
&\le
\underbrace{\frac1K\sum_{k=1}^K\sum_{i=1}^m
\lambda_i
\left|
V_i^{k+1}
-
V_i(\theta^{P,\pi^{k+1}})
\right|}_{\mathrm{(I)}}
\nonumber\\
&\quad +
\underbrace{
\max_{\boldsymbol w\in\Delta_m}
\frac1K\sum_{k=1}^K
\sum_{i=1}^m
\bigg[
(w_i-w_i^k)\lambda_i
\left(
\widetilde{V}_i+\iota
-V_i^{k+1}
\right) 
-\mu w_i\log w_i
+\mu w_i^k\log w_i^k
\bigg]
}_{\mathrm{(II)}}\nonumber\\
&\quad 
+\underbrace{\frac1K\sum_{k=1}^K
\sum_{i=1}^m
w_i^k\lambda_i
\left(
V_i(\theta_{\boldsymbol{\lambda}}^*)
-
V_i^{k+1}
\right)}_{\mathrm{(III)}}
+
\widetilde{\mathcal O}\!\left(
\phi_{\mathrm{safe}}H^4|\mathcal S|^{2}
\sqrt{|\mathcal A|/T}
+
H/\sqrt{K}
\right).
\label{eq:sketch-pf-main-decomp}
\end{align}
Term \(\mathrm{(I)}\) compares the optimistic main value \(V_i^{k+1}\) with the true value of the policy \(\pi^{k+1}\).
Terms \(\mathrm{(II)}\) and \(\mathrm{(III)}\) are the dual and primal planning errors, respectively.

\vspace{5pt}
\noindent\textbf{Bounding Term \(\mathrm{(I)}\).}
The reward confidence event gives \(0\le \bar r_i-r_i\le 2\sigma\). 
Since \(\widetilde\theta^{k+1}\in\Theta(\mathcal P)\), inequality~\eqref{eq:pf-transition-transfer} bounds the transition error.
By construction of \(\pi^{k+1}\), we have
$\theta^{P,\pi^{k+1}}
=
(1-\psi^k)\theta^{P,\widetilde\pi^{k+1}}
+
\psi^k\theta^{P,\pi^\dagger}.$
Combining this decomposition with the reward and transition bounds, we obtain
$|
V_i^{k+1}
-
V_i(\theta^{P,\pi^{k+1}})|
\le
2\langle \sigma,\widetilde\theta^{k+1}\rangle
+
2\sqrt{|\mathcal S|}\langle \zeta,\theta^{P,\widetilde\pi^{k+1}}\rangle
+
H\psi^k .$
After averaging over \(k\), Lemma~\ref{lem:pf-exploration-coverage} bounds the bonus terms, and Lemma~\ref{lem:pf-planning-mixing} bounds $\frac1K\sum_{k=1}^K\psi^k$.
Therefore, we have
$\mathrm{(I)}
\le
\widetilde{\mathcal O}\!(
\phi_{\mathrm{safe}}H^4|\mathcal S|^{2}
\sqrt{|\mathcal A|/T}).$

\vspace{5pt}
\noindent\textbf{Bounding Term \(\mathrm{(II)}\).}
For the planning stage, we define \(\widetilde{\boldsymbol V} := (\widetilde V_1,\ldots,\widetilde V_m)\).
For each \(k\in[K]\), we define
\(\boldsymbol V^{k+1}
:=
(V_1^{k+1},\ldots,V_m^{k+1})\).
Applying Lemma~\ref{lem:oco-stch} to the \(K\) dual updates with \(\boldsymbol\lambda\odot (\widetilde{\boldsymbol V}+\boldsymbol\iota-\boldsymbol V^{k+1})\) gives
$\mathrm{(II)}
\le
\widetilde{\mathcal O}\!(
\frac{H^2}{\mu K}
+
\frac{\mu}{K}).$

\vspace{5pt}
\noindent\textbf{Bounding Term \(\mathrm{(III)}\).}
Since \(\bar r_i\ge r_i\) and \(\widetilde\theta^{k+1}\ge 0\), 
we have
$V_i(\theta_{\boldsymbol{\lambda}}^*)-V_i^{k+1}
=
\langle r_i,\theta_{\boldsymbol{\lambda}}^*\rangle
-
\langle \bar r_i,\widetilde\theta^{k+1}\rangle
\le
\langle r_i,\theta_{\boldsymbol{\lambda}}^*-\widetilde\theta^{k+1}\rangle .$
Therefore, we obtain
$\mathrm{(III)}
\le
\frac1K\sum_{k=1}^K
\langle
\sum_{i=1}^m w_i^k\lambda_i r_i,\,
\theta_{\boldsymbol{\lambda}}^*-\widetilde\theta^{k+1}
\rangle .$
Since  \(\theta_{\boldsymbol{\lambda}}^*\in\Theta_{\mathrm{safe}}\), 
by Lemma~\ref{lem:pf-planning-occupancy} with \(\theta=\theta_{\boldsymbol{\lambda}}^*\),
we have
$\mathrm{(III)}
\le \allowbreak
\widetilde{\mathcal O}\!(
\frac{H}{\sqrt K}
+ \allowbreak
\phi_{\mathrm{safe}}\,H^3|\mathcal S|^{2}\sqrt{\frac{|\mathcal A|}{T}}).$

\vspace{5pt}
\noindent\textbf{Final Bound.}
Combining \eqref{eq:sketch-pf-main-decomp} and the bounds for Terms \(\mathrm{(I)}\), \(\mathrm{(II)}\), and \(\mathrm{(III)}\), 
we obtain
\[
\mathrm{SubOpt}(\widehat\pi)
\le
\widetilde{\mathcal O}\!\left(
\phi_{\mathrm{safe}}H^4|\mathcal S|^{2}
\sqrt{|\mathcal A|/T}
+
{H}/{\sqrt K}
+
\frac{H^2}{\mu K}
+
\frac{\mu}{K}
\right).
\]

\subsection{Proof Sketch of Theorem~\ref{thm:safety-pfcstrl}}
\label{sec:sketch-safety-pfcstrl}

The full proof is given in Appendix~\ref{app:safety-pfcstrl}.
We work on a joint high-probability event that combines the cost and transition confidence bounds over the \(T\) exploration episodes with the confidence bounds computed from the final exploration dataset.
\texttt{PF-CSTRL} constructs its initial policy as in \texttt{CSTRL} and uses the same \texttt{SafeProj} subroutine.
The proof of Theorem~\ref{thm:safety} therefore gives $C^\top\theta^{P,\pi^t}\le\boldsymbol\tau$ for every $t\in[T]$.
Since $(\widehat C,\Xi,\mathcal P)$ remain fixed during planning, the bounds $|\widehat C-C|\le\Xi$ and $P\in\mathcal P$ continue to hold.
The same \texttt{SafeProj} argument gives $C^\top\theta^{P,\pi^{k+1}}\le\boldsymbol\tau$ for every $k\in[K]$.
Since $\theta^{P,\pi^{k+1}}\in\Theta_{\mathrm{safe}}$ for every $k\in[K]$, convexity gives $\bar\theta\in\Theta_{\mathrm{safe}}$.
Thus, the induced Markov policy $\widehat\pi=\pi^{\bar\theta}$ is safe.

\subsection{Proof Sketch for the Unknown Constraint Cost Extensions}
The complete proofs are provided in Appendix~\ref{app:unknown-gamma}.
By Hoeffding's inequality and the stopping rule, we have \(t_s=\widetilde{\mathcal O}(d^2/(\varrho^\dagger)^2)\) and \(\gamma_j\le\widehat\gamma_j\le(\gamma_j+\tau_j)/2\) for every \(j\in[n]\) with high probability.
The assumed lower bound on $T$ then gives $t_s<T$.
The bound on $\widehat{\boldsymbol\gamma}$ also gives \(\widehat\phi_{\mathrm{safe}}\le4\phi_{\mathrm{safe}}\).
Every policy executed during the estimation phase is \(\pi^\dagger\) and is therefore safe.
Using \(\widehat{\boldsymbol\gamma}\) as a conservative upper bound, the analyses underlying Theorems~\ref{thm:subopt} and~\ref{thm:safety} apply to the remaining \(T-t_s\) episodes, yielding Theorem~\ref{thm:unknown-gamma-subopt} and Theorem~\ref{thm:unknown-gamma-safety}.
With the same conservative bound \(\widehat{\boldsymbol\gamma}\), the \texttt{PF-CSTRL} analyses apply to the remaining \(T-t_s\) exploration episodes and \(K\) planning iterations, giving Theorems~\ref{thm:pfcstrl-uc-subopt} and~\ref{thm:pfcstrl-uc-safety}.

\subsection{Proof Sketch for the Unknown Safe Policy Extensions}
The complete proofs are provided in Appendix~\ref{app:us}.
Lemma~\ref{lem:us_stop_time}, together with the condition on \(T\) stated above, gives \(T>t_s\) with high probability.
On the same event, Lemma~\ref{lem:us_delta_hat} gives \(\widehat\varrho\ge\varrho/2\), \(C^\top\theta^{P,\widehat\pi^\dagger} \le\widehat{\boldsymbol\gamma}<\boldsymbol\tau\), and \(\widehat\phi_{\mathrm{safe}}\le4d^2/\varrho^2\).
The estimated \(\widehat\pi^\dagger\) and \(\widehat{\boldsymbol\gamma}\) are then fixed.
The proof of Theorem~\ref{thm:subopt} applies to the remaining \(T-t_s\) episodes, giving Theorem~\ref{thm:us_subopt}.
Similarly, the proof of Theorem~\ref{thm:pfcstrl_subopt} applies to the remaining \(T-t_s\) exploration episodes and \(K\) planning iterations, giving Theorem~\ref{thm:pfcstrl-us-subopt}.
During the estimation phase, each executed policy has violation at most \(d\).
With the estimated \(\widehat\pi^\dagger\) and \(\widehat{\boldsymbol\gamma}\), the proofs of Theorems~\ref{thm:safety} and~\ref{thm:safety-pfcstrl} establish safety of all policies executed after estimation, the main policies generated during planning, and the returned policies.
Consequently, both extensions satisfy
$\sum_{t=1}^{T}\mathrm{Violation}(\pi^t)
\le
d t_s,$
which proves Theorems~\ref{thm:us_safety} and~\ref{thm:pfcstrl-us-safety}.

\section{Conclusion}
We develop a unified framework for safe multi-objective reinforcement learning that covers both cumulative and stepwise safety constraints.
Pessimistic cost bounds ensure safety throughout learning, while averaging the resulting occupancy measures produces a single safe Markov policy.
Separating exploration from planning also allows the learner to compute a policy for any given preference from the collected data, without further interaction with the environment.
In both preference-based and preference-free settings, zero constraint violation can be guaranteed when a strictly safe policy is known but its costs are unknown.
When the strictly safe policy itself is also unknown, the cumulative constraint violation remains bounded by a constant.

\newpage

\appendix

\startcontents[appendices]
\printcontents[appendices]{}{1}{
  \section*{Appendix Contents}
  \setcounter{tocdepth}{2}
}

\newpage

\section{Proof of Proposition~\ref{prop:safe_value_polytope}}
\label{app:safe_value_polytope}

\begin{proof}
We work with the state-action marginals $\theta_h(s,a)$ of an occupancy measure.
Since the transition kernel $P$ is fixed, these marginals determine the corresponding occupancy measure through $\theta_h(s,a,s')=\theta_h(s,a)P_h(s'|s,a)$.
Let $\theta=(\theta_h(s,a))_{h\in[H],\,s\in\mathcal S,\,a\in\mathcal A}$ be a vector in $\mathbb R^{H|\mathcal S||\mathcal A|}$, and consider the following constraints:
\begin{align}
&\theta_h(s,a)\ge 0,
&&
\forall h\in[H],\ s\in\mathcal S,\ a\in\mathcal A,
\label{eq:value_polytope_nonneg}
\\
&\sum_{a\in\mathcal A}\theta_1(s,a)
=
\rho(s),
&&
\forall s\in\mathcal S,
\label{eq:value_polytope_init}
\\
&\sum_{s\in\mathcal S}\sum_{a\in\mathcal A}
\theta_h(s,a)P_h(s'|s,a)
=
\sum_{a'\in\mathcal A}\theta_{h+1}(s',a'),
&&
\forall h\in[H-1],\ s'\in\mathcal S.
\label{eq:value_polytope_flow}
\end{align}
We show below that $\theta\in\Theta(P)$ if and only if its marginals satisfy \eqref{eq:value_polytope_nonneg}--\eqref{eq:value_polytope_flow}.

We next establish the correspondence between stochastic Markov policies and the elements of $\Theta(P)$.
For any policy $\pi\in\Pi$, its state-action occupancy measure $\theta^{P,\pi}$ satisfies \eqref{eq:value_polytope_nonneg} and \eqref{eq:value_polytope_init}.
Moreover, since
$\sum_{s\in\mathcal S}\sum_{a\in\mathcal A}
\theta_h^{P,\pi}(s,a)P_h(s'| s,a)
=
\Pr(s_{h+1}=s'|\rho,\pi,P)
=
\sum_{a'\in\mathcal A}
\theta_{h+1}^{P,\pi}(s',a'),$
\eqref{eq:value_polytope_flow} also holds.
Since $\Theta(P)=\{\theta^{P,\pi}:\pi\in\Pi\}$, every $\theta\in\Theta(P)$ satisfies \eqref{eq:value_polytope_nonneg}--\eqref{eq:value_polytope_flow}.

Conversely, take any $\theta$ satisfying \eqref{eq:value_polytope_nonneg}--\eqref{eq:value_polytope_flow}.
We define its state marginal as $\theta_h(s):=\sum_{a\in\mathcal A}\theta_h(s,a)$.
We construct a Markov policy $\pi^\theta=\{\pi_h^\theta\}_{h=1}^H$ as follows.
For each $h\in[H]$ and $s\in\mathcal S$, if $\theta_h(s)>0$, we set $\pi_h^\theta(a|s):=\theta_h(s,a)/\theta_h(s)$ for every $a\in\mathcal A$.
If $\theta_h(s)=0$, we set $\pi_h^\theta(\cdot|s)$ to the uniform distribution over $\mathcal A$.
We prove by induction that $\theta_h^{P,\pi^\theta}(s,a)=\theta_h(s,a)$ for all $h\in[H]$, $s\in\mathcal S$, and $a\in\mathcal A$.
For $h=1$, \eqref{eq:value_polytope_init} gives $\theta_1(s)=\rho(s)$.
If $\theta_1(s)>0$, then
$\theta_1^{P,\pi^\theta}(s,a)
=\rho(s)\pi_1^\theta(a| s)
=\theta_1(s,a)$.
If $\theta_1(s)=0$, nonnegativity implies $\theta_1(s,a)=0$ for every $a\in\mathcal A$.
Since $\rho(s)=0$, we also have $\theta_1^{P,\pi^\theta}(s,a)=0$.
Thus, the induction claim holds for $h=1$.

Suppose that it holds at some $h\in[H-1]$.
For every $s'\in\mathcal S$,
we have
\begin{align*}
\Pr(s_{h+1}=s'|\rho,\pi^\theta,P)
=
\sum_{s\in\mathcal S}\sum_{a\in\mathcal A}
\theta_h(s,a)P_h(s'| s,a)
=
\sum_{a'\in\mathcal A}\theta_{h+1}(s',a')
=
\theta_{h+1}(s'),
\end{align*}
where the second equality follows from~\eqref{eq:value_polytope_flow}.
If $\theta_{h+1}(s')>0$, then
$\theta_{h+1}^{P,\pi^\theta}(s',a')
=\Pr(s_{h+1}=s'|\rho,\pi^\theta,P)
\pi_{h+1}^\theta(a'| s')
=\theta_{h+1}(s')
\theta_{h+1}(s',a')/\theta_{h+1}(s')
=\theta_{h+1}(s',a')$.
If $\theta_{h+1}(s')=0$, both
$\theta_{h+1}^{P,\pi^\theta}(s',a')$
and $\theta_{h+1}(s',a')$ are zero.
The induction is therefore complete.
Since \(\theta_h(s,a,s')=\theta_h(s,a)P_h(s'|s,a)\), we obtain \(\theta^{P,\pi^\theta}=\theta\).
Consequently, every $\theta\in\Theta(P)$ is induced by a stochastic Markov policy, and $\Theta(P)=\{\theta^{P,\pi}:\pi\in\Pi\}$.
Since Definition~\ref{def:occupancy-measure} requires $\theta_h\in\Delta_{|\mathcal S|\times|\mathcal A|\times|\mathcal S|}$, the marginals satisfy $\sum_{s,a}\theta_h(s,a)=1$ for every $h\in[H]$.
Thus, $\Theta(P)$ is defined by finitely many linear equalities and inequalities. 
Since it is bounded, $\Theta(P)$ is a convex polytope.

We next impose the safety constraints.
The safe occupancy measure set admits the following linear characterization:
\begin{equation}
\Theta_{\mathrm{safe}}
=
\left\{
\theta\in\Theta(P):
\sum_{h=1}^H
\sum_{s\in\mathcal S}
\sum_{a\in\mathcal A}
\theta_h(s,a)c_{j,h}(s,a)
\le\tau_j,\ 
\forall j\in[n]
\right\}.
\label{eq:safe_occupancy_polytope}
\end{equation}
The occupancy measure of any $\pi\in\Pi_{\mathrm{safe}}$ belongs to $\Theta(P)$ and satisfies the inequalities in \eqref{eq:safe_occupancy_polytope}.
Conversely, if $\theta$ belongs to the set on the right-hand side of~\eqref{eq:safe_occupancy_polytope}, the policy $\pi^\theta$ constructed above satisfies $\theta^{P,\pi^\theta}=\theta$.
It therefore satisfies $C^\top\theta^{P,\pi^\theta}\le\boldsymbol\tau$ and belongs to $\Pi_{\mathrm{safe}}$.
This proves \eqref{eq:safe_occupancy_polytope}.
Since \eqref{eq:safe_occupancy_polytope} adds finitely many linear inequalities to the constraints defining $\Theta(P)$, $\Theta_{\mathrm{safe}}$ is a convex polytope.
Assumption~\ref{ass:slater} guarantees that it is nonempty.
Moreover, the correspondence between policies and occupancy measures gives
$\mathcal V_{\mathrm{safe}}
=\{\boldsymbol V(\theta^{P,\pi}):\pi\in\Pi_{\mathrm{safe}}\}
=\{\boldsymbol V(\theta):\theta\in\Theta_{\mathrm{safe}}\}$.

Since $\boldsymbol V(\cdot)$ is linear,
$\mathcal V_{\mathrm{safe}}$ is the linear image of
$\Theta_{\mathrm{safe}}$.
The linear image of a polytope is a polytope, since a polytope is the convex hull of finitely many points and linear mappings preserve convex combinations.
It is nonempty since $\Theta_{\mathrm{safe}}$ is nonempty.
Therefore, $\mathcal V_{\mathrm{safe}}$ is a nonempty convex polytope.
This completes the proof.
\end{proof}

\section{Proof of Proposition~\ref{prop:stch}}
\label{app:stch}

\begin{proof}
Under Assumption~\ref{ass:slater},
Proposition~\ref{prop:safe_value_polytope} ensures that
$\mathcal V_{\mathrm{safe}}$ is a nonempty convex polytope.
Fix any $\mu>0$.

For any $\boldsymbol{\lambda}\in\Delta_m^o$, consider a policy
$\pi^*
\in
\argmin_{\pi\in\Pi_{\mathrm{safe}}}
\stchlm(\pi).$
Suppose for contradiction that $\pi^*\notin\Pi_{\mathrm{safe}}^*$.
Definition~\ref{def:pareto} then implies that there exists a policy $\pi'\in\Pi_{\mathrm{safe}}$ satisfying $V_i(\theta^{P,\pi'}) \ge V_i(\theta^{P,\pi^*})$ for every $i\in[m]$, with strict inequality for at least one $i_0\in[m]$.
Since $\boldsymbol{\lambda}\in\Delta_m^o$, we have $\lambda_i>0$ for every $i\in[m]$.
Consequently, each exponential term in $\stchlm(\pi')$ is no larger than its counterpart under $\pi^*$, and the term corresponding to $i_0$ is strictly smaller.
We therefore obtain $\stchlm(\pi')<\stchlm(\pi^*)$.
This contradicts the optimality of $\pi^*$.
Thus, every minimizer is safe Pareto optimal.

Conversely, consider any $\pi^0\in\Pi_{\mathrm{safe}}^*$ and let $\boldsymbol v^0:=\boldsymbol V(\theta^{P,\pi^0})$.
Then $\boldsymbol v^0$ is a Pareto optimal point of $\mathcal V_{\mathrm{safe}}$.
Since $\mathcal V_{\mathrm{safe}}$ is a convex polytope, a standard polyhedral supporting-hyperplane argument gives a vector $\boldsymbol a=(a_1,\ldots,a_m)\in\mathbb R_{++}^m$ and a constant $\kappa\in\mathbb R$ satisfying $\boldsymbol a^\top\boldsymbol v^0=\kappa$ and $\boldsymbol a^\top\boldsymbol v\le\kappa$ for every $\boldsymbol v\in\mathcal V_{\mathrm{safe}}.$
We regard $\stchlm$ as a function on $\mathbb R^m$ by replacing $V_i(\theta^{P,\pi})$ in Definition~\ref{def:stch} with $v_i$ for any $\boldsymbol v=(v_1,\ldots,v_m)\in\mathbb R^m$.
For every $i\in[m]$, the definition of $V_{i}^*$ gives $V_{i}^*+\iota-v_i^0\ge\iota>0$.

We next construct a preference vector for which $\boldsymbol v^0$ minimizes the scalarization over the supporting hyperplane.
For each $i\in[m]$, the function $x\mapsto x\exp(x(V_{i}^*+\iota-v_i^0)/\mu)$ is continuous and strictly increasing on $[0,\infty)$.
It equals zero at $x=0$ and tends to infinity as $x\to\infty$.
Therefore, for every $w>0$, we define $\lambda_i(w)>0$ as the unique solution to
$\lambda_i(w)
\exp(
\frac{\lambda_i(w)
(V_{i}^*+\iota-v_i^0)}
{\mu})=\mu w a_i.$
Since each of these functions is continuous and strictly increasing, its inverse is also continuous and strictly increasing.
Moreover, we have $\lambda_i(w)\to0$ as $w\to0^+$ and $\lambda_i(w)\to\infty$ as $w\to\infty$.
Consequently, the mapping $w\mapsto\sum_{i=1}^m\lambda_i(w)$ is continuous and strictly increasing from zero to infinity.
The intermediate value theorem gives a unique $w^*>0$ satisfying $\sum_{i=1}^m\lambda_i(w^*)=1$.
We define $\boldsymbol{\lambda}^0 :=(\lambda_1(w^*),\ldots,\lambda_m(w^*))$.
Then $\boldsymbol{\lambda}^0\in\Delta_m^o$, and we have
\begin{equation}
\lambda_i^0
\exp\left(
\frac{\lambda_i^0
(V_{i}^*+\iota-v_i^0)}
{\mu}
\right)
=
\mu w^*a_i,
\quad
\forall i\in[m],
\label{eq:stch_foc}
\end{equation}
where $\lambda_i^0:=\lambda_i(w^*)$.

Since the logarithm is strictly increasing, minimizing $\mathrm{STCH}^{\mu}_{\boldsymbol{\lambda}^0}$ is equivalent to minimizing its exponential sum.
This exponential sum is strictly convex in the value vector.
Equation~\eqref{eq:stch_foc} is precisely the first-order optimality condition at $\boldsymbol v^0$ for minimizing this sum subject to $\boldsymbol a^\top\boldsymbol v=\kappa$.
Since the exponential sum is strictly convex and $\boldsymbol a^\top\boldsymbol v=\kappa$ is a linear equality constraint, this first-order condition is sufficient for global optimality.
Therefore, $\boldsymbol v^0$ minimizes
$\mathrm{STCH}^{\mu}_{\boldsymbol{\lambda}^0}$
over the supporting hyperplane.

It remains to extend this conclusion to $\mathcal V_{\mathrm{safe}}$.
Consider any $\boldsymbol v\in\mathcal V_{\mathrm{safe}}$.
The supporting-hyperplane inequality gives \(\frac{\kappa-\boldsymbol a^\top\boldsymbol v} {\sum_{i=1}^m a_i}\ge0\).
Let \(\boldsymbol 1\in\mathbb R^m\) denote the all-ones vector.
We further define
\(\boldsymbol u
:=
\boldsymbol v+
\frac{\kappa-\boldsymbol a^\top\boldsymbol v}
{\sum_{i=1}^m a_i}\,\boldsymbol 1\).
Then $\boldsymbol a^\top\boldsymbol u=\kappa$ and $\boldsymbol u\ge\boldsymbol v$ componentwise.
Since every component of $\boldsymbol{\lambda}^0$ is positive, the scalarization is strictly decreasing in each coordinate.
We therefore have
$\mathrm{STCH}^{\mu}_{\boldsymbol{\lambda}^0}
(\boldsymbol v^0)
\le
\mathrm{STCH}^{\mu}_{\boldsymbol{\lambda}^0}
(\boldsymbol u)
\le
\mathrm{STCH}^{\mu}_{\boldsymbol{\lambda}^0}
(\boldsymbol v).$
Thus, $\boldsymbol v^0$ is a minimizer over $\mathcal V_{\mathrm{safe}}$.
Equivalently, we obtain
$\pi^0
\in
\argmin_{\pi\in\Pi_{\mathrm{safe}}}
\mathrm{STCH}^{\mu}_{\boldsymbol{\lambda}^0}(\pi).$
This completes the proof.
\end{proof}

\section{Proof of Proposition~\ref{prop:stch-unique}}
\label{app:stch_unique}

\begin{proof}
Under Assumption~\ref{ass:slater},
Proposition~\ref{prop:safe_value_polytope} ensures that
$\mathcal V_{\mathrm{safe}}$ is a nonempty convex polytope.
Since $\mathcal V_{\mathrm{safe}}$ is compact and the scalarization is continuous in the value vector, its minimum over $\mathcal V_{\mathrm{safe}}$ is attained.
Consider any two policies
$\pi^1,\pi^2
\in
\argmin_{\pi\in\Pi_{\mathrm{safe}}}
\stchlm(\pi).$
For each $k\in\{1,2\}$, we define
$\boldsymbol v^k:=\boldsymbol V(\theta^{P,\pi^k})$ and
$\boldsymbol z^k\in\mathbb R^m$ by
$z_i^k
:=
\lambda_i
(V_{i}^*+\iota-v_i^k)/\mu$
for every $i\in[m]$.
Since $\mathcal V_{\mathrm{safe}}$ is convex, we have
$\bar{\boldsymbol v}
:=
(\boldsymbol v^1+\boldsymbol v^2)/2
\in\mathcal V_{\mathrm{safe}}$.
Therefore, there exists a policy
$\bar\pi\in\Pi_{\mathrm{safe}}$ satisfying
$\boldsymbol V(\theta^{P,\bar\pi})=\bar{\boldsymbol v}$.
The argument of the log-sum-exp function at \(\bar\pi\) is $(\boldsymbol z^1+\boldsymbol z^2)/2$.
The mapping $\boldsymbol z\mapsto \mu\log\bigl(\sum_{i=1}^m\exp(z_i)\bigr)$ is convex.
Its Hessian has null space spanned by $\boldsymbol 1$. 
Therefore, the mapping is strictly convex along every segment whose direction is not a multiple of $\boldsymbol 1$.
If $\boldsymbol z^1-\boldsymbol z^2$ is not a multiple of $\boldsymbol 1$, strict convexity gives
\[
\stchlm(\bar\pi)
<
\frac{
\stchlm(\pi^1)
+
\stchlm(\pi^2)
}{2}
=
\stchlm(\pi^1),
\] which contradicts the optimality of $\pi^1$ and $\pi^2$.

It remains to consider the case $\boldsymbol z^1-\boldsymbol z^2=\alpha\boldsymbol 1$ for some $\alpha\in\mathbb R$.
The translation identity of the log-sum-exp function gives
$\stchlm(\pi^1)
=
\stchlm(\pi^2)
+\mu\alpha$.
Since $\pi^1$ and $\pi^2$ are both minimizers, their scalarization values are equal.
Therefore, we obtain $\alpha=0$ and $\boldsymbol z^1=\boldsymbol z^2$.
Since $\boldsymbol{\lambda}\in\Delta_m^o$, we have $\lambda_i>0$ for every $i\in[m]$.
It follows that $v_i^1=v_i^2$ for every $i\in[m]$.
Thus, all minimizers over $\Pi_{\mathrm{safe}}$ achieve the same values on all objectives.
This completes the proof.
\end{proof}

\section{Proof of Theorem~\ref{thm:subopt}}
\label{app:cstrl-subopt}
We define the transition event
$\mathcal E_T(\delta)
\!:=\!
\{ P \!\in\! \mathcal{P}^t \text{ for all } t\!\in\![T-1]\},$
which holds with probability at least \(1-\delta\) by Lemma~\ref{lem:confidenceset}.
We also define the cost event
$\mathcal E_C(\delta)
:=
\{|\widehat C^t-C|\le\Xi^t
\text{ for all }t\in[T-1]\},$
which holds with probability at least $1-\delta$ by Lemma~\ref{lem:gen_confidence}.
We further define the reward event
$\mathcal E_R(\delta)
:=
\{
|\widehat r_{i,h}^t(s,a)-r_{i,h}(s,a)|
\le
\sigma_h^t(s,a),
\text{ for all } i\in[m], h\in[H], t\in[T-1], (s,a)\in\mathcal S\times\mathcal A\}.$
By Lemma~\ref{lem:reward_concentration}, \(\mathcal E_R(\delta)\) holds with probability at least \(1-\delta\).
Moreover, we define the joint confidence event
$\mathcal E_{R,C,T}(\delta)
:=
\mathcal E_R(\delta/3)
\cap
\mathcal E_C(\delta/3)
\cap
\mathcal E_T(\delta/3).$
By a union bound, this event holds with probability at least \(1-\delta\).
The analysis also uses the martingale events introduced in the proofs of Lemmas~\ref{lem:weighted-transition-bridge} and~\ref{lem:main-occ-regret}.
Throughout the proofs of Theorems~\ref{thm:subopt} and~\ref{thm:safety}, we work on the intersection of the events \(\mathcal E_{R,C,T}\) for the main trajectory sequence and each auxiliary trajectory sequence, together with the corresponding martingale events.
There are \(\mathcal O(m)\) such events, and we divide the failure probability \(\delta\) equally among them.
By a union bound, this intersection holds with probability at least \(1-\delta\), and this allocation affects only \(\ln(m/\delta)\) factors, which are already absorbed into the \(\widetilde{\mathcal O}(\cdot)\) notation.

\subsection{Lemmas}

\begin{lemma}
\label{lem:confidenceset}
We define the transition confidence set at episode $t$ as
\[
\mathcal{P}^t
\!:=\!
\left\{
P'\!:\!
\bigl|P_h'(s'| s,a) - \widehat{P}_h^t(s'| s,a)\bigr|
\!\leq\!
\sqrt{\frac{2\ln\!\bigl(2mTH|\mathcal{S}|^2|\mathcal{A}|/\delta\bigr)}
{\max\{1,N_h^t(s,a)\}}},
\forall h\in[H],s,s'\in\mathcal{S},a\in\mathcal{A}
\right\}.
\]
With probability at least $1-\delta$, the true transition kernel $P$ belongs to $\mathcal{P}^t$ for all $t\in[T]$.
\end{lemma}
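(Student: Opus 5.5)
Lemma~\ref{lem:confidenceset} is a concentration bound for the empirical transition kernel, and I would prove it with Hoeffding's inequality applied entrywise, followed by a union bound. The difficulty is that $N_h^t(s,a)$ is random and adapted to past data, so Hoeffding cannot be applied to $\widehat P_h^t$ directly. To handle this I would use the standard ``stack of samples'' (skeleton) construction. For each fixed $(h,s,a)$, imagine an i.i.d.\ sequence of next states $X_1,X_2,\ldots\sim P_h(\cdot|s,a)$. The $\ell$-th visit to $(s,a)$ at step $h$ reveals $X_\ell$. By the Markov property, this coupling gives the same law for the trajectories. Then whenever $N_h^t(s,a)=\ell\ge1$, the estimate $\widehat P_h^t(s'|s,a)$ equals $\frac1\ell\sum_{k=1}^\ell \mathbf 1\{X_k=s'\}$, which depends only on $\ell$ and not on $t$.

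Fix $(h,s,a,s')$ and $\ell\in[T]$. Hoeffding's inequality for the $[0,1]$-valued indicators gives
\[
\Bigl|\tfrac1\ell\textstyle\sum_{k=1}^\ell \mathbf 1\{X_k=s'\}-P_h(s'|s,a)\Bigr|\le\sqrt{\ln(2/\delta')/(2\ell)}
\]
with probability at least $1-\delta'$. A union bound over $\ell\in[T]$, $h\in[H]$, $s,s'\in\mathcal S$ and $a\in\mathcal A$ covers at most $TH|\mathcal S|^2|\mathcal A|$ events. Choosing $\delta'=\delta/(TH|\mathcal S|^2|\mathcal A|)$ makes the radius $\sqrt{\ln(2TH|\mathcal S|^2|\mathcal A|/\delta)/(2\ell)}$. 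This is no larger than the stated radius $\sqrt{2\ln(2mTH|\mathcal S|^2|\mathcal A|/\delta)/\ell}$, because $m\ge1$ and $1/2\le 2$. The extra factor $m$ in the logarithm is just slack, and the paper uses it for the later union bound over the $m+1$ trajectory sequences.

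Because the bound holds for every $\ell$ simultaneously, it holds at the random value $\ell=N_h^t(s,a)$ for all $t\in[T]$ at once. Two cases remain. If $N_h^t(s,a)=0$, then $\widehat P_h^t(s'|s,a)=0$ and the required bound is $P_h(s'|s,a)\le\sqrt{2\ln(\cdot)}$. This holds trivially, since the logarithm is at least $\ln 2$, so the right-hand side is at least $1$. If $N_h^t(s,a)=\ell\ge1$, the Hoeffding event gives the bound, since $\max\{1,\ell\}=\ell$.

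The only delicate step is justifying the skeleton coupling, i.e.\ the independence of $X_\ell$ from the event that a $\ell$-th visit occurs. I would state it as the standard argument, as in \citet{rosenberg2019onlineb} and \citet{jin2020learning}. Alternatively, one can avoid the coupling by applying Azuma--Hoeffding to the martingale $\sum_k \mathbf 1\{(s_h^k,a_h^k)=(s,a)\}(\mathbf 1\{s_{h+1}^k=s'\}-P_h(s'|s,a))$ at each fixed count level, with a union bound over those levels.
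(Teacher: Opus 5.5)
Your proof is correct and follows the same structure as the paper's: concentration of the empirical next-state frequency at each fixed visit count $\ell\in[T]$, then a union bound over $\ell$ and $(h,s,a,s')$ so the bound holds at the random count $N_h^t(s,a)$ for all $t$ simultaneously, and finally the trivial case $N_h^t(s,a)=0$, where the radius exceeds $1$. The only difference is packaging: you obtain the per-count concentration from the stack-of-samples coupling and plain Hoeffding (a smaller radius, with the factor $m$ and the constant $2$ as slack), whereas the paper applies Azuma--Hoeffding conditional on the history before each visit, which is the alternative you mention.
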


\begin{proof}
Fix any $(h,s,a,s')\in[H]\times\mathcal S\times\mathcal A\times\mathcal S$.
Conditional on the history before each visit to $(s,a)$ at step $h$, the next state is $s'$ with probability $P_h(s'|s,a)$.
At a fixed step $h$, the pair $(s,a)$ can be visited at most once in each episode.
Therefore, we have $N_h^t(s,a)\le T$.
Applying the Azuma-Hoeffding inequality for each possible positive number of visits and taking a union bound over these possibilities and all $(h,s,a,s')$ yields, with probability at least $1-\delta$,
\[
\bigl|
P_h(s'|s,a)-\widehat P_h^{\,t}(s'|s,a)
\bigr|
\le
\sqrt{
\frac{
2\ln\!\bigl(2mTH|\mathcal S|^2|\mathcal A|/\delta\bigr)
}{
N_h^t(s,a)
}
}
\]
simultaneously for every $t\in[T]$ such that $N_h^t(s,a)\ge1$.
When $N_h^t(s,a)=0$, the claim follows since both $P_h(s'|s,a)$ and $\widehat P_h^{\,t}(s'|s,a)$ belong to $[0,1]$ and $\epsilon_h^t(s,a)>1$.
Therefore, we have $P\in\mathcal P^t$ for all $t\in[T]$.
This completes the proof.
\end{proof}

\begin{lemma}
\label{lem:occ-polytope}
For every $t\in[T-1]$, $\Theta(\mathcal P^t)$ is exactly the set of
$\theta=\{\theta_h\}_{h=1}^H$ such that
$\theta_h\in\Delta_{|\mathcal S|\times|\mathcal A|\times|\mathcal S|}$ for every
$h\in[H]$ and
\begin{align*}
\sum_{a\in\mathcal A}\sum_{s'\in\mathcal S}
\theta_1(s,a,s')
&=
\rho(s),
&&\forall s\in\mathcal S,
\\
\sum_{a\in\mathcal A}\sum_{s'\in\mathcal S}
\theta_h(s,a,s')
&=
\sum_{s'\in\mathcal S}\sum_{a\in\mathcal A}
\theta_{h-1}(s',a,s),
&&\forall h\in[2,H],\ \forall s\in\mathcal S,
\end{align*}
and
\[
\bigl|
\theta_h(s,a,s')
-
\widehat P_h^{\,t}(s'|s,a)\theta_h(s,a)
\bigr|
\le
\epsilon_h^t(s,a)\theta_h(s,a),
\quad
\forall h\in[H],\ 
(s,a,s')\in\mathcal S\times\mathcal A\times\mathcal S.
\]
In particular, $\Theta(\mathcal P^t)$ is a convex polytope.
\end{lemma}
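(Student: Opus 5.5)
We prove the two inclusions separately and then read off the polytope structure. Let $\mathcal Q^t$ denote the set described in the statement, consisting of nonnegative $\theta$ with $\theta_h$ in the simplex, the initial and flow constraints, and the relaxed consistency constraints $|\theta_h(s,a,s')-\widehat P_h^t(s'|s,a)\theta_h(s,a)|\le\epsilon_h^t(s,a)\theta_h(s,a)$.

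\textbf{Inclusion $\Theta(\mathcal P^t)\subseteq\mathcal Q^t$.}
Take $\theta=\theta^{P',\pi}$ with $P'\in\mathcal P^t$ and $\pi\in\Pi$. The simplex, initial-state, and flow constraints hold by Definition~\ref{def:occupancy-measure} applied with kernel $P'$. The definition also gives $\theta_h(s,a,s')=P'_h(s'|s,a)\theta_h(s,a)$. Hence
\[
\bigl|\theta_h(s,a,s')-\widehat P_h^t(s'|s,a)\theta_h(s,a)\bigr|
=\bigl|P'_h(s'|s,a)-\widehat P_h^t(s'|s,a)\bigr|\,\theta_h(s,a)
\le\epsilon_h^t(s,a)\theta_h(s,a),
\]
where the inequality uses $P'\in\mathcal P^t$ (Lemma~\ref{lem:confidenceset}) and $\theta_h(s,a)\ge0$.

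\textbf{Inclusion $\mathcal Q^t\subseteq\Theta(\mathcal P^t)$.}
Given $\theta\in\mathcal Q^t$, we build a kernel $P'$. Where $\theta_h(s,a)>0$, set $P'_h(s'|s,a):=\theta_h(s,a,s')/\theta_h(s,a)$. This is a probability distribution because $\theta\ge0$ and $\sum_{s'}\theta_h(s,a,s')=\theta_h(s,a)$. Dividing the relaxed constraint by $\theta_h(s,a)$ gives $|P'_h(s'|s,a)-\widehat P_h^t(s'|s,a)|\le\epsilon_h^t(s,a)$.

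Where $\theta_h(s,a)=0$, set $P'_h(\cdot|s,a):=\widehat P_h^t(\cdot|s,a)$ if $N_h^t(s,a)\ge1$, so that the row is a distribution and the deviation is zero. If $N_h^t(s,a)=0$, then $\widehat P_h^t(\cdot|s,a)=0$ and $\epsilon_h^t(s,a)>1$, so any distribution (e.g.\ uniform) lies in the band. In both cases the new row is consistent with $\theta$, since nonnegativity forces $\theta_h(s,a,s')=0$ there.

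This gives $P'\in\mathcal P^t$ and $\theta_h(s,a,s')=P'_h(s'|s,a)\theta_h(s,a)$ everywhere. Thus $\theta$ satisfies Definition~\ref{def:occupancy-measure} for kernel $P'$. Now repeat the argument of Proposition~\ref{prop:safe_value_polytope} with $P'$ in place of $P$: define $\pi^\theta$ by normalizing the state-action marginals (uniform on zero-mass states), and show by induction on $h$ that $\theta^{P',\pi^\theta}=\theta$. So $\theta\in\Theta(\mathcal P^t)$.

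\textbf{Polytope.}
Each relaxed constraint splits into two linear inequalities in $\theta$, because $\widehat P_h^t$ and $\epsilon_h^t$ are fixed numbers at time $t$ and $\theta_h(s,a)=\sum_{s''}\theta_h(s,a,s'')$ is linear. Together with the simplex, initial, and flow constraints, $\mathcal Q^t$ is cut out by finitely many linear equalities and inequalities. It is bounded because it sits inside a product of simplices, so it is a convex polytope.

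The main obstacle is the reverse inclusion at pairs with $\theta_h(s,a)=0$, where $P'$ is unconstrained by $\theta$ and we must still exhibit a member of $\mathcal P^t$. The case split on $N_h^t(s,a)$ above handles this, using $\epsilon_h^t>1$ when the count is zero.
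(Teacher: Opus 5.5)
Your proof is correct and follows the paper's argument essentially step for step. You prove the forward inclusion by multiplying the confidence band by $\theta_h(s,a)\ge 0$, the reverse inclusion by building $P'$ from the ratios $\theta_h(s,a,s')/\theta_h(s,a)$ and reusing the induction from Proposition~\ref{prop:safe_value_polytope}, and the polytope claim by linearity and boundedness; the only cosmetic difference is that, at zero-mass pairs, the paper first shows $\mathcal P^t\neq\emptyset$ (via the same $N_h^t(s,a)\ge1$ versus $N_h^t(s,a)=0$ case split) and copies the rows of a fixed $\bar P^t\in\mathcal P^t$, whereas you choose each row directly.
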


\begin{proof}
Fix any $t\in[T-1]$.
We first establish the stated characterization of $\Theta(\mathcal P^t)$.
Consider any $\theta\in\Theta(\mathcal P^t)$.
By definition, there exist $P'\in\mathcal P^t$ and $\pi\in\Pi$ such that $\theta=\theta^{P',\pi}$.
Definition~\ref{def:occupancy-measure} gives nonnegative entries for each $\theta_h$ whose sum is one, together with the two equalities displayed in the lemma.
It also gives
$\theta_h(s,a,s')
=
P_h'(s'|s,a)\theta_h(s,a).$
Since $P'\in\mathcal P^t$, we have
$\bigl|
P_h'(s'|s,a)-\widehat P_h^{\,t}(s'|s,a)
\bigr|
\le
\epsilon_h^t(s,a).$
Multiplying this inequality by $\theta_h(s,a)\ge0$ yields
$|\theta_h(s,a,s')
-
\widehat P_h^{\,t}(s'|s,a)\theta_h(s,a)|\le
\epsilon_h^t(s,a)\theta_h(s,a).$
Thus, every element of $\Theta(\mathcal P^t)$ satisfies the constraints in the lemma statement.

Conversely, consider any $\theta$ satisfying these constraints.
First, $\mathcal P^t$ is nonempty.
When $N_h^t(s,a)\!\ge\!1$, the empirical estimate $\widehat P_h^{\,t}(\cdot|s,a)$ is a probability distribution.
When $N_h^t(s,a)=0$, we have
$\widehat P_h^{\,t}(s'|s,a)=0$ for every $s'\in\mathcal S$ and
$\epsilon_h^t(s,a)>1$, 
and therefore any probability distribution satisfies the corresponding inequalities.
We may therefore fix some $\bar P^t\in\mathcal P^t$.
We define a transition kernel $P'$ by setting $P_h'(s'|s,a):=\theta_h(s,a,s')/\theta_h(s,a)$ whenever $\theta_h(s,a)>0$.
If $\theta_h(s,a)=0$, we set $P_h'(s'|s,a):=\bar P_h^{\,t}(s'|s,a)$.
When $\theta_h(s,a)>0$, the nonnegativity of $\theta_h(s,a,s')$ and $\sum_{s'\in\mathcal S}\theta_h(s,a,s')=\theta_h(s,a)$ imply that $P_h'(\cdot|s,a)$ is a probability distribution.
Dividing the inequality in the lemma statement by $\theta_h(s,a)$ gives $|P_h'(s'|s,a)-\widehat P_h^{\,t}(s'|s,a)| \le \epsilon_h^t(s,a).$
When $\theta_h(s,a)=0$, the same inequality follows from $\bar P^t\in\mathcal P^t$.
Since the inequalities defining $\mathcal P^t$ are imposed separately for every $(h,s,a)$, we conclude that $P'\in\mathcal P^t$.

By construction, we have
$\theta_h(s,a,s')
=
P_h'(s'|s,a)\theta_h(s,a)$
whenever $\theta_h(s,a)>0$.
If $\theta_h(s,a)=0$, the nonnegativity of $\theta_h$ and the identity
$\theta_h(s,a)=\sum_{s'\in\mathcal S}\theta_h(s,a,s')$ imply that
$\theta_h(s,a,s')=0$ for every $s'\in\mathcal S$.
The same equality therefore holds for all $(h,s,a,s')$.
Let $\pi^\theta$ be the policy induced by $\theta$, 
with an arbitrary action distribution at any $(h,s)$ satisfying
$\sum_{a\in\mathcal A}\theta_h(s,a)=0$.
For every $h\in[H-1]$ and $s'\in\mathcal S$, we have
$\sum_{s\in\mathcal S}\sum_{a\in\mathcal A}
\theta_h(s,a)P_h'(s'|s,a)
=
\sum_{s\in\mathcal S}\sum_{a\in\mathcal A}
\theta_h(s,a,s')
=
\sum_{a'\in\mathcal A}\theta_{h+1}(s',a'),$
where the last equality is the second equality displayed in the lemma.
Together with the first equality displayed in the lemma, these are exactly the equations used in the occupancy measure construction in Appendix~\ref{app:safe_value_polytope}, with $P'$ in place of $P$.
The same induction therefore gives
$\theta_h^{P',\pi^\theta}(s,a)
=
\theta_h(s,a)$
for every $h\in[H]$ and $(s,a)\in\mathcal S\times\mathcal A$.
Consequently, we have
$\theta_h^{P',\pi^\theta}(s,a,s')
=
P_h'(s'|s,a)\theta_h^{P',\pi^\theta}(s,a)
=
\theta_h(s,a,s'),$
which proves that $\theta=\theta^{P',\pi^\theta}\in\Theta(\mathcal P^t)$.
The stated set identity follows.

It remains to show that $\Theta(\mathcal P^t)$ is a convex polytope.
Since $\theta_h(s,a)=\sum_{s'\in\mathcal S}\theta_h(s,a,s')$, the inequality in the lemma statement is equivalent to two linear inequalities in $\theta$.
The simplex constraints and the two equalities in the lemma statement are linear.
Since each $\theta_h$ belongs to a probability simplex, all components are bounded.
Therefore, $\Theta(\mathcal P^t)$ is a convex polytope.
This completes the proof.
\end{proof}

\begin{lemma}
\label{lem:gen_confidence}
With probability at least $1-\delta$, we have
$|\widehat C^t-C|
\le
\Xi^t,\,
\forall t\in[T].$
\end{lemma}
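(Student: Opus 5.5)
The plan is a per-entry concentration argument followed by union bounds. The cost bonus in~\eqref{eq:cost_confidence} has the Hoeffding form for $[0,1]$-valued observations, so I will control each entry separately and then union bound over the active entries.

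First, the entries not covered by concentration are trivial. For every $j\in[n]$ and every step $h$ to which constraint $j$ does not apply, we have $c_{j,h}\equiv0$ and, by construction, $\widehat c_{j,h}^t=\xi_{j,h}^t=0$. The inequality therefore holds with equality at those entries.

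For the active entries, fix $j\in[n]$, one of the $d$ steps $h$ to which constraint $j$ applies, and a pair $(s,a)$. If $N_h^t(s,a)=0$, then $\widehat c_{j,h}^t(s,a)=0$ and $\xi_{j,h}^t(s,a)=1\ge c_{j,h}(s,a)$, so the bound holds. Otherwise, the observed costs at the successive visits to $(s,a)$ at step $h$ are $[0,1]$-valued. Conditionally on the history up to the visit, their mean is $c_{j,h}(s,a)$. The centered sums therefore form a martingale difference sequence indexed by the visit number.

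Since $(s,a)$ is visited at most once per episode at step $h$, the count $N_h^t(s,a)$ ranges over $\{1,\dots,T\}$. For each fixed $N$, I apply Azuma--Hoeffding (equivalently, Hoeffding via optional skipping) to the average of the first $N$ visit observations. This gives a deviation larger than $\sqrt{\ln(2/\delta')/(2N)}$ with probability at most $\delta'$. Because the truncation at $1$ in $\xi_{j,h}^t$ is harmless (both $\widehat c$ and $c$ lie in $[0,1]$), the bound $|\widehat c_{j,h}^t(s,a)-c_{j,h}(s,a)|\le\xi_{j,h}^t(s,a)$ then holds for every $t$ at which $N_h^t(s,a)=N$.

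Finally, I union bound over $N\in[T]$, the $n$ constraints, their $d$ active steps, and the $|\mathcal S||\mathcal A|$ pairs. This costs a factor $ndT|\mathcal S||\mathcal A|$, plus the factor $2$ for two-sided deviations. Setting $\delta'=\delta/(2ndT|\mathcal S||\mathcal A|)$ gives $\ln(2/\delta')=\ln(4ndT|\mathcal S||\mathcal A|/\delta)$. This is already dominated by the logarithm $\ln(4mndT|\mathcal S||\mathcal A|/\delta)$ in~\eqref{eq:cost_confidence}; the extra factor $m$ leaves room to share the failure budget across the $m+1$ trajectory sequences. Entrywise bounds for all $t\in[T]$ are exactly $|\widehat C^t-C|\le\Xi^t$.

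The only delicate point is that $N_h^t(s,a)$ is random, so Hoeffding cannot be applied directly at a random sample size. I handle this the same way as in Lemma~\ref{lem:confidenceset}: index by the deterministic visit number and union bound over its at most $T$ possible values, rather than using a uniform-in-time inequality.
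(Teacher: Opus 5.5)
Your proposal is correct and follows the paper's proof essentially step for step. Both arguments apply Azuma--Hoeffding entrywise at each fixed visit count, union bound over the at most $T$ possible counts, the $n$ constraints, their $d$ applicable steps, and the $|\mathcal S||\mathcal A|$ pairs, note that clipping at one is harmless, and handle the zero-count case via $\xi_{j,h}^t(s,a)=1$; your explicit treatment of the inactive entries and your remark that the factor $m$ in the logarithm is slack are minor details the paper leaves implicit.
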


\begin{proof}
Fix any $j\in[n]$, any $h\in[H]$ to which constraint $j$ applies, and any $(s,a)\in\mathcal S\times\mathcal A$.
Conditional on the history before each visit to $(s,a)$ at step $h$, the observed cost has mean $c_{j,h}(s,a)$.
At a fixed step $h$, the pair $(s,a)$ can be visited at most once in each episode.
Therefore, we have $N_h^t(s,a)\le T$.
Applying the Azuma-Hoeffding inequality for each possible positive number of visits and taking a union bound over these possibilities, all $j\in[n]$, the $d$ applicable time steps, and all $(s,a)\in\mathcal S\times\mathcal A$ yields, with probability at least $1-\delta$,
$\bigl|
\widehat c_{j,h}^{\,t}(s,a)-c_{j,h}(s,a)
\bigr|
\le
\sqrt{
\frac{
\ln\!\bigl(4mndT|\mathcal S||\mathcal A|/\delta\bigr)}{2N_h^t(s,a)}}$
simultaneously for every $t\in[T]$ such that $N_h^t(s,a)\ge1$.
Since both terms on the left belong to $[0,1]$, clipping the right-hand side at one preserves the inequality.
When $N_h^t(s,a)=0$, the inequality follows from $\xi_{j,h}^t(s,a)=1$.
Therefore, we have $|\widehat C^t-C|\le\Xi^t$ for all $t\in[T]$.
This completes the proof.
\end{proof}

\begin{lemma}
\label{lem:reward_concentration}
With probability at least \(1-\delta\), for every \(i\in[m]\), \(h\in[H]\), \(t\in[T]\), and \((s,a)\in\mathcal S\times\mathcal A\), it holds that
\[|\widehat r_{i,h}^t(s,a)-r_{i,h}(s,a)|
\le
\sigma_h^t(s,a),\]
where
$\sigma_h^t(s,a)
=
\min\{
1,
\sqrt{
\frac{2\ln\!(8m|\mathcal S||\mathcal A|HT/\delta)}
{\max\{1,N_h^t(s,a)\}}}\}.$
\end{lemma}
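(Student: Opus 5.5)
The plan mirrors the proofs of Lemmas~\ref{lem:confidenceset} and~\ref{lem:gen_confidence}. It combines a martingale concentration bound for a fixed number of visits with a union bound over all possible visit counts and all indices.

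Fix \(i\in[m]\), \(h\in[H]\), and \((s,a)\in\mathcal S\times\mathcal A\). Let \(k_1<k_2<\cdots\) be the (random) episodes in which \((s_h^k,a_h^k)=(s,a)\), and set \(X_\ell:=r_{i,h}^{k_\ell}-r_{i,h}(s,a)\). Conditional on the history up to and including \((s_h^{k_\ell},a_h^{k_\ell})\), the observation model gives \(\mathbb E[X_\ell\mid\cdot]=0\) and \(X_\ell\in[-1,1]\) (in fact an interval of length one). Thus \(\{X_\ell\}\) is a bounded martingale difference sequence with respect to the filtration indexed by visits. The standard optional-skipping argument handles the fact that the visit times are stopping times: we may either use the skeleton construction, or sample rewards for each \((h,s,a)\) from a pre-drawn i.i.d.\ stack, which leaves the law of the process unchanged.

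For each fixed \(n\in[T]\), the Azuma--Hoeffding inequality gives
\[
\Bigl|\frac1n\sum_{\ell=1}^n X_\ell\Bigr|\le\sqrt{\frac{2\ln(2/\delta')}{n}}
\]
with probability at least \(1-\delta'\). Since \((s,a)\) is visited at most once per episode at step \(h\), we have \(N_h^t(s,a)\le T\). A union bound over \(n\in[T]\), \(i\in[m]\), \(h\in[H]\), and \((s,a)\) with \(\delta'=\delta/(mHT|\mathcal S||\mathcal A|)\) therefore yields
\[
\bigl|\widehat r_{i,h}^t(s,a)-r_{i,h}(s,a)\bigr|\le\sqrt{\frac{2\ln(2m|\mathcal S||\mathcal A|HT/\delta)}{N_h^t(s,a)}}
\]
simultaneously for all \(t\in[T]\) with \(N_h^t(s,a)\ge1\), because \(\widehat r^{\,t}_{i,h}(s,a)\) is exactly the average of the first \(N_h^t(s,a)\) visit observations. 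The logarithm in \(\sigma_h^t\) uses \(8\) instead of \(2\), so it is at least this large, and the inequality remains valid with the looser constant.

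Two boundary steps remain. Both \(\widehat r_{i,h}^t(s,a)\) and \(r_{i,h}(s,a)\) lie in \([0,1]\), so their difference is at most \(1\), and taking the minimum with \(1\) preserves the bound. When \(N_h^t(s,a)=0\), we have \(\widehat r=0\) and \(\sigma_h^t(s,a)=1\ge r_{i,h}(s,a)\), so the inequality holds trivially.

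The only delicate point is the random number of visits. The union bound over all \(n\le T\) sidesteps it, and the optional-skipping argument justifies applying Azuma--Hoeffding to the visit-indexed sequence. Everything else is routine.
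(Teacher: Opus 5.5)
Your proposal is correct and follows the paper's proof essentially step for step. It applies Azuma--Hoeffding for each fixed visit count, takes a union bound over the at most \(T\) possible counts and all \((i,h,s,a)\), clips at one because both quantities lie in \([0,1]\), and treats \(N_h^t(s,a)=0\) directly using \(\sigma_h^t(s,a)=1\); your explicit choice of \(\delta'\) even gives the constant \(2\) inside the logarithm, which the paper's \(8\) dominates.
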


\begin{proof}
Fix any $i\in[m]$, $h\in[H]$, and $(s,a)\in\mathcal S\times\mathcal A$.
Conditional on the history before each visit to $(s,a)$ at step $h$, the observed reward has mean $r_{i,h}(s,a)$.
At a fixed step $h$, the pair $(s,a)$ can be visited at most once in each episode.
Therefore, we have $N_h^t(s,a)\le T$.
Applying the Azuma-Hoeffding inequality for each possible positive number of visits and taking a union bound over these possibilities, all $i\in[m]$, all $h\in[H]$, and all $(s,a)\in\mathcal S\times\mathcal A$ yields, with probability at least $1-\delta$,
\[
\bigl|
\widehat r_{i,h}^{\,t}(s,a)-r_{i,h}(s,a)
\bigr|
\le
\sqrt{
\frac{
2\ln\!\bigl(8m|\mathcal S||\mathcal A|HT/\delta\bigr)
}{
N_h^t(s,a)
}
}
\]
simultaneously for every $t\in[T]$ such that $N_h^t(s,a)\ge1$.
Since both terms on the left belong to $[0,1]$, clipping the right-hand side at one preserves the inequality.
When $N_h^t(s,a)=0$, the inequality follows from $\sigma_h^t(s,a)=1$.
Therefore, the claimed bound holds for every $i\in[m]$, $h\in[H]$, $t\in[T]$, and $(s,a)\in\mathcal S\times\mathcal A$.
This completes the proof.
\end{proof}

\begin{lemma}
\label{lem:simulation}
For any transition kernel \(P'\) and any policy \(\pi\in\Pi\), we have
\[
\bigl\|
\theta^{P',\pi}
-
\theta^{P,\pi}
\bigr\|_1
\le
H\sum_{h=1}^H\sum_{s,a}
\theta_h^{P,\pi}(s,a)
\bigl\|P'_h(\cdot|s,a)-P_h(\cdot|s,a)\bigr\|_1.
\]
\end{lemma}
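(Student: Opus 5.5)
We would prove the simulation bound by a step-by-step coupling of the state-action marginals of the two processes. Both chains use the same Markov policy $\pi$ and start from $\rho$. Write $\theta_h:=\theta_h^{P,\pi}$ and $\theta_h':=\theta_h^{P',\pi}$ for the state-action marginals, and let $\Delta_h:=\sum_{s,a}|\theta_h'(s,a)-\theta_h(s,a)|$ be the state-action discrepancy at step $h$. We also write $e_h:=\sum_{s,a}\theta_h(s,a)\|P_h'(\cdot|s,a)-P_h(\cdot|s,a)\|_1$ for the per-step weighted transition error.

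The first step is a recursion for $\Delta_h$. Since both processes start from $\rho$, we have $\Delta_1=0$. For $h\ge1$, the next-state marginals are $\sum_{s,a}\theta_h'(s,a)P_h'(s'|s,a)$ and $\sum_{s,a}\theta_h(s,a)P_h(s'|s,a)$. We add and subtract $\sum_{s,a}\theta_h(s,a)P_h'(s'|s,a)$. Because each $P_h'(\cdot|s,a)$ is a probability distribution, the triangle inequality bounds the $\ell_1$ distance between the two next-state marginals by $\Delta_h+e_h$. Applying the same policy $\pi_{h+1}(\cdot|s')$ to both next-state marginals does not increase the $\ell_1$ distance. Hence $\Delta_{h+1}\le\Delta_h+e_h$, and unrolling gives $\Delta_h\le\sum_{k<h}e_k$.

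The second step handles the full measure $\theta(s,a,s')$. We write
\[
\theta_h'(s,a,s')-\theta_h(s,a,s')=\bigl(\theta_h'(s,a)-\theta_h(s,a)\bigr)P_h'(s'|s,a)+\theta_h(s,a)\bigl(P_h'(s'|s,a)-P_h(s'|s,a)\bigr).
\]
This gives $\|\theta_h'-\theta_h\|_1\le\Delta_h+e_h\le\sum_{k\le h}e_k$. Summing over $h\in[H]$ yields
\[
\|\theta^{P',\pi}-\theta^{P,\pi}\|_1\le\sum_{h=1}^H\sum_{k\le h}e_k\le H\sum_{k=1}^H e_k,
\]
which is the claimed bound. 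We interpret the norm as the sum over $h$ of the $\ell_1$ norms on $\mathcal S\times\mathcal A\times\mathcal S$. If it were instead read on state-action marginals only, the bound still holds because $\Delta_h$ is no larger.

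The only delicate point is the ordering in the add-and-subtract step. The weights must be the \emph{true} occupancy $\theta_h^{P,\pi}$, as the statement requires, so we pair $\theta_h$ with the difference $P_h'-P_h$ and pair the marginal difference $\theta_h'-\theta_h$ with the stochastic kernel $P_h'$. Any other ordering would produce the wrong weights. Beyond this choice, the argument is routine.
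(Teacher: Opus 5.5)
Your proposal is correct and follows essentially the same route as the paper. Both arguments use the same add-and-subtract split (marginal difference paired with $P'_h$, true occupancy paired with $P'_h-P_h$), propagate the discrepancy through the shared Markov policy, induct to get $\|\theta_h'-\theta_h\|_1\le\sum_{k\le h}e_k$, and then sum over $h$. The only cosmetic difference is that the paper bounds $\Delta_{h+1}$ by the full joint distance $\|\theta_h'-\theta_h\|_1$ at step $h$ rather than going through the next-state marginal, which amounts to the same computation.
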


\begin{proof}
Fix \(h\in[H]\).
By Definition~\ref{def:occupancy-measure}, we have \(\theta_h^{P',\pi}(s,a,s')=\theta_h^{P',\pi}(s,a)P'_h(s'|s,a)\) and \(\theta_h^{P,\pi}(s,a,s')=\theta_h^{P,\pi}(s,a)P_h(s'|s,a)\).
The triangle inequality therefore gives
\begin{align}
\bigl\|\theta_h^{P',\pi}-\theta_h^{P,\pi}\bigr\|_1
&\le
\sum_{s,a,s'}
\bigl|\theta_h^{P',\pi}(s,a)-\theta_h^{P,\pi}(s,a)\bigr|
P'_h(s'|s,a)
\nonumber\\
&\quad+
\sum_{s,a,s'}
\theta_h^{P,\pi}(s,a)
\bigl|P'_h(s'|s,a)-P_h(s'|s,a)\bigr|
\nonumber\\
&=
\sum_{s,a}
\bigl|\theta_h^{P',\pi}(s,a)-\theta_h^{P,\pi}(s,a)\bigr|
\nonumber\\
&\quad+
\sum_{s,a}
\theta_h^{P,\pi}(s,a)
\bigl\|P'_h(\cdot|s,a)-P_h(\cdot|s,a)\bigr\|_1,
\label{eq:simulation-step}
\end{align}
where the left-hand side is the \(L_1\) distance between the joint distributions of \((s_h,a_h,s_{h+1})\).
For \(h\le H-1\), summing over the state and action at step \(h\) gives
\[\theta_{h+1}^{P',\pi}(s',a')
=
\pi_{h+1}(a'|s')
\sum_{s,a}
\theta_h^{P',\pi}(s,a,s')\]
for every \((s',a')\in\mathcal S\times\mathcal A\), and the same identity holds for \(P\).
Since \(\sum_{a'\in\mathcal A}\pi_{h+1}(a'|s')=1\), we obtain
$\sum_{s',a'}
|\theta_{h+1}^{P',\pi}(s',a')-\theta_{h+1}^{P,\pi}(s',a')|
\le
\|\theta_h^{P',\pi}-\theta_h^{P,\pi}\|_1 .$
Moreover, we have \(\theta_1^{P',\pi}(s,a)=\theta_1^{P,\pi}(s,a)=\rho(s)\pi_1(a|s)\).
Combining these facts with~\eqref{eq:simulation-step}, induction on \(h\) gives
$\|\theta_h^{P',\pi}-\theta_h^{P,\pi}\|_1
\le
\sum_{k=1}^{h}\sum_{s,a}
\theta_k^{P,\pi}(s,a)
\|P'_k(\cdot|s,a)-P_k(\cdot|s,a)\|_1$
for every \(h\in[H]\).
Bounding the right-hand side by the corresponding sum over all \(k\in[H]\) and summing over \(h\in[H]\) proves the lemma.
\end{proof}

\begin{lemma}
\label{lem:weighted-transition-bridge}
With probability at least $1-\delta$, for any sequence $P^{t+1}\in\mathcal P^t$ and the policies generated by Algorithm~\ref{alg:CSTRL}, we have
\begin{align*}
\frac1T\sum_{t=1}^{T-1}(1-\psi^t)
\bigl\|
\theta^{P^{t+1},\widetilde\pi^{t+1}}
-\theta^{P,\widetilde\pi^{t+1}}
\bigr\|_1
\le
\widetilde{\mathcal O}\!\left(
H^2|\mathcal S|^{3/2}\sqrt{|\mathcal A|/T}
\right).
\end{align*}
Consequently, we have
\begin{align*}
\frac1T\sum_{t=1}^{T-1}
\bigl\|
\theta^{P^{t+1},\widetilde\pi^{t+1}}
-\theta^{P,\widetilde\pi^{t+1}}
\bigr\|_1
\le
\widetilde{\mathcal O}\!\left(
H^2|\mathcal S|^{3/2}\sqrt{|\mathcal A|/T}
\right)
+\frac{2H}{T}\sum_{t=1}^{T-1}\psi^t .
\end{align*}
\end{lemma}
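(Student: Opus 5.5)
The plan is to reduce the occupancy-measure gap to a sum of transition-confidence widths along the state-action pairs actually visited by the executed policies, and then bound that sum by a pigeonhole argument on the visit counts. We work on the event of Lemma~\ref{lem:confidenceset}, so that $P\in\mathcal P^t$ for all $t$, intersected with a martingale concentration event constructed below. Each event is given failure probability $\delta/2$.

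\textbf{Step 1: simulation lemma.} Fix $t$ and apply Lemma~\ref{lem:simulation} with $P'=P^{t+1}$ and $\pi=\widetilde\pi^{t+1}$. Since both $P^{t+1}$ and $P$ lie in $\mathcal P^t$, the triangle inequality through $\widehat P^t$ gives
\[
\|P^{t+1}_h(\cdot|s,a)-P_h(\cdot|s,a)\|_1\le \min\{2,\,2|\mathcal S|\epsilon_h^t(s,a)\}=:b_h^t(s,a).
\]
Hence
\[
(1-\psi^t)\|\theta^{P^{t+1},\widetilde\pi^{t+1}}-\theta^{P,\widetilde\pi^{t+1}}\|_1\le H\,(1-\psi^t)\langle \theta^{P,\widetilde\pi^{t+1}},b^t\rangle .
\]

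\textbf{Step 2: pass to the executed policy.} The executed policy $\pi^{t+1}$ selects $\pi^\dagger$ with probability $\psi^t$ and $\widetilde\pi^{t+1}$ otherwise, so $\theta^{P,\pi^{t+1}}=(1-\psi^t)\theta^{P,\widetilde\pi^{t+1}}+\psi^t\theta^{P,\pi^\dagger}$. Since $b^t\ge0$, this gives $(1-\psi^t)\langle\theta^{P,\widetilde\pi^{t+1}},b^t\rangle\le\langle\theta^{P,\pi^{t+1}},b^t\rangle$. The right-hand side equals $\mathbb E\bigl[\sum_h b_h^t(s_h^{t+1},a_h^{t+1})\mid\mathcal F_t\bigr]$, because $b^t$, $\psi^t$ and $\widetilde\pi^{t+1}$ are all determined by the history $\mathcal F_t$ through episode $t$.

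\textbf{Step 3: martingale concentration.} The differences between these conditional expectations and the realized sums $\sum_h b_h^t(s_h^{t+1},a_h^{t+1})$ are martingale differences bounded by $2H$. Azuma--Hoeffding therefore bounds their sum by $\mathcal O(H\sqrt{T\ln(1/\delta)})$ with probability $1-\delta/2$. This is the martingale event referred to at the start of the appendix.

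\textbf{Step 4: pigeonhole on the realized widths.} For each fixed $(h,s,a)$, each visit at episode $t+1$ is charged $\min\{1,\,1/\sqrt{\max\{1,N_h^t(s,a)\}}\}$. Here $N_h^t(s,a)$ counts the visits among the first $t$ episodes, which is exactly the number of earlier visits. Summing over visits therefore gives at most $2\sqrt{N_h^T(s,a)}+1$. Cauchy--Schwarz over the $|\mathcal S||\mathcal A|$ pairs, using $\sum_{s,a}N_h^T=T$, then gives $\mathcal O(\sqrt{|\mathcal S||\mathcal A|T}+|\mathcal S||\mathcal A|)$ per step $h$. Multiplying by $2|\mathcal S|\sqrt{2\ln(\cdot)}$, by $H$ steps, and by the extra factor $H$ from Step 1, and then dividing by $T$, yields $\widetilde{\mathcal O}(H^2|\mathcal S|^{3/2}\sqrt{|\mathcal A|/T})$. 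The Azuma term contributes only $\widetilde{\mathcal O}(H^2/\sqrt T)$ after the same scaling, which is absorbed.

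\textbf{Step 5: the second claim.} Each $\theta_h$ is a probability distribution, so $\|\theta^{P^{t+1},\widetilde\pi^{t+1}}-\theta^{P,\widetilde\pi^{t+1}}\|_1\le 2H$. Writing $1=(1-\psi^t)+\psi^t$ and bounding the $\psi^t$ part by $2H\psi^t$ gives the stated inequality.

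\textbf{Main obstacle.} The delicate point is Steps 2--3. The confidence widths are built from the main trajectories, but the bound is stated in terms of $\widetilde\pi^{t+1}$, which is executed only with probability $1-\psi^t$. The weighting by $(1-\psi^t)$ together with the mixture identity is what lets us dominate the $\widetilde\pi^{t+1}$-occupancy by the occupancy of the executed policy $\pi^{t+1}$. We must also check that the bound holds uniformly over any choice of $P^{t+1}\in\mathcal P^t$. This is automatic, because the bound $b^t$ does not depend on which $P^{t+1}$ is chosen.
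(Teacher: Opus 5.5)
Your proposal is correct and follows the same route as the paper. Both arguments apply the simulation lemma with the width $2|\mathcal S|\epsilon_h^t$, dominate $(1-\psi^t)\theta^{P,\widetilde\pi^{t+1}}$ by the executed occupancy $\theta^{P,\pi^{t+1}}$, use Azuma--Hoeffding to pass to realized main-trajectory visits, finish with pigeonhole plus Cauchy--Schwarz, and get the second claim from the trivial $2H$ bound.

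The only difference is cosmetic: the additive $+1$ per $(s,a)$ in your pigeonhole step is unnecessary, since the sum is already at most $2\sqrt{N_h^T(s,a)}$ as the paper uses. If you keep it, the extra $H^2|\mathcal S|^2|\mathcal A|/T$ term is absorbed only when $T\ge|\mathcal S||\mathcal A|$; the remaining case $T<|\mathcal S||\mathcal A|$ is trivial because the left-hand side is at most $2H$.
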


\begin{proof}
We work on the transition confidence event on which $P\in\mathcal P^t$ for all $t\in[T-1]$. 
Since both $P$ and $P^{t+1}$ belong to $\mathcal P^t$, by the definition of $\mathcal P^t$ in~\eqref{eq:transition_confidence}, for every $h\in[H]$ and $(s,a)\in\mathcal S\times\mathcal A$, we have
\begin{align}
\bigl\|P_h^{t+1}(\cdot| s,a)-P_h(\cdot| s,a)\bigr\|_1
&\le
\sum_{s'\in\mathcal S}
\bigl|P_h^{t+1}(s'| s,a)-\widehat P_h^t(s'| s,a)\bigr|
+
\sum_{s'\in\mathcal S}
\bigl|\widehat P_h^t(s'| s,a)-P_h(s'| s,a)\bigr|
\nonumber\\
&\le
2|\mathcal S|\epsilon_h^t(s,a)
\le
2|\mathcal S|
\sqrt{
\frac{2\ln(2mTH|\mathcal S|^2|\mathcal A|/\delta)}
{\max\{1,N_h^t(s,a)\}}
}.
\label{eq:bridge-transition-bound}
\end{align}
Lemma~\ref{lem:simulation} with $P'=P^{t+1}$ and $\pi=\widetilde\pi^{t+1}$ gives
\begin{align}
\bigl\|
\theta^{P^{t+1},\widetilde\pi^{t+1}}
-
\theta^{P,\widetilde\pi^{t+1}}
\bigr\|_1
&\le
H\sum_{h=1}^H\sum_{s,a}
\theta_h^{P,\widetilde\pi^{t+1}}(s,a)
\bigl\|P_h^{t+1}(\cdot| s,a)-P_h(\cdot| s,a)\bigr\|_1 .
\label{eq:bridge-simulation}
\end{align}
Applying~\eqref{eq:bridge-simulation} with weight $(1-\psi^t)$ and using
$\theta^{P,\pi^{t+1}}
=
(1-\psi^t)\theta^{P,\widetilde\pi^{t+1}}
+
\psi^t\theta^{P,\pi^\dagger},$
we have
$(1-\psi^t)\theta_h^{P,\widetilde\pi^{t+1}}(s,a)
\le
\theta_h^{P,\pi^{t+1}}(s,a).$
Therefore, by~\eqref{eq:bridge-transition-bound} and~\eqref{eq:bridge-simulation}, we obtain
\begin{align}
&\sum_{t=1}^{T-1}
(1-\psi^t)
\bigl\|
\theta^{P^{t+1},\widetilde\pi^{t+1}}
-
\theta^{P,\widetilde\pi^{t+1}}
\bigr\|_1 \nonumber\\
& \quad \le
2H|\mathcal S|
\sqrt{2\ln(2mTH|\mathcal S|^2|\mathcal A|/\delta)}
\sum_{t=1}^{T-1}\sum_{h=1}^H\sum_{s,a}
\frac{\theta_h^{P,\pi^{t+1}}(s,a)}
{\sqrt{\max\{1,N_h^t(s,a)\}}}.
\label{eq:bridge-before-visitation}
\end{align}

Let $\mathcal F_t$ denote the history generated by all main and auxiliary trajectories up to the end of episode $t$.
Since $\pi^{t+1}$ and $N_h^t$ are $\mathcal F_t$-measurable, the main trajectory satisfies
$\mathbb E\![
\mathbf 1\{(s_h^{t+1},a_h^{t+1})=(s,a)\}
\,|\,\mathcal F_t]
=
\theta_h^{P,\pi^{t+1}}(s,a).$
Therefore,
$Y^t
:=
\sum_{h=1}^H\sum_{s,a}
\frac{\theta_h^{P,\pi^{t+1}}(s,a)}
{\sqrt{\max\{1,N_h^t(s,a)\}}}
-
\sum_{h=1}^H
\frac{1}
{\sqrt{\max\{1,N_h^t(s_h^{t+1},a_h^{t+1})\}}}$
is $\mathcal F_{t+1}$-measurable and satisfies $\mathbb E[Y^t|\mathcal F_t]=0$.
Moreover, we have $|Y^t|\le H$.
By the Azuma-Hoeffding inequality, with probability at least $1-\delta$, we have
\begin{align}
\sum_{t=1}^{T-1}\sum_{h=1}^H\sum_{s,a}
\frac{\theta_h^{P,\pi^{t+1}}(s,a)}
{\sqrt{\max\{1,N_h^t(s,a)\}}}
&\le
\sum_{t=1}^{T-1}\sum_{h=1}^H
\frac{1}
{\sqrt{\max\{1,N_h^t(s_h^{t+1},a_h^{t+1})\}}}
+
H\sqrt{2T\ln(1/\delta)}.
\label{eq:bridge-hoeffding}
\end{align}

Fix $h\in[H]$ and $(s,a)\in\mathcal S\times\mathcal A$.
Equation~\eqref{eq:rollout-counts} gives
$N_h^t(s,a)
=
\sum_{k=1}^{t}
\mathbf 1\{(s_h^k,a_h^k)=(s,a)\}.$
Therefore, we have
\[
\sum_{t=1}^{T-1}
\frac{
\mathbf 1\{(s_h^{t+1},a_h^{t+1})=(s,a)\}
}
{\sqrt{\max\{1,N_h^t(s,a)\}}}
\le
2\sqrt{N_h^T(s,a)}.
\]
Summing over $(s,a)$ and applying the Cauchy-Schwarz inequality gives
\begin{align}
\sum_{s,a}\sum_{t=1}^{T-1}
\frac{
\mathbf 1\{(s_h^{t+1},a_h^{t+1})=(s,a)\}
}
{\sqrt{\max\{1,N_h^t(s,a)\}}}
&\le
2\sqrt{
|\mathcal S||\mathcal A|
\sum_{s,a}N_h^T(s,a)
}
=
2\sqrt{|\mathcal S||\mathcal A|T}.
\label{eq:bridge-realized-count}
\end{align}
Applying~\eqref{eq:bridge-realized-count} to the main trajectories in~\eqref{eq:bridge-hoeffding} and summing over $h\in[H]$, we obtain
\begin{align}
\sum_{t=1}^{T-1}\sum_{h=1}^H\sum_{s,a}
\frac{\theta_h^{P,\pi^{t+1}}(s,a)}
{\sqrt{\max\{1,N_h^t(s,a)\}}}
&\le
\mathcal O\!\left(
H\sqrt{|\mathcal S||\mathcal A|T}
+
H\sqrt{T\ln(1/\delta)}
\right).
\label{eq:bridge-visitation-final}
\end{align}
Using~\eqref{eq:bridge-visitation-final} in~\eqref{eq:bridge-before-visitation}, we obtain
$\frac1T\sum_{t=1}^{T-1}(1-\psi^t)
\|
\theta^{P^{t+1},\widetilde\pi^{t+1}}
-\theta^{P,\widetilde\pi^{t+1}}
\|_1
\le
\widetilde{\mathcal O}\!(
H^2|\mathcal S|^{3/2}\sqrt{|\mathcal A|/T}).$

For the second claim, since both occupancy measures have total mass \(H\), we have 
$\|
\theta^{P^{t+1},\widetilde\pi^{t+1}}
-
\theta^{P,\widetilde\pi^{t+1}}\|_1\le 2H$.
Therefore, we have
\begin{align*}
\frac1T\sum_{t=1}^{T-1}
\bigl\|
\theta^{P^{t+1},\widetilde\pi^{t+1}}
-
\theta^{P,\widetilde\pi^{t+1}}
\bigr\|_1
&\le
\frac1T\sum_{t=1}^{T-1} (1-\psi^t)\bigl\|
\theta^{P^{t+1},\widetilde\pi^{t+1}}
-
\theta^{P,\widetilde\pi^{t+1}}
\bigr\|_1
+
\frac{2H}{T}\sum_{t=1}^{T-1}\psi^t \\
&\le
\widetilde{\mathcal O}\!\left(
H^2|\mathcal S|^{3/2}\sqrt{|\mathcal A|/T}
\right)
+
\frac{2H}{T}\sum_{t=1}^{T-1}\psi^t .
\end{align*}
This completes the proof.
\end{proof}
\begin{lemma}
\label{lem:main-occ-regret}
Let \(\theta\in\Theta_{\mathrm{safe}}\) be any safe occupancy measure, and let \(\widetilde\theta^{t+1}\) be the occupancy measure generated by \texttt{SafeProj} in Algorithm~\ref{alg:CSTRL}.
On the event \(\mathcal E_{R,C,T}(\delta)\), with \(\eta=T^{-1/2}\), we have
\begin{align*}
\frac1T\sum_{t=1}^{T-1}
\left\langle
\sum_{i=1}^m w_i^t\lambda_i r_i,\,
\theta-\widetilde\theta^{t+1}
\right\rangle
\le
\widetilde{\mathcal O}\!\left(
H^2|\mathcal S|^{3/2}\sqrt{|\mathcal A|/T}\right)
+ \frac{2H}{T} \sum_{t=1}^{T-1} \psi^t.
\end{align*}
\end{lemma}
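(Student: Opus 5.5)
The plan is to split the regret into an online mirror-ascent term with the optimistic rewards and an optimism/bonus term, and then reuse the visitation-count argument from Lemma~\ref{lem:weighted-transition-bridge}.

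\textbf{Splitting the regret.} Write $R^t:=\sum_{i=1}^m w_i^t\lambda_i r_i$, and let $\bar R^t$ be the scalarized optimistic reward in~\eqref{eq:scalarized_reward}. Then
\[
\langle R^t,\theta-\widetilde\theta^{t+1}\rangle
=\langle \bar R^t,\theta-\widetilde\theta^{t+1}\rangle
+\langle R^t-\bar R^t,\theta\rangle
+\langle \bar R^t-R^t,\widetilde\theta^{t+1}\rangle .
\]
\begin{itemize}
\item On $\mathcal E_R$ we have $\bar r_i^t\ge r_i$, so the middle term is nonpositive.
\item Since $\boldsymbol w^t,\boldsymbol\lambda\in\Delta_m$ and $0\le\bar r_i^t-r_i\le 2\sigma^t$, the last term is at most $2\langle\sigma^t,\widetilde\theta^{t+1}\rangle$.
\end{itemize}

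\textbf{Mirror-ascent term.} The first step is to check that the comparator $\theta$ is feasible for every projection.
\begin{itemize}
\item On $\mathcal E_T$ we have $P\in\mathcal P^t$, so $\theta\in\Theta(P)\subseteq\Theta(\mathcal P^t)$.
\item On $\mathcal E_C$ we have $(\widehat C^t-\Xi^t)^\top\theta\le C^\top\theta\le\boldsymbol\tau$.
\end{itemize}
Hence the constrained problem~\eqref{eq:proj-def} is always feasible, and the infeasible branch of \texttt{SafeProj} never occurs on the event. Each $\widetilde\theta^{t+1}$ is therefore the KL (Bregman) projection of $\widehat\theta^{t+1}=\widetilde\theta^t e^{\eta\bar R^t}$ onto a convex set that contains $\theta$; convexity comes from Lemma~\ref{lem:occ-polytope} plus linear constraints. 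The generalized Pythagorean inequality gives
\[
D_{\mathrm{KL}}(\theta\Vert\widetilde\theta^{t+1})+D_{\mathrm{KL}}(\widetilde\theta^{t+1}\Vert\widehat\theta^{t+1})\le D_{\mathrm{KL}}(\theta\Vert\widehat\theta^{t+1}).
\]
Expanding the exponential update, this yields
\[
\eta\langle\bar R^t,\theta-\widetilde\theta^{t+1}\rangle\le D_{\mathrm{KL}}(\theta\Vert\widetilde\theta^t)-D_{\mathrm{KL}}(\theta\Vert\widetilde\theta^{t+1})-D_{\mathrm{KL}}(\widetilde\theta^{t+1}\Vert\widetilde\theta^t).
\]
This step needs care, because the sets $\Theta(\mathcal P^t)$ change with $t$. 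I expect the telescoping still works since $\theta$ lies in every set. Care is also needed with KL for measures of total mass $H$; the generalized (unnormalized) KL handles this.

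Dropping the last nonpositive term and telescoping bounds the sum by
\[
D_{\mathrm{KL}}(\theta\Vert\widetilde\theta^1)/\eta\le H\ln(|\mathcal S|^2|\mathcal A|)\sqrt T,
\]
using the uniform initialization. After dividing by $T$, this contributes $\widetilde{\mathcal O}(H/\sqrt T)$. The comparison is against $\widetilde\theta^{t+1}$ itself, a lookahead iterate, so no $\eta$-variance term appears.

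\textbf{Bonus term.} I expect this to be the main obstacle. Since $\|\sigma^t\|_\infty\le1$,
\[
\langle\sigma^t,\widetilde\theta^{t+1}\rangle\le\langle\sigma^t,\theta^{P,\widetilde\pi^{t+1}}\rangle+\|\widetilde\theta^{t+1}-\theta^{P,\widetilde\pi^{t+1}}\|_1 .
\]
Here $\widetilde\theta^{t+1}=\theta^{P^{t+1},\widetilde\pi^{t+1}}$ for some $P^{t+1}\in\mathcal P^t$ by Lemma~\ref{lem:occ-polytope}.
\begin{itemize}
\item The second piece is handled by Lemma~\ref{lem:weighted-transition-bridge}. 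It gives $\widetilde{\mathcal O}(H^2|\mathcal S|^{3/2}\sqrt{|\mathcal A|/T})+\frac{2H}{T}\sum_t\psi^t$ after averaging.
\item For the first piece, split with weights $1-\psi^t$ and $\psi^t$. Using $(1-\psi^t)\theta^{P,\widetilde\pi^{t+1}}\le\theta^{P,\pi^{t+1}}$, the $(1-\psi^t)$ part is controlled by $\sum_t\sum_{h,s,a}\theta_h^{P,\pi^{t+1}}(s,a)/\sqrt{\max\{1,N_h^t\}}$. This is the same Azuma-plus-pigeonhole sum as in~\eqref{eq:bridge-visitation-final}, giving $\widetilde{\mathcal O}(H\sqrt{|\mathcal S||\mathcal A|T})$. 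The $\psi^t$ part costs at most $H\psi^t$.
\end{itemize}

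\textbf{Combining.} Adding the three pieces and absorbing the lower-order terms gives the claimed $\widetilde{\mathcal O}(H^2|\mathcal S|^{3/2}\sqrt{|\mathcal A|/T})$ bound plus a constant multiple of $\frac{H}{T}\sum_t\psi^t$. The constant is to be tracked so that the $\psi$ coefficients total $2H/T$ as stated; the main risk is matching that exact constant 2.
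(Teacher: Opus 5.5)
Your proposal follows the paper's proof step for step except in the bonus-term bookkeeping, and there the risk you flagged is real. The shared steps are:
\begin{itemize}
\item feasibility of $\theta$ for every projection on $\mathcal E_{R,C,T}$;
\item the KL-projection one-step inequality, telescoped against the uniform $\widetilde\theta^1$;
\item the optimism conversion $\langle R^t,\theta-\widetilde\theta^{t+1}\rangle\le\langle\bar R^t,\theta-\widetilde\theta^{t+1}\rangle+2\langle\sigma^t,\widetilde\theta^{t+1}\rangle$;
\item the martingale/pigeonhole and transition-bridge treatment of the bonus.
\end{itemize}
With the ordering you chose, the coefficient of $\frac{H}{T}\sum_t\psi^t$ comes out as $6$, not $2$. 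You first replace $\widetilde\theta^{t+1}$ by $\theta^{P,\widetilde\pi^{t+1}}$ and only then split on $\psi^t$. That forces you to use the unweighted form of Lemma~\ref{lem:weighted-transition-bridge}, which already carries $\frac{2H}{T}\sum_t\psi^t$ because it bounds the $\psi^t$-fraction of the $\ell_1$ distance crudely by $2H$. The $\psi^t$-part of $\langle\sigma^t,\theta^{P,\widetilde\pi^{t+1}}\rangle$ then costs another $H\psi^t$. Each bonus term therefore gives $\frac{3H}{T}\sum_t\psi^t$, and the factor $2$ from $\bar r_i^t-r_i\le 2\sigma^t$ turns this into $\frac{6H}{T}\sum_t\psi^t$. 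The lemma states the coefficient explicitly and does not bound $\sum_t\psi^t$, so this does not prove the inequality as written, although it would still suffice downstream in Theorem~\ref{thm:subopt}.

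The fix is to split before comparing dynamics, as the paper does.
\begin{itemize}
\item Write $\langle\sigma^t,\widetilde\theta^{t+1}\rangle=(1-\psi^t)\langle\sigma^t,\widetilde\theta^{t+1}\rangle+\psi^t\langle\sigma^t,\widetilde\theta^{t+1}\rangle$.
\item Bound the second piece by $H\psi^t$, using that the weight lies in $[0,1]$ and $\widetilde\theta^{t+1}$ has total mass $H$.
\item In the first piece, replace $\widetilde\theta^{t+1}=\theta^{P^{t+1},\widetilde\pi^{t+1}}$ by $\theta^{P,\widetilde\pi^{t+1}}$ at a cost of $(1-\psi^t)\|\theta^{P^{t+1},\widetilde\pi^{t+1}}-\theta^{P,\widetilde\pi^{t+1}}\|_1$.
\item Use $(1-\psi^t)\theta^{P,\widetilde\pi^{t+1}}\le\theta^{P,\pi^{t+1}}$ to feed the martingale/pigeonhole sum over the executed main trajectories.
\item Control the transition term with the weighted (first) form of Lemma~\ref{lem:weighted-transition-bridge}, which has no $\psi$-dependence.
\end{itemize}
Each bonus term then contributes only $\frac{H}{T}\sum_t\psi^t$, and the factor $2$ gives exactly the stated $\frac{2H}{T}\sum_t\psi^t$.
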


\begin{proof}
On the event $\mathcal E_{R,C,T}(\delta)$, we have
$P\in\mathcal P^t$ and
$\widehat C^t-\Xi^t\le C$.
Since $\theta\in\Theta_{\mathrm{safe}}$, it follows that
$\theta\in\Theta(\mathcal P^t)$ and
$(\widehat C^t-\Xi^t)^\top\theta
\le
C^\top\theta
\le
\boldsymbol\tau.$
Thus, $\theta$ is feasible for the projection at every episode $t$.
By Lemma~\ref{lem:occ-polytope}, the feasible occupancy measure set is convex.
Therefore, the exponential update
$\widehat\theta_h^{t+1}(s,a,s')
=
\widetilde\theta_h^t(s,a,s')\exp\{\eta\bar R_h^t(s,a)\}$
followed by the KL projection gives, for every feasible $\theta$,
\begin{align}
\langle \bar R^t,\theta-\widetilde\theta^{t+1}\rangle
&\le
\frac{
D_{\mathrm{KL}}(\theta\Vert\widetilde\theta^t)
-
D_{\mathrm{KL}}(\theta\Vert\widetilde\theta^{t+1})
}{\eta}.
\label{eq:main-omd-one-step}
\end{align}
Summing~\eqref{eq:main-omd-one-step} over \(t=1,\ldots,T-1\) cancels the intermediate KL terms and gives
\begin{align}
\frac1T\sum_{t=1}^{T-1}
\langle \bar R^t,\theta-\widetilde\theta^{t+1}\rangle
&\le
\frac{D_{\mathrm{KL}}(\theta\Vert\widetilde\theta^1)}{\eta T}.
\label{eq:main-optimistic-regret}
\end{align}
Since the initial occupancy measure is set to $\widetilde\theta_h^1(s,a,s')=\frac{1}{|\mathcal S|^2|\mathcal A|},$ and every feasible occupancy measure satisfies $\sum_{h=1}^H\sum_{s,a,s'}\theta_h(s,a,s')=H,$  we have $D_{\mathrm{KL}}(\theta\Vert\widetilde\theta^1) \le H\log(|\mathcal S|^2|\mathcal A|).$

We next express the preceding inequality using the true scalarized reward $R_h^t(s,a):=\sum_{i=1}^m w_i^t\lambda_i r_{i,h}(s,a)$.
By Lemma~\ref{lem:reward_concentration} and~\eqref{eq:optimistic_reward}, on $\mathcal E_{R,C,T}(\delta)$, for all $i,h,s,a,t$, we have
$0\le
\bar r_{i,h}^t(s,a)-r_{i,h}(s,a)
\le 2\sigma_h^t(s,a)$.
Consequently, for all $h,s,a,t$, we have
$0\le
\bar R_h^t(s,a)-R_h^t(s,a)
\le
2\sum_{i=1}^m w_i^t\lambda_i\sigma_h^t(s,a)$.
Therefore, for every $t$, we have
\begin{align}
\langle R^t,\theta-\widetilde\theta^{t+1}\rangle
&=
\langle \bar R^t,\theta-\widetilde\theta^{t+1}\rangle
-
\langle \bar R^t-R^t,\theta-\widetilde\theta^{t+1}\rangle
\nonumber\\
&\le
\langle \bar R^t,\theta-\widetilde\theta^{t+1}\rangle
+
2\left\langle
\sum_{i=1}^m w_i^t\lambda_i\sigma^t,\,
\widetilde\theta^{t+1}
\right\rangle .
\label{eq:true-to-opt-main}
\end{align}
The last inequality uses $\bar R^t-R^t\ge 0$ and $\theta\ge 0$.

We now bound the second term in~\eqref{eq:true-to-opt-main}.
By construction, we have $\widetilde\pi^{t+1}=\pi^{\widetilde\theta^{t+1}}$ and $\widetilde\theta^{t+1}=\theta^{P^{t+1},\widetilde\pi^{t+1}}$ for some $P^{t+1}\in\mathcal P^t$.
Moreover, at the beginning of episode $t+1$,
\texttt{SafeProj} selects $\pi^\dagger$ with probability $\psi^t$ and $\widetilde\pi^{t+1}$ with probability $1-\psi^t$.
Therefore, we have
$\theta^{P,\pi^{t+1}}
=
(1-\psi^t)\theta^{P,\widetilde\pi^{t+1}}
+
\psi^t\theta^{P,\pi^\dagger}$.
Since
$0\le \sum_{i=1}^m w_i^t\lambda_i\sigma_h^t(s,a)\le 1$
and $\sum_{h=1}^H\sum_{s,a,s'}\widetilde\theta_h^{t+1}(s,a,s')=H$, 
we have
\begin{align}
&\left\langle
\sum_{i=1}^m w_i^t\lambda_i\sigma^t,\,
\widetilde\theta^{t+1}
\right\rangle\nonumber\\
&\quad=
(1-\psi^t)
\left\langle
\sum_{i=1}^m w_i^t\lambda_i\sigma^t,\,
\widetilde\theta^{t+1}
\right\rangle
+
\psi^t
\left\langle
\sum_{i=1}^m w_i^t\lambda_i\sigma^t,\,
\widetilde\theta^{t+1}
\right\rangle
\nonumber\\
&\quad\le
(1-\psi^t)
\left\langle
\sum_{i=1}^m w_i^t\lambda_i\sigma^t,\,
\theta^{P,\widetilde\pi^{t+1}}
\right\rangle
+
(1-\psi^t)
\left\|
\theta^{P^{t+1},\widetilde\pi^{t+1}}
-
\theta^{P,\widetilde\pi^{t+1}}
\right\|_1
+
\psi^t H
\nonumber\\
&\quad\le
\left\langle
\sum_{i=1}^m w_i^t\lambda_i\sigma^t,\,
\theta^{P,\pi^{t+1}}
\right\rangle
+
(1-\psi^t)
\left\|
\theta^{P^{t+1},\widetilde\pi^{t+1}}
-
\theta^{P,\widetilde\pi^{t+1}}
\right\|_1
+
\psi^t H .
\label{eq:main-bonus-weighted-bridge}
\end{align}
The last inequality follows from
$(1-\psi^t)\theta_h^{P,\widetilde\pi^{t+1}}(s,a)
\le
\theta_h^{P,\pi^{t+1}}(s,a),
\, \forall h,s,a.$
The first term in~\eqref{eq:main-bonus-weighted-bridge} is controlled by the visitation bound for the executed main trajectories.
Since $\pi^{t+1}$ and $\{\sigma_h^t\}_{h}$ are $\mathcal F_t$-measurable, we have
$\mathbb E\![
\mathbf 1\{(s_h^{t+1},a_h^{t+1})=(s,a)\}
\,|\,\mathcal F_t]
=
\theta_h^{P,\pi^{t+1}}(s,a).$
We define
$Y^t
:=
\sum_{h=1}^H\sum_{s,a}
(\sum_{i=1}^m w_i^t\lambda_i\sigma_h^t(s,a))
\theta_h^{P,\pi^{t+1}}(s,a)  -
\sum_{h=1}^H
\sum_{i=1}^m w_i^t\lambda_i
\sigma_h^t(s_h^{t+1},a_h^{t+1}).$
Then we have $\mathbb E[Y^t|\mathcal F_t]=0$. 
Moreover, since $\boldsymbol w^t,\boldsymbol\lambda\in\Delta_m$ and $0\le\sigma_h^t(s,a)\le1$,
we obtain
$0\le
\sum_{i=1}^m w_i^t\lambda_i\sigma_h^t(s,a)
\le
\sum_{i=1}^m w_i^t\lambda_i
\le 1.$
Thus, we have $|Y^t|\le H$. 
By Hoeffding's inequality for bounded martingale differences, with probability at least $1-\delta$,
we obtain
\begin{align}
\sum_{t=1}^{T-1}
\left\langle
\sum_{i=1}^m w_i^t\lambda_i\sigma^t,\,
\theta^{P,\pi^{t+1}}
\right\rangle
&\le
\sum_{t=1}^{T-1}\sum_{h=1}^H
\sum_{i=1}^m w_i^t\lambda_i
\sigma_h^t(s_h^{t+1},a_h^{t+1})
+
H\sqrt{2T\ln(1/\delta)}.
\label{eq:main-bonus-hoeffding-step}
\end{align}
It remains to bound the realized sum of reward bonuses.
By the definition of
$\sigma_h^t$ in~\eqref{eq:reward_confidence},
we have
\begin{align}
&\sum_{t=1}^{T-1}\sum_{h=1}^H
\sum_{i=1}^m w_i^t\lambda_i
\sigma_h^t(s_h^{t+1},a_h^{t+1})
\le
\sqrt{2\ln(8m|\mathcal S||\mathcal A|HT/\delta)}
\sum_{t=1}^{T-1}\sum_{h=1}^H
\frac{1}
{\sqrt{\max\{1,N_h^t(s_h^{t+1},a_h^{t+1})\}}}.
\label{eq:main-realized-bonus-before-count}
\end{align}
Summing~\eqref{eq:bridge-realized-count} over $h\in[H]$ for the main trajectories and using~\eqref{eq:main-realized-bonus-before-count},
we obtain
\begin{align}
&\sum_{t=1}^{T-1}\sum_{h=1}^H
\sum_{i=1}^m w_i^t\lambda_i
\sigma_h^t(s_h^{t+1},a_h^{t+1})
\le
2H\sqrt{2|\mathcal S||\mathcal A|T\ln(8m|\mathcal S||\mathcal A|HT/\delta)}.
\label{eq:main-realized-bonus-final}
\end{align}
Using~\eqref{eq:main-bonus-hoeffding-step} together with~\eqref{eq:main-realized-bonus-final},
we obtain
$\frac1T\sum_{t=1}^{T-1}
\langle
\sum_{i=1}^m w_i^t\lambda_i\sigma^t,\,
\theta^{P,\pi^{t+1}}
\rangle
\le
\widetilde{\mathcal O}\!(
H\sqrt{\frac{|\mathcal S||\mathcal A|}{T}}
+
H\sqrt{\frac{1}{T}}
).$

By Lemma~\ref{lem:weighted-transition-bridge}, the second term is controlled as
$\frac1T\sum_{t=1}^{T-1}
(1-\psi^t)
\|
\theta^{P^{t+1},\widetilde\pi^{t+1}}
- 
\theta^{P,\widetilde\pi^{t+1}}
\|_1
\le \allowbreak
\widetilde{\mathcal O}\!(
H^2|\mathcal S|^{3/2}\sqrt{|\mathcal A|/T}).$
Therefore, we obtain
\[
\frac1T\sum_{t=1}^{T-1}
\left\langle
\sum_{i=1}^m w_i^t\lambda_i\sigma^t,\,
\widetilde\theta^{t+1}
\right\rangle
\le
\widetilde{\mathcal O}\!\left(
H^2|\mathcal S|^{3/2}\sqrt{|\mathcal A|/T}
\right)
+
\frac{H}{T}\sum_{t=1}^{T-1}\psi^t .
\]

Combining this bound with~\eqref{eq:true-to-opt-main} and~\eqref{eq:main-optimistic-regret}, and taking $\eta=T^{-1/2}$, gives
\[
\frac1T\sum_{t=1}^{T-1}
\left\langle
\sum_{i=1}^m w_i^t\lambda_i r_i,\,
\theta-\widetilde\theta^{t+1}
\right\rangle
\le
\widetilde{\mathcal O}\!\left(
H^2|\mathcal S|^{3/2}\sqrt{|\mathcal A|/T}
\right)
+
\frac{2H}{T}\sum_{t=1}^{T-1}\psi^t.\]
This completes the proof.
\end{proof}

\begin{lemma}
\label{lem:aux-occ-regret}
Let $\widetilde\theta_i^{t+1}$ be the projected occupancy measure generated by Algorithm~\ref{alg:CSTRL}, and let $\theta_i^*\in\Theta_{\mathrm{safe}}$ be a single-objective optimal occupancy measure for objective $i$. 
On the event $\mathcal E_{R,C,T}(\delta)$, with $\eta=T^{-1/2}$, for every $i\in[m]$, we have
\begin{align*}
\frac1T\sum_{t=1}^{T-1}
\left\langle r_i,\theta_i^*-\widetilde\theta_i^{t+1}\right\rangle
\le
\widetilde{\mathcal O}\!\left(
H^2|\mathcal S|^{3/2}\sqrt{|\mathcal A|/T}
\right)
+\frac{2H}{T}\sum_{t=1}^{T-1}\psi_{i}^t.
\end{align*}
Moreover, we have
\begin{align*}
\frac1T\sum_{t=1}^{T-1}
\left\langle r_i,\theta_i^*-\theta^{P,\widetilde\nu_i^{t+1}}\right\rangle
\le
\widetilde{\mathcal O}\!\left(
H^2|\mathcal S|^{3/2}\sqrt{|\mathcal A|/T}
\right)
+\frac{4H}{T}\sum_{t=1}^{T-1}\psi_{i}^t.
\end{align*}
\end{lemma}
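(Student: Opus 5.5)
The proof follows that of Lemma~\ref{lem:main-occ-regret} almost verbatim, with the scalarized reward $R^t$ replaced by the single reward $r_i$, the main iterates replaced by the auxiliary iterates $\widetilde\theta_i^{t+1}$, and the main trajectories replaced by the $i$-th auxiliary trajectories. We work on the event $\mathcal E_{R,C,T}(\delta)$ built from the auxiliary counts, together with its martingale events.

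\textbf{Step 1: feasibility of the comparator.} Since $P\in\mathcal P^t$ and $\widehat C^t-\Xi^t\le C$, the measure $\theta_i^*\in\Theta_{\mathrm{safe}}$ satisfies $\theta_i^*\in\Theta(\mathcal P^t)$ and $(\widehat C^t-\Xi^t)^\top\theta_i^*\le\boldsymbol\tau$. Hence $\theta_i^*$ is feasible for every auxiliary projection. In particular each projection is feasible, so \texttt{SafeProj} never falls into the infeasible branch.

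\textbf{Step 2: mirror-ascent regret.} By convexity of the feasible set (Lemma~\ref{lem:occ-polytope}), the standard one-step inequality for the exponential update followed by KL projection gives
$\langle\bar r_i^t,\theta_i^*-\widetilde\theta_i^{t+1}\rangle\le\eta^{-1}\bigl(D_{\mathrm{KL}}(\theta_i^*\Vert\widetilde\theta_i^t)-D_{\mathrm{KL}}(\theta_i^*\Vert\widetilde\theta_i^{t+1})\bigr)$.
Telescoping yields the bound $H\log(|\mathcal S|^2|\mathcal A|)/(\eta T)=\widetilde{\mathcal O}(H/\sqrt T)$. This requires the same treatment of the three-point inequality as in Lemma~\ref{lem:main-occ-regret}; the rewards lie in $[0,1]$ with $\eta=T^{-1/2}$.

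\textbf{Step 3: from optimistic to true rewards.} Since $0\le\bar r_i^t-r_i\le2\sigma^t$, we get
$\langle r_i,\theta_i^*-\widetilde\theta_i^{t+1}\rangle\le\langle\bar r_i^t,\theta_i^*-\widetilde\theta_i^{t+1}\rangle+2\langle\sigma^t,\widetilde\theta_i^{t+1}\rangle$.
For the bonus term, write $\widetilde\theta_i^{t+1}=\theta^{P_i^{t+1},\widetilde\nu_i^{t+1}}$ with $P_i^{t+1}\in\mathcal P^t$, and use $\theta^{P,\nu_i^{t+1}}=(1-\psi_i^t)\theta^{P,\widetilde\nu_i^{t+1}}+\psi_i^t\theta^{P,\pi^\dagger}$. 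This splits $\langle\sigma^t,\widetilde\theta_i^{t+1}\rangle$ into three pieces:
\[
\langle\sigma^t,\theta^{P,\nu_i^{t+1}}\rangle
+(1-\psi_i^t)\bigl\|\theta^{P_i^{t+1},\widetilde\nu_i^{t+1}}-\theta^{P,\widetilde\nu_i^{t+1}}\bigr\|_1
+H\psi_i^t .
\]
The first piece is handled by Azuma--Hoeffding along the auxiliary trajectories, which replaces the expected bonus by the realized one, followed by the pigeonhole bound~\eqref{eq:bridge-realized-count}. This gives $\widetilde{\mathcal O}(H\sqrt{|\mathcal S||\mathcal A|T})$ in total.

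\textbf{Main obstacle.} The second piece needs an auxiliary analogue of Lemma~\ref{lem:weighted-transition-bridge}; this is the only nonroutine ingredient. I would obtain it by rerunning that proof with $\mathcal F_t$ including all trajectories, and using that $\nu_i^{t+1}$ and the auxiliary counts are $\mathcal F_t$-measurable. The bound is $\widetilde{\mathcal O}(H^2|\mathcal S|^{3/2}\sqrt{|\mathcal A|T})$. Combining the three pieces gives the first claim, with coefficient $2H$ on $\sum_t\psi_i^t$ once constants are absorbed as in Lemma~\ref{lem:main-occ-regret}.

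\textbf{Step 4: second claim.} Write
$\langle r_i,\widetilde\theta_i^{t+1}-\theta^{P,\widetilde\nu_i^{t+1}}\rangle\le\|\theta^{P_i^{t+1},\widetilde\nu_i^{t+1}}-\theta^{P,\widetilde\nu_i^{t+1}}\|_1$.
Bound the right side by the weighted bridge plus $2H\psi_i^t$, exactly as in the second part of Lemma~\ref{lem:weighted-transition-bridge}. Adding this to the first claim raises the $\psi_i^t$ coefficient to $4H$, as stated.
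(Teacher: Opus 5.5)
Your proposal is correct and follows the paper's proof essentially step for step. Both arguments use the same sequence: feasibility of $\theta_i^*$ for every auxiliary projection, the telescoped KL-projection bound $H\log(|\mathcal S|^2|\mathcal A|)/(\eta T)$, the shift from $\bar r_i^t$ to $r_i$ with the three-piece split of $\langle\sigma^t,\widetilde\theta_i^{t+1}\rangle$, the auxiliary transition bridge (Lemma~\ref{lem:aux-weighted-transition-bridge}), and the unweighted bridge for the second claim, which raises the $\psi_i^t$ coefficient from $2H$ to $4H$.
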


\begin{proof}
Fix $i\in[m]$.
For every $t\in[T-1]$, $\theta_i^*$ is feasible for the auxiliary projection on $\mathcal E_{R,C,T}(\delta)$.
Since $\theta_i^*\in\Theta_{\mathrm{safe}}$,
$P\in\mathcal P^t$, and
$\widehat C^t-\Xi^t\le C$, we have
$(\widehat C^t-\Xi^t)^\top\theta_i^*
\le
C^\top\theta_i^*
\le
\boldsymbol\tau.$
Applying the KL projection argument in the proof of
Lemma~\ref{lem:main-occ-regret} with $\bar R^t$ replaced by
$\bar r_i^t$, and using the uniform initialization, gives
$\frac1T\sum_{t=1}^{T-1}
\langle\bar r_i^t,\theta_i^*-\widetilde\theta_i^{t+1}\rangle
\le
\frac{D_{\mathrm{KL}}(\theta_i^*\Vert\widetilde\theta_i^1)}
{\eta T}
\le
\frac{H\log(|\mathcal S|^2|\mathcal A|)}{\eta T}$.
With $\eta=T^{-1/2}$, the right-hand side is $\widetilde{\mathcal O}(H/\sqrt T)$.

We next express the preceding inequality using the true reward \(r_i\).
By Lemma~\ref{lem:reward_concentration} and~\eqref{eq:optimistic_reward}, on $\mathcal E_{R,C,T}(\delta)$, we have
$0
\le
\bar r_{i,h}^t(s,a)-r_{i,h}(s,a)
\le
2\sigma_h^t(s,a),
\, \forall h\in[H],\,s\in\mathcal S,\,a\in\mathcal A,\,t\in[T-1].$
Therefore, we have
\begin{align}
\left\langle
r_i,\theta_i^*-\widetilde\theta_i^{t+1}
\right\rangle
&\le
\left\langle
\bar r_i^t,\theta_i^*-\widetilde\theta_i^{t+1}
\right\rangle
+
2\left\langle
\sigma^t,\widetilde\theta_i^{t+1}
\right\rangle .
\label{eq:aux-true-to-opt}
\end{align}
It remains to control the reward bonus term. 
By construction, we have $\widetilde\nu_i^{t+1}=\pi^{\widetilde\theta_i^{t+1}}$ and $\widetilde\theta_i^{t+1}=\theta^{P_i^{t+1},\widetilde\nu_i^{t+1}}$ for some $P_i^{t+1}\in\mathcal P^t$. 
Moreover, in the auxiliary update, \texttt{SafeProj} selects $\pi^\dagger$ with probability $\psi_i^t$ and $\widetilde\nu_i^{t+1}$ with probability $1-\psi_i^t$.
Therefore, we have
$\theta^{P,\nu_i^{t+1}}
=
(1-\psi_i^t)\theta^{P,\widetilde\nu_i^{t+1}}
+
\psi_i^t\theta^{P,\pi^\dagger}$.
Since 
$0\le \sigma_h^t(s,a)\le 1$ and
$\sum_{h=1}^H\sum_{s,a,s'}\widetilde\theta_{i,h}^{t+1}(s,a,s')=H$, the same decomposition as in~\eqref{eq:main-bonus-weighted-bridge} gives
\begin{align}
\left\langle
\sigma^t,\widetilde\theta_i^{t+1}
\right\rangle
&\le
\left\langle
\sigma^t,\theta^{P,\nu_i^{t+1}}
\right\rangle
+
(1-\psi_{i}^t)
\left\|
\theta^{P_i^{t+1},\widetilde\nu_i^{t+1}}
-
\theta^{P,\widetilde\nu_i^{t+1}}
\right\|_1
+
H\psi_{i}^t.
\label{eq:aux-bonus-weighted-bridge}
\end{align}
The first term in~\eqref{eq:aux-bonus-weighted-bridge} is bounded by the same visitation argument as in Lemma~\ref{lem:main-occ-regret}, applied to the trajectories generated by $\{\nu_i^t\}_{t=1}^T$.
Thus, we have
$\frac1T\sum_{t=1}^{T-1}
\langle
\sigma^t,\theta^{P,\nu_i^{t+1}}
\rangle
\le
\widetilde{\mathcal O}\!(
H\sqrt{\frac{|\mathcal S||\mathcal A|}{T}}
+
H\sqrt{\frac{1}{T}}).$
The second term in~\eqref{eq:aux-bonus-weighted-bridge} is bounded by applying the argument of Lemma~\ref{lem:weighted-transition-bridge} to the same auxiliary trajectories.
Specifically, we replace $(\widetilde\pi^{t+1},\pi^{t+1},\psi^t)$ by $(\widetilde\nu_i^{t+1},\nu_i^{t+1},\psi_{i}^t)$ and obtain
$\frac1T\sum_{t=1}^{T-1}
(1-\psi_{i}^t)
\|
\theta^{P_i^{t+1},\widetilde\nu_i^{t+1}}
-
\theta^{P,\widetilde\nu_i^{t+1}}\|_1
\le
\widetilde{\mathcal O}\!(
H^2|\mathcal S|^{3/2}\sqrt{|\mathcal A|/T}).$
Combining these bounds, 
we have
$\frac1T\sum_{t=1}^{T-1}
\langle
\sigma^t,\widetilde\theta_i^{t+1}
\rangle
\le
\widetilde{\mathcal O}\!(
H^2|\mathcal S|^{3/2}\sqrt{|\mathcal A|/T})
+
\frac{H}{T}\sum_{t=1}^{T-1}\psi_{i}^t.$
Taking the average of~\eqref{eq:aux-true-to-opt} over \(t=1,\ldots,T-1\) and using the preceding two bounds, we obtain
\[
\frac1T\sum_{t=1}^{T-1}
\left\langle
r_i,\theta_i^*-\widetilde\theta_i^{t+1}
\right\rangle
\le
\widetilde{\mathcal O}\!\left(
H^2|\mathcal S|^{3/2}\sqrt{|\mathcal A|/T}
\right)
+\frac{2H}{T}\sum_{t=1}^{T-1}\psi_{i}^t.
\]
 
By construction, we have $\widetilde\theta_i^{t+1} = \theta^{P_i^{t+1},\widetilde\nu_i^{t+1}}$ for some $P_i^{t+1}\in\mathcal P^t$.
Therefore, since $0\le r_{i,h}(s,a)\le 1$, we have
$\langle
r_i,\widetilde\theta_i^{t+1}-\theta^{P,\widetilde\nu_i^{t+1}}
\rangle
\le
\|
\theta^{P_i^{t+1},\widetilde\nu_i^{t+1}}
-
\theta^{P,\widetilde\nu_i^{t+1}}
\|_1.$
Applying the unweighted form of Lemma~\ref{lem:weighted-transition-bridge} to the auxiliary policy sequence yields
$\frac1T\sum_{t=1}^{T-1}
\|
\theta^{P_i^{t+1},\widetilde\nu_i^{t+1}}
-
\theta^{P,\widetilde\nu_i^{t+1}}
\|_1
\le
\widetilde{\mathcal O}\!(
H^2|\mathcal S|^{3/2}\sqrt{|\mathcal A|/T}
)
+
\frac{2H}{T}\sum_{t=1}^{T-1}\psi_{i}^t.$
Therefore, we obtain
\begin{align*}
\frac1T\sum_{t=1}^{T-1}
\left\langle r_i,\theta_i^*-\theta^{P,\widetilde\nu_i^{t+1}}\right\rangle
&\le
\widetilde{\mathcal O}\!\left(
H^2|\mathcal S|^{3/2}\sqrt{|\mathcal A|/T}
\right)
+
\frac{4H}{T}\sum_{t=1}^{T-1}\psi_{i}^t.
\end{align*}
This completes the proof.
\end{proof}

\begin{lemma}
\label{lem:aux-weighted-transition-bridge}
With probability at least $1-\delta$, for every objective $i\in[m]$ and any $P_i^{t+1}\in\mathcal P^t$ associated with the auxiliary policies generated by Algorithm~\ref{alg:CSTRL}, we have
\begin{align*}
\frac1T\sum_{t=1}^{T-1}(1-\psi_i^t)
\bigl\|
\theta^{P_i^{t+1},\widetilde\nu_i^{t+1}}
-
\theta^{P,\widetilde\nu_i^{t+1}}
\bigr\|_1
\le
\widetilde{\mathcal O}\!\left(
H^2|\mathcal S|^{3/2}\sqrt{|\mathcal A|/T}
\right).
\end{align*}
Consequently, we have
\begin{align*}
\frac1T\sum_{t=1}^{T-1}
\bigl\|
\theta^{P_i^{t+1},\widetilde\nu_i^{t+1}}
-
\theta^{P,\widetilde\nu_i^{t+1}}
\bigr\|_1
\le
\widetilde{\mathcal O}\!\left(
H^2|\mathcal S|^{3/2}\sqrt{|\mathcal A|/T}
\right)
+
\frac{2H}{T}\sum_{t=1}^{T-1}\psi_i^t .
\end{align*}
\end{lemma}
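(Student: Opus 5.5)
The proof is the argument of Lemma~\ref{lem:weighted-transition-bridge} applied to the auxiliary trajectories. Fix $i\in[m]$ and work on the transition confidence event for the $i$-th auxiliary sequence, on which $P\in\mathcal P^t$ for all $t\in[T-1]$. Here $\mathcal P^t$, $N_h^t$ and $\epsilon_h^t$ are built from the trajectories of $\{\nu_i^k\}_{k\le t}$. Since $P$ and $P_i^{t+1}$ both lie in $\mathcal P^t$, the triangle inequality gives
\[
\bigl\|P_{i,h}^{t+1}(\cdot|s,a)-P_h(\cdot|s,a)\bigr\|_1\le 2|\mathcal S|\epsilon_h^t(s,a),
\]
as in~\eqref{eq:bridge-transition-bound}. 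Lemma~\ref{lem:simulation} with $P'=P_i^{t+1}$ and $\pi=\widetilde\nu_i^{t+1}$ then bounds $\|\theta^{P_i^{t+1},\widetilde\nu_i^{t+1}}-\theta^{P,\widetilde\nu_i^{t+1}}\|_1$ by $H\sum_{h,s,a}\theta_h^{P,\widetilde\nu_i^{t+1}}(s,a)\cdot 2|\mathcal S|\epsilon_h^t(s,a)$.

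The key step is to remove the weight $(1-\psi_i^t)$ using the mixture identity
\[
\theta^{P,\nu_i^{t+1}}=(1-\psi_i^t)\theta^{P,\widetilde\nu_i^{t+1}}+\psi_i^t\theta^{P,\pi^\dagger},
\]
which holds because \texttt{SafeProj} randomizes once at the start of the episode. It gives $(1-\psi_i^t)\theta_h^{P,\widetilde\nu_i^{t+1}}(s,a)\le\theta_h^{P,\nu_i^{t+1}}(s,a)$. The weighted sum is therefore at most
\[
2H|\mathcal S|\sqrt{2\ln(2mTH|\mathcal S|^2|\mathcal A|/\delta)}\sum_{t}\sum_{h,s,a}\frac{\theta_h^{P,\nu_i^{t+1}}(s,a)}{\sqrt{\max\{1,N_h^t(s,a)\}}}.
\]
This is an expectation under the policy that is actually executed on the auxiliary trajectory, which is what makes a martingale argument possible.

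Take $\mathcal F_t$ to be the full history of main and auxiliary trajectories. Then $\nu_i^{t+1}$ and $N_h^t$ are $\mathcal F_t$-measurable, and $\mathbb E[\mathbf 1\{(s_{i,h}^{t+1},a_{i,h}^{t+1})=(s,a)\}|\mathcal F_t]=\theta_h^{P,\nu_i^{t+1}}(s,a)$, where $(s_{i,h}^{t+1},a_{i,h}^{t+1})$ is the pair visited at step $h$ on the auxiliary trajectory of episode $t+1$. Azuma--Hoeffding with increments bounded by $H$ replaces the expected sum by the realized sum $\sum_{t,h}1/\sqrt{\max\{1,N_h^t(s_{i,h}^{t+1},a_{i,h}^{t+1})\}}$, at an extra cost of $H\sqrt{2T\ln(1/\delta)}$. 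The pigeonhole bound $\sum_t \mathbf 1\{\cdot\}/\sqrt{\max\{1,N\}}\le 2\sqrt{N_h^T(s,a)}$ and Cauchy--Schwarz then give $2H\sqrt{|\mathcal S||\mathcal A|T}$, as in~\eqref{eq:bridge-realized-count}. Dividing by $T$ yields the $\widetilde{\mathcal O}(H^2|\mathcal S|^{3/2}\sqrt{|\mathcal A|/T})$ rate.

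The main obstacle is bookkeeping rather than analysis. The counts must come only from the $i$-th auxiliary trajectories, so that each visit increments the same $N_h^t$ that appears in the bonus; otherwise the pigeonhole argument fails. The statement is also uniform over $i\in[m]$, so I would allot $\delta/(2m)$ to each $i$, for the transition confidence event and for the martingale event. This changes only logarithmic factors, and the $m$ inside the logarithm of $\epsilon_h^t$ already accounts for it.

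For the unweighted claim, both occupancy measures have total mass $H$, so $\|\theta^{P_i^{t+1},\widetilde\nu_i^{t+1}}-\theta^{P,\widetilde\nu_i^{t+1}}\|_1\le 2H$. Splitting each term as $(1-\psi_i^t)\|\cdot\|_1+\psi_i^t\|\cdot\|_1$ and bounding the second part by $2H\psi_i^t$ gives the stated additive term $\frac{2H}{T}\sum_t\psi_i^t$.
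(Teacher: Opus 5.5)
Your proposal is correct and follows the paper's proof, which simply reruns the argument of Lemma~\ref{lem:weighted-transition-bridge} on the $i$-th auxiliary trajectory sequence with $(\widetilde\pi^{t+1},\pi^{t+1},\psi^t)$ replaced by $(\widetilde\nu_i^{t+1},\nu_i^{t+1},\psi_i^t)$. You reproduce exactly those steps: the transition-width bound from two memberships in $\mathcal P^t$, the simulation lemma, the domination $(1-\psi_i^t)\theta^{P,\widetilde\nu_i^{t+1}}\le\theta^{P,\nu_i^{t+1}}$, Azuma--Hoeffding, pigeonhole with Cauchy--Schwarz, and the total-mass bound for the unweighted claim; your points on using only the $i$-th auxiliary counts and splitting $\delta$ over the $m$ sequences match the paper's union bound over its $\mathcal O(m)$ events.
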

\begin{proof}
The proof repeats the argument of Lemma~\ref{lem:weighted-transition-bridge} using the auxiliary trajectory sequence for objective $i$.
We replace $(\widetilde\pi^{t+1},\pi^{t+1},\psi^t)$ by $(\widetilde\nu_i^{t+1},\nu_i^{t+1},\psi_i^t)$.
\end{proof}

\begin{lemma}
\label{lem:mixing-bound}
Let \((\mathcal P,\widehat C,\Xi)\) be the confidence set, cost estimates, and cost bonuses used in a call of \texttt{SafeProj}.
Let \(\widetilde\theta\in\Theta(\mathcal P)\) satisfy \((\widehat C-\Xi)^\top\widetilde\theta\le\boldsymbol\tau\), and let \(\widetilde\pi:=\pi^{\widetilde\theta}\).
Let \(\psi\) be given by~\eqref{eq:eta_t} with \((\mathcal P^t,\widehat C^t,\Xi^t,\widetilde\pi^{t+1})\) replaced by \((\mathcal P,\widehat C,\Xi,\widetilde\pi)\).
Then we have
\[
\psi
\le
(1-\psi)
\max_{j\in[n]}
\frac{1}{\tau_j-\gamma_j}
\biggl[
3\max_{P'\in\mathcal P}
\bigl\|
\theta^{P',\widetilde\pi}
-
\theta^{P,\widetilde\pi}
\bigr\|_1
+
2\max_{j'\in[n]}
\sum_{h=1}^H
\xi_{j',h}^\top
\theta_h^{P,\widetilde\pi}
\biggr].
\]
\end{lemma}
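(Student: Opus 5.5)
The plan is to reduce the inequality to a bound on a single constraint $j$ achieving the maximum in~\eqref{eq:eta_t}, and then bound that constraint's pessimistic cost excess by comparing occupancy measures under different kernels. For each $j\in[n]$ write
\[
U_j:=\min\Bigl\{\max_{P'\in\mathcal P}\sum_{h=1}^H(\widehat c_{j,h}+\xi_{j,h})^\top\theta_h^{P',\widetilde\pi},\,d\Bigr\}.
\]

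\textbf{Reduction to one constraint.} If $\psi=0$ the claim is trivial, since the right-hand side is nonnegative. Otherwise let $j$ attain the maximum in~\eqref{eq:eta_t}. Then $U_j>\tau_j$, the denominator equals $U_j-\gamma_j$, and
\[
\psi=\frac{U_j-\tau_j}{U_j-\gamma_j},\qquad 1-\psi=\frac{\tau_j-\gamma_j}{U_j-\gamma_j}.
\]
Hence $\psi=(1-\psi)(U_j-\tau_j)/(\tau_j-\gamma_j)$. It therefore suffices to show
\[
U_j-\tau_j\le 3\max_{P'\in\mathcal P}\|\theta^{P',\widetilde\pi}-\theta^{P,\widetilde\pi}\|_1+2\max_{j'}\sum_h\xi_{j',h}^\top\theta_h^{P,\widetilde\pi},
\]
after which we take the maximum over $j$ on the right.

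\textbf{Bounding the excess.} Drop the truncation at $d$, which only decreases $U_j$, and let $P^\star\in\mathcal P$ attain the inner maximum. Since $\widetilde\theta\in\Theta(\mathcal P)$ and $\widetilde\pi=\pi^{\widetilde\theta}$, Lemma~\ref{lem:occ-polytope} gives $\widetilde\theta=\theta^{\widetilde P,\widetilde\pi}$ for some $\widetilde P\in\mathcal P$. The projection's feasibility constraint then yields $(\widehat c_j-\xi_j)^\top\theta^{\widetilde P,\widetilde\pi}\le\tau_j$. Subtracting, and inserting $\theta^{P,\widetilde\pi}$, gives
\[
U_j-\tau_j\le(\widehat c_j+\xi_j)^\top(\theta^{P^\star,\widetilde\pi}-\theta^{P,\widetilde\pi})+(\widehat c_j-\xi_j)^\top(\theta^{P,\widetilde\pi}-\theta^{\widetilde P,\widetilde\pi})+2\,\xi_j^\top\theta^{P,\widetilde\pi}.
\]

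The entries of $\widehat c_j+\xi_j$ lie in $[0,2]$, and those of $\widehat c_j-\xi_j$ lie in $[-1,1]$; both vanish at steps where constraint $j$ does not apply. The $\ell_1$ distance between state-action marginals is at most the $\ell_1$ distance between the full $(s,a,s')$ occupancy measures. So the first term is at most $2\max_{P'}\|\theta^{P',\widetilde\pi}-\theta^{P,\widetilde\pi}\|_1$ and the second at most $\max_{P'}\|\theta^{P',\widetilde\pi}-\theta^{P,\widetilde\pi}\|_1$. The last term is at most $2\max_{j'}\sum_h\xi_{j',h}^\top\theta^{P,\widetilde\pi}_h$, which gives the claimed bound.

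\textbf{Main obstacle.} The step needing care is obtaining the constant $3$ rather than $4$. Bounding the $\widehat c$ and $\xi$ parts separately costs an extra distance term. The fix is the grouping above: pair $\widehat c_j+\xi_j$ with the $P^\star$ deviation and $\widehat c_j-\xi_j$ with the $\widetilde P$ deviation, so the factor $2$ appears only once.
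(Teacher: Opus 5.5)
Your proposal is correct and follows the paper's proof essentially step for step. It makes the same reduction to a maximizing constraint $j$ via the identity $\psi(\tau_j-\gamma_j)=(1-\psi)(U_j-\tau_j)$, drops the truncation at $d$, and pairs $\widehat c_j+\xi_j$ with the worst-case kernel deviation (factor $2$) and $\widehat c_j-\xi_j$ with the deviation of the kernel realizing $\widetilde\theta$ (factor $1$), which yields the constant $3$.
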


\begin{proof}
If \(\psi=0\), the claim holds trivially.
Otherwise, let \(j\in[n]\) attain the maximum in~\eqref{eq:eta_t}.
Since \(\psi>0\), the minimum in the numerator of the corresponding ratio exceeds \(\tau_j\), and~\eqref{eq:eta_t} gives
\[
\psi
=
\frac{
\min\left\{
\max_{P'\in\mathcal P}
\sum_{h=1}^H
(\widehat c_{j,h}+\xi_{j,h})^\top
\theta_h^{P',\widetilde\pi},
d
\right\}
-\tau_j
}{
\min\left\{
\max_{P'\in\mathcal P}
\sum_{h=1}^H
(\widehat c_{j,h}+\xi_{j,h})^\top
\theta_h^{P',\widetilde\pi},
d
\right\}
-\gamma_j
}.
\]
Rearranging this identity and bounding the minimum by its first argument, we obtain
\begin{equation}
\psi(\tau_j-\gamma_j)
\le
(1-\psi)
\biggl(
\max_{P'\in\mathcal P}
\sum_{h=1}^H
(\widehat c_{j,h}+\xi_{j,h})^\top
\theta_h^{P',\widetilde\pi}
-\tau_j
\biggr).
\label{eq:mixing-rearranged}
\end{equation}
We next bound the right-hand side of~\eqref{eq:mixing-rearranged}.
Since \(\widehat c_{j,h}+\xi_{j,h}\in[0,2]^{|\mathcal S||\mathcal A|}\) for every \(h\in[H]\), we have
\[
\max_{P'\in\mathcal P}
\sum_{h=1}^H
(\widehat c_{j,h}+\xi_{j,h})^\top
\theta_h^{P',\widetilde\pi}
\le
\sum_{h=1}^H
(\widehat c_{j,h}+\xi_{j,h})^\top
\theta_h^{P,\widetilde\pi}
+
2\max_{P'\in\mathcal P}
\bigl\|
\theta^{P',\widetilde\pi}
-
\theta^{P,\widetilde\pi}
\bigr\|_1 .
\]
As in the proof of Lemma~\ref{lem:occ-polytope}, \(\widetilde\theta\) is the occupancy measure of \(\widetilde\pi\) under some transition kernel in \(\mathcal P\).
Since \(\widehat c_{j,h}-\xi_{j,h}\in[-1,1]^{|\mathcal S||\mathcal A|}\) for every \(h\in[H]\), the constraint on \(\widetilde\theta\) gives
\[
\tau_j
\ge
\sum_{h=1}^H
(\widehat c_{j,h}-\xi_{j,h})^\top
\widetilde\theta_h
\ge
\sum_{h=1}^H
(\widehat c_{j,h}-\xi_{j,h})^\top
\theta_h^{P,\widetilde\pi}
-
\max_{P'\in\mathcal P}
\bigl\|
\theta^{P',\widetilde\pi}
-
\theta^{P,\widetilde\pi}
\bigr\|_1 .
\]
Subtracting the last inequality from the preceding one, we obtain
\[
\max_{P'\in\mathcal P}
\sum_{h=1}^H
(\widehat c_{j,h}+\xi_{j,h})^\top
\theta_h^{P',\widetilde\pi}
-\tau_j
\le
3\max_{P'\in\mathcal P}
\bigl\|
\theta^{P',\widetilde\pi}
-
\theta^{P,\widetilde\pi}
\bigr\|_1
+
2\sum_{h=1}^H
\xi_{j,h}^\top
\theta_h^{P,\widetilde\pi}.
\]
Substituting this bound into~\eqref{eq:mixing-rearranged} and dividing both sides by \(\tau_j-\gamma_j>0\) proves the lemma.
\end{proof}

\begin{lemma}
\label{lem:oco-stch} 
Set \(\alpha_t=1/t^2\) and \(\beta_t=1/(\mu t)\) for \(t\in[T-1]\).
Then the updating rule of \(\boldsymbol w\) in Algorithm~\ref{alg:CSTRL} guarantees that
\begin{align*}
&\max_{\boldsymbol{w} \in\Delta_m} \frac{1}{T}\sum_{t=1}^{T-1}  \left[\langle \boldsymbol{w}-\boldsymbol{w}^t, \boldsymbol{\lambda}\odot (\widetilde{\boldsymbol V}^{t+1}
+\boldsymbol{\iota}
-{\boldsymbol V}^{t+1})\rangle -\mu  \langle \boldsymbol{w},\log \boldsymbol{w}\rangle + \mu  \langle \boldsymbol{w}^t,\log \boldsymbol{w}^t\rangle  \right] \\
&\quad \le
\widetilde{\mathcal O}\!\left(
\frac{H^2}{\mu T}
+
\frac{\mu}{T}
\right).
\end{align*}
\end{lemma}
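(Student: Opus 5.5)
We view the update of $\boldsymbol w$ as online mirror ascent with entropy regularizer on the $\mu$-strongly concave gain functions
\[
g_t(\boldsymbol w):=\langle \boldsymbol w,\boldsymbol g^t\rangle-\mu\langle\boldsymbol w,\log\boldsymbol w\rangle,
\qquad
\boldsymbol g^t:=\boldsymbol\lambda\odot(\widetilde{\boldsymbol V}^{t+1}+\boldsymbol\iota-\boldsymbol V^{t+1}).
\]
The quantity to bound is then exactly $\frac1T\max_{\boldsymbol w}\sum_{t}[g_t(\boldsymbol w)-g_t(\boldsymbol w^t)]$. Since $\widetilde V_i^{t+1},V_i^{t+1}\in[0,H]$, $\iota\le1$ and $\lambda_i\le1$, we have $\|\boldsymbol g^t\|_\infty\le H+1$. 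We will follow the analysis of \citet{wei2020online} and \citet{qiu2023gradient}, which handles the same update.

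\textbf{Step 1 (one-step inequality).} First, check that the update $w_i^{t+1}\propto(\widetilde w_i^t)^{1-\mu\beta_t}\exp\{\beta_t g_i^t\}$ is the solution of
\[
\max_{\boldsymbol w\in\Delta_m}\Bigl\{\beta_t\bigl[\langle\boldsymbol w,\boldsymbol g^t\rangle-\mu\langle\boldsymbol w,\log\boldsymbol w\rangle\bigr]-(1-\mu\beta_t)\,D_{\mathrm{KL}}(\boldsymbol w\Vert\widetilde{\boldsymbol w}^t)\Bigr\},
\]
which follows by writing out the first-order condition. Equivalently, it is a mirror step on the linearized gradient $\boldsymbol g^t-\mu(\log\widetilde{\boldsymbol w}^t+\boldsymbol 1)$ with KL divergence.

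From the three-point property of KL projections onto the simplex, and using strong concavity of $-\mu\langle\boldsymbol w,\log\boldsymbol w\rangle$ (which yields an extra $-\mu D_{\mathrm{KL}}(\boldsymbol w\Vert\widetilde{\boldsymbol w}^t)$ term), we will derive for every $\boldsymbol w$:
\[
g_t(\boldsymbol w)-g_t(\widetilde{\boldsymbol w}^t)\le\Bigl(\tfrac1{\beta_t}-\mu\Bigr)D_{\mathrm{KL}}(\boldsymbol w\Vert\widetilde{\boldsymbol w}^t)-\tfrac1{\beta_t}D_{\mathrm{KL}}(\boldsymbol w\Vert\boldsymbol w^{t+1})+\beta_t\,\|\boldsymbol g^t-\mu\log\widetilde{\boldsymbol w}^t\|_\infty^2 .
\]
The mixing step gives $\widetilde w_i^t\ge\alpha_t/m$, so $|\log\widetilde w_i^t|\le\log(mt^2)$. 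The squared gradient norm is therefore $\mathcal O(H^2+\mu^2\log^2(mT))$.

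\textbf{Step 2 (mixing error).} We relate $g_t(\widetilde{\boldsymbol w}^t)$ to $g_t(\boldsymbol w^t)$, and $D_{\mathrm{KL}}(\boldsymbol w\Vert\widetilde{\boldsymbol w}^t)$ to $D_{\mathrm{KL}}(\boldsymbol w\Vert\boldsymbol w^t)$. Since $\|\widetilde{\boldsymbol w}^t-\boldsymbol w^t\|_1\le2\alpha_t$, the linear part changes by at most $2(H+1)\alpha_t$. The entropy is continuous, so it changes by $\mathcal O(\alpha_t\log(m/\alpha_t))$. By convexity of $\log(1/x)$,
\[
D_{\mathrm{KL}}(\boldsymbol w\Vert\widetilde{\boldsymbol w}^t)\le D_{\mathrm{KL}}(\boldsymbol w\Vert\boldsymbol w^t)-\log(1-\alpha_t)\le D_{\mathrm{KL}}(\boldsymbol w\Vert\boldsymbol w^t)+2\alpha_t .
\]
With $\alpha_t=1/t^2$, all these errors are summable. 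They contribute $\mathcal O((H+\mu)\log(mT))$ once multiplied by the coefficient $1/\beta_t-\mu=\mu(t-1)\le\mu T$; this product is $\mu(t-1)\cdot2/t^2$, which is summable up to a logarithm.

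\textbf{Step 3 (telescoping).} With $\beta_t=1/(\mu t)$, the coefficient in front of $D_{\mathrm{KL}}(\boldsymbol w\Vert\boldsymbol w^t)$ is $\mu(t-1)$, and the negative term at step $t$ is $\mu t\,D_{\mathrm{KL}}(\boldsymbol w\Vert\boldsymbol w^{t+1})$. These cancel exactly, and the $t=1$ term vanishes. The remaining variance terms sum to
\[
\sum_t\frac{H^2+\mu^2\log^2(mT)}{\mu t}=\widetilde{\mathcal O}\bigl(H^2/\mu+\mu\bigr).
\]
Dividing by $T$ gives $\widetilde{\mathcal O}(H^2/(\mu T)+\mu/T)$.

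The main obstacle is Step 1: getting the one-step inequality with the correct coefficients $(1/\beta_t-\mu,\ 1/\beta_t)$ from the nonstandard update, which mixes a power of $\widetilde{\boldsymbol w}^t$ with an exponential. If the three-point argument becomes messy, we will fall back on Lemma 8 of \citet{qiu2023gradient}, which establishes exactly this inequality, after matching notation.
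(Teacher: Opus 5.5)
Your proposal is correct and is essentially the paper's argument: the same three-point inequality for the KL mirror step on the linearized gradient $\boldsymbol g^t-\mu\log\widetilde{\boldsymbol w}^t$, the same $\mu(t-1)$ versus $\mu t$ telescoping, and mixing corrections from $\widetilde w_i^t\ge\alpha_t/m$ and $\|\widetilde{\boldsymbol w}^t-\boldsymbol w^t\|_1\le 2\alpha_t$; the only differences are bookkeeping, namely that you telescope $D_{\mathrm{KL}}(\boldsymbol w\Vert\boldsymbol w^t)$ instead of $D_{\mathrm{KL}}(\boldsymbol w\Vert\widetilde{\boldsymbol w}^t)$ and compare first to $\widetilde{\boldsymbol w}^t$ rather than to $\boldsymbol w^t$. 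Two small points to tidy: the $\mathcal O(H\log(mT)/T)$ term left over from your Step~2 must be absorbed using $H\le\frac12(H^2/\mu+\mu)$ (the paper avoids it by applying Young's inequality to that cross term), and at $t=1$ you have $\alpha_1=1$, so $-\log(1-\alpha_1)=\infty$, but no conversion is needed there because the coefficient $\mu(t-1)$ vanishes.
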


\begin{proof} 
The mirror-ascent update that produces $\boldsymbol w^{t+1}$ in Line~\ref{line:stch-w} of Algorithm~\ref{alg:CSTRL} is equivalent to solving the following minimization problem
\begin{align*}
\minimize_{\boldsymbol{w}\in\Delta_m} \quad
&-\beta_t \Big\langle \boldsymbol{w},
\boldsymbol{\lambda} \odot
\bigl(\widetilde{\boldsymbol V}^{t+1}
+\boldsymbol{\iota}
-{\boldsymbol V}^{t+1}\bigr)
-\mu \log \widetilde{\boldsymbol{w}}^t
\Big\rangle
+ D_{\mathrm{KL}}\big( \boldsymbol{w}\Vert \widetilde{\boldsymbol{w}}^t \big).
\end{align*}
Let $\bw^{t+1}$ be the solution to the above optimization problem.
Since $w_i^1=1/m$, the iterate $\bw^{t+1}$ remains in the relative interior of $\Delta_m$.
Thus, applying Lemma \ref{lem:kl-three-point} gives
\[\beta_t \langle \boldsymbol{w}-\boldsymbol{w}^{t+1}\!,\!
\boldsymbol{\lambda} \odot
(\widetilde{\boldsymbol V}^{t+1}
+\boldsymbol{\iota}
-{\boldsymbol V}^{t+1})
-\mu \log \widetilde{\boldsymbol{w}}^t
\rangle \! \leq \!
D_{\mathrm{KL}}( \boldsymbol{w}\Vert \widetilde{\boldsymbol{w}}^t )
\!-\!
D_{\mathrm{KL}}( \boldsymbol{w}\Vert \boldsymbol{w}^{t+1})
\!-\!
D_{\mathrm{KL}}( \boldsymbol{w}^{t+1}\Vert \widetilde{\boldsymbol{w}}^t),\]
where $\boldsymbol{w}$ is arbitrary. 
Rearranging the terms leads to
\begin{align}
\begin{aligned}
\label{eq:oco-11}
&\beta_t  \big\langle \boldsymbol{w}-\boldsymbol{w}^t,\,
\boldsymbol{\lambda} \odot
\bigl(\widetilde{\boldsymbol V}^{t+1}
+\boldsymbol{\iota}
-{\boldsymbol V}^{t+1}\bigr)
\big\rangle
-\beta_t \mu\big\langle \boldsymbol{w}, \log \boldsymbol{w}\big\rangle
+\beta_t \mu \big\langle \boldsymbol{w}^t, \log \boldsymbol{w}^t \big\rangle \\
&\qquad \leq
D_{\mathrm{KL}}\big( \boldsymbol{w}\Vert \widetilde{\boldsymbol{w}}^t\big)
-
D_{\mathrm{KL}}\big( \boldsymbol{w}\Vert \boldsymbol{w}^{t+1} \big)
-
D_{\mathrm{KL}}\big( \boldsymbol{w}^{t+1}\Vert \widetilde{\boldsymbol{w}}^t \big) \\
&\qquad \quad
+ \beta_t  \big\langle \boldsymbol{w}^{t+1} - \boldsymbol{w}^t,\,
\boldsymbol{\lambda} \odot
\bigl(\widetilde{\boldsymbol V}^{t+1}
+\boldsymbol{\iota}
-{\boldsymbol V}^{t+1}\bigr)
-\mu \log \widetilde{\boldsymbol{w}}^t
\big\rangle \\
&\qquad \quad
- \beta_t \mu\big\langle \boldsymbol{w}, \log \boldsymbol{w}\big\rangle
+\beta_t \mu \big\langle \boldsymbol{w}^t, \log \boldsymbol{w}^t \big\rangle
+\beta_t \mu \big\langle \boldsymbol{w} - \boldsymbol{w}^t,\,
\log \widetilde{\boldsymbol{w}}^t \big\rangle.
\end{aligned}
\end{align}

For the terms on the right-hand side of~\eqref{eq:oco-11}, by Lemma \ref{lem:mix} and the definition of KL divergence, we have
\begin{align}
\begin{aligned} \label{eq:oco-22}
&\beta_t  \big\langle \boldsymbol{w}-\boldsymbol{w}^t,\,
\boldsymbol{\lambda} \odot \bigl(\widetilde{\boldsymbol V}^{t+1}
+\boldsymbol{\iota}
-{\boldsymbol V}^{t+1}\bigr) \big\rangle
-\beta_t \mu\big\langle \boldsymbol{w}, \log \boldsymbol{w}\big\rangle
+\beta_t  \mu \big\langle \boldsymbol{w}^t, \log \boldsymbol{w}^t \big\rangle \\
&\qquad \leq (1-\beta_t \mu )D_{\mathrm{KL}}\big( \boldsymbol{w}\Vert \widetilde{\boldsymbol{w}}^t\big)
- D_{\mathrm{KL}}\big( \boldsymbol{w}\Vert \widetilde{\boldsymbol{w}}^{t+1} \big)
- D_{\mathrm{KL}}\big( \boldsymbol{w}^{t+1}\Vert \widetilde{\boldsymbol{w}}^t \big)
+ 2 \alpha_t \log m\\
&\qquad \quad + \beta_t  \big\langle \boldsymbol{w}^{t+1} - \boldsymbol{w}^t,\,
\boldsymbol{\lambda} \odot \bigl(\widetilde{\boldsymbol V}^{t+1}
+\boldsymbol{\iota}
-{\boldsymbol V}^{t+1}\bigr)
-\mu \log \widetilde{\boldsymbol{w}}^t \big\rangle,
\end{aligned}
\end{align}
where \(\mu\beta_t=1/t\le 1\). 
In addition, by Pinsker's inequality, we have
$-D_{\mathrm{KL}}\big( \bw^{t+1}\Vert \widetilde \bw^t \big) \leq -\frac{1}{2} \big\|\bw^{t+1} - \widetilde \bw^t\big\|^2_1.$
By H\"older's inequality, the last term in \eqref{eq:oco-22} is bounded as
\begin{align*}
&\beta_t  \big\langle \boldsymbol{w}^{t+1} - \boldsymbol{w}^t,\,
\boldsymbol{\lambda} \odot \bigl(\widetilde{\boldsymbol V}^{t+1}
+\boldsymbol{\iota}
-{\boldsymbol V}^{t+1}\bigr)
-\mu \log \widetilde{\boldsymbol{w}}^t \big\rangle\\
&\qquad \leq
\beta_t  \left(
\big\|\boldsymbol{w}^{t+1} - \widetilde{\boldsymbol{w}}^t\big\|_1
+
\big\|\widetilde{\boldsymbol{w}}^t - \boldsymbol{w}^t\big\|_1
\right)
\big\|\boldsymbol{\lambda} \odot \bigl(\widetilde{\boldsymbol V}^{t+1}
+\boldsymbol{\iota}
-{\boldsymbol V}^{t+1}\bigr)
-\mu \log \widetilde{\boldsymbol{w}}^t \big\|_\infty\\
&\qquad \leq
\frac{1}{2}\big\|\boldsymbol{w}^{t+1} - \widetilde{\boldsymbol w}^t \big\|_1^2
+
\left(3H + \mu\log (m/\alpha_t)\right)^2\beta_t ^2
+
2\alpha_t^2,
\end{align*}
where the last inequality uses Lemma \ref{lem:mix} and  
$\|
\boldsymbol{\lambda} \odot
(\widetilde{\boldsymbol V}^{t+1}
+\boldsymbol{\iota}
-{\boldsymbol V}^{t+1})
-\mu \log \widetilde{\boldsymbol{w}}^t
\|_\infty
\leq 3H + \mu\log (m/\alpha_t)$
with $\iota,\lambda_i\leq 1$.
Substituting the above bounds into \eqref{eq:oco-22}, 
we have
\begin{align*}
&\beta_t  \big\langle \boldsymbol{w}-\boldsymbol{w}^t,\,
\boldsymbol{\lambda} \odot \bigl(\widetilde{\boldsymbol V}^{t+1}
+\boldsymbol{\iota}
-{\boldsymbol V}^{t+1}\bigr) \big\rangle
-\beta_t \mu\big\langle \boldsymbol{w}, \log \boldsymbol{w}\big\rangle
+\beta_t  \mu \big\langle \boldsymbol{w}^t, \log \boldsymbol{w}^t \big\rangle \\
&\quad \leq
(1-\beta_t \mu )D_{\mathrm{KL}}\big( \boldsymbol{w}\Vert \widetilde{\boldsymbol{w}}^t\big)
- D_{\mathrm{KL}}\big( \boldsymbol{w}\Vert \widetilde{\boldsymbol{w}}^{t+1} \big)
+ 2 \alpha_t \log m
+ \left(3H + \mu\log (m/\alpha_t)\right)^2\beta_t ^2
+ 2\alpha_t^2.
\end{align*}
Now we set \(\beta_t=1/(\mu t)\) and \(\alpha_t=1/t^2\), and we divide both sides by \(T\beta_t\).
At \(t=T-1\) we retain \(-D_{\mathrm{KL}}(\boldsymbol w\Vert\boldsymbol w^{T})\) from~\eqref{eq:oco-11} instead of applying Lemma~\ref{lem:mix} to it, since the last execution of Line~\ref{line:stch-w} produces \(\boldsymbol w^{T}\) and performs no further mixing.
The sum over \(t=1,\ldots,T-1\) cancels all intermediate KL terms, leaving only \(-\frac{\mu(T-1)}{T}D_{\mathrm{KL}}(\boldsymbol w\Vert\boldsymbol w^{T})\le0\), which we discard.
Therefore, we obtain
\begin{align*}
&\frac1T\sum_{t=1}^{T-1}
\Big[
\big\langle \boldsymbol{w}-\boldsymbol{w}^t,\,
\boldsymbol{\lambda}\odot
(\widetilde{\boldsymbol V}^{t+1}
+\boldsymbol{\iota}
-\boldsymbol V^{t+1})
\big\rangle
-\mu\big\langle \boldsymbol{w},\log\boldsymbol{w}\big\rangle
+\mu\big\langle \boldsymbol{w}^t,\log\boldsymbol{w}^t\big\rangle
\Big] \\
&\quad\le
\frac1T\sum_{t=1}^{T-1}
\Big[
\frac{2\mu\log m}{t}
+\frac{(3H+2\mu\log(mt))^2}{\mu t}
+\frac{2\mu}{t^3}
\Big] 
\le
\frac{18H^2\log T}{\mu T}
+
\frac{11\mu\log^3(mT)}{T}.
\end{align*}
This completes the proof.
\end{proof}

\begin{lemma}
\label{lem:realized-bonus-sum}
Let $\sigma_h^t$ and $\zeta_h^t$ be defined as in~\eqref{eq:reward_confidence} and~\eqref{eq:transition-width-bonus}, respectively.
For any sequence of trajectories, let $\{(s_h^t,a_h^t)\}_{h=1}^H$ denote the trajectory collected in episode $t\in[T]$, and construct the counts and bonuses from this sequence.
Then we have
\begin{align*}
\sum_{t=1}^{T-1}\sum_{h=1}^H
\sigma_h^t(s_h^{t+1},a_h^{t+1})
&\le
\mathcal O\!\left(
H\sqrt{
T|\mathcal S||\mathcal A|
\ln\!\left(\frac{m|\mathcal S||\mathcal A|HT}{\delta}\right)
}
\right),
\\
\sum_{t=1}^{T-1}\sum_{h=1}^H
\zeta_h^t(s_h^{t+1},a_h^{t+1})
&\le
\mathcal O\!\left(
H^2
\sqrt{
T|\mathcal S|^2|\mathcal A|
\ln\!\left(\frac{m|\mathcal S||\mathcal A|HT}{\delta}\right)
}
\right).
\end{align*}
\end{lemma}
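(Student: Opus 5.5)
The plan is to bound both sums deterministically by a per-step pigeonhole argument on the visit counts. This is the same argument used in~\eqref{eq:bridge-realized-count}, now stated for an arbitrary trajectory sequence. First I drop the truncations at $1$ and at $H$, using $\min\{1,x\}\le x$ and $\min\{H,x\}\le x$. This reduces both claims to bounding
\[
S_h:=\sum_{t=1}^{T-1}\frac{1}{\sqrt{\max\{1,N_h^t(s_h^{t+1},a_h^{t+1})\}}}
\]
for each fixed $h\in[H]$. The bonuses are then recovered by multiplying $S_h$ by the constants $\sqrt{2\ln(8m|\mathcal S||\mathcal A|HT/\delta)}$ and $H\sqrt{8|\mathcal S|\ln(2mTH|\mathcal S||\mathcal A|/\delta)}$, respectively. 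Both logarithms are $\mathcal O(\ln(m|\mathcal S||\mathcal A|HT/\delta))$.

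To bound $S_h$, I fix $h$ and $(s,a)$ and regroup the terms according to which pair is visited at step $h$ of episode $t+1$. Each visit to $(s,a)$ at step $h$ increments $N_h(s,a)$ by exactly one, since a pair is visited at most once per episode at a fixed step. At the $k$-th such visit, with $k\ge1$, the count before the visit is $N_h^t(s,a)=k-1$ (or $k$ if episode $1$ also visited $(s,a)$, which only helps). The contribution is therefore at most $1/\sqrt{\max\{1,k-1\}}$. Summing over $k\le N_h^T(s,a)$ gives at most $1+\sum_{k=1}^{N}k^{-1/2}\le 1+2\sqrt{N}$, which is $\mathcal O(\sqrt{N_h^T(s,a)}+1)$. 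The additive $1$ per pair is the only delicate point, and I absorb it by noting $1\le\sqrt{N_h^T(s,a)}$ whenever the pair is visited at all. Unvisited pairs contribute nothing. Hence $S_h\le 3\sum_{s,a}\sqrt{N_h^T(s,a)}$.

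Next I apply Cauchy--Schwarz together with $\sum_{s,a}N_h^T(s,a)=T$. This gives $S_h\le 3\sqrt{|\mathcal S||\mathcal A|T}$. Summing over $h\in[H]$ and multiplying by the $\sigma$ constant yields the first bound, $\mathcal O(H\sqrt{T|\mathcal S||\mathcal A|\ln(\cdot)})$. Multiplying instead by the $\zeta$ constant $H\sqrt{8|\mathcal S|\ln(\cdot)}$ gives $\mathcal O(H^2\sqrt{|\mathcal S|}\sqrt{|\mathcal S||\mathcal A|T\ln(\cdot)})=\mathcal O(H^2\sqrt{T|\mathcal S|^2|\mathcal A|\ln(\cdot)})$, which is the second bound.

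No probabilistic step is needed, since the statement holds for any trajectory sequence. The only step requiring care is the count bookkeeping at the first visit, where $\max\{1,\cdot\}$ prevents division by zero. This is handled by the $\max\{1,k-1\}$ estimate above.
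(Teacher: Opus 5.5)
Your proof is correct and matches the paper's argument: drop the truncations, apply the per-$(h,s,a)$ pigeonhole bound of \eqref{eq:bridge-realized-count}, use Cauchy--Schwarz with $\sum_{s,a}N_h^T(s,a)=T$, and multiply by the two bonus constants. Your first-visit bookkeeping gives the constant $3$ instead of the paper's $2$. The paper's constant also holds, since $1+\sum_{j=1}^{N-1}j^{-1/2}\le 2\sqrt{N}$, but the difference is immaterial inside $\mathcal O(\cdot)$.
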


\begin{proof}
The bound~\eqref{eq:bridge-realized-count} applies at every step $h\in[H]$.
By~\eqref{eq:reward_confidence},
we have
$\sigma_h^t(s,a)
\le
\sqrt{
\frac{
2\ln(8m|\mathcal S||\mathcal A|HT/\delta)
}
{\max\{1,N_h^t(s,a)\}}
}.$
Therefore, using~\eqref{eq:bridge-realized-count} and summing over $h\in[H]$, we obtain
\begin{align*}
\sum_{t=1}^{T-1}\sum_{h=1}^H
\sigma_h^t(s_h^{t+1},a_h^{t+1})
\le \mathcal O\!\left(
H\sqrt{
T|\mathcal S||\mathcal A|
\ln\!\left(\frac{m|\mathcal S||\mathcal A|HT}{\delta}\right) }\right).
\end{align*}
By~\eqref{eq:transition-width-bonus}, we have
$\zeta_h^t(s,a)
\le
H
\sqrt{
\frac{
8|\mathcal S|\ln(2mTH|\mathcal S||\mathcal A|/\delta)
}
{\max\{1,N_h^t(s,a)\}}
}.$
Using~\eqref{eq:bridge-realized-count} and summing over $h\in[H]$,
we have
\begin{align*}
\sum_{t=1}^{T-1}\sum_{h=1}^H
\zeta_h^t(s_h^{t+1},a_h^{t+1})
\le \mathcal O\!\left(
H^2
\sqrt{
T|\mathcal S|^2|\mathcal A|
\ln\!\left(\frac{m|\mathcal S||\mathcal A|HT}{\delta}\right)
}
\right).
\end{align*}
This completes the proof.
\end{proof}

\subsection{Proof of Theorem~\ref{thm:subopt}}
\begin{proof}
Let $\bar\theta$ be the averaged occupancy measure returned by Algorithm~\ref{alg:CSTRL}.
We use the optimal occupancy measures $\theta_i^*$, $i\in[m]$, and $\theta_{\boldsymbol\lambda}^*$ introduced in Section~\ref{sec:sketch-subopt}.
Since each $V_i$ is linear in $\theta$ and the log-sum-exp function is convex, $\stchlm(\pi^\theta)$ is convex in $\theta$ on $\Theta(P)$.
Moreover, for any $\theta,\theta'\in\Theta(P)$ and every $i\in[m]$, we have
$\lambda_i(V_i^*+\iota-V_i(\theta))
\le
\lambda_i(V_i^*+\iota-V_i(\theta'))
+
\max_{k\in[m]}\lambda_k\bigl(V_k(\theta')-V_k(\theta)\bigr)$.
Taking the exponential, summing over $i\in[m]$, and taking the logarithm, we obtain
\begin{align}
\stchlm(\pi^\theta)-\stchlm(\pi^{\theta'})
\le
\max_{i\in[m]}\lambda_i\bigl(V_i(\theta')-V_i(\theta)\bigr)
\le
H,
\label{eq:stch-nonexpansive}
\end{align}
where the last inequality uses $\lambda_i\le1$ and $V_i(\theta)\in[0,H]$.
For $T=1$, \eqref{eq:stch-nonexpansive} gives
$\mathrm{SubOpt}(\widehat\pi)\le H$, which is dominated by the stated bound.
We assume $T\ge2$ below and define
$\bar\theta_{2:T}
:=
\frac{1}{T-1}\sum_{t=1}^{T-1}\theta^{P,\pi^{t+1}}$.
Then
$\bar\theta
=
\frac1T\theta^{P,\pi^{1}}
+
\frac{T-1}{T}\bar\theta_{2:T}.$
By the convexity of $\stchlm(\pi^\theta)$ in $\theta$ and~\eqref{eq:stch-nonexpansive}, we obtain
\begin{align}
\mathrm{SubOpt}(\widehat\pi) 
&=
\stchlm(\pi^{\bar\theta})-\stchlm(\pi^{\theta_{\boldsymbol\lambda}^*})\nonumber \\
& \le
\frac{H}{T}
+
\frac{T-1}{T}
\Bigl(
\stchlm(\pi^{\bar\theta_{2:T}})-\stchlm(\pi^{\theta_{\boldsymbol\lambda}^*})
\Bigr)\nonumber \\
&= \frac{H}{T}
+
\frac{T-1}{T}
\max_{\boldsymbol w\in\Delta_m}
\sum_{i=1}^m
\left[
w_i\lambda_i
\left(
V_i^*+\iota-V_i(\bar\theta_{2:T})
\right)
-\mu w_i\log w_i
\right]
\nonumber\\
&\quad -
\frac1T\sum_{t=1}^{T-1}
\sum_{i=1}^m
\left[
w_i^t\lambda_i
\left(
V_i(\theta^{P,\widetilde\nu_i^{t+1}})
+\iota
-
V_i(\theta^{P,\pi^{t+1}})
\right)
-\mu w_i^t\log w_i^t
\right]
\nonumber\\
&\quad +
\frac1T\sum_{t=1}^{T-1}
\sum_{i=1}^m
\left[
w_i^t\lambda_i
\left(
V_i(\theta^{P,\widetilde\nu_i^{t+1}})
+\iota
-
V_i(\theta^{P,\pi^{t+1}})
\right)
-\mu w_i^t\log w_i^t
\right]
\nonumber\\
&\quad -
\frac{T-1}{T}
\max_{\boldsymbol w\in\Delta_m}
\sum_{i=1}^m
\left[
w_i\lambda_i
\left(
V_i^*+\iota-V_i(\theta_{\boldsymbol{\lambda}}^*)
\right)
-\mu w_i\log w_i
\right],
\label{eq:stch-decomp1}
\end{align}
where the last equality uses~\eqref{eq:eq-stch}.
We first bound the following difference.
\begin{align}
&\frac{T-1}{T}
\max_{\boldsymbol w\in\Delta_m}
\sum_{i=1}^m
\left[
w_i\lambda_i
\left(
V_i^*+\iota-V_i(\bar\theta_{2:T})
\right)
-\mu w_i\log w_i
\right]
\nonumber\\
&\,\, -
\frac1T\sum_{t=1}^{T-1}
\sum_{i=1}^m
\left[
w_i^t\lambda_i
\left(
V_i(\theta^{P,\widetilde\nu_i^{t+1}})
+\iota
-
V_i(\theta^{P,\pi^{t+1}})
\right)
-\mu w_i^t\log w_i^t
\right]
\nonumber\\
&\le
\underbrace{
\max_{\boldsymbol w\in\Delta_m}
\sum_{i=1}^m
w_i\lambda_i
\left[
\frac1T\sum_{t=1}^{T-1}
\Bigl(
V_i^*
-
V_i(\theta^{P,\widetilde\nu_i^{t+1}})
\Bigr)
\right]
}_{\mathrm{(I)}}
\nonumber\\
&\,\, +
\underbrace{
\max_{\boldsymbol w\in\Delta_m}
\frac1T\sum_{t=1}^{T-1}
\sum_{i=1}^m
\left[
(w_i-w_i^t)\lambda_i
\left(
V_i(\theta^{P,\widetilde\nu_i^{t+1}})
+\iota
-
V_i(\theta^{P,\pi^{t+1}})
\right)
-\mu w_i\log w_i
+\mu w_i^t\log w_i^t
\right]
}_{\mathrm{(II)}} .
\label{eq:stch-decomp-first}
\end{align}
This inequality uses the definition of \(\bar\theta_{2:T}\), the linearity of \(V_i\), and the fact that the maximum of a sum is at most the sum of the maxima.
The first term is the auxiliary single-objective learning error, and the second term is the smooth Tchebycheff dual optimization error.
We next bound the difference between the last two terms in \eqref{eq:stch-decomp1}. 
On $\mathcal E_{R,C,T}(\delta)$, Theorem~\ref{thm:safety} gives $\theta^{P,\nu_i^{t+1}}\in\Theta_{\mathrm{safe}}$.
Since $\theta_i^*\in\arg\max_{\theta\in\Theta_{\mathrm{safe}}}V_i(\theta)$, we have $V_i(\theta^{P,\nu_i^{t+1}})\le V_i(\theta_i^*)=V_i^*$.
Moreover, the construction of \(\nu_i^{t+1}\) gives
$\theta^{P,\nu_i^{t+1}}
=
(1-\psi_i^t)\theta^{P,\widetilde\nu_i^{t+1}}
+
\psi_i^t\theta^{P,\pi^\dagger}.$
Since \(0\le V_i(\theta)\le H\), 
it follows that
$V_i(\theta^{P,\widetilde\nu_i^{t+1}})
\le
V_i^*+H\psi_i^t.$
Therefore, for any \(\boldsymbol w\in\Delta_m\),
we obtain
\begin{align}
&\frac1T\sum_{t=1}^{T-1}
\sum_{i=1}^m
\left[
w_i^t\lambda_i
\left(
V_i(\theta^{P,\widetilde\nu_i^{t+1}})
+\iota
-
V_i(\theta^{P,\pi^{t+1}})
\right)
-\mu w_i^t\log w_i^t
\right]
\nonumber\\
&\, -
\frac{T-1}{T}
\max_{\boldsymbol w\in\Delta_m}
\sum_{i=1}^m
\left[
w_i\lambda_i
\left(
V_i^*+\iota-V_i(\theta_{\boldsymbol{\lambda}}^*)
\right)
-\mu w_i\log w_i
\right]
\nonumber\\
&\le
\frac1T\sum_{t=1}^{T-1}
\sum_{i=1}^m
\left[
w_i^t\lambda_i
\left(
V_i(\theta^{P,\widetilde\nu_i^{t+1}})
+\iota
-
V_i(\theta^{P,\pi^{t+1}})
\right)
-\mu w_i^t\log w_i^t
\right]
\nonumber\\
&\, -
\frac1T\sum_{t=1}^{T-1}
\max_{\boldsymbol w\in\Delta_m}
\sum_{i=1}^m
\left[
w_i\lambda_i
\left(
V_i(\theta^{P,\widetilde\nu_i^{t+1}})
+\iota
-
V_i(\theta_{\boldsymbol{\lambda}}^*)
\right)
-\mu w_i\log w_i
\right]
\!+\!
\frac{H}{T}\sum_{t=1}^{T-1}
\max_{\boldsymbol w\in\Delta_m}
\sum_{i=1}^m
w_i\lambda_i\psi_i^t
\nonumber\\
&\le
\underbrace{
\frac1T\sum_{t=1}^{T-1}
\sum_{i=1}^m
w_i^t\lambda_i
\left(
V_i(\theta_{\boldsymbol{\lambda}}^*)-V_i(\theta^{P,\pi^{t+1}})
\right)
}_{\mathrm{(III)}}
+
\underbrace{
\frac{H}{T}\sum_{t=1}^{T-1}
\max_{\boldsymbol w\in\Delta_m}
\sum_{i=1}^m
w_i\lambda_i\psi_i^t
}_{\mathrm{(IV)}} .
\label{eq:stch-decomp-second}
\end{align}

Combining the decomposition in~\eqref{eq:stch-decomp1} with~\eqref{eq:stch-decomp-first} and~\eqref{eq:stch-decomp-second}, we obtain
\begin{align}
\frac{T-1}{T}
\Bigl(
\stchlm(\pi^{\bar\theta_{2:T}})
-
\stchlm(\pi^{\theta_{\boldsymbol{\lambda}}^*})
\Bigr)
&\le
\mathrm{(I)}+\mathrm{(II)}+\mathrm{(III)}+\mathrm{(IV)}.
\label{eq:stch-decomp-sum}
\end{align}

\vspace{5pt}
\noindent\textbf{Term \(\mathrm{(I)}\).}
Since \(V_i(\theta)=\langle r_i,\theta\rangle\), 
Lemma~\ref{lem:aux-occ-regret} implies that, 
for every \(i\in[m]\),
$\frac1T\sum_{t=1}^{T-1}
(V_i^*
-
V_i(\theta^{P,\widetilde\nu_i^{t+1}}))
\le
\widetilde{\mathcal O}\!(
H^2|\mathcal S|^{3/2}\sqrt{|\mathcal A|/T})
+
\frac{4H}{T}\sum_{t=1}^{T-1}\psi_i^t .$
Therefore, we have
\begin{align}
\mathrm{(I)}
\le
\widetilde{\mathcal O}\!\left(
H^2|\mathcal S|^{3/2}\sqrt{|\mathcal A|/T}
\right)
+
\frac{4H}{T}
\sum_{i=1}^m
\lambda_i
\sum_{t=1}^{T-1}\psi_i^t .
\label{eq:term-i-bound}
\end{align}
The inequality uses \(w_i\ge 0\), \(\sum_i w_i=1\), and \(\boldsymbol\lambda\in\Delta_m\).

It remains to bound the term involving $\psi_i^t$ in~\eqref{eq:term-i-bound}.
Fix an objective \(i\in[m]\) and an episode \(t\in[T-1]\).
On the confidence event, we have \(P\in\mathcal P^t\), and \(\theta_i^*\in\Theta_{\mathrm{safe}}\) is feasible for the auxiliary projection used to construct the policy for episode \(t+1\).
Recall that
$\phi_{\mathrm{safe}}
=
\max_{j\in[n]}
d^2/(\tau_j-\gamma_j)^2$.
Since $0<\tau_j-\gamma_j\le d$ and $d\ge1$, we have
$1/(\tau_j-\gamma_j)
\le
\sqrt{\phi_{\mathrm{safe}}}
\le
\phi_{\mathrm{safe}}$
for every $j\in[n]$.
Moreover, for every \(P'\in\mathcal P^t\), the bound~\eqref{eq:bridge-transition-bound} also holds with \(P^{t+1}\) replaced by \(P'\), and Lemma~\ref{lem:simulation} gives
$\|
\theta^{P',\widetilde\nu_i^{t+1}}
-
\theta^{P,\widetilde\nu_i^{t+1}}
\|_1
\le
2H|\mathcal S|
\sum_{h=1}^H\sum_{s,a}
\theta_h^{P,\widetilde\nu_i^{t+1}}(s,a)\,
\epsilon_h^t(s,a).$
We apply Lemma~\ref{lem:mixing-bound} with \(\widetilde\theta=\widetilde\theta_i^{t+1}\) and use
$(1-\psi_i^t)\theta_h^{P,\widetilde\nu_i^{t+1}}(s,a)
\le
\theta_h^{P,\nu_i^{t+1}}(s,a)$
for all \(h,s,a\), which follows from the construction of \(\nu_i^{t+1}\).
Then we obtain
\begin{align}
\psi_i^t
\le
\phi_{\mathrm{safe}}
\Bigg[
6H|\mathcal S|
\sum_{h=1}^H\sum_{s,a}
\theta_h^{P,\nu_i^{t+1}}(s,a)\,
\epsilon_h^t(s,a)
+
2\max_{j\in[n]}
\sum_{h=1}^H\sum_{s,a}
\theta_h^{P,\nu_i^{t+1}}(s,a)\,
\xi_{j,h}^t(s,a)
\Bigg],
\label{eq:term-i-eta-one-step}
\end{align}
where \(\epsilon_h^t\) and \(\xi_{j,h}^t\) are computed from the auxiliary trajectories generated by \(\{\nu_i^k\}_{k=1}^t\).
By~\eqref{eq:transition_confidence} and~\eqref{eq:cost_confidence}, both \(\epsilon_h^t(s,a)\) and \(\xi_{j,h}^t(s,a)\) are at most \(\widetilde{\mathcal O}(1/\sqrt{\max\{1,N_h^t(s,a)\}})\).
Summing~\eqref{eq:term-i-eta-one-step} over \(t=1,\ldots,T-1\) and applying~\eqref{eq:bridge-visitation-final} to the auxiliary trajectories, we obtain, for every \(i\in[m]\),
\begin{align}
\sum_{t=1}^{T-1}\psi_i^t
&\le
\widetilde{\mathcal O}\!\left(
\phi_{\mathrm{safe}}H^2|\mathcal S|^{3/2}
\sqrt{|\mathcal A|T}
\right).
\label{eq:aux-eta-sum-bound}
\end{align}

Since \(\boldsymbol\lambda\in\Delta_m\), substituting~\eqref{eq:aux-eta-sum-bound} into~\eqref{eq:term-i-bound} yields
\begin{align}
\mathrm{(I)}
&\le
\widetilde{\mathcal O}\!\left(
H^2|\mathcal S|^{3/2}\sqrt{|\mathcal A|/T}
+
\phi_{\mathrm{safe}}H^3|\mathcal S|^{3/2}
\sqrt{|\mathcal A|/T}
\right).
\label{eq:term-i-final}
\end{align}

\vspace{5pt}
\noindent\textbf{Term \(\mathrm{(II)}\).}
Since
$|w_i-w_i^t|\le w_i+w_i^t\le 2$, 
we obtain
\begin{align}
\mathrm{(II)}
&\le
\max_{\boldsymbol w\in\Delta_m}
\frac1T\sum_{t=1}^{T-1}
\left[
\left\langle
\boldsymbol w-\boldsymbol w^t,\,
\boldsymbol\lambda\odot
\left(
\widetilde{\boldsymbol V}^{t+1}
+\boldsymbol \iota
-{\boldsymbol V}^{t+1}
\right)
\right\rangle
-\mu\langle\boldsymbol w,\log\boldsymbol w\rangle
+\mu\langle\boldsymbol w^t,\log\boldsymbol w^t\rangle
\right]
\nonumber\\
&\quad+
\frac{2}{T}\sum_{t=1}^{T-1}\sum_{i=1}^m
\lambda_i
\left|
V_i(\theta^{P,\widetilde\nu_i^{t+1}})
-
\widetilde V_i^{t+1}
\right|
+
\frac{2}{T}\sum_{t=1}^{T-1}\sum_{i=1}^m
\lambda_i
\left|
V_i^{t+1}
-
V_i(\theta^{P,\pi^{t+1}})
\right|.
\label{eq:term-ii-decomp-new}
\end{align}
Lemma~\ref{lem:oco-stch} bounds the first term on the right-hand side of~\eqref{eq:term-ii-decomp-new} by \(\widetilde{\mathcal O}\!\left(H^2/(\mu T)+\mu/T\right)\).

We next bound the auxiliary estimation term.
For each \(i\in[m]\), using
\(\widetilde\theta_i^{t+1}
=
\theta^{P_i^{t+1},\widetilde\nu_i^{t+1}}\) for some
\(P_i^{t+1}\in\mathcal P^t\), we have
\begin{align*}
\left|
V_i(\theta^{P,\widetilde\nu_i^{t+1}})
-
\widetilde V_i^{t+1}
\right|
=
\left|
\left\langle
r_i,\theta^{P,\widetilde\nu_i^{t+1}}
\right\rangle
-
\left\langle
\bar r_i^t,\widetilde\theta_i^{t+1}
\right\rangle
\right|
\le
\left\|
\theta^{P,\widetilde\nu_i^{t+1}}
-
\theta^{P_i^{t+1},\widetilde\nu_i^{t+1}}
\right\|_1
+
2\left\langle
\sigma^t,\widetilde\theta_i^{t+1}
\right\rangle.
\end{align*}
The unweighted form of Lemma~\ref{lem:aux-weighted-transition-bridge} and the reward bonus bound used in Lemma~\ref{lem:aux-occ-regret} imply
$\frac1T\sum_{t=1}^{T-1}
|V_i(\theta^{P,\widetilde\nu_i^{t+1}})
-
\widetilde V_i^{t+1}|
\le
\widetilde{\mathcal O}\!(
H^2|\mathcal S|^{3/2}\sqrt{|\mathcal A|/T})
+
\frac{4H}{T}\sum_{t=1}^{T-1}\psi_i^t .$
Using $\boldsymbol\lambda\in\Delta_m$ together with~\eqref{eq:aux-eta-sum-bound},
we obtain
\begin{align*}
\frac{2}{T}\sum_{t=1}^{T-1}\sum_{i=1}^m
\lambda_i
\left|
V_i(\theta^{P,\widetilde\nu_i^{t+1}})
-
\widetilde V_i^{t+1}
\right|
\le
\widetilde{\mathcal O}\!\left(
H^2|\mathcal S|^{3/2}\sqrt{|\mathcal A|/T}
+
\phi_{\mathrm{safe}}H^3|\mathcal S|^{3/2}\sqrt{|\mathcal A|/T}
\right).
\end{align*}

It remains to bound the main estimation term.
By construction, we have
\(\widetilde\theta^{t+1}
=
\theta^{P^{t+1},\widetilde\pi^{t+1}}\) for some
\(P^{t+1}\in\mathcal P^t\), and
$\theta^{P,\pi^{t+1}}
=
(1-\psi^t)\theta^{P,\widetilde\pi^{t+1}}
+
\psi^t\theta^{P,\pi^\dagger}.$
Thus, for every \(i\in[m]\),
we have
\begin{align*}
|
V_i^{t+1}
\!-\!
V_i(\theta^{P,\pi^{t+1}})
|
\!=\!
|
\langle
\bar r_i^t,\widetilde\theta^{t+1}
\rangle
\!-\!
\langle
r_i,\theta^{P,\pi^{t+1}}
\rangle|
\!\le\!
2\langle
\sigma^t,\widetilde\theta^{t+1}
\rangle
\!+\!
\|
\theta^{P^{t+1},\widetilde\pi^{t+1}}
-
\theta^{P,\widetilde\pi^{t+1}}
\|_1
\!+\!
H\psi^t .
\end{align*}
The first term is controlled by the same reward confidence argument used in the proof of Lemma~\ref{lem:main-occ-regret}, using the bonus \(\sigma^t\).
The second term is controlled by the unweighted form of Lemma~\ref{lem:weighted-transition-bridge}. 
Hence,
we have
\begin{align}
\frac1T\sum_{t=1}^{T-1}\sum_{i=1}^m
\lambda_i
\left|
V_i^{t+1}
-
V_i(\theta^{P,\pi^{t+1}})
\right|
&\le
\widetilde{\mathcal O}\!\left(
H^2|\mathcal S|^{3/2}\sqrt{|\mathcal A|/T}
\right)
+
\frac{5H}{T}\sum_{t=1}^{T-1}\psi^t .
\label{eq:term-ii-main-error}
\end{align}
The sum $\sum_{t=1}^{T-1}\psi^t$ is bounded by the same argument used in the proof of~\eqref{eq:aux-eta-sum-bound}, with $(\widetilde\nu_i^{t+1},\nu_i^{t+1},\psi_i^t)$ replaced by $(\widetilde\pi^{t+1},\pi^{t+1},\psi^t)$. 
Therefore, we have
\begin{align}
\sum_{t=1}^{T-1}\psi^t
&\le
\widetilde{\mathcal O}\!\left(
\phi_{\mathrm{safe}}H^2|\mathcal S|^{3/2}\sqrt{|\mathcal A|T}
\right).
\label{eq:main-eta-sum-bound}
\end{align}
Combining~\eqref{eq:term-ii-main-error} and
\eqref{eq:main-eta-sum-bound}, we obtain
\begin{align*}
\frac{2}{T}\sum_{t=1}^{T-1}\sum_{i=1}^m
\lambda_i
\left|
V_i^{t+1}
-
V_i(\theta^{P,\pi^{t+1}})
\right|
&\le
\widetilde{\mathcal O}\!\left(
H^2|\mathcal S|^{3/2}\sqrt{|\mathcal A|/T}
+
\phi_{\mathrm{safe}}H^3|\mathcal S|^{3/2}\sqrt{|\mathcal A|/T}
\right).
\end{align*}

Combining~\eqref{eq:term-ii-decomp-new} with the bounds established above for its three terms, we obtain
\begin{align}
\mathrm{(II)}
&\le
\widetilde{\mathcal O}\!\left(
H^2|\mathcal S|^{3/2}\sqrt{|\mathcal A|/T}
+
\phi_{\mathrm{safe}}H^3|\mathcal S|^{3/2}\sqrt{|\mathcal A|/T}
+
\frac{H^2}{\mu T}
+
\frac{\mu}{T}
\right).
\label{eq:term-ii-final}
\end{align}

\vspace{5pt}
\noindent\textbf{Term \(\mathrm{(III)}\).}
Since \(V_i(\theta)=\langle r_i,\theta\rangle\), we have
\begin{align}
\mathrm{(III)}
=
\frac1T\sum_{t=1}^{T-1}
\left\langle
\sum_{i=1}^m w_i^t\lambda_i r_i,\,
\theta_{\boldsymbol{\lambda}}^*-\widetilde\theta^{t+1}
\right\rangle
+
\frac1T\sum_{t=1}^{T-1}
\left\langle
\sum_{i=1}^m w_i^t\lambda_i r_i,\,
\widetilde\theta^{t+1}-\theta^{P,\pi^{t+1}}
\right\rangle .
\label{eq:term-iii-decomp}
\end{align}

The first term in~\eqref{eq:term-iii-decomp} is controlled by
Lemma~\ref{lem:main-occ-regret} with 
\(\theta=\theta_{\boldsymbol{\lambda}}^*\). 
Since
\(\theta_{\boldsymbol{\lambda}}^*\in\Theta_{\mathrm{safe}}\), we have
\begin{align}
\frac1T\sum_{t=1}^{T-1}
\left\langle
\sum_{i=1}^m w_i^t\lambda_i r_i,\,
\theta_{\boldsymbol{\lambda}}^*-\widetilde\theta^{t+1}
\right\rangle
&\le
\widetilde{\mathcal O}\!\left(
H^2|\mathcal S|^{3/2}\sqrt{|\mathcal A|/T}
\right)
+
\frac{2H}{T}\sum_{t=1}^{T-1}\psi^t .
\label{eq:term-iii-main-regret}
\end{align}

We next bound the second term in~\eqref{eq:term-iii-decomp}. 
By construction, we have
$\widetilde\theta^{t+1}
=
\theta^{P^{t+1},\widetilde\pi^{t+1}}$
for some \(P^{t+1}\in\mathcal P^t\).
At the beginning of episode $t+1$,
\texttt{SafeProj} selects $\pi^\dagger$ with probability $\psi^t$
and $\widetilde\pi^{t+1}$ with probability $1-\psi^t$.
Thus, we have
$\theta^{P,\pi^{t+1}}
=
(1-\psi^t)\theta^{P,\widetilde\pi^{t+1}}
+
\psi^t\theta^{P,\pi^\dagger}$.
Moreover,
since $r_{i,h}(s,a)\in[0,1]$ and
$\boldsymbol w^t,\boldsymbol\lambda\in\Delta_m$,
we have
$0\le
\sum_{i=1}^m w_i^t\lambda_i r_{i,h}(s,a)
\le 1,
\, \forall h,s,a,t.$
Therefore, for every $t$, we obtain
\begin{align}
&\left\langle
\sum_{i=1}^m w_i^t\lambda_i r_i,\,
\widetilde\theta^{t+1}-\theta^{P,\pi^{t+1}}
\right\rangle
\nonumber\\
&\quad=
\left\langle
\sum_{i=1}^m w_i^t\lambda_i r_i,\,
\theta^{P^{t+1},\widetilde\pi^{t+1}}
-
\theta^{P,\widetilde\pi^{t+1}}
\right\rangle
+
\left\langle
\sum_{i=1}^m w_i^t\lambda_i r_i,\,
\theta^{P,\widetilde\pi^{t+1}}
-
\theta^{P,\pi^{t+1}}
\right\rangle
\nonumber\\
&\quad\le
\left\|
\theta^{P^{t+1},\widetilde\pi^{t+1}}
-
\theta^{P,\widetilde\pi^{t+1}}
\right\|_1
+
H\psi^t .
\label{eq:term-iii-bridge-one-step}
\end{align}
The last inequality follows from
$\theta^{P,\widetilde\pi^{t+1}}
-
\theta^{P,\pi^{t+1}}
=
\psi^t
(\theta^{P,\widetilde\pi^{t+1}}
-
\theta^{P,\pi^\dagger})$
and the fact that any $H$-step occupancy measure has total mass $H$.
Averaging~\eqref{eq:term-iii-bridge-one-step} over \(t=1,\ldots,T-1\) and using
the unweighted form of Lemma~\ref{lem:weighted-transition-bridge}, we obtain
\begin{align}
\frac1T\sum_{t=1}^{T-1}
\left\langle
\sum_{i=1}^m w_i^t\lambda_i r_i,\,
\widetilde\theta^{t+1}-\theta^{P,\pi^{t+1}}
\right\rangle
&\le
\widetilde{\mathcal O}\!\left(
H^2|\mathcal S|^{3/2}\sqrt{|\mathcal A|/T}
\right)
+
\frac{3H}{T}\sum_{t=1}^{T-1}\psi^t .
\label{eq:term-iii-main-bridge}
\end{align}

Combining~\eqref{eq:term-iii-main-regret} and
\eqref{eq:term-iii-main-bridge}, we obtain
$\mathrm{(III)}
\le
\widetilde{\mathcal O}(
H^2|\mathcal S|^{3/2}\sqrt{|\mathcal A|/T}
)
+
\frac{5H}{T}\sum_{t=1}^{T-1}\psi^t .$
Using the bound in~\eqref{eq:main-eta-sum-bound}, namely
$\sum_{t=1}^{T-1}\psi^t
\le \allowbreak
\widetilde{\mathcal O}\!(
\phi_{\mathrm{safe}}H^2|\mathcal S|^{3/2}\sqrt{|\mathcal A|T}),$
we conclude that
\begin{align}
\mathrm{(III)}
&\le
\widetilde{\mathcal O}\!\left(
H^2|\mathcal S|^{3/2}\sqrt{|\mathcal A|/T}
+
\phi_{\mathrm{safe}}H^3|\mathcal S|^{3/2}\sqrt{|\mathcal A|/T}
\right).
\label{eq:term-iii-final}
\end{align}

\vspace{5pt}
\noindent\textbf{Term \(\mathrm{(IV)}\).}
Using \(w_i\ge 0\), \(\sum_{i=1}^m w_i=1\), and \(\boldsymbol\lambda\in\Delta_m\), we have
$\mathrm{(IV)}
\le
\frac{H}{T}
\sum_{i=1}^m
\lambda_i
\sum_{t=1}^{T-1}\psi_i^t .$
By~\eqref{eq:aux-eta-sum-bound}, we obtain
\begin{align}
\mathrm{(IV)}
\le
\widetilde{\mathcal O}\!\left(
\phi_{\mathrm{safe}}H^3|\mathcal S|^{3/2}
\sqrt{|\mathcal A|/T}
\right).
\label{eq:term-iv-final}
\end{align}

Consequently, by combining \eqref{eq:stch-decomp1}, \eqref{eq:stch-decomp-sum}, \eqref{eq:term-i-final}, \eqref{eq:term-ii-final}, \eqref{eq:term-iii-final}, and \eqref{eq:term-iv-final}, and noting that $H/T$ is dominated by $H^2|\mathcal S|^{3/2}\sqrt{|\mathcal A|/T}$, we have, with probability at least $1-\delta$,
\begin{align*}
\mathrm{SubOpt}(\widehat \pi) 
\leq \widetilde{\mathcal O}\Biggl(
H^2|\mathcal S|^{3/2}\sqrt{\frac{|\mathcal A|}{T}}
+
\phi_{\mathrm{safe}}H^3|\mathcal S|^{3/2}\sqrt{\frac{|\mathcal A|}{T}} 
+
\frac{H^2}{\mu T}
+
\frac{\mu}{T} \Biggr).
\end{align*}
This completes the proof.
\end{proof}

\section{Proof of Theorem~\ref{thm:safety}}
\label{app:safety}
\begin{proof}
We work on the event $\mathcal E_{R,C,T}(\delta)$, which holds with probability at least $1-\delta$.
On this event, we have $|\widehat C^t-C|\le\Xi^t$ and $P\in\mathcal P^t$ for every $t\in[T-1]$.

We first verify the safety of the initial executed policy $\pi^1$.
Algorithm~\ref{alg:CSTRL} sets $\psi^0=\max_{j\in[n]}(d-\tau_j)/(d-\gamma_j)$.
Since $0\le\gamma_j<\tau_j\le d$, we have $\psi^0\in[0,1]$.
At the beginning of the first episode, the algorithm selects $\pi^\dagger$ with probability $\psi^0$ and $\widetilde\pi^1$ with probability $1-\psi^0$.
Thus, by the law of total probability, we have
$\theta^{P,\pi^1}
=
(1-\psi^0)\theta^{P,\widetilde\pi^1}
+
\psi^0\theta^{P,\pi^\dagger}.$
For every $j\in[n]$, the constraint cost of $\widetilde\pi^1$ is at most $d$, while that of
$\pi^\dagger$ equals $\gamma_j$.
Therefore, we have
$(C^\top\theta^{P,\pi^1})_j
\le
(1-\psi^0)d+\psi^0\gamma_j
=
d-\psi^0(d-\gamma_j)
\le
\tau_j,$
where the last inequality follows from $\psi^0\ge(d-\tau_j)/(d-\gamma_j)$.
Thus, we have $C^\top\theta^{P,\pi^1}\le\boldsymbol\tau$.

It remains to show that, for every \(t\in[T-1]\), the policy \(\pi^{t+1}\) constructed after episode \(t\) and executed in episode \(t+1\) is safe.
Fix any $t\in[T-1]$ and $j\in[n]$.
We prove $(C^\top\theta^{P,\pi^{t+1}})_j\le\tau_j$.
When the projection is feasible, the cost confidence bound, $P\in\mathcal P^t$, and the fact that every constraint cost is at most $d$ give
\begin{equation}
(C^\top\theta^{P,\widetilde\pi^{t+1}})_j
\le
\min\{
\max_{P'\in\mathcal P^t}
\sum_{h=1}^H
(\widehat c_{j,h}^t+\xi_{j,h}^t)^\top
\theta_h^{P',\widetilde\pi^{t+1}}, d\}.
\label{eq:safety-pessimistic-bound}
\end{equation}
We consider three cases.

\emph{Case 1:
$\texttt{Proj}(\widehat\theta^{t+1},
\mathcal P^t,\widehat C^t,\Xi^t)$ is infeasible.}
In this case, \texttt{SafeProj} sets $\psi^t=1$, and hence
$\pi^{t+1}=\pi^\dagger$.
Therefore, we have
$(C^\top\theta^{P,\pi^{t+1}})_j =\gamma_j<\tau_j$.

\emph{Case 2:
$\texttt{Proj}(\widehat\theta^{t+1},
\mathcal P^t,\widehat C^t,\Xi^t)$ is feasible and $\psi^t=0$.}
In this case, we have $\pi^{t+1}=\widetilde\pi^{t+1}$.
Since the denominator of every ratio in~\eqref{eq:eta_t} is at least $\tau_j-\gamma_j>0$, the definition of $\psi^t$ implies that the right-hand side of~\eqref{eq:safety-pessimistic-bound} is at most $\tau_j$.
It follows that $(C^\top\theta^{P,\pi^{t+1}})_j\le\tau_j$.

\emph{Case 3:
$\texttt{Proj}(\widehat\theta^{t+1}, \mathcal P^t,\widehat C^t,\Xi^t)$ is feasible and $\psi^t\in(0,1)$.}
At the beginning of episode $t+1$, \texttt{SafeProj} selects $\pi^\dagger$ with probability $\psi^t$ and $\widetilde\pi^{t+1}$ with probability $1-\psi^t$.
By the law of total probability, we have
$\theta^{P,\pi^{t+1}}
=
(1-\psi^t)\theta^{P,\widetilde\pi^{t+1}}
+
\psi^t\theta^{P,\pi^\dagger}.$
Consequently, we have
\begin{align}
\bigl(C^\top\theta^{P,\pi^{t+1}}\bigr)_j
\le
(1-\psi^t)
\min\left\{
\max_{P'\in\mathcal P^t}
\sum_{h=1}^H
\bigl(\widehat c_{j,h}^t+\xi_{j,h}^t\bigr)^\top
\theta_h^{P',\widetilde\pi^{t+1}},
d
\right\}
+
\psi^t\gamma_j.
\label{eq:safety-case3-mix}
\end{align}
If the minimum in~\eqref{eq:safety-case3-mix} is at most $\tau_j$, then the right-hand side of~\eqref{eq:safety-case3-mix} is at most $(1-\psi^t)\tau_j+\psi^t\gamma_j\le\tau_j$.
Otherwise, the definition of $\psi^t$ gives
\[
\psi^t
\ge
\frac{
\min\left\{
\max_{P'\in\mathcal P^t}
\sum_{h=1}^H
\bigl(\widehat c_{j,h}^t+\xi_{j,h}^t\bigr)^\top
\theta_h^{P',\widetilde\pi^{t+1}},
d
\right\}
-\tau_j
}{
\min\left\{
\max_{P'\in\mathcal P^t}
\sum_{h=1}^H
\bigl(\widehat c_{j,h}^t+\xi_{j,h}^t\bigr)^\top
\theta_h^{P',\widetilde\pi^{t+1}},
d
\right\}
-\gamma_j
}.
\]
Rearranging this inequality shows that the right-hand side of~\eqref{eq:safety-case3-mix} is at most $\tau_j$.
Since $t\in[T-1]$ and $j\in[n]$ were arbitrary, every generated policy $\pi^{t+1}$ is safe on $\mathcal E_{R,C,T}(\delta)$.

The same analysis applies to each auxiliary policy
$\nu_i^{t+1}$ for $i\in[m]$, with
$\widetilde\pi^{t+1}$ and $\psi^t$ replaced by
$\widetilde\nu_i^{t+1}$ and $\psi_i^t$, respectively.
The initial auxiliary policies $\nu_i^1$ are safe by the argument used
for $\pi^1$.
Therefore, every policy executed by Algorithm~\ref{alg:CSTRL} is safe.

Finally, consider the averaged occupancy measure returned by the algorithm, $\bar\theta=\frac1T\sum_{t=1}^T\theta^{P,\pi^t}$.
Since \(\theta^{P,\pi^t}\in\Theta_{\mathrm{safe}}\) for every \(t\in[T]\) and \(\Theta_{\mathrm{safe}}\) is convex, \(\bar\theta\) also belongs to \(\Theta_{\mathrm{safe}}\).
Let \(\widehat\pi=\pi^{\bar\theta}\) be the Markov policy induced by \(\bar\theta\), so that \(\theta^{P,\widehat\pi}=\bar\theta\).
Therefore, we have \(\mathrm{Violation}(\widehat\pi)=0\).
This completes the proof.
\end{proof}

\section{Proof of Theorem~\ref{thm:pfcstrl_subopt}}
\label{app:pfcstrl-subopt}
The exploration stage of Algorithm~\ref{alg:PFCSTRL} executes one policy per episode and collects one trajectory $\{(s_h^{t},a_h^{t},s_{h+1}^{t})\}_{h=1}^H$ in each episode $t$.
The counts and estimators in~\eqref{eq:rollout-counts}--\eqref{eq:P} are therefore constructed from these exploration trajectories.
The confidence widths in~\eqref{eq:reward_confidence}, \eqref{eq:cost_confidence}, and~\eqref{eq:transition_confidence} remain unchanged, and the transition bonus in~\eqref{eq:transition-width-bonus} uses the same counts.
Lemmas~\ref{lem:reward_concentration}, \ref{lem:gen_confidence}, and~\ref{lem:confidenceset} apply directly to this trajectory sequence.
Retaining these confidence widths affects the resulting bounds only through logarithmic factors.
In the planning stage of Algorithm~\ref{alg:pfcstrl-plan-rf}, the episode superscript is omitted and $(\widehat r,\widehat C,\mathcal P,\Xi,\sigma,\zeta)$ denote these quantities evaluated at the final counts $N^T$ of the exploration dataset $\mathcal D$.

\subsection{Lemmas}
Throughout this subsection, we use the fixed quantities described above. 
For the final exploration dataset, we define the confidence event
$\mathcal E_{\mathrm{fix}}(\delta)
:=
\{|\widehat r_{i,h}(s,a)-r_{i,h}(s,a)|
\le
\sigma_h(s,a),
\,
\forall i\in[m], h\in[H],
(s,a)\in\mathcal S\times\mathcal A,
|\widehat C-C|
\le
\Xi,
\text{and } P\in\mathcal P\}.$
By Lemmas~\ref{lem:reward_concentration}, \ref{lem:gen_confidence}, and~\ref{lem:confidenceset}, applied to this dataset with \(\delta\) replaced by \(\delta/3\), each of the three requirements fails with probability at most \(\delta/3\).
Therefore, \(\mathcal E_{\mathrm{fix}}(\delta)\) holds with probability at least \(1-\delta\).
The analysis in this appendix also uses the online cost and transition events
\(\mathcal E_C(\delta)\) and \(\mathcal E_T(\delta)\) of Appendix~\ref{app:cstrl-subopt},
instantiated for the single-rollout estimates of Algorithm~\ref{alg:PFCSTRL},
together with two martingale events.
The first is the event in the proof of Lemma~\ref{lem:weighted-transition-bridge}. 
Its martingale differences depend only on the inverse square-root visit counts, and the same bound holds for the single exploration rollout.
The second follows by repeating the martingale argument in the proof of Lemma~\ref{lem:main-occ-regret} with the exploration reward \(\overline r_h^t\) in place of \(\sum_{i=1}^m w_i^t\lambda_i\sigma_h^t\).
Since \(\overline r^t\) is determined before episode \(t+1\) and satisfies \(0\le\overline r_h^t(s,a)\le1\), the resulting sequence remains a martingale difference sequence with increments bounded by \(H\).
The same Azuma-Hoeffding bound therefore applies.
Throughout this appendix, we work on the intersection of \(\mathcal E_C(\delta/5)\), \(\mathcal E_T(\delta/5)\), \(\mathcal E_{\mathrm{fix}}(\delta/5)\), and the two martingale events above, each taken with failure probability \(\delta/5\).
By a union bound, this intersection holds with probability at least \(1-\delta\), and this allocation affects only logarithmic factors, which are already absorbed into the \(\widetilde{\mathcal O}(\cdot)\) notation.
We refer to this intersection as the joint event.
The planning-stage argument in the proof of Theorem~\ref{thm:safety-pfcstrl} uses only the fixed cost and transition confidence bounds.
Therefore, on the joint event, every policy \(\pi^{k+1}\) is safe, and Lemma~\ref{lem:pf-exploration-coverage} applies to each \(\pi^{k+1}\).

On the joint event, the transition estimate enters the analysis only through the following comparison.
Let \(\pi\) be any policy and let \(P'\in\mathcal P\).
Since \(P'\in\mathcal P\) and \(P\in\mathcal P\), both kernels satisfy the transition confidence bounds in~\eqref{eq:transition_confidence}.
Therefore, for every \(h\in[H]\) and
\((s,a)\in\mathcal S\times\mathcal A\), we have
$\bigl\|P_h'(\cdot|s,a)-P_h(\cdot|s,a)\bigr\|_1
\le
\min\{2,\,2|\mathcal S|\epsilon_h(s,a)\}.$
The bound \(2\) is valid since both kernels are probability distributions.
Comparing this quantity with the transition bonus in~\eqref{eq:transition-width-bonus} and using
\(\ln(2mTH|\mathcal S|^2|\mathcal A|/\delta)
\le
2\ln(2mTH|\mathcal S||\mathcal A|/\delta)\),
we obtain
$H\min\{2,\,2|\mathcal S|\epsilon_h(s,a)\}
\le
2\sqrt{|\mathcal S|}\,\zeta_h(s,a)$.
Therefore, Lemma~\ref{lem:simulation} yields
\begin{align}
\sum_{h=1}^H
\bigl\|
\theta_h^{P',\pi}-\theta_h^{P,\pi}
\bigr\|_1
\le
2\sqrt{|\mathcal S|}
\bigl\langle
\zeta,\theta^{P,\pi}
\bigr\rangle .
\label{eq:pf-transition-transfer}
\end{align}

\begin{lemma}
\label{lem:pf-exploration-coverage}
On the joint event, with \(\eta=T^{-1/2}\), for every policy \(\pi\) with \(\theta^{P,\pi}\in\Theta_{\mathrm{safe}}\), we have
\begin{align*}
\left\langle \sigma,\theta^{P,\pi}\right\rangle
\le
\widetilde{\mathcal O}\!\left(
\sqrt{\phi_{\mathrm{safe}}}\,H^2|\mathcal S|^{3/2}\sqrt{\frac{|\mathcal A|}{T}}
\right),
\quad
\left\langle \zeta,\theta^{P,\pi}\right\rangle
\le
\widetilde{\mathcal O}\!\left(
\sqrt{\phi_{\mathrm{safe}}}\,H^3|\mathcal S|^{3/2}\sqrt{\frac{|\mathcal A|}{T}}
\right).
\end{align*}
\end{lemma}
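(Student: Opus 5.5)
The plan is the standard reward-free argument: bound the final bonuses by a time average of the exploration rewards, then control that average with the mirror-ascent regret of the exploration stage. Since the counts \(N_h^t(s,a)\) are nondecreasing in \(t\) and the logarithmic factors in~\eqref{eq:reward_confidence} and~\eqref{eq:transition-width-bonus} do not depend on \(t\), I will first note that \(\sigma_h\le\sigma_h^t\) and \(\zeta_h\le\zeta_h^t\) pointwise for every \(t\le T\). By Line~\ref{line:ex_r} of Algorithm~\ref{alg:PFCSTRL}, \(\sigma_h^t\le\overline r_h^t\) and \(\zeta_h^t\le H\overline r_h^t\). Hence, for any safe \(\pi\),
\[
\langle\sigma,\theta^{P,\pi}\rangle\le\frac1{T-1}\sum_{t=1}^{T-1}\langle\overline r^t,\theta^{P,\pi}\rangle,
\qquad
\langle\zeta,\theta^{P,\pi}\rangle\le\frac{H}{T-1}\sum_{t=1}^{T-1}\langle\overline r^t,\theta^{P,\pi}\rangle .
\]
It therefore suffices to bound \(\frac1T\sum_t\langle\overline r^t,\theta^{P,\pi}\rangle\) by \(\widetilde{\mathcal O}(\sqrt{\phi_{\mathrm{safe}}}H^2|\mathcal S|^{3/2}\sqrt{|\mathcal A|/T})\); the \(\zeta\) claim then follows with the extra factor \(H\).

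Next, I will split
\[
\langle\overline r^t,\theta^{P,\pi}\rangle=\langle\overline r^t,\theta^{P,\pi}-\widetilde\theta^{t+1}\rangle+\langle\overline r^t,\widetilde\theta^{t+1}\rangle .
\]
On the joint event, \(P\in\mathcal P^t\) and \(\widehat C^t-\Xi^t\le C\), so \(\theta^{P,\pi}\in\Theta_{\mathrm{safe}}\) is feasible for every exploration projection, exactly as in the proof of Lemma~\ref{lem:main-occ-regret}. The same telescoping KL argument with \(\overline r^t\) in place of \(\bar R^t\) then gives
\[
\frac1T\sum_t\langle\overline r^t,\theta^{P,\pi}-\widetilde\theta^{t+1}\rangle\le\frac{H\log(|\mathcal S|^2|\mathcal A|)}{\eta T}=\widetilde{\mathcal O}(H/\sqrt T).
\]
For the second piece, I will write \(\widetilde\theta^{t+1}=\theta^{P^{t+1},\widetilde\pi^{t+1}}\) and repeat the decomposition~\eqref{eq:main-bonus-weighted-bridge}, which uses \(0\le\overline r^t\le1\):
\[
\langle\overline r^t,\widetilde\theta^{t+1}\rangle\le\langle\overline r^t,\theta^{P,\pi^{t+1}}\rangle+(1-\psi^t)\|\theta^{P^{t+1},\widetilde\pi^{t+1}}-\theta^{P,\widetilde\pi^{t+1}}\|_1+H\psi^t .
\]
The first term goes through the second martingale event to the realized sum \(\sum_{t,h}\overline r_h^t(s_h^{t+1},a_h^{t+1})\le\sum_{t,h}\bigl(\sigma_h^t+\zeta_h^t/H\bigr)(s_h^{t+1},a_h^{t+1})\). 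Lemma~\ref{lem:realized-bonus-sum} bounds this by \(\widetilde{\mathcal O}(H|\mathcal S|\sqrt{|\mathcal A|T})\). The weighted transition term is handled by Lemma~\ref{lem:weighted-transition-bridge}, applied to the single exploration rollout, and contributes \(\widetilde{\mathcal O}(H^2|\mathcal S|^{3/2}\sqrt{|\mathcal A|T})\).

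The main obstacle is the mixing term \(\frac HT\sum_t\psi^t\), which carries the \(\sqrt{\phi_{\mathrm{safe}}}\) dependence. I will apply Lemma~\ref{lem:mixing-bound} to each exploration projection and use \(1/(\tau_j-\gamma_j)\le\sqrt{\phi_{\mathrm{safe}}}\), rather than the cruder \(\le\phi_{\mathrm{safe}}\) used for \texttt{CSTRL}. I will also use \((1-\psi^t)\theta^{P,\widetilde\pi^{t+1}}\le\theta^{P,\pi^{t+1}}\), Lemma~\ref{lem:simulation}, and the visitation bound~\eqref{eq:bridge-visitation-final}. This gives \(\sum_t\psi^t\le\widetilde{\mathcal O}(\sqrt{\phi_{\mathrm{safe}}}H^2|\mathcal S|^{3/2}\sqrt{|\mathcal A|T})\), since \(\epsilon_h^t\) and \(\xi_{j,h}^t\) are both \(\widetilde{\mathcal O}(N^{-1/2})\). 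Multiplied by \(H/T\), this appears to cost one more power of \(H\) than the statement claims. If the \(H\) bookkeeping does not close, I will first try to avoid the crude \(H\psi^t\) bound. Since \(\overline r^t\le1\) and \(\theta^{P,\widetilde\pi^{t+1}}-\theta^{P,\pi^{t+1}}=\psi^t(\theta^{P,\widetilde\pi^{t+1}}-\theta^{P,\pi^\dagger})\), I will bound \(\psi^t\langle\overline r^t,\theta^{P,\widetilde\pi^{t+1}}\rangle\) directly through the rearranged inequality~\eqref{eq:mixing-rearranged}. There the excess cost is itself a bonus-weighted occupancy sum, which removes the extra \(H\).

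Collecting terms and dividing by \(T\) yields the \(\sigma\) bound. Multiplying by \(H\) yields the \(\zeta\) bound.
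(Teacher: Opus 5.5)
Your reduction to $\frac{1}{T-1}\sum_t\langle\overline r^t,\theta^{P,\pi}\rangle$ and your KL-regret step match the paper. Your treatment of $\langle\overline r^t,\widetilde\theta^{t+1}\rangle$, however, does not reach the stated rate. Routing it through~\eqref{eq:main-bonus-weighted-bridge} leaves the additive term $\frac{H}{T}\sum_t\psi^t$. Lemma~\ref{lem:mixing-bound} then only gives $\sum_t\psi^t\le\widetilde{\mathcal O}\bigl(\max_j(\tau_j-\gamma_j)^{-1}H^2|\mathcal S|^{3/2}\sqrt{|\mathcal A|T}\bigr)$, because the transition part of the excess cost already carries the factor $H$ from Lemma~\ref{lem:simulation}. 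Since $\max_j(\tau_j-\gamma_j)^{-1}=\sqrt{\phi_{\mathrm{safe}}}/d$, the mixing contribution is of order $\sqrt{\phi_{\mathrm{safe}}}\,H^3|\mathcal S|^{3/2}\sqrt{|\mathcal A|/T}/d$. This would close for cumulative constraints ($d=H$) had you kept the $1/d$, but it is a factor $H$ too large for stepwise safety ($d=1$).

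The patch you sketch does not repair this. Bounding $\psi^t\langle\overline r^t,\theta^{P,\widetilde\pi^{t+1}}\rangle$ through~\eqref{eq:mixing-rearranged} multiplies an excess cost that still contains the $H$-inflated transition error by $\langle\overline r^t,\theta^{P,\widetilde\pi^{t+1}}\rangle$. That second factor can only be bounded by $H$, or transferred to $\theta^{P,\pi^{t+1}}$ at the price of a factor $1/(1-\psi^t)$, which your argument never controls.

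The missing idea is a uniform lower bound on $1-\psi^t$. It comes from the truncation at $d$ in~\eqref{eq:eta_t}, not from the confidence widths.
\begin{itemize}
\item On the joint event the projection is feasible (because $\theta^{P,\pi}$ is), so $\psi^t$ is given by~\eqref{eq:eta_t}.
\item If $\psi^t>0$ and $j$ attains the maximum, let $M_j\le d$ be the truncated pessimistic cost. Then $1-\psi^t=(\tau_j-\gamma_j)/(M_j-\gamma_j)\ge(\tau_j-\gamma_j)/d$. The case $\psi^t=0$ is trivial.
\item Hence $1/(1-\psi^t)\le\sqrt{\phi_{\mathrm{safe}}}$ for every $t$, so $\theta^{P,\widetilde\pi^{t+1}}\le\sqrt{\phi_{\mathrm{safe}}}\,\theta^{P,\pi^{t+1}}$ coordinatewise.
\item Divide $(1-\psi^t)\langle\overline r^t,\widetilde\theta^{t+1}\rangle\le\langle\overline r^t,\theta^{P,\pi^{t+1}}\rangle+(1-\psi^t)\|\theta^{P^{t+1},\widetilde\pi^{t+1}}-\theta^{P,\widetilde\pi^{t+1}}\|_1$ by $1-\psi^t$. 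This gives $\langle\overline r^t,\widetilde\theta^{t+1}\rangle\le\sqrt{\phi_{\mathrm{safe}}}\bigl[\langle\overline r^t,\theta^{P,\pi^{t+1}}\rangle+(1-\psi^t)\|\theta^{P^{t+1},\widetilde\pi^{t+1}}-\theta^{P,\widetilde\pi^{t+1}}\|_1\bigr]$.
\end{itemize}
This bound has no $H\psi^t$ term, so you never need to sum $\psi^t$. Your realized-bonus bound and Lemma~\ref{lem:weighted-transition-bridge} then yield $\widetilde{\mathcal O}(\sqrt{\phi_{\mathrm{safe}}}\,H^2|\mathcal S|^{3/2}\sqrt{|\mathcal A|/T})$. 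This multiplicative transfer is exactly where the $\sqrt{\phi_{\mathrm{safe}}}$ in the statement comes from.
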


\begin{proof}
We work on the joint event defined at the beginning of this appendix and fix any policy \(\pi\) with \(\theta^{P,\pi}\in\Theta_{\mathrm{safe}}\).
When \(T=1\), both bounds are immediate, since \(\sigma_h\le1\), \(\zeta_h\le H\), and \(\theta^{P,\pi}\) has total mass \(H\).
We therefore assume \(T\ge2\), so that \(1/(T-1)\le2/T\) and the bounds of
Lemmas~\ref{lem:weighted-transition-bridge} and~\ref{lem:main-occ-regret}, which are normalized by \(1/T\), can be applied directly.

As described at the beginning of this appendix, 
the fixed bonuses are the expressions in~\eqref{eq:reward_confidence} and~\eqref{eq:transition-width-bonus} evaluated at the final counts \(N_h^T\), 
whereas the online bonuses are the same expressions evaluated at \(N_h^t\).
Since these expressions are nonincreasing in the count and \(N_h^T(s,a)\ge N_h^t(s,a)\) for every \(t\le T\), 
we have
\(\sigma_h(s,a)\le\sigma_h^t(s,a)\) and
\(\zeta_h(s,a)\le\zeta_h^t(s,a)\) for all \(h\in[H]\) and \((s,a)\in\mathcal S\times\mathcal A\).
Moreover, Line~\ref{line:ex_r} of Algorithm~\ref{alg:PFCSTRL} gives
$\sigma_h^t(s,a)\le\overline r_h^t(s,a)$
and
${\zeta_h^t(s,a)}/{H}\le\overline r_h^t(s,a)$
for all \(h,s,a,t\).
It therefore suffices to bound
$\frac{1}{T-1}\sum_{t=1}^{T-1}\langle\overline r^t,\theta^{P,\pi}\rangle$,
since the bound for \(\zeta\) follows after multiplication by \(H\).

We first compare \(\theta^{P,\pi}\) with the projected occupancy measures.
On the joint event, we have \(P\in\mathcal P^t\) and \(\widehat C^t-\Xi^t\le C\). 
Since \(C^\top\theta^{P,\pi}\le\boldsymbol\tau\), we have
$(\widehat C^t-\Xi^t)^\top\theta^{P,\pi}
\le
C^\top\theta^{P,\pi}
\le
\boldsymbol\tau.$
Thus, \(\theta^{P,\pi}\) is feasible for the projection.
In particular, the projection is feasible, and \(\psi^t\) is computed by~\eqref{eq:eta_t}.
Applying~\eqref{eq:main-omd-one-step} to the exponential update in Line~\ref{alg3:theta} of Algorithm~\ref{alg:PFCSTRL} and summing over \(t=1,\ldots,T-1\), we obtain
\begin{align}
\frac{1}{T-1}\sum_{t=1}^{T-1}
\bigl\langle\overline r^t,\theta^{P,\pi}-\widetilde\theta^{t+1}\bigr\rangle
\le
\frac{D_{\mathrm{KL}}(\theta^{P,\pi}\Vert\widetilde\theta^1)}{\eta(T-1)}
\le
\widetilde{\mathcal O}\!\left(\frac{H}{\sqrt T}\right),
\label{eq:pf-coverage-regret}
\end{align}
where the last inequality uses
$\widetilde\theta_h^1(s,a,s')=1/(|\mathcal S|^2|\mathcal A|)$ and \(\eta=T^{-1/2}\).

It remains to bound the average exploration reward under the projected occupancy measures.
If \(\psi^t>0\), let \(j\in[n]\) attain the maximum in the definition of \(\psi^t\).
Then we have
\begin{align*}
1-\psi^t
&=
\frac{\tau_j-\gamma_j}{
\min\left\{
\max_{P'\in\mathcal P^t}
\sum_{h=1}^H
(\widehat c_{j,h}^t+\xi_{j,h}^t)^\top
\theta_h^{P',\widetilde\pi^{t+1}},
d
\right\}
-\gamma_j
}
\ge
\frac{\tau_j-\gamma_j}{d}.
\end{align*}
If \(\psi^t=0\), we directly have \(1-\psi^t=1\ge(\tau_j-\gamma_j)/d\) for every \(j\in[n]\), since \(\tau_j-\gamma_j\le d\).
Thus, in both cases, we have
$\frac{1}{1-\psi^t}
\le
\max_{j\in[n]}
\frac{d}{\tau_j-\gamma_j}
=
\sqrt{\phi_{\mathrm{safe}}},
\,
\forall t\in[T-1].$
By construction, we have
$\widetilde\theta^{t+1}=\theta^{P^{t+1},\widetilde\pi^{t+1}}$
for some \(P^{t+1}\in\mathcal P^t\), 
and \(0\le\overline r_h^t(s,a)\le1\).
Using
$(1-\psi^t)\theta_h^{P,\widetilde\pi^{t+1}}(s,a)\le\theta_h^{P,\pi^{t+1}}(s,a)$ together with \(\overline r^t\ge0\), we obtain
$(1-\psi^t)
\langle\overline r^t,\widetilde\theta^{t+1}\rangle
\le
\langle\overline r^t,\theta^{P,\pi^{t+1}}\rangle
+
(1-\psi^t)
\|
\theta^{P^{t+1},\widetilde\pi^{t+1}}-\theta^{P,\widetilde\pi^{t+1}}\|_1 .$
Dividing by \(1-\psi^t\) and using \(\frac{1}{1-\psi^t}\le\sqrt{\phi_{\mathrm{safe}}}\), we have \(1\le\sqrt{\phi_{\mathrm{safe}}}(1-\psi^t)\) for the second term. 
Averaging over \(t=1,\ldots,T-1\), we obtain
\begin{align}
&\frac{1}{T-1}\sum_{t=1}^{T-1}
\bigl\langle\overline r^t,\widetilde\theta^{t+1}\bigr\rangle \nonumber \\
&\quad \le
\sqrt{\phi_{\mathrm{safe}}}
\left[
\frac{1}{T-1}\sum_{t=1}^{T-1}
\bigl\langle\overline r^t,\theta^{P,\pi^{t+1}}\bigr\rangle
+
\frac{1}{T-1}\sum_{t=1}^{T-1}(1-\psi^t)
\bigl\|
\theta^{P^{t+1},\widetilde\pi^{t+1}}-\theta^{P,\widetilde\pi^{t+1}}
\bigr\|_1
\right].
\label{eq:pf-coverage-mixing}
\end{align}
The second sum in~\eqref{eq:pf-coverage-mixing} is controlled by the argument in the proof of Lemma~\ref{lem:weighted-transition-bridge}, applied to the exploration trajectories of Algorithm~\ref{alg:PFCSTRL}, and is at most $\widetilde{\mathcal O}(H^2|\mathcal S|^{3/2}\sqrt{|\mathcal A|/T})$.
For the first sum, the exploration stage executes \(\pi^{t+1}\) in episode \(t+1\).
Since both bonuses are nonnegative, Line~\ref{line:ex_r} of Algorithm~\ref{alg:PFCSTRL} gives
$\overline r_h^t(s,a)
\le
\sigma_h^t(s,a)+\zeta_h^t(s,a)/H$.
The martingale event for \(\overline r^t\) and the two bounds in Lemma~\ref{lem:realized-bonus-sum}, applied to the same exploration trajectories, therefore give
$\frac{1}{T-1}\sum_{t=1}^{T-1}
\langle\overline r^t,\theta^{P,\pi^{t+1}}\rangle
\le
\widetilde{\mathcal O}(H|\mathcal S|\sqrt{|\mathcal A|/T})$.

Combining these two bounds with~\eqref{eq:pf-coverage-regret} and~\eqref{eq:pf-coverage-mixing} proves the bound for \(\sigma\), 
and multiplying by \(H\) proves the bound for \(\zeta\).
\end{proof}

\begin{lemma}
\label{lem:pf-fixed-aux-benchmark}
For each $i\in[m]$, let $\widetilde\theta_i$ and $\widetilde V_i
=
\langle\bar r_i,\widetilde\theta_i\rangle$
be computed in Lines~\ref{line:aux-proj-start}--\ref{line:aux-proj-end} of Algorithm~\ref{alg:pfcstrl-plan-rf}.
On the joint event, with \(\eta=K^{-1/2}\), we have
$V_i^*-\widetilde V_i
\le
\widetilde{\mathcal O}(H/\sqrt K)$
for every $i\in[m]$.
\end{lemma}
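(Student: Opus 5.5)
The plan is a single step of mirror ascent with step size $\eta^{-1}=\sqrt K$, combined with optimism of $\bar r_i$ and feasibility of the safe comparator $\theta_i^*$. This mirrors the argument of Lemma~\ref{lem:aux-occ-regret}, specialized to one update instead of $T-1$ updates.

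\textbf{Step 1 (feasibility).} Fix $i\in[m]$ and let $\theta_i^*\in\arg\max_{\theta\in\Theta_{\mathrm{safe}}}V_i(\theta)$. On the joint event, $\mathcal E_{\mathrm{fix}}$ gives $P\in\mathcal P$ and $\widehat C-\Xi\le C$. Hence $\theta_i^*\in\Theta(P)\subseteq\Theta(\mathcal P)$ and $(\widehat C-\Xi)^\top\theta_i^*\le C^\top\theta_i^*\le\boldsymbol\tau$. Therefore the constrained projection $\texttt{Proj}(\widehat\theta_i,\mathcal P,\widehat C,\Xi)$ is feasible, the else-branch in Lines~\ref{line:aux-proj-start}--\ref{line:aux-proj-end} is never taken, and $\theta_i^*$ lies in the convex feasible set of that projection. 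The set is convex because it is the polytope of Lemma~\ref{lem:occ-polytope} intersected with linear constraints.

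\textbf{Step 2 (one-step mirror ascent inequality).} Since $\widehat\theta_i=\widetilde\theta^1\exp\{\eta^{-1}\bar r_i\}$ and $\widetilde\theta_i$ is its KL projection onto a convex set containing $\theta_i^*$, the same three-point and generalized Pythagorean argument that gives~\eqref{eq:main-omd-one-step} yields
\[
\langle\bar r_i,\theta_i^*-\widetilde\theta_i\rangle
\le
\eta\bigl(D_{\mathrm{KL}}(\theta_i^*\Vert\widetilde\theta^1)-D_{\mathrm{KL}}(\theta_i^*\Vert\widetilde\theta_i)\bigr)
\le
\eta\,D_{\mathrm{KL}}(\theta_i^*\Vert\widetilde\theta^1).
\]
This is~\eqref{eq:main-omd-one-step} with the step size $\eta$ replaced by $\eta^{-1}$. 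As in Lemma~\ref{lem:main-occ-regret}, the uniform initialization together with total mass $H$ gives $D_{\mathrm{KL}}(\theta_i^*\Vert\widetilde\theta^1)\le H\log(|\mathcal S|^2|\mathcal A|)$. With $\eta=K^{-1/2}$, the right-hand side is therefore $\widetilde{\mathcal O}(H/\sqrt K)$.

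\textbf{Step 3 (optimism).} On $\mathcal E_{\mathrm{fix}}$ we have $\bar r_i\ge r_i$, and $\theta_i^*\ge0$. Hence $V_i^*=\langle r_i,\theta_i^*\rangle\le\langle\bar r_i,\theta_i^*\rangle$. Subtracting $\widetilde V_i=\langle\bar r_i,\widetilde\theta_i\rangle$ and applying Step 2 gives $V_i^*-\widetilde V_i\le\widetilde{\mathcal O}(H/\sqrt K)$, uniformly in $i$.

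\textbf{Main obstacle.} The only delicate point is Step 2. The reference measure $\widetilde\theta^1$ and the update $\widehat\theta_i$ are unnormalized positive measures rather than occupancy measures, so the KL projection inequality must be the generalized Bregman (unnormalized KL) version on the convex feasible set. I would invoke the same fact already used for~\eqref{eq:main-omd-one-step}. Concretely, I would write the first-order optimality condition of the projection, $\langle\log\widetilde\theta_i-\log\widehat\theta_i,\theta_i^*-\widetilde\theta_i\rangle\ge0$, substitute $\log\widehat\theta_i=\log\widetilde\theta^1+\eta^{-1}\bar r_i$, and then drop the nonnegative term $D_{\mathrm{KL}}(\widetilde\theta_i\Vert\widetilde\theta^1)$.
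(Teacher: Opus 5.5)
Your proposal is correct and follows the paper's proof essentially step for step. It establishes feasibility of $\theta_i^*$ on $\mathcal E_{\mathrm{fix}}$, applies the one-step KL-projection inequality with step size $\sqrt K$ together with $D_{\mathrm{KL}}(\theta_i^*\Vert\widetilde\theta^1)\le H\log(|\mathcal S|^2|\mathcal A|)$, and finishes with optimism $\bar r_i\ge r_i$. The only cosmetic difference is that the paper's displayed inequality keeps the term $-D_{\mathrm{KL}}(\widetilde\theta_i\Vert\widetilde\theta^1)$ rather than $-D_{\mathrm{KL}}(\theta_i^*\Vert\widetilde\theta_i)$; both are nonnegative terms of the same three-point identity and are discarded.
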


\begin{proof}
Consider any $i\in[m]$, and let
$\theta_i^*
\in
\argmax_{\theta\in\Theta_{\mathrm{safe}}}
V_i(\theta).$
Since \(\theta_i^*\in\Theta_{\mathrm{safe}}\) and \(P\in\mathcal P\), we have \(\theta_i^*\in\Theta(\mathcal P)\). 
The cost confidence bound \(\widehat C-\Xi\le C\) also gives
$(\widehat C-\Xi)^\top\theta_i^*
\le
C^\top\theta_i^*
\le
\boldsymbol\tau.$
Thus, on the joint event, the projection \(\texttt{Proj}(\widehat\theta_i,\mathcal P,\widehat C,\Xi)\) is feasible.
By the optimality condition of the KL projection with
$\widehat\theta_{i,h}(s,a,s')
=
\widetilde\theta_h^1(s,a,s')
\exp\{K^{1/2}\bar r_{i,h}(s,a)\},$
we have
$\langle
\bar r_i,\theta_i^*-\widetilde\theta_i
\rangle
\le
\frac{
D_{\mathrm{KL}}(\theta_i^*\Vert\widetilde\theta^1)
-
D_{\mathrm{KL}}(\widetilde\theta_i\Vert\widetilde\theta^1)}{\sqrt K}.$
Since \(\widetilde\theta^1\) is uniform,
we obtain
$D_{\mathrm{KL}}(\theta_i^*\Vert\widetilde\theta^1)
\le
H\log(|\mathcal S|^2|\mathcal A|).$
Therefore, we have
$\langle
\bar r_i,\theta_i^*-\widetilde\theta_i
\rangle
\le
\widetilde{\mathcal O}(H/\sqrt K).$
Using \(\bar r_i\ge r_i\), we obtain
$V_i^*-\widetilde{V}_i
=
\langle r_i,\theta_i^*\rangle
-
\langle \bar r_i,\widetilde\theta_i\rangle
\le
\langle
\bar r_i,\theta_i^*-\widetilde\theta_i
\rangle
\le
\widetilde{\mathcal O}(H/\sqrt K).$
This completes the proof.
\end{proof}

\begin{lemma}
\label{lem:pf-planning-occupancy}
Let \(\{\widetilde\theta^{k+1}\}_{k=1}^K\) be generated by the planning-stage occupancy measure update in Algorithm~\ref{alg:pfcstrl-plan-rf}. 
For any safe occupancy measure \(\theta\in\Theta_{\mathrm{safe}}\), on the joint event, with \(\eta=K^{-1/2}\), we have
\begin{align*}
\frac1K\sum_{k=1}^K
\left\langle
\sum_{i=1}^m w_i^k\lambda_i r_i,\,
\theta-\widetilde\theta^{k+1}
\right\rangle
\le
\widetilde{\mathcal O}\!\left(
H/\sqrt{K}
+
\phi_{\mathrm{safe}}\,H^3|\mathcal S|^{2}\sqrt{|\mathcal A|/T}
\right).
\end{align*}
\end{lemma}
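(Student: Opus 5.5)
We follow the template of Lemma~\ref{lem:main-occ-regret}, now with the fixed quantities $(\bar r_i,\widehat C,\Xi,\mathcal P)$ from the exploration dataset. The reward bonuses are controlled by the coverage guarantee of Lemma~\ref{lem:pf-exploration-coverage}, not by online visitation counts.

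\textbf{Step 1 (feasibility and the optimistic regret).} On the joint event we have $P\in\mathcal P$ and $\widehat C-\Xi\le C$. Hence every $\theta\in\Theta_{\mathrm{safe}}$ lies in $\Theta(\mathcal P)$ and satisfies $(\widehat C-\Xi)^\top\theta\le\boldsymbol\tau$, so each planning projection is feasible. The convex feasible set is fixed across $k$, by Lemma~\ref{lem:occ-polytope}. The one-step inequality~\eqref{eq:main-omd-one-step}, applied with $\bar R^k$, therefore holds at every iteration. Summing over $k=1,\dots,K$ telescopes the KL terms and gives
\[
\frac1K\sum_{k=1}^K\langle\bar R^k,\theta-\widetilde\theta^{k+1}\rangle\le\frac{H\log(|\mathcal S|^2|\mathcal A|)}{\eta K}=\widetilde{\mathcal O}(H/\sqrt K).
\]

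\textbf{Step 2 (from optimistic to true rewards).} Write $R^k:=\sum_i w_i^k\lambda_i r_i$. The fixed confidence bound gives $0\le\bar R^k-R^k\le2\sigma$, using $\sum_i w_i^k\lambda_i\le1$. Together with $\theta\ge0$, exactly as in~\eqref{eq:true-to-opt-main},
\[
\langle R^k,\theta-\widetilde\theta^{k+1}\rangle\le\langle\bar R^k,\theta-\widetilde\theta^{k+1}\rangle+2\langle\sigma,\widetilde\theta^{k+1}\rangle.
\]
It then suffices to show $\langle\sigma,\widetilde\theta^{k+1}\rangle\le\widetilde{\mathcal O}(\phi_{\mathrm{safe}}H^3|\mathcal S|^2\sqrt{|\mathcal A|/T})$ uniformly in $k$.

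\textbf{Step 3 (bonus under the projected measure; main obstacle).} The difficulty is that $\widetilde\theta^{k+1}=\theta^{P^{k+1},\widetilde\pi^{k+1}}$ for some $P^{k+1}\in\mathcal P$ is neither the true occupancy measure of a safe policy nor the one of the executed policy. Lemma~\ref{lem:pf-exploration-coverage} applies only to policies whose true occupancy measure is safe, such as $\pi^{k+1}$. I would proceed in three moves.

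First, since $0\le\sigma\le1$, \eqref{eq:pf-transition-transfer} gives
\[
\langle\sigma,\widetilde\theta^{k+1}\rangle\le\langle\sigma,\theta^{P,\widetilde\pi^{k+1}}\rangle+2\sqrt{|\mathcal S|}\langle\zeta,\theta^{P,\widetilde\pi^{k+1}}\rangle.
\]

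Second, from $\theta^{P,\pi^{k+1}}=(1-\psi^k)\theta^{P,\widetilde\pi^{k+1}}+\psi^k\theta^{P,\pi^\dagger}$ we get $\theta^{P,\widetilde\pi^{k+1}}\le\theta^{P,\pi^{k+1}}/(1-\psi^k)$. The argument in the proof of Lemma~\ref{lem:pf-exploration-coverage} gives $1/(1-\psi^k)\le\sqrt{\phi_{\mathrm{safe}}}$; this needs the projection to be feasible, which Step 1 guarantees.

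Third, $\pi^{k+1}$ is safe on the joint event by the planning-stage \texttt{SafeProj} argument, so Lemma~\ref{lem:pf-exploration-coverage} applies to it. This yields
\[
\langle\sigma,\widetilde\theta^{k+1}\rangle\le\sqrt{\phi_{\mathrm{safe}}}\,\widetilde{\mathcal O}\bigl(\sqrt{\phi_{\mathrm{safe}}}H^2|\mathcal S|^{3/2}\sqrt{|\mathcal A|/T}+\sqrt{|\mathcal S|}\sqrt{\phi_{\mathrm{safe}}}H^3|\mathcal S|^{3/2}\sqrt{|\mathcal A|/T}\bigr)=\widetilde{\mathcal O}(\phi_{\mathrm{safe}}H^3|\mathcal S|^2\sqrt{|\mathcal A|/T}).
\]

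Averaging over $k$ and combining with Steps 1--2 gives the claim. I would double-check that the $1/(1-\psi^k)$ bound transfers verbatim to the fixed-data $\psi^k$, since that is the only place where the safety margin enters. If it does not, a fallback is to use Lemma~\ref{lem:mixing-bound} instead.
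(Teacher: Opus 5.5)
Your proposal is correct and matches the paper's proof essentially step for step. It uses the same KL-projection telescoping for $\bar R^k$, the same optimism correction $2\langle\sigma,\widetilde\theta^{k+1}\rangle$, the transfer via \eqref{eq:pf-transition-transfer}, the factor $1/(1-\psi^k)\le\sqrt{\phi_{\mathrm{safe}}}$, and Lemma~\ref{lem:pf-exploration-coverage} applied to the safe $\pi^{k+1}$. The point you flagged does transfer verbatim, because that bound only uses that the truncated pessimistic cost in \eqref{eq:eta_t} is at most $d$ and that the fixed planning projection is feasible, as your Step 1 shows.
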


\begin{proof}
For each planning iteration $k$, we define the true scalarized reward $R_h^k(s,a):=\sum_{i=1}^m w_i^k\lambda_i r_{i,h}(s,a)$.
Since \(\theta\in\Theta_{\mathrm{safe}}\), \(P\in\mathcal P\), and \(\widehat C-\Xi\le C\), we have
$(\widehat C-\Xi)^\top\theta
\le
C^\top\theta
\le
\boldsymbol\tau.$
Thus, \(\theta\) is feasible for the projection.
Applying the KL projection argument in the proof of Lemma~\ref{lem:main-occ-regret} to the $K$ planning updates and summing over $k$ gives
$\frac1K\sum_{k=1}^K
\langle\bar R^k,\theta-\widetilde\theta^{k+1}\rangle
\le
\frac{H\log(|\mathcal S|^2|\mathcal A|)}{\sqrt K}
=
\widetilde{\mathcal O}(H/\sqrt K)$.

We next express the preceding inequality using the true scalarized reward \(R^k\).
On the event \(\mathcal E_{\mathrm{fix}}(\delta)\),
\(\sigma\) is the reward bonus computed from the final exploration data.
Thus, by Lemma~\ref{lem:reward_concentration} and the definition of
\(\bar r_i\),
we obtain
$0\le
\bar r_{i,h}(s,a)-r_{i,h}(s,a)
\le \allowbreak
2\sigma_h(s,a),
\,\allowbreak \forall i,h,s,a.$
Using this bound, we have
$0\le
\bar R_h^k(s,a)-R_h^k(s,a)
\le
2\sum_{i=1}^m w_i^k\lambda_i\sigma_h(s,a).$
Hence, we obtain
$\langle
R^k,\theta-\widetilde\theta^{k+1}
\rangle
\le
\langle
\bar R^k,\theta-\widetilde\theta^{k+1}
\rangle
+
2\langle
\sum_{i=1}^m w_i^k\lambda_i\sigma,\,
\widetilde\theta^{k+1}\rangle.$
It remains to control the last term.
Since \(\boldsymbol w^k,\boldsymbol\lambda\in\Delta_m\), we have $\sum_{i=1}^m w_i^k\lambda_i\le1$ and therefore
$\langle
\sum_{i=1}^m w_i^k\lambda_i\sigma,\,
\widetilde\theta^{k+1}
\rangle
\le
\langle
\sigma,\widetilde\theta^{k+1}
\rangle .$
For each \(k\in[K]\), we have \(\widetilde\theta^{k+1} =\theta^{P^{k+1},\widetilde\pi^{k+1}}\) for some \(P^{k+1}\in\mathcal P\).
Since \(0\le\sigma_h(s,a)\le1\), inequality~\eqref{eq:pf-transition-transfer} gives
$\langle
\sigma,\widetilde\theta^{k+1}
\rangle
\le
\langle
\sigma,\theta^{P,\widetilde\pi^{k+1}}
\rangle
+
2\sqrt{|\mathcal S|}
\langle
\zeta,\theta^{P,\widetilde\pi^{k+1}}
\rangle .$
By the construction of \(\pi^{k+1}\), we have
$\theta_h^{P,\widetilde\pi^{k+1}}(s,a)
\le
\frac{1}{1-\psi^k}\theta_h^{P,\pi^{k+1}}(s,a)$ for all \(h,s,a\).
The argument in the proof of Lemma~\ref{lem:pf-exploration-coverage} gives \(1/(1-\psi^k)\le\sqrt{\phi_{\mathrm{safe}}}\).
Since \(\sigma\) and \(\zeta\) are nonnegative, we therefore obtain
$\langle
\sigma,\widetilde\theta^{k+1}
\rangle
\le
\sqrt{\phi_{\mathrm{safe}}}
[\langle
\sigma,\theta^{P,\pi^{k+1}}
\rangle
+
2\sqrt{|\mathcal S|}
\langle
\zeta,\theta^{P,\pi^{k+1}}
\rangle].$
Averaging over \(k\in [K]\) and applying Lemma~\ref{lem:pf-exploration-coverage} gives
$\frac1K\sum_{k=1}^K
\langle
\sum_{i=1}^m w_i^k\lambda_i\sigma,\,
\widetilde\theta^{k+1}
\rangle
\le
\widetilde{\mathcal O}\!(
\phi_{\mathrm{safe}}H^3|\mathcal S|^2
\sqrt{\frac{|\mathcal A|}{T}}
).$
Combining this bound with the preceding KL projection bound proves the lemma.
\end{proof}

\begin{lemma}
\label{lem:pf-planning-mixing}
Let \(\{\psi^k\}_{k=1}^K\) be the probabilities of selecting \(\pi^\dagger\) during the planning stage of Algorithm~\ref{alg:pfcstrl-plan-rf}.
On the joint event, we have
\begin{align*}
\frac1K\sum_{k=1}^K\psi^k
\le
\widetilde{\mathcal O}\!\left(
\phi_{\mathrm{safe}}H^3|\mathcal S|^{2}
\sqrt{|\mathcal A|/T}
\right).
\end{align*}
\end{lemma}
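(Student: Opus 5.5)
The plan is to bound each $\psi^k$ separately with Lemma~\ref{lem:mixing-bound}, turn the resulting bonus quantities into occupancy-weighted bonuses under the true kernel, and then average over $k$ using Lemma~\ref{lem:pf-exploration-coverage}.

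\textbf{Step 1: cases.} Fix $k\in[K]$ and work on the joint event.

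\emph{Infeasible projection.} If the projection in \texttt{SafeProj} is infeasible, then $\psi^k=1$. This case never occurs on the joint event. Indeed, $P\in\mathcal P$ and $\widehat C-\Xi\le C$, so any $\theta\in\Theta_{\mathrm{safe}}$, for example $\theta^{P,\pi^\dagger}$, is feasible.

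\emph{Feasible projection.} Hence $\widetilde\theta^{k+1}$ satisfies $(\widehat C-\Xi)^\top\widetilde\theta^{k+1}\le\boldsymbol\tau$. Lemma~\ref{lem:mixing-bound} then gives
\[
\psi^k\le(1-\psi^k)\max_{j}\frac{1}{\tau_j-\gamma_j}\Bigl[3\max_{P'\in\mathcal P}\|\theta^{P',\widetilde\pi^{k+1}}-\theta^{P,\widetilde\pi^{k+1}}\|_1+2\max_{j'}\sum_h\xi_{j',h}^\top\theta_h^{P,\widetilde\pi^{k+1}}\Bigr].
\]

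\textbf{Step 2: convert to bonuses under the true kernel.} Two substitutions handle the bracket.

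\emph{Transition term.} Inequality~\eqref{eq:pf-transition-transfer} bounds it by $6\sqrt{|\mathcal S|}\langle\zeta,\theta^{P,\widetilde\pi^{k+1}}\rangle$.

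\emph{Cost-bonus term.} From \eqref{eq:cost_confidence} and \eqref{eq:reward_confidence}, $\xi_{j,h}(s,a)\le\widetilde{\mathcal O}(1)\,\sigma_h(s,a)$; the two differ only in the logarithmic factors inside the square root and both are clipped at $1$. So this term is $\widetilde{\mathcal O}(\langle\sigma,\theta^{P,\widetilde\pi^{k+1}}\rangle)$.

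\emph{Removing $\widetilde\pi^{k+1}$.} Since $\theta^{P,\pi^{k+1}}=(1-\psi^k)\theta^{P,\widetilde\pi^{k+1}}+\psi^k\theta^{P,\pi^\dagger}$ and the bonuses are nonnegative, the factor $(1-\psi^k)$ absorbs into $\theta^{P,\pi^{k+1}}$. Bounding $1/(\tau_j-\gamma_j)\le\sqrt{\phi_{\mathrm{safe}}}$ (as in the proof of Theorem~\ref{thm:subopt}) gives
\[
\psi^k\le\widetilde{\mathcal O}\bigl(\sqrt{\phi_{\mathrm{safe}}}\bigr)\Bigl[\sqrt{|\mathcal S|}\langle\zeta,\theta^{P,\pi^{k+1}}\rangle+\langle\sigma,\theta^{P,\pi^{k+1}}\rangle\Bigr].
\]

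\textbf{Step 3: apply the coverage lemma.} By the safety argument for the planning stage (the \texttt{SafeProj} argument of Theorem~\ref{thm:safety} with the fixed $(\widehat C,\Xi,\mathcal P)$, noted at the start of this appendix), $\theta^{P,\pi^{k+1}}\in\Theta_{\mathrm{safe}}$. Therefore Lemma~\ref{lem:pf-exploration-coverage} applies and gives
\[
\langle\zeta,\theta^{P,\pi^{k+1}}\rangle\le\widetilde{\mathcal O}\bigl(\sqrt{\phi_{\mathrm{safe}}}H^3|\mathcal S|^{3/2}\sqrt{|\mathcal A|/T}\bigr),
\]
with the analogous smaller bound for $\sigma$. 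Multiplying the prefactors gives
\[
\sqrt{\phi_{\mathrm{safe}}}\cdot\sqrt{|\mathcal S|}\cdot\sqrt{\phi_{\mathrm{safe}}}H^3|\mathcal S|^{3/2}=\phi_{\mathrm{safe}}H^3|\mathcal S|^2.
\]
This bound holds uniformly in $k$, so averaging over $k\in[K]$ yields the claim.

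\textbf{Main obstacle.} The delicate point is Step 2's comparison $\xi\lesssim\sigma$, together with the uniform $1/(1-\psi)$ handling. Absorbing $(1-\psi^k)$ into $\theta^{P,\pi^{k+1}}$ is what avoids an extra $\sqrt{\phi_{\mathrm{safe}}}$ factor. I would also check that the $\max_{P'}$ in Lemma~\ref{lem:mixing-bound} is covered by \eqref{eq:pf-transition-transfer}; it is, because that inequality holds for every $P'\in\mathcal P$.
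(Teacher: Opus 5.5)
Your proposal is correct and follows the paper's proof essentially step for step: feasibility on the joint event via \(\theta^{P,\pi^\dagger}\), Lemma~\ref{lem:mixing-bound}, the transfer inequality~\eqref{eq:pf-transition-transfer} for the transition term, absorbing \((1-\psi^k)\) into \(\theta^{P,\pi^{k+1}}\), and Lemma~\ref{lem:pf-exploration-coverage} applied to the safe policy \(\pi^{k+1}\). The only cosmetic difference is that you bound the cost bonus by \(\widetilde{\mathcal O}(\sigma)\), whereas the paper uses \(\xi_{j,h}\le\widetilde{\mathcal O}(\zeta_h/H)\). Both comparisons are valid, since all three bonuses are clipped multiples of \(1/\sqrt{\max\{1,N_h\}}\) up to logarithmic and \(\sqrt{|\mathcal S|}\) factors, and the \(\zeta\) term dominates the final bound either way.
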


\begin{proof}
On the joint event, we have \(P\in\mathcal P\) and \(\widehat C-\Xi\le C\).
Therefore, we have
$(\widehat C-\Xi)^\top
\theta^{P,\pi^\dagger}
\le
C^\top\theta^{P,\pi^\dagger}
=
\boldsymbol\gamma
<
\boldsymbol\tau.$
The occupancy measure \(\theta^{P,\pi^\dagger}\) is thus feasible for the fixed planning projection, and the projection is feasible at every planning iteration.
Fix \(k\in[K]\).
Since $0<\tau_j-\gamma_j\le d$ and $d\ge1$, we have $1/(\tau_j-\gamma_j)\le\sqrt{\phi_{\mathrm{safe}}}$ for every $j\in[n]$.
Lemma~\ref{lem:mixing-bound} with \(\widetilde\theta=\widetilde\theta^{k+1}\) therefore gives
\[
\psi^k
\le
\sqrt{\phi_{\mathrm{safe}}}\,(1-\psi^k)
\biggl[
3\max_{P'\in\mathcal P}
\bigl\|
\theta^{P',\widetilde\pi^{k+1}}
-
\theta^{P,\widetilde\pi^{k+1}}
\bigr\|_1
+
2\max_{j\in[n]}
\sum_{h=1}^H
\xi_{j,h}^\top
\theta_h^{P,\widetilde\pi^{k+1}}
\biggr].
\]
By~\eqref{eq:pf-transition-transfer}, the transition term is at most \(6\sqrt{|\mathcal S|}\langle\zeta,\theta^{P,\widetilde\pi^{k+1}}\rangle\).
Since the cost bonus in~\eqref{eq:cost_confidence} and the transition bonus in~\eqref{eq:transition-width-bonus} differ only in logarithmic factors and in the factor \(H\sqrt{|\mathcal S|}\), we have \(\xi_{j,h}(s,a)\le\widetilde{\mathcal O}(\zeta_h(s,a)/H)\).
The cost bonus term is therefore at most \(\widetilde{\mathcal O}(\langle\zeta,\theta^{P,\widetilde\pi^{k+1}}\rangle/H)\).
Since $\theta^{P,\pi^{k+1}}=(1-\psi^k)\theta^{P,\widetilde\pi^{k+1}}+\psi^k\theta^{P,\pi^\dagger}$, we have $(1-\psi^k)\theta_h^{P,\widetilde\pi^{k+1}}(s,a)\le\theta_h^{P,\pi^{k+1}}(s,a)$ for all \(h,s,a\).
Combining these bounds, we obtain
\begin{equation}
\psi^k
\le
\widetilde{\mathcal O}\!\left(
\sqrt{\phi_{\mathrm{safe}}|\mathcal S|}\,
\bigl\langle\zeta,\theta^{P,\pi^{k+1}}\bigr\rangle
\right).
\label{eq:pf-planning-mixing-one-step}
\end{equation}
Averaging~\eqref{eq:pf-planning-mixing-one-step} over \(k=1,\ldots,K\) and applying Lemma~\ref{lem:pf-exploration-coverage} to each safe policy \(\pi^{k+1}\) proves the lemma.
\end{proof}

\begin{lemma}
\label{lem:pf-fixed-aux-excess}
Let \(\widetilde\theta_i\) and
$\widetilde{V}_i
=
\langle \bar r_i,\widetilde\theta_i\rangle$
be computed in Lines~\ref{line:aux-proj-start}--\ref{line:aux-proj-end} of Algorithm~\ref{alg:pfcstrl-plan-rf}. 
For any safe single-objective optimal occupancy measure \(\theta_i^*\in\Theta_{\mathrm{safe}}\), on the joint event, we have
\[
-\widetilde{\mathcal O}(H/\sqrt K)
\le
\widetilde{V}_i
-
V_i^*
\le
\widetilde{\mathcal O}\!\left(
\phi_{\mathrm{safe}}H^4|\mathcal S|^{2}
\sqrt{|\mathcal A|/T}
\right).
\]
\end{lemma}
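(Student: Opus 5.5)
The lower bound $\widetilde V_i-V_i^*\ge-\widetilde{\mathcal O}(H/\sqrt K)$ is exactly Lemma~\ref{lem:pf-fixed-aux-benchmark}, so I will invoke it directly. The work is in the upper bound. There the danger is that $\widetilde V_i=\langle\bar r_i,\widetilde\theta_i\rangle$ is optimistic twice over: through the reward bonus, and through the choice of a transition kernel in $\mathcal P$ together with the relaxed cost constraint $(\widehat C-\Xi)^\top\theta\le\boldsymbol\tau$. As a result, $\widetilde\theta_i$ need not correspond to any safe occupancy measure under $P$.

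The first step is to write $\widetilde\theta_i=\theta^{P_i,\widetilde\nu_i}$ for some $P_i\in\mathcal P$, which Lemma~\ref{lem:occ-polytope} permits, with $\widetilde\nu_i:=\pi^{\widetilde\theta_i}$. On the joint event the projection is feasible, as shown in the proof of Lemma~\ref{lem:pf-fixed-aux-benchmark}, so $\widetilde\theta_i$ satisfies the relaxed constraints. Then
\[
\widetilde V_i\le \langle r_i,\theta^{P,\widetilde\nu_i}\rangle+2\langle\sigma,\widetilde\theta_i\rangle+\|\widetilde\theta_i-\theta^{P,\widetilde\nu_i}\|_1 ,
\]
and by \eqref{eq:pf-transition-transfer} the last two terms are at most
\[
2\langle\sigma,\theta^{P,\widetilde\nu_i}\rangle+6\sqrt{|\mathcal S|}\,\langle\zeta,\theta^{P,\widetilde\nu_i}\rangle .
\]
To get back to a safe policy I will use the \texttt{SafeProj} mixing device. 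Let $\psi_i$ be defined by \eqref{eq:eta_t} with $\widetilde\pi$ replaced by $\widetilde\nu_i$, and let $\nu_i$ be the mixture of $\pi^\dagger$ (weight $\psi_i$) and $\widetilde\nu_i$ (weight $1-\psi_i$). The Case~1--3 argument in the proof of Theorem~\ref{thm:safety}, which uses only $|\widehat C-C|\le\Xi$ and $P\in\mathcal P$, shows that $\theta^{P,\nu_i}\in\Theta_{\mathrm{safe}}$, so $V_i(\theta^{P,\nu_i})\le V_i^*$. Since $\theta^{P,\nu_i}=(1-\psi_i)\theta^{P,\widetilde\nu_i}+\psi_i\theta^{P,\pi^\dagger}$, this gives $\langle r_i,\theta^{P,\widetilde\nu_i}\rangle\le V_i^*+H\psi_i$.

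The next step controls the bonus terms and $\psi_i$ through the safe policy $\nu_i$. As in the proof of Lemma~\ref{lem:pf-exploration-coverage}, $1/(1-\psi_i)\le\sqrt{\phi_{\mathrm{safe}}}$, and hence
\[
\theta^{P,\widetilde\nu_i}\le\sqrt{\phi_{\mathrm{safe}}}\,\theta^{P,\nu_i}
\]
entrywise. The $\sigma$ and $\zeta$ terms are therefore bounded by $\sqrt{\phi_{\mathrm{safe}}}$ times the corresponding inner products against $\theta^{P,\nu_i}$. Lemma~\ref{lem:pf-exploration-coverage} bounds those by $\widetilde{\mathcal O}(\sqrt{\phi_{\mathrm{safe}}}H^3|\mathcal S|^{3/2}\sqrt{|\mathcal A|/T})$, where the $\zeta$ term carries the extra factor $\sqrt{|\mathcal S|}$ and so gives $\phi_{\mathrm{safe}}H^3|\mathcal S|^2\sqrt{|\mathcal A|/T}$. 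For $\psi_i$ I will repeat the argument of Lemma~\ref{lem:pf-planning-mixing} verbatim for the single projection. Lemma~\ref{lem:mixing-bound}, \eqref{eq:pf-transition-transfer} and $\xi\le\widetilde{\mathcal O}(\zeta/H)$ give $\psi_i\le\widetilde{\mathcal O}(\sqrt{\phi_{\mathrm{safe}}|\mathcal S|}\langle\zeta,\theta^{P,\nu_i}\rangle)$. Lemma~\ref{lem:pf-exploration-coverage} then yields $\psi_i\le\widetilde{\mathcal O}(\phi_{\mathrm{safe}}H^3|\mathcal S|^2\sqrt{|\mathcal A|/T})$, so $H\psi_i$ contributes the leading $\phi_{\mathrm{safe}}H^4|\mathcal S|^2\sqrt{|\mathcal A|/T}$ term.

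Summing these pieces gives the upper bound. The main obstacle is the step passing through the unsafe projected policy $\widetilde\nu_i$. The coverage lemma applies only to safe policies, so every bonus term has to be routed through $\nu_i$ at the cost of the factor $1/(1-\psi_i)$, and $\psi_i$ itself must be bounded by coverage rather than by a summed-regret argument, since only a single projection is performed here. I will also check that Lemma~\ref{lem:mixing-bound}, stated for a \texttt{SafeProj} call, applies to this feasible auxiliary projection. It should, because its hypotheses are only $\widetilde\theta\in\Theta(\mathcal P)$ together with the relaxed constraint.
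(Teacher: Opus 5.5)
Your proposal is correct and follows essentially the same route as the paper's proof: the lower bound from Lemma~\ref{lem:pf-fixed-aux-benchmark}; the mixture $\nu_i$ of $\pi^\dagger$ and $\widetilde\nu_i$, made safe by the three-case argument, with $1/(1-\psi_i)\le\sqrt{\phi_{\mathrm{safe}}}$; the transfer inequality~\eqref{eq:pf-transition-transfer}; and the single-projection analogue of~\eqref{eq:pf-planning-mixing-one-step} combined with Lemma~\ref{lem:pf-exploration-coverage} applied to $\nu_i$, so that $H\psi_i$ gives the leading term. The only cosmetic difference is that you move $\langle\sigma,\widetilde\theta_i\rangle$ onto $\theta^{P,\widetilde\nu_i}$ before routing through $\nu_i$, which merges the paper's two uses of~\eqref{eq:pf-transition-transfer} into a single constant $6\sqrt{|\mathcal S|}$.
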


\begin{proof}
Fix \(i\in[m]\). 
The lower bound
$\widetilde{V}_i
-
V_i^*
\ge
-\widetilde{\mathcal O}(H/\sqrt K)$
follows from Lemma~\ref{lem:pf-fixed-aux-benchmark}.
It remains to prove the upper bound.

Let \(\widetilde\nu_i=\pi^{\widetilde\theta_i}\) be the policy induced by the projected occupancy measure \(\widetilde\theta_i\).
We define \(\psi_i\) by~\eqref{eq:eta_t} with \((\mathcal P^t,\widehat C^t,\Xi^t,\widetilde\pi^{t+1})\) replaced by \((\mathcal P,\widehat C,\Xi,\widetilde\nu_i)\).
Let \(\nu_i\) equal \(\pi^\dagger\) with probability \(\psi_i\) and \(\widetilde\nu_i\) with probability \(1-\psi_i\).
Since \(\theta_i^*\) is feasible for the auxiliary projection on the joint event, the projection is feasible.
The argument used in the proof of Lemma~\ref{lem:pf-exploration-coverage} then gives $\frac{1}{1-\psi_i}\le\sqrt{\phi_{\mathrm{safe}}}$.
The argument that considers the three cases separately in the proof of Theorem~\ref{thm:safety}, applied to \((\widetilde\nu_i,\nu_i,\psi_i)\), shows that \(\nu_i\) is safe.

Repeating the derivation of~\eqref{eq:pf-planning-mixing-one-step} for \((\widetilde\theta_i,\widetilde\nu_i,\nu_i,\psi_i)\) gives
$\psi_i
\le
\widetilde{\mathcal O}(
\sqrt{\phi_{\mathrm{safe}}|\mathcal S|}\,
\langle\zeta,\theta^{P,\nu_i}\rangle)$.
Applying Lemma~\ref{lem:pf-exploration-coverage} to the safe policy \(\nu_i\), we obtain
$\psi_i
\le
\widetilde{\mathcal O}\!(
\phi_{\mathrm{safe}}H^3|\mathcal S|^{2}
\sqrt{|\mathcal A|/T}).$

Since \(\theta_i^*\) is a safe single-objective optimal occupancy measure, we have
$V_i(\theta^{P,\nu_i})
\le
V_i^*$.
Moreover, the definition of \(\nu_i\) gives
$\theta^{P,\nu_i}
=
(1-\psi_i)\theta^{P,\widetilde\nu_i}
+
\psi_i\theta^{P,\pi^\dagger}.$
Since the cumulative reward over one episode is bounded by \(H\), this implies
\[V_i(\theta^{P,\widetilde\nu_i})
\le
V_i(\theta^{P,\nu_i})
+
H\psi_i
\le
V_i^*
+
H\psi_i.\]

We now compare the fixed optimistic single-objective value \(\widetilde V_i\) with the true value of \(\widetilde\nu_i\).
Using \(0\le \bar r_i-r_i\le 2\sigma\), we have
$\langle \bar r_i,\widetilde\theta_i\rangle
\le
\langle r_i,\widetilde\theta_i\rangle
+
2\langle \sigma,\widetilde\theta_i\rangle$.
Since \(\widetilde\theta_i\in\Theta(\mathcal P)\) and \(r_{i,h}\le1\), inequality~\eqref{eq:pf-transition-transfer} applied to \(\widetilde\nu_i\) gives $\langle r_i,\widetilde\theta_i\rangle
\le
V_i(\theta^{P,\widetilde\nu_i})
+
2\sqrt{|\mathcal S|}\langle \zeta,\theta^{P,\widetilde\nu_i}\rangle.$
Then we have
\[\widetilde{V}_i
\le V_i^*
+ H\psi_i
+ 2\langle \sigma,\widetilde\theta_i\rangle
+ 2\sqrt{|\mathcal S|}\langle \zeta,\theta^{P,\widetilde\nu_i}\rangle.\]

Inequality~\eqref{eq:pf-transition-transfer} bounds the difference between \(\widetilde\theta_i\) and the occupancy measure \(\theta^{P,\widetilde\nu_i}\) of \(\widetilde\nu_i\) under \(P\).
By the construction of $\nu_i$, we have $\theta_h^{P,\widetilde\nu_i}(s,a)\le\frac{1}{1-\psi_i}\theta_h^{P,\nu_i}(s,a)$ for all \(h,s,a\), and $\frac{1}{1-\psi_i}\le\sqrt{\phi_{\mathrm{safe}}}$ by the argument used in the proof of Lemma~\ref{lem:pf-exploration-coverage}.
The resulting \(\sigma\)- and \(\zeta\)-weighted occupancy measures are therefore bounded by \(\sqrt{\phi_{\mathrm{safe}}}\) times the corresponding terms under \(\nu_i\), which are controlled by Lemma~\ref{lem:pf-exploration-coverage} applied to the safe policy \(\nu_i\).
Thus, we obtain
\[2\langle \sigma,\widetilde\theta_i\rangle
+
2\sqrt{|\mathcal S|}\langle \zeta,\theta^{P,\widetilde\nu_i}\rangle
\le
\widetilde{\mathcal O}\!(
\phi_{\mathrm{safe}}\,H^3|\mathcal S|^{2}\sqrt{|\mathcal A|/T}).\]
Together with the bound on \(\psi_i\), we obtain
$\widetilde{V}_i
-
V_i^*
\le
\widetilde{\mathcal O}\!(
\phi_{\mathrm{safe}}H^4|\mathcal S|^{2}
\sqrt{|\mathcal A|/T}).$
This completes the proof.
\end{proof}

\subsection{Proof of Theorem~\ref{thm:pfcstrl_subopt}}
\begin{proof}
We work on the joint event defined at the beginning of this appendix, which holds with probability at least \(1-\delta\).
The bounds of Lemmas~\ref{lem:pf-exploration-coverage}--\ref{lem:pf-fixed-aux-excess} hold simultaneously on this event.
Let $\bar\theta$ be the averaged occupancy measure returned by Algorithm~\ref{alg:pfcstrl-plan-rf}.
We use the optimal occupancy measures $\theta_i^*$, $i\in[m]$, and $\theta_{\boldsymbol\lambda}^*$ introduced in Section~\ref{sec:sketch-subopt}.
\begin{align}
\mathrm{SubOpt}(\widehat\pi)
&=
\stchlm(\pi^{\bar\theta})-\stchlm(\pi^{\theta_{\boldsymbol\lambda}^*})\nonumber \\
&=
\max_{\boldsymbol w\in\Delta_m}
\sum_{i=1}^m
\left[
w_i\lambda_i
\left(
V_i^*+\iota-V_i(\bar \theta)
\right)
-\mu w_i\log w_i
\right] \nonumber\\
& \quad -\max_{\boldsymbol w\in\Delta_m}
\sum_{i=1}^m
\left[
w_i\lambda_i
\left(
V_i^*+\iota-V_i(\theta^*_{\boldsymbol{\lambda}})
\right)
-\mu w_i\log w_i
\right] \nonumber\\
&\le
\max_{\boldsymbol w\in\Delta_m}
\sum_{i=1}^m
\left[
w_i\lambda_i
\left(
\widetilde{V}_i+\iota-V_i(\bar \theta)
\right)
-\mu w_i\log w_i
\right] \nonumber\\
& \quad -\max_{\boldsymbol w\in\Delta_m}
\sum_{i=1}^m
\left[
w_i\lambda_i
\left(
\widetilde{V}_i+\iota-V_i(\theta^*_{\boldsymbol{\lambda}})
\right)
-\mu w_i\log w_i
\right]\nonumber \\
& \quad +
\widetilde{\mathcal O}\!\left(
\phi_{\mathrm{safe}}H^4|\mathcal S|^{2}
\sqrt{|\mathcal A|/T}
+
H/\sqrt K
\right).
\label{eq:pf-reduce-optimistic}
\end{align}
The first equality holds since \(\widehat\pi=\pi^{\bar\theta}\) and \(\pi^{\theta_{\boldsymbol\lambda}^*}\) is an optimal safe policy.
The second equality follows from~\eqref{eq:eq-stch}, since the inner maximum over \(\{\theta_i\in\Theta_{\mathrm{safe}}\}_{i=1}^m\) is attained at \(\theta_i=\theta_i^*\).
For every \(\boldsymbol w\in\Delta_m\), replacing \(V_i^*\) by \(\widetilde{V}_i\) changes the objective inside each maximum by
$\sum_{i=1}^m w_i\lambda_i(\widetilde{V}_i-V_i^*)$,
whose absolute value is at most \(\max_{i\in[m]}|\widetilde{V}_i-V_i^*|\) since \(\lambda_i\le1\) and \(\sum_{i=1}^m w_i=1\).
Thus, each of the two maxima changes by at most this amount, and the inequality follows from Lemma~\ref{lem:pf-fixed-aux-excess}.

We now bound the remaining objective difference on the right-hand side of~\eqref{eq:pf-reduce-optimistic}.
By the definition of \(\bar\theta\), we have
$V_i(\bar\theta)
=
\frac1K\sum_{k=1}^K
V_i(\theta^{P,\pi^{k+1}})$.
Therefore, we obtain
\begin{align}
&\max_{\boldsymbol w\in\Delta_m}
\sum_{i=1}^m
\left[
w_i\lambda_i
\left(
\widetilde{V}_i+\iota-V_i(\bar\theta)
\right)
-\mu w_i\log w_i
\right]  \nonumber \\
& \quad-
\max_{\boldsymbol w\in\Delta_m}
\sum_{i=1}^m
\left[
w_i\lambda_i
\left(
\widetilde{V}_i+\iota-V_i(\theta_{\boldsymbol{\lambda}}^*)
\right)
-\mu w_i\log w_i
\right] \nonumber  \\
&\le
\underbrace{\frac1K\sum_{k=1}^K\sum_{i=1}^m
\lambda_i
\left|
V_i^{k+1}
-
V_i(\theta^{P,\pi^{k+1}})
\right|}_{\mathrm{(I)}} \nonumber\\
&\quad +
\underbrace{
\max_{\boldsymbol w\in\Delta_m}
\frac1K\sum_{k=1}^K
\sum_{i=1}^m
\bigg[
(w_i-w_i^k)\lambda_i
\left(
\widetilde{V}_i+\iota
-V_i^{k+1}
\right) 
-\mu w_i\log w_i
+\mu w_i^k\log w_i^k
\bigg]
}_{\mathrm{(II)}} \nonumber \\
& \quad +\underbrace{\frac1K\sum_{k=1}^K
\sum_{i=1}^m
w_i^k\lambda_i
\left(
V_i(\theta_{\boldsymbol{\lambda}}^*)
-
V_i^{k+1}
\right)}_{\mathrm{(III)}}.
\label{eq:pf-main-decomp}
\end{align}

We first bound Term \(\mathrm{(I)}\). 
For each \(k\), by construction, we have
$V_i^{k+1}
=
\langle \bar r_i,\widetilde\theta^{k+1}\rangle$.
Using \(0\le \bar r_i-r_i\le 2\sigma\),
inequality~\eqref{eq:pf-transition-transfer}, and
$\theta^{P,\pi^{k+1}}
=
(1-\psi^k)\theta^{P,\widetilde\pi^{k+1}}
+
\psi^k\theta^{P,\pi^\dagger},$
we have
\begin{align}
\left|
V_i^{k+1}
-
V_i(\theta^{P,\pi^{k+1}})
\right|
&\le
2\langle \sigma,\widetilde\theta^{k+1}\rangle
+
2\sqrt{|\mathcal S|}\langle \zeta,\theta^{P,\widetilde\pi^{k+1}}\rangle
+
H\psi^k .
\label{eq:pf-main-est-one}
\end{align}
Averaging~\eqref{eq:pf-main-est-one} over \(k=1,\ldots,K\), we use \eqref{eq:pf-transition-transfer} together with
$(1-\psi^k)
\theta_h^{P,\widetilde\pi^{k+1}}(s,a)
\le
\theta_h^{P,\pi^{k+1}}(s,a)$
to bound the first two terms by the corresponding fixed bonus terms evaluated under \(\theta^{P,\pi^{k+1}}\).
This comparison incurs the factor \(\frac{1}{1-\psi^k}\le\sqrt{\phi_{\mathrm{safe}}}\), as established in the proof of Lemma~\ref{lem:pf-exploration-coverage}.
The last term is controlled by Lemma~\ref{lem:pf-planning-mixing} and dominates the first two.
Thus, we have
\begin{align}
\mathrm{(I)}
&\le
\widetilde{\mathcal O}\!\left(
\phi_{\mathrm{safe}}H^4|\mathcal S|^{2}
\sqrt{|\mathcal A|/T}
\right).
\label{eq:pf-term-i}
\end{align}

We next bound Term \(\mathrm{(II)}\). 
The update of \(\boldsymbol w^k\) in the planning stage is the same entropic mirror-ascent update as in Lemma~\ref{lem:oco-stch}.
Its proof applies with the $T-1$ online updates replaced by the $K$ planning
updates and the normalization $1/T$ replaced by $1/K$.
Therefore, we obtain
\begin{align}
\mathrm{(II)}
\le
\widetilde{\mathcal O}\!\left(
\frac{H^2}{\mu K}
+
\frac{\mu}{K}
\right).
\label{eq:pf-term-ii}
\end{align}

It remains to bound Term \(\mathrm{(III)}\). 
Since \(\bar r_i\ge r_i\) and \(\widetilde\theta^{k+1}\ge 0\), 
we have
$V_i(\theta_{\boldsymbol{\lambda}}^*)-V_i^{k+1}
=
\langle r_i,\theta_{\boldsymbol{\lambda}}^*\rangle
-
\langle \bar r_i,\widetilde\theta^{k+1}\rangle
\le
\langle r_i,\theta_{\boldsymbol{\lambda}}^*-\widetilde\theta^{k+1}\rangle$.
Therefore, we have
$\mathrm{(III)}
\le
\frac1K\sum_{k=1}^K
\langle
\sum_{i=1}^m w_i^k\lambda_i r_i,\,
\theta_{\boldsymbol{\lambda}}^*
-
\widetilde\theta^{k+1}
\rangle .$
Since \(\theta_{\boldsymbol{\lambda}}^*\in\Theta_{\mathrm{safe}}\), applying
Lemma~\ref{lem:pf-planning-occupancy} with \(\theta=\theta_{\boldsymbol{\lambda}}^*\) gives
\begin{align}
\mathrm{(III)}
&\le
\widetilde{\mathcal O}\!\left(
{H}/{\sqrt K}
+
\phi_{\mathrm{safe}}\,H^3|\mathcal S|^{2}\sqrt{|\mathcal A|/T}\right).
\label{eq:pf-term-iii}
\end{align}

Combining~\eqref{eq:pf-reduce-optimistic},
\eqref{eq:pf-main-decomp},
\eqref{eq:pf-term-i},
\eqref{eq:pf-term-ii}, and
\eqref{eq:pf-term-iii}, we obtain
\[
\mathrm{SubOpt}(\widehat\pi)
\le
\widetilde{\mathcal O}\!\left(
\phi_{\mathrm{safe}}H^4|\mathcal S|^{2}
\sqrt{\frac{|\mathcal A|}{T}}
+
\frac{H}{\sqrt K}
+
\frac{H^2}{\mu K}
+
\frac{\mu}{K}
\right).
\]
This completes the proof.
\end{proof}

\section{Proof of Theorem~\ref{thm:safety-pfcstrl}}
\label{app:safety-pfcstrl}

\begin{proof}
For the exploration stage, let \(\mathcal E_C(\delta)\) and \(\mathcal E_T(\delta)\) be the online cost and transition confidence events defined at the beginning of Appendix~\ref{app:cstrl-subopt}, applied to the single-rollout estimates constructed by Algorithm~\ref{alg:PFCSTRL}.
For the planning stage, let \(\mathcal E_{\mathrm{fix}}(\delta)\) be the confidence event defined at the beginning of Appendix~\ref{app:pfcstrl-subopt}.
We define the joint confidence event
$\mathcal E_{C,T,\mathrm{fix}}(\delta)
:=
\mathcal E_C(\delta/3)
\cap
\mathcal E_T(\delta/3)
\cap
\mathcal E_{\mathrm{fix}}(\delta/3).$
By the confidence guarantees established in Appendices~\ref{app:cstrl-subopt} and~\ref{app:pfcstrl-subopt}, \(\mathcal E_{C,T,\mathrm{fix}}(\delta)\) holds with probability at least \(1-\delta\).
Throughout the proof, we work on this event.
In particular, we have
$|\widehat C^t-C|\le\Xi^t,
\,
P\in\mathcal P^t,
\,
\forall t\in[T-1],$
and
$|\widehat C-C|\le\Xi,
\,
P\in\mathcal P.$

We first consider the exploration stage.
Algorithm~\ref{alg:PFCSTRL} constructs its initial policy as in Algorithm~\ref{alg:CSTRL}.
The initial policy argument in the proof of Theorem~\ref{thm:safety}
therefore gives
$C^\top\theta^{P,\pi^1}\le\boldsymbol\tau$.
For each \(t\in[T-1]\), the exploration reward in Algorithm~\ref{alg:PFCSTRL} affects only the update of \(\widehat\theta^{t+1}\).
Therefore, the argument in the proof of Theorem~\ref{thm:safety}, applied with the online confidence bounds, gives
$C^\top\theta^{P,\pi^t}
\le
\boldsymbol\tau,
\,
\forall t\in[T].$

We next consider the planning stage.
For every $k\in[K]$, the main planning update uses \texttt{SafeProj} with the fixed quantities $(\widehat C,\Xi,\mathcal P)$.
On $\mathcal E_{C,T,\mathrm{fix}}(\delta)$, $|\widehat C-C|\le\Xi$ and $P\in\mathcal P$ hold at every planning iteration.
The argument in the proof of Theorem~\ref{thm:safety} therefore applies to each main planning policy.
We obtain
$C^\top\theta^{P,\pi^{k+1}}
\le
\boldsymbol\tau,
\,
\forall k\in[K].$

Finally, we show that the returned policy is safe.
Since \(\theta^{P,\pi^{k+1}}\in\Theta_{\mathrm{safe}}\) for every \(k\in[K]\) and \(\Theta_{\mathrm{safe}}\) is convex, the returned occupancy measure satisfies
$\bar\theta
=
\frac1K\sum_{k=1}^K
\theta^{P,\pi^{k+1}}
\in
\Theta_{\mathrm{safe}}.$
Let \(\widehat\pi=\pi^{\bar\theta}\) be the induced Markov policy.
By the correspondence between policies and occupancy measures established in Appendix~\ref{app:safe_value_polytope}, we have \(\theta^{P,\widehat\pi}=\bar\theta\), and therefore \(\mathrm{Violation}(\widehat\pi)=0\).
When \(d=1\), these inequalities guarantee that each constraint is satisfied at the time step to which it applies.
When \(d=H\), they guarantee that each cumulative constraint is satisfied over the entire episode.
This completes the proof.
\end{proof}

\section{Proofs for the Unknown Constraint Cost Extensions}
\label{app:unknown-gamma}

\begin{lemma}
\label{lem:uc-estimation}
Suppose Assumptions~\ref{ass:slater} and~\ref{ass:unknown-gamma} hold and $T>1+8d^2\ln(4nT/\delta)/(\varrho^\dagger)^2$.
Then, with probability at least $1-\delta/2$, we have $t_s\le1+8d^2\ln(4nT/\delta)/(\varrho^\dagger)^2<T$, $\gamma_j\le\widehat\gamma_j\le(\gamma_j+\tau_j)/2<\tau_j$ for every $j\in[n]$, and $\widehat{\phi}_{\mathrm{safe}}\le4\phi_{\mathrm{safe}}$.
\end{lemma}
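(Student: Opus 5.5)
We work on a single concentration event for the empirical costs of $\pi^\dagger$. During the estimation phase every executed policy is $\pi^\dagger$, so for each $j\in[n]$ the episode costs $X_j^k:=\sum_{h=1}^H c_{j,h}^k$ are i.i.d.\ random variables in $[0,d]$ with mean $\gamma_j$. This holds because constraint $j$ is supported on $d$ steps and each $c_{j,h}^k\in[0,1]$. Hoeffding's inequality, with a union bound over $j\in[n]$ and $t\in[T]$, then gives the event
\[
\mathcal G:=\Bigl\{|\bar\gamma_j^t-\gamma_j|\le b_t:=d\sqrt{\ln(4nT/\delta)/(2t)},\ \forall j\in[n],\,t\in[T]\Bigr\}.
\]
Its failure probability is at most $2nT\exp(-\ln(4nT/\delta))=\delta/2$. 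The rest of the argument is deterministic on $\mathcal G$.

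\emph{Bound on $t_s$.} On $\mathcal G$ we have $\tau_j-\bar\gamma_j^t\ge\tau_j-\gamma_j-b_t\ge\varrho^\dagger-b_t$ for every $j$ and $t$. The stopping rule asks for $\tau_j-\bar\gamma_j^t\ge 3b_t$ for all $j$, so it is triggered as soon as $\varrho^\dagger\ge4b_t$. This is equivalent to $t\ge 8d^2\ln(4nT/\delta)/(\varrho^\dagger)^2$. The first integer with this property is at most $1+8d^2\ln(4nT/\delta)/(\varrho^\dagger)^2$, and by hypothesis this quantity is strictly less than $T$. Hence the minimum in the definition of $t_s$ is attained at or before that integer, which gives both inequalities for $t_s$.

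\emph{Bounds on $\widehat\gamma_j$.} Since $t_s<T$, the stopping condition actually holds at $t_s$, so $\tau_j-\bar\gamma_j^{t_s}\ge3b_{t_s}$. The lower bound $\widehat\gamma_j=\bar\gamma_j^{t_s}+b_{t_s}\ge\gamma_j$ follows directly from $\mathcal G$. For the upper bound we need $\bar\gamma_j^{t_s}+b_{t_s}\le(\gamma_j+\tau_j)/2$, which is equivalent to $2\bar\gamma_j^{t_s}+2b_{t_s}\le\gamma_j+\tau_j$. Using $\bar\gamma_j^{t_s}\le\gamma_j+b_{t_s}$ on one copy of $\bar\gamma_j^{t_s}$ and $\bar\gamma_j^{t_s}\le\tau_j-3b_{t_s}$ on the other gives
\[
2\bar\gamma_j^{t_s}+2b_{t_s}\le(\gamma_j+b_{t_s})+(\tau_j-3b_{t_s})+2b_{t_s}=\gamma_j+\tau_j.
\]
This is the main step, and it is exactly where the constant $3$ in the stopping threshold is needed. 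Finally, $(\gamma_j+\tau_j)/2<\tau_j$ because $\gamma_j<\tau_j$ by Assumption~\ref{ass:slater}.

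\emph{Bound on $\widehat\phi_{\mathrm{safe}}$.} The upper bound on $\widehat\gamma_j$ gives $\tau_j-\widehat\gamma_j\ge(\tau_j-\gamma_j)/2>0$. Therefore $d^2/(\tau_j-\widehat\gamma_j)^2\le4d^2/(\tau_j-\gamma_j)^2$ for every $j$, and taking the maximum over $j$ yields $\widehat\phi_{\mathrm{safe}}\le4\phi_{\mathrm{safe}}$.

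The only delicate point is the union bound over all $t\in[T]$, since $t_s$ is a random time. This is handled by requiring $\mathcal G$ to hold uniformly in $t$ rather than at a fixed $t$, which is why the logarithm contains $nT$.
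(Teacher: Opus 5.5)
Your proof is correct and follows the paper's argument. You use the same uniform Hoeffding event over $j\in[n]$ and $t\in[T]$, the same step $\widehat\gamma_j=\tfrac12(\bar\gamma_j^{t_s}-b_{t_s})+\tfrac12(\bar\gamma_j^{t_s}+3b_{t_s})\le(\gamma_j+\tau_j)/2$ (written as $2\bar\gamma_j^{t_s}+2b_{t_s}\le\gamma_j+\tau_j$), and the same consequence for $\widehat\phi_{\mathrm{safe}}$; your bound on $t_s$, showing the rule must fire once $t\ge 8d^2\ln(4nT/\delta)/(\varrho^\dagger)^2$, is just the contrapositive of the paper's argument that the rule did not fire at $t_s-1$.
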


\begin{proof}
Recall the empirical costs \(\bar\gamma_j^t\), the stopping time \(t_s\), and the estimate \(\widehat{\boldsymbol\gamma}\) defined in Section~\ref{sec:unknown-gamma}.
By Assumption~\ref{ass:slater}, we have $\varrho^\dagger>0$.
For each $j\in[n]$, the total cost observed in one episode belongs to $[0,d]$.

By Hoeffding's inequality and a union bound over $j\in[n]$ and $t\in[T]$, with probability at least $1-\delta/2$, we have
\begin{align}
|\bar\gamma_j^t-\gamma_j|
\le
d\sqrt{\frac{\ln(4nT/\delta)}{2t}}
\quad
\text{for all } j\in[n] \text{ and } t\in[T].
\label{eq:uc-concentration}
\end{align}
We work on this event throughout the proof.

To bound $t_s$, we may assume $t_s\ge2$, since the stated bound is immediate otherwise.
By the definition of $t_s$, the stopping rule is not triggered at episode $t_s-1$.
Then, for some $j\in[n]$, we have $\tau_j-\bar\gamma_j^{t_s-1}<3d\sqrt{\frac{\ln(4nT/\delta)}{2(t_s-1)}}$.
Using~\eqref{eq:uc-concentration} at $t=t_s-1$, we obtain
\[
\varrho^\dagger
\le
\tau_j-\gamma_j
\le
\tau_j-\bar\gamma_j^{t_s-1}
+
d\sqrt{\frac{\ln(4nT/\delta)}{2(t_s-1)}}
<
4d\sqrt{\frac{\ln(4nT/\delta)}{2(t_s-1)}}.
\]
Solving this inequality for $t_s-1$ yields $t_s\le1+8d^2\ln(4nT/\delta)/(\varrho^\dagger)^2$.
By the assumed lower bound on $T$, this bound implies $t_s<T$.
The stopping condition therefore holds at $t_s$.

By the definition of $\widehat\gamma_j$, the stopping condition at $t_s$, and~\eqref{eq:uc-concentration} at $t=t_s$, we have
\[
\gamma_j
\le
\widehat\gamma_j
=
\frac12\Biggl(\bar\gamma_j^{t_s}-d\sqrt{\frac{\ln(4nT/\delta)}{2t_s}}\Biggr)
+
\frac12\Biggl(\bar\gamma_j^{t_s}+3d\sqrt{\frac{\ln(4nT/\delta)}{2t_s}}\Biggr)
\le
\frac{\gamma_j+\tau_j}{2}.
\]
This implies $\tau_j-\widehat{\gamma}_j\ge(\tau_j-\gamma_j)/2$, and therefore $\widehat{\phi}_{\mathrm{safe}}\le4\phi_{\mathrm{safe}}$.
\end{proof}

The proofs of Theorems~\ref{thm:subopt}-\ref{thm:safety-pfcstrl} use $\boldsymbol\gamma$ only as an upper bound on $C^\top\theta^{P,\pi^\dagger}$ that is strictly below $\boldsymbol\tau$.
They therefore remain valid with $\widehat{\boldsymbol\gamma}$ in place of $\boldsymbol\gamma$ and $\widehat{\phi}_{\mathrm{safe}}$ in place of $\phi_{\mathrm{safe}}$ whenever $\boldsymbol\gamma\le\widehat{\boldsymbol\gamma}<\boldsymbol\tau$.

\subsection{Proof of Theorem~\ref{thm:unknown-gamma-subopt}}

\begin{proof}
By Lemma~\ref{lem:uc-estimation}, with probability at least $1-\delta/2$, we have $t_s<T$, $\boldsymbol\gamma\le\widehat{\boldsymbol\gamma}<\boldsymbol\tau$, and $\widehat{\phi}_{\mathrm{safe}}\le4\phi_{\mathrm{safe}}$.
Conditioned on any estimation-phase history in this event, $t_s$ and $\widehat{\boldsymbol\gamma}$ are fixed, and the proof of Theorem~\ref{thm:subopt} applies to the remaining $T-t_s$ episodes with $\widehat{\boldsymbol\gamma}$ in place of $\boldsymbol\gamma$ and $\delta$ replaced by $\delta/2$.
This gives the bound in Theorem~\ref{thm:unknown-gamma-subopt} with conditional probability at least $1-\delta/2$, and a union bound completes the proof.
\end{proof}

\subsection{Proof of Theorem~\ref{thm:unknown-gamma-safety}}

\begin{proof}
By Lemma~\ref{lem:uc-estimation}, with probability at least $1-\delta/2$, we have $t_s<T$ and $\boldsymbol\gamma\le\widehat{\boldsymbol\gamma}<\boldsymbol\tau$.
Every policy executed during the estimation phase is $\pi^\dagger$ and is therefore safe.
Conditioned on any estimation-phase history in this event, $t_s$ and $\widehat{\boldsymbol\gamma}$ are fixed, and the proof of Theorem~\ref{thm:safety} applies to the remaining $T-t_s$ episodes with $\widehat{\boldsymbol\gamma}$ in place of $\boldsymbol\gamma$ and $\delta$ replaced by $\delta/2$.
Thus, with conditional probability at least $1-\delta/2$, every policy executed in \texttt{CSTRL} is safe and $\mathrm{Violation}(\widehat\pi)=0$, and a union bound completes the proof.
\end{proof}

\subsection{Proof of Theorem~\ref{thm:pfcstrl-uc-subopt}}

\begin{proof}
By Lemma~\ref{lem:uc-estimation}, with probability at least $1-\delta/2$, we have $t_s<T$, $\boldsymbol\gamma\le\widehat{\boldsymbol\gamma}<\boldsymbol\tau$, and $\widehat{\phi}_{\mathrm{safe}}\le4\phi_{\mathrm{safe}}$.
Conditioned on any estimation-phase history in this event, $t_s$ and $\widehat{\boldsymbol\gamma}$ are fixed, and the proof of Theorem~\ref{thm:pfcstrl_subopt} applies to the remaining $T-t_s$ exploration episodes and $K$ planning iterations with $\widehat{\boldsymbol\gamma}$ in place of $\boldsymbol\gamma$ and $\delta$ replaced by $\delta/2$.
This gives the bound in Theorem~\ref{thm:pfcstrl-uc-subopt} with conditional probability at least $1-\delta/2$, and a union bound completes the proof.
\end{proof}

\subsection{Proof of Theorem~\ref{thm:pfcstrl-uc-safety}}

\begin{proof}
By Lemma~\ref{lem:uc-estimation}, with probability at least $1-\delta/2$, we have $t_s<T$ and $\boldsymbol\gamma\le\widehat{\boldsymbol\gamma}<\boldsymbol\tau$.
Every policy executed during the estimation phase is $\pi^\dagger$ and is therefore safe.
Conditioned on any estimation-phase history in this event, $t_s$ and $\widehat{\boldsymbol\gamma}$ are fixed, and the proof of Theorem~\ref{thm:safety-pfcstrl} applies with $\widehat{\boldsymbol\gamma}$ in place of $\boldsymbol\gamma$ and $\delta$ replaced by $\delta/2$.
Thus, with conditional probability at least $1-\delta/2$, every policy executed during preference-free exploration and every main policy generated during planning is safe, and $\mathrm{Violation}(\widehat{\pi})=0$.
A union bound completes the proof.
\end{proof}

\section{Proofs for the Unknown Safe Policy Extensions}
\label{app:us}
We give the procedure for \texttt{CSTRL-US} in Algorithm~\ref{alg:cstrl_us}.
\begin{algorithm}[htbp]
\caption{\texttt{CSTRL-US}: \texttt{CSTRL} with an Unknown Safe Policy}
\label{alg:cstrl_us}
\begin{algorithmic}[1]
\Require $\boldsymbol{\lambda}$, $\mu$, $\mathtt{Alg}^P$, $\mathtt{Alg}^D$
\State Set $t=0$
\While{$t<t_s$}
\State Set $t=t+1$
\State Obtain $\pi^t$ from $\mathtt{Alg}^P$ and $\phi^t\in\Delta_n$ from $\mathtt{Alg}^D$
\State Execute $\pi^t$ and observe $\{(s_h^t,a_h^t,\{c_{j,h}^t\}_{j=1}^n)\}_{h=1}^H$
\State Execute $\mathtt{Alg}^P$ with $\{(s_h^t,a_h^t,\ell_h^t)\}_{h=1}^H$
\State Execute $\mathtt{Alg}^D$ with $\varrho^t$
\EndWhile
\State Compute
$\widehat{\varrho}:=
\frac{1}{t_s}
\min_{j\in[n]}
\sum_{t=1}^{t_s}\varrho_j^t
-\frac{2d}{t_s}
\sqrt{2t_s\ln\frac{24nt_s^2}{\delta}}$
\State Let $\widehat{\pi}^{\dagger}$ select $\pi^t$ w.p. $1/t_s$ for each $t\in[t_s]$
\State Set
$\widehat{\boldsymbol{\gamma}} :=(\widehat\gamma_1,\ldots,\widehat\gamma_n)$, where $\widehat\gamma_j:=\tau_j-\widehat\varrho$ for all $j\in[n]$
\State Run \texttt{CSTRL} for $T-t_s$ episodes using
$\widehat\pi^\dagger$ and $\widehat{\boldsymbol{\gamma}}$
\State {\bfseries Return:} $\bar\theta$
\end{algorithmic}
\end{algorithm}

\subsection{Lemmas}

\begin{lemma}
\label{lem:us_martingale}
Let $\mathcal E_M(\delta)$ denote the event that the following inequalities hold simultaneously for every $t\ge 1$.
\begin{align}
&\left|
\sum_{k=1}^t
\sum_{j=1}^n
\phi_j^k
\left[
\sum_{h=1}^H c_{j,h}^k
-
\bigl(C^\top\theta^{P,\pi^k}\bigr)_j
\right]
\right|
\le
d\sqrt{
2t\ln\frac{8nt^2}{\delta}
},
\label{eq:us_weighted_concentration}
\\
&\left|
\sum_{k=1}^t
\left[
\sum_{h=1}^H c_{j,h}^k
-
\bigl(C^\top\theta^{P,\pi^k}\bigr)_j
\right]
\right|
\le
d\sqrt{
2t\ln\frac{8nt^2}{\delta}
},
\quad
\forall j\in[n].
\label{eq:us_coordinate_concentration}
\end{align}
Then $\Pr(\mathcal E_M(\delta))\ge1-\delta$.
\end{lemma}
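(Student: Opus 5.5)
The plan is to establish each inequality as an anytime Azuma--Hoeffding bound for a bounded martingale difference sequence. A union bound over $t\ge1$, using the weights $\delta/(\cdot\, t^2)$, and over the $n+1$ inequalities then gives the result.

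Let $\mathcal F_{k-1}$ be the history up to the end of episode $k-1$. By construction, $\mathtt{Alg}^P$ outputs $\pi^k$ and $\mathtt{Alg}^D$ outputs $\phi^k$ using only $\{(s_h^{k'},a_h^{k'},\ell_h^{k'})\}_{k'<k}$ and $\{\varrho^{k'}\}_{k'<k}$. Hence both are $\mathcal F_{k-1}$-measurable.

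Define $X_{j}^k:=\sum_{h=1}^H c_{j,h}^k-(C^\top\theta^{P,\pi^k})_j$. The observed costs satisfy $\mathbb E[c_{j,h}^k\mid s_h^k,a_h^k]=c_{j,h}(s_h^k,a_h^k)$, and the state--action pairs in episode $k$ are distributed according to $\theta^{P,\pi^k}$. The tower property therefore gives $\mathbb E[X_j^k\mid\mathcal F_{k-1}]=0$.

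Only the $d$ steps to which constraint $j$ applies contribute to the sum, since $c_{j,h}^k\equiv0$ elsewhere. So $\sum_h c_{j,h}^k\in[0,d]$ and $(C^\top\theta^{P,\pi^k})_j\in[0,d]$, which gives $X_j^k\in[-d,d]$. Similarly, $Y^k:=\sum_j\phi_j^kX_j^k$ is a martingale difference with $|Y^k|\le d$, because $\phi^k\in\Delta_n$ is predictable.

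For fixed $t$, Azuma--Hoeffding yields
\[
\Pr\Bigl(\Bigl|\sum_{k=1}^tY^k\Bigr|>d\sqrt{2t\ln(2/\delta_t)}\Bigr)\le\delta_t,
\]
and the same bound holds for each coordinate sum $\sum_{k\le t}X_j^k$. I will choose $\delta_t:=\delta/(4nt^2)$, so that $\ln(2/\delta_t)=\ln(8nt^2/\delta)$, which matches the stated radius.

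Summing over $t\ge1$ and over the $n+1$ inequalities, the total failure probability is at most
\[
(n+1)\sum_{t\ge1}\frac{\delta}{4nt^2}=\frac{(n+1)\pi^2}{24n}\,\delta\le\frac{\pi^2}{12}\,\delta<\delta,
\]
using $n+1\le 2n$. This proves $\Pr(\mathcal E_M(\delta))\ge1-\delta$.

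The main subtlety is measurability rather than the computation. Since $t_s$ is random, the bound must hold simultaneously for all $t$, which the anytime union bound handles. I also need that the future-dependent quantities, namely $\pi^k$ and $\phi^k$, are predictable, and that the within-episode noise averages correctly to the occupancy measure. If one needs to avoid defining the process beyond $t_s$, the sequence can be extended by zero differences after the estimation phase, which keeps it a martingale.
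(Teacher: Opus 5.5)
Your proposal is correct and matches the paper's argument: $\phi^k$ and $\pi^k$ are predictable, the martingale differences are bounded by $d$, you apply Azuma--Hoeffding at level $\delta/(4nt^2)$ for each fixed $t$, and you take a union bound over $t\ge1$ and the $n+1$ inequalities. The only cosmetic difference is that you combine $\sum_{t\ge1}t^{-2}=\pi^2/6$ with $n+1\le 2n$ in one step, whereas the paper uses $\sum_{t\ge1}t^{-2}\le 2$ and splits the failure probability into $\delta/(2n)$ for the weighted inequality and $\delta/2$ for the coordinate inequalities.
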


\begin{proof}
Let $(\mathcal F_k)_{k\ge 0}$ be the filtration generated by the history up to the end of episode $k$.
For each $k\ge1$, we define
$X^k
:=
\sum_{j=1}^n
\phi_j^k
[
\sum_{h=1}^H c_{j,h}^k
-
(C^\top\theta^{P,\pi^k})_j].$
Since $\phi^k\in\Delta_n$ and $\pi^k$ are $\mathcal F_{k-1}$-measurable, the law of iterated expectations and the observation model give
$\mathbb E[
\sum_{h=1}^H c_{j,h}^k|
\mathcal F_{k-1}]
=
(C^\top\theta^{P,\pi^k})_j.$
Moreover, since $c_{j,h}^k\in[0,1]$ and $\phi^k$ lies in the simplex, we have $|X^k|\le d$ almost surely.
For each fixed $t\ge1$, the Azuma-Hoeffding inequality gives
$\Pr(
|\sum_{k=1}^t X^k|
>
d\sqrt{2t\ln\frac{8nt^2}{\delta}})
\le
\frac{\delta}{4nt^2}.$
Taking a union bound over all $t\ge1$, we conclude that~\eqref{eq:us_weighted_concentration} holds simultaneously for every $t\ge1$ with probability at least
$1-\sum_{t=1}^{\infty}\frac{\delta}{4nt^2}
\ge
1-\frac{\delta}{2n},$
where we use $\sum_{t=1}^{\infty}t^{-2}\le2$.

Next, fix $j\in[n]$ and define
$Y_j^k
:=
\sum_{h=1}^H c_{j,h}^k
-
(C^\top\theta^{P,\pi^k})_j.$
By the same argument, $(Y_j^k)_{k\ge1}$ is a martingale difference sequence with respect to $(\mathcal F_k)_{k\ge0}$, and $|Y_j^k|\le d$.
For each fixed $t\ge1$, applying the Azuma-Hoeffding inequality to $(Y_j^k)_{k\ge1}$ gives
$\Pr(
|\sum_{k=1}^t Y_j^k|
>
d\sqrt{2t\ln\frac{8nt^2}{\delta}})
\le
\frac{\delta}{4nt^2}.$
Taking a union bound over all \(j\in[n]\) and \(t\ge1\) shows that~\eqref{eq:us_coordinate_concentration} holds simultaneously for every \(j\in[n]\) and \(t\ge1\) with probability at least
$1-
\sum_{j=1}^n
\sum_{t=1}^{\infty}
\frac{\delta}{4nt^2}
\ge
1-\frac{\delta}{2},$
where we use \(\sum_{t=1}^{\infty}t^{-2}\le2\).

By the definition of $\mathcal E_M(\delta)$, the preceding probability bounds and a union bound give $\Pr(\mathcal E_M(\delta))\ge1-\delta$.
This completes the proof.
\end{proof}

\begin{lemma}
\label{lem:us_empirical_surplus}
Let $\mathcal E_P(\delta)$ be the event on which the regret bound of $\mathtt{Alg}^P$ in Section~\ref{sec:cstrl-us} holds.
On $\mathcal E_P(\delta/3)\cap\mathcal E_M(\delta/3)$, for every $t\ge 1$, we have
\begin{equation}
\min_{j\in[n]}
\sum_{k=1}^t\varrho_j^k
\ge
t\varrho
-
\mathcal O\Bigl(H^2|\mathcal S|\sqrt{|\mathcal A|t\ln(2H|\mathcal S||\mathcal A|t/\delta)}+d\sqrt{nt}\Bigr)
-
d\sqrt{
2t\ln\frac{24nt^2}{\delta}
},
\label{eq:us_empirical_surplus}
\end{equation}
where the $\mathcal O(\cdot)$ term is the sum of the regret bounds of $\mathtt{Alg}^P$ and $\mathtt{Alg}^D$.
\end{lemma}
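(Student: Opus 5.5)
The plan is to pass from the empirical surplus $\sum_{k=1}^t\varrho_j^k$ to a saddle-point comparison against a Slater-optimal policy, using the two regret guarantees and the martingale event of Lemma~\ref{lem:us_martingale}.

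First, fix $t\ge1$ and let $\pi^\star\in\argmax_{\pi\in\Pi}\min_{j}\{\tau_j-(C^\top\theta^{P,\pi})_j\}$. Such a maximizer exists because the objective is continuous over the compact set $\Theta(P)$, and it attains $\varrho$. Since $\varrho_j^k=\tau_j-\sum_h c_{j,h}^k$, the expected surplus of episode $k$ under $\phi^k$ equals $-\bar\ell^k(\pi^k)=\sum_j\phi_j^k[\tau_j-(C^\top\theta^{P,\pi^k})_j]$.

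Second, I will use the regret bound of $\mathtt{Alg}^P$ with comparator $\pi^\star$, which holds on $\mathcal E_P(\delta/3)$:
\[
\sum_{k=1}^t -\bar\ell^k(\pi^k)\ \ge\ \sum_{k=1}^t -\bar\ell^k(\pi^\star)-\mathrm{Reg}^P_t\ \ge\ t\varrho-\mathrm{Reg}^P_t .
\]
The last step holds because $-\bar\ell^k(\pi^\star)=\sum_j\phi_j^k(\tau_j-(C^\top\theta^{P,\pi^\star})_j)\ge\varrho$, as $\phi^k\in\Delta_n$. Next, I convert this expected weighted surplus into the realized one. Using \eqref{eq:us_weighted_concentration} with $\delta/3$, I get
\[
\sum_k\langle\phi^k,\varrho^k\rangle\ \ge\ \sum_k -\bar\ell^k(\pi^k)-d\sqrt{2t\ln(24nt^2/\delta)}.
\]

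Third, I invoke the dual regret of $\mathtt{Alg}^D$. For every fixed $\phi\in\Delta_n$ it gives $\sum_k\langle\phi,\varrho^k\rangle\ge\sum_k\langle\phi^k,\varrho^k\rangle-\mathcal O(d\sqrt{nt})$. Note that $\mathtt{Alg}^D$ is a minimizer of $\langle\phi,\varrho\rangle$, so its regret inequality runs in this direction. Taking $\phi=e_j$ for the minimizing coordinate $j$ then yields
\[
\min_j\sum_{k=1}^t\varrho_j^k\ \ge\ t\varrho-\mathrm{Reg}^P_t-\mathcal O(d\sqrt{nt})-d\sqrt{2t\ln(24nt^2/\delta)},
\]
which is \eqref{eq:us_empirical_surplus}.

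The main obstacle is bookkeeping the sign conventions: the loss $\ell$ is the negated surplus, and $\mathtt{Alg}^D$ must be read as minimizing weighted surplus, i.e. as the player who picks the tightest constraint. I also need to check that the regret bound of $\mathtt{Alg}^P$ is stated against any fixed policy while $\phi^k$ is adaptive, which is fine because $\pi^\star$ is fixed in advance. Finally, I must confirm that uniformity over all $t$ comes from the anytime forms of both the regret bounds and Lemma~\ref{lem:us_martingale}, with the logarithmic factor adjusted from $\delta$ to $\delta/3$.
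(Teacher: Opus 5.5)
Your proposal is correct and follows the paper's proof essentially step for step: the regret of $\mathtt{Alg}^P$ against a comparator policy gives $t\varrho$ minus the primal regret, the weighted martingale bound \eqref{eq:us_weighted_concentration} at level $\delta/3$ passes to realized surpluses, and the regret of $\mathtt{Alg}^D$ is evaluated at the tightest coordinate. The only difference is cosmetic. You fix a Slater-optimal maximizer $\pi^\star$ and justify its existence by compactness of $\Theta(P)$, which needs only a single fixed comparator. The paper instead takes an arbitrary $\pi\in\Pi$ and then maximizes, which relies on the regret bound holding uniformly over comparators.
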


\begin{proof}
Fix any \(t\ge1\) and \(\pi\in\Pi\).
For each episode \(k\), we define \(\bar\varrho^k\in\mathbb R^n\) by
\(\bar\varrho_j^k
:=
\tau_j-(C^\top\theta^{P,\pi^k})_j\) for every \(j\in[n]\).
By the definition of $\bar\ell^k$ in Section~\ref{sec:cstrl-us} and $\phi^k\in\Delta_n$, we have $\langle\phi^k,\bar\varrho^k\rangle=-\bar\ell^k(\pi^k)$ and $-\bar\ell^k(\pi)\ge\min_{j\in[n]}\{\tau_j-(C^\top\theta^{P,\pi})_j\}$.
The primal regret bound on $\mathcal E_P(\delta/3)$ therefore gives
\[
\sum_{k=1}^{t}\langle \phi^k,\bar\varrho^k\rangle
\ge
t\min_{j\in[n]}\{\tau_j-(C^\top\theta^{P,\pi})_j\}
-\mathcal O\bigl(H^2|\mathcal S|\sqrt{|\mathcal A|t\ln(2H|\mathcal S||\mathcal A|t/\delta)}\bigr).
\]
Since $\pi\in\Pi$ is arbitrary, the definition of $\varrho$ yields
\begin{equation}
\sum_{k=1}^{t}\langle \phi^k,\bar\varrho^k\rangle
\ge
t\varrho-\mathcal O\bigl(H^2|\mathcal S|\sqrt{|\mathcal A|t\ln(2H|\mathcal S||\mathcal A|t/\delta)}\bigr).
\label{eq:us_expected_dual_lb}
\end{equation}
Next, on $\mathcal E_M(\delta/3)$, Lemma~\ref{lem:us_martingale} gives
\begin{equation}
\sum_{k=1}^{t}\langle \phi^k,\varrho^k\rangle
\ge
\sum_{k=1}^{t}\langle \phi^k,\bar\varrho^k\rangle
-d\sqrt{2t\ln\tfrac{24nt^2}{\delta}}.
\label{eq:us_realized_dual_lb}
\end{equation}
Combining~\eqref{eq:us_expected_dual_lb} and~\eqref{eq:us_realized_dual_lb} gives
\begin{equation}
\sum_{k=1}^{t}\langle \phi^k,\varrho^k\rangle
\ge
t\varrho
-\mathcal O\bigl(H^2|\mathcal S|\sqrt{|\mathcal A|t\ln(2H|\mathcal S||\mathcal A|t/\delta)}\bigr)
-d\sqrt{2t\ln\tfrac{24nt^2}{\delta}}.
\label{eq:us_dual_progress_lb}
\end{equation}
We next derive a lower bound on $\min_{j\in[n]}\sum_{k=1}^t\varrho_j^k$.
By the dual regret bound and linearity over the simplex, we have
\[
\sum_{k=1}^{t}\langle \phi^k,\varrho^k\rangle
\le
\min_{\phi\in\Delta_n}
\sum_{k=1}^{t}\langle \phi,\varrho^k\rangle
+
\mathcal O(d\sqrt{nt})
=
\min_{j\in[n]}
\sum_{k=1}^{t}\varrho_j^k
+
\mathcal O(d\sqrt{nt}).
\]
Combining this inequality with~\eqref{eq:us_dual_progress_lb} gives~\eqref{eq:us_empirical_surplus}.
This completes the proof.
\end{proof}

\begin{lemma}
\label{lem:us_stop_time}
Suppose that $T$ satisfies the condition in Section~\ref{sec:cstrl-us}.
On $\mathcal E_P(\delta/3)\cap\mathcal E_M(\delta/3)$, we have $t_s<T$ and $t_s=\widetilde{\mathcal O}((H^4|\mathcal S|^2|\mathcal A|+nd^2)/\varrho^2)$.
\end{lemma}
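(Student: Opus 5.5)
The plan is to combine Lemma~\ref{lem:us_empirical_surplus} with the definition of the stopping rule. On $\mathcal E_P(\delta/3)\cap\mathcal E_M(\delta/3)$, inequality~\eqref{eq:us_empirical_surplus} gives, for every $t\ge1$,
\[
\min_{j\in[n]}\sum_{k=1}^t\varrho_j^k \ge t\varrho - B(t),
\]
where $B(t):=c_0\bigl(H^2|\mathcal S|\sqrt{|\mathcal A|}+d\sqrt n\bigr)\sqrt{t\,L(t)}$ with $L(t):=\ln(24nH|\mathcal S||\mathcal A|t^2/\delta)$ and a universal constant $c_0$. Here I absorb the martingale term $d\sqrt{2t\ln(24nt^2/\delta)}$ into the $d\sqrt{n}$ part, using $n\ge1$. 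The stopping threshold in Section~\ref{sec:cstrl-us} has the same form, say $B'(t)=c_1(H^2|\mathcal S|\sqrt{|\mathcal A|}+d\sqrt n)\sqrt{t\ln(2nH|\mathcal S||\mathcal A|t/\delta)}$. Since the constants hidden in its $\mathcal O(\cdot)$ can be chosen freely, and the two logarithms differ only by constant factors, we have $B'(t)\le c_2(H^2|\mathcal S|\sqrt{|\mathcal A|}+d\sqrt n)\sqrt{tL(t)}$ for a universal $c_2$.

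A sufficient condition for stopping at $t$ is then $t\varrho\ge B(t)+B'(t)$. This is equivalent to
\[
\sqrt t \ge c_3\,\frac{\bigl(H^2|\mathcal S|\sqrt{|\mathcal A|}+d\sqrt n\bigr)\sqrt{L(t)}}{\varrho},
\]
that is, $t\ge c_3^2(H^2|\mathcal S|\sqrt{|\mathcal A|}+d\sqrt n)^2L(t)/\varrho^2$. Using $(a+b)^2\le 2a^2+2b^2$, the right-hand side is at most $2c_3^2(H^4|\mathcal S|^2|\mathcal A|+nd^2)L(t)/\varrho^2$.

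The main obstacle is that $L(t)$ also depends on $t$, so the inequality is implicit. I would handle it with the standard lemma that $t\ge 2A\ln(A\,e')$ implies $t\ge A\ln(e' t)$ for $A\ge1$, where $e'$ collects the factors $24nH|\mathcal S||\mathcal A|/\delta$ and the power $t^2$ contributes a factor $2$. Taking
\[
t^\star:=\Bigl\lceil C\,\frac{H^4|\mathcal S|^2|\mathcal A|+nd^2}{\varrho^2}\,\ln\!\Bigl(\frac{nH|\mathcal S||\mathcal A|(H^4|\mathcal S|^2|\mathcal A|+nd^2)}{\delta\varrho^2}\Bigr)\Bigr\rceil
\]
for a sufficiently large constant $C$, the sufficient condition holds at $t=t^\star$. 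The minimum in the definition of $t_s$ therefore satisfies $t_s\le t^\star=\widetilde{\mathcal O}((H^4|\mathcal S|^2|\mathcal A|+nd^2)/\varrho^2)$, with logarithms only in $n,H,|\mathcal S|,|\mathcal A|,1/\delta,1/\varrho$.

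Finally, the condition on $T$ in Section~\ref{sec:cstrl-us} is exactly of this form. Interpreting it as $T>t^\star$ with the same constant gives $t^\star\le T$, so the minimum is attained inside $[T]$. Strictness, $t_s<T$, follows because the condition requires $T$ to strictly exceed this quantity; if needed, I would enlarge $C$ by one unit. Because all bounds from Lemma~\ref{lem:us_empirical_surplus} hold simultaneously for every $t$ on the stated event, no extra union bound is needed, and the conclusion holds on $\mathcal E_P(\delta/3)\cap\mathcal E_M(\delta/3)$.
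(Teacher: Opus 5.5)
Your proposal is correct and follows the paper's route. You combine Lemma~\ref{lem:us_empirical_surplus} with the stopping rule to exhibit a time at which the stopping condition provably holds, bound that time by inverting the implicit $\sqrt{t\ln t}$ inequality, and read off $t_s<T$ from the condition on $T$.

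The differences are cosmetic. The paper takes the minimal $t$ with $t\varrho$ at least twice the threshold, which relies on the threshold constant dominating the surplus error, and bounds this $t$ through minimality. You instead use the sufficient condition $t\varrho\ge B(t)+B'(t)$ and an explicit $t^\star$ from a log-inversion lemma. This avoids that dominance assumption and makes the $\widetilde{\mathcal O}$ step explicit.
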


\begin{proof}
Let $t\ge1$ be the smallest integer for which $t\varrho$ is at least twice the stopping threshold.
The condition on $T$ in Section~\ref{sec:cstrl-us} means that $T>t$.
By Lemma~\ref{lem:us_empirical_surplus}, we have
\[
\begin{aligned}
\min_{j\in[n]}\sum_{k=1}^t\varrho_j^k
&\ge
t\varrho
-\mathcal O\Bigl(H^2|\mathcal S|\sqrt{|\mathcal A|t\ln(2H|\mathcal S||\mathcal A|t/\delta)}+d\sqrt{nt}\Bigr)
-d\sqrt{2t\ln\frac{24nt^2}{\delta}}
\\
&\ge
t\varrho
-\mathcal O\Bigl((H^2|\mathcal S|\sqrt{|\mathcal A|}+d\sqrt n)\sqrt{t\ln(2nH|\mathcal S||\mathcal A|t/\delta)}\Bigr)
\\
&\ge
\mathcal O\Bigl((H^2|\mathcal S|\sqrt{|\mathcal A|}+d\sqrt n)\sqrt{t\ln(2nH|\mathcal S||\mathcal A|t/\delta)}\Bigr),
\end{aligned}
\]
where the $\mathcal O(\cdot)$ terms in the last two lines denote the stopping threshold at episode $t$.
The second inequality holds since the constant hidden in the stopping threshold is chosen sufficiently large, and the last inequality follows from the choice of $t$.
Therefore, the inequality in the definition of $t_s$ holds at episode $t$, which gives $t_s\le t<T$.
If $t\ge2$, the minimality of $t$ gives
\[
\sqrt{t-1}\,\varrho
\le
\mathcal O\Bigl((H^2|\mathcal S|\sqrt{|\mathcal A|}+d\sqrt n)\sqrt{\ln(2nH|\mathcal S||\mathcal A|t/\delta)}\Bigr).
\]
Solving this inequality for $t$ yields $t=\widetilde{\mathcal O}((H^4|\mathcal S|^2|\mathcal A|+nd^2)/\varrho^2)$, which also holds trivially when $t=1$.
Since $t_s\le t$, this completes the proof.
\end{proof}

\begin{lemma}
\label{lem:us_delta_hat}
Suppose that $T$ satisfies the condition in Section~\ref{sec:cstrl-us}.
On $\mathcal E_P(\delta/3)\cap\mathcal E_M(\delta/3)$, the estimate $\widehat{\varrho}$ satisfies
\begin{equation}
\widehat{\varrho}
\le
\min_{j\in[n]}
\left\{
\tau_j
-
\bigl(C^\top
\theta^{P,\widehat\pi^\dagger}\bigr)_j
\right\}
\le
\varrho.
\label{eq:us_delta_sandwich_appendix}
\end{equation}
Moreover, on the same event, 
we have $\widehat{\varrho}\ge \varrho/2>0$ and
$C^\top\theta^{P,\widehat\pi^\dagger}
\le
\widehat{\boldsymbol\gamma}
<
\boldsymbol\tau$.
\end{lemma}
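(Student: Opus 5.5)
We work on $\mathcal E_P(\delta/3)\cap\mathcal E_M(\delta/3)$. First we write down the costs of the mixture policy $\widehat\pi^\dagger$. By construction, $\widehat\pi^\dagger$ selects $\pi^t$ with probability $1/t_s$, so $\theta^{P,\widehat\pi^\dagger}=\frac1{t_s}\sum_{t=1}^{t_s}\theta^{P,\pi^t}$. Hence, for every $j$,
\[
\tau_j-\bigl(C^\top\theta^{P,\widehat\pi^\dagger}\bigr)_j=\frac1{t_s}\sum_{t=1}^{t_s}\bigl[\tau_j-(C^\top\theta^{P,\pi^t})_j\bigr].
\]

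\emph{Upper inequality in~\eqref{eq:us_delta_sandwich_appendix}.} The mixture occupancy $\theta^{P,\widehat\pi^\dagger}$ lies in $\Theta(P)$. By the correspondence in Appendix~\ref{app:safe_value_polytope}, it is induced by a Markov policy in $\Pi$. Its slack $\min_j\{\tau_j-(C^\top\theta)_j\}$ is therefore at most $\varrho$, by the definition of $\varrho$ as a maximum over $\Pi$.

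\emph{Lower inequality in~\eqref{eq:us_delta_sandwich_appendix}.} We apply the coordinatewise concentration~\eqref{eq:us_coordinate_concentration} of Lemma~\ref{lem:us_martingale} with $\delta/3$ at $t=t_s$. This gives $\sum_t[\tau_j-(C^\top\theta^{P,\pi^t})_j]\ge\sum_t\varrho_j^t-d\sqrt{2t_s\ln(24nt_s^2/\delta)}$ for all $j$. Taking the minimum over $j$ and dividing by $t_s$ shows that the true slack of $\widehat\pi^\dagger$ is at least $\frac1{t_s}\min_j\sum_t\varrho_j^t-\frac{d}{t_s}\sqrt{2t_s\ln(24nt_s^2/\delta)}$. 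This quantity exceeds $\widehat\varrho$, since $\widehat\varrho$ subtracts twice this deviation. One subtlety is that $t_s$ is random. This is harmless because~\eqref{eq:us_coordinate_concentration} holds simultaneously for all $t\ge1$.

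\emph{Bound $\widehat\varrho\ge\varrho/2$.} This is the main step. We use the stopping rule: at $t_s$, $\min_j\sum_{t\le t_s}\varrho_j^t$ is at least the stopping threshold $B(t_s):=c\,(H^2|\mathcal S|\sqrt{|\mathcal A|}+d\sqrt n)\sqrt{t_s\ln(2nH|\mathcal S||\mathcal A|t_s/\delta)}$, with $c$ a large constant. We also use Lemma~\ref{lem:us_empirical_surplus}, which gives $\min_j\sum_t\varrho_j^t\ge t_s\varrho-B'(t_s)$, where $B'$ is of the same order as $B$ and is dominated by $B/4$ once $c$ is large. The extra subtracted term $2d\sqrt{2t_s\ln(24nt_s^2/\delta)}$ in $\widehat\varrho$ is likewise at most $B(t_s)/4$ for large $c$. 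We now combine these by convex averaging:
\[
t_s\widehat\varrho\ge \tfrac12\bigl(t_s\varrho-B'\bigr)+\tfrac12 B-\tfrac14 B\ge \tfrac12 t_s\varrho,
\]
using $B'\le B/2$. Checking that the constant $c$ hidden in the stopping threshold is large enough to absorb both the regret terms and the martingale term is the delicate point, since the threshold was left as an unspecified $\mathcal O(\cdot)$. I would fix $c$ explicitly as, say, four times the sum of the constants in the regret bounds plus $4\sqrt{2}$, adjusting the logarithms with $\ln(24nt^2/\delta)\le\mathcal O(\ln(2nH|\mathcal S||\mathcal A|t/\delta))$.

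\emph{Final claim.} Since $\varrho>0$ by Assumption~\ref{ass:slater}, we get $\widehat\varrho>0$. Then $\widehat\gamma_j=\tau_j-\widehat\varrho<\tau_j$. The lower inequality of~\eqref{eq:us_delta_sandwich_appendix} gives $(C^\top\theta^{P,\widehat\pi^\dagger})_j\le\tau_j-\widehat\varrho=\widehat\gamma_j$, which completes the claim.
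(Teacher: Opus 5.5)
Your proposal is correct and follows essentially the same route as the paper. For the sandwich, you use the coordinatewise concentration of Lemma~\ref{lem:us_martingale} at $t=t_s$ together with the mixture identity $\theta^{P,\widehat\pi^\dagger}=\frac1{t_s}\sum_{t\le t_s}\theta^{P,\pi^t}$. For $\widehat\varrho\ge\varrho/2$, you combine the stopping condition at $t_s$ with Lemma~\ref{lem:us_empirical_surplus} and a large threshold constant; your ``convex averaging'' is just a rearrangement of the paper's chain $t_s\varrho\le 2\min_j\sum_k\varrho_j^k-4d\sqrt{2t_s\ln(24nt_s^2/\delta)}=2t_s\widehat\varrho$.

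One point should be made explicit: why the stopping inequality actually holds at $t_s$. The paper gets this from Lemma~\ref{lem:us_stop_time}, which gives $t_s<T$, so the rule was triggered rather than the horizon being exhausted. That is exactly where the lemma's hypothesis on $T$ enters, and your write-up never invokes it.
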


\begin{proof}
By definition, we have
$\widehat{\varrho}
=
\frac{1}{t_s}
\min_{j\in[n]}
\sum_{t=1}^{t_s}\varrho_j^t
-
\frac{2d}{t_s}
\sqrt{
2t_s\ln\frac{24nt_s^2}{\delta}}.$
Applying~\eqref{eq:us_coordinate_concentration} on $\mathcal E_M(\delta/3)$ to each \(j\in[n]\), we obtain
$\sum_{t=1}^{t_s}
(\sum_{h=1}^H c_{j,h}^t-\tau_j)
\ge
\sum_{t=1}^{t_s}
[(C^\top\theta^{P,\pi^t})_j-\tau_j]
-
d\sqrt{
2t_s\ln\frac{24nt_s^2}{\delta}}.$
Combining this inequality with the definition of $\widehat\varrho$ gives
$\widehat{\varrho}
\le
\min_{j\in[n]}
\{\tau_j
-
\frac{1}{t_s}
\sum_{t=1}^{t_s}
(C^\top\theta^{P,\pi^t})_j\}.$
Since $\widehat{\pi}^\dagger$ selects $\pi^t$ with probability $1/t_s$ for each $t\in[t_s]$, its occupancy measure satisfies
$\theta^{P,\widehat\pi^\dagger}
=
\frac{1}{t_s}
\sum_{t=1}^{t_s}
\theta^{P,\pi^t}.$
Therefore, we obtain
$\widehat{\varrho}
\le
\min_{j\in[n]}
\{
\tau_j-
(C^\top
\theta^{P,\widehat\pi^\dagger})_j\}.$
The second inequality in~\eqref{eq:us_delta_sandwich_appendix} follows immediately from the definition of $\varrho$.
It remains to prove that $\widehat{\varrho}\ge\varrho/2$.
By Lemma~\ref{lem:us_stop_time}, we have $t_s<T$, and thus the inequality in the definition of $t_s$ holds at episode $t_s$.
Lemma~\ref{lem:us_empirical_surplus} and the definition of $\widehat\varrho$ therefore give
\[
\begin{aligned}
t_s\varrho
&\le
\min_{j\in[n]}\sum_{k=1}^{t_s}\varrho_j^k
+\mathcal O\Bigl(H^2|\mathcal S|\sqrt{|\mathcal A|t_s\ln(2H|\mathcal S||\mathcal A|t_s/\delta)}+d\sqrt{nt_s}\Bigr)
+d\sqrt{2t_s\ln\frac{24nt_s^2}{\delta}}
\\
&\le
2\min_{j\in[n]}\sum_{k=1}^{t_s}\varrho_j^k
-4d\sqrt{2t_s\ln\frac{24nt_s^2}{\delta}}
=
2t_s\widehat\varrho.
\end{aligned}
\]
The second inequality follows from the stopping condition at $t_s$, since the constant hidden in the stopping threshold is chosen sufficiently large.
Dividing both sides by $2t_s$ yields $\widehat{\varrho}\ge\varrho/2$.
Since $\widehat\gamma_j=\tau_j-\widehat\varrho$ for every \(j\in[n]\), the first inequality in~\eqref{eq:us_delta_sandwich_appendix} gives
$C^\top\theta^{P,\widehat\pi^\dagger}
\le
\widehat{\boldsymbol\gamma}$.
Moreover, $\widehat{\varrho}>0$ implies $\widehat{\boldsymbol\gamma}<\boldsymbol\tau$.
This completes the proof.
\end{proof}

\subsection{Proof of Theorem~\ref{thm:us_subopt}}
\begin{proof}
Consider the event
$\mathcal E
:=
\mathcal E_P(\delta/3)
\cap
\mathcal E_M(\delta/3)$.
Since $\mathtt{Alg}^P$ is run with $\delta/3$ in place of $\delta$, a union bound gives $\Pr(\mathcal E)\ge1-2\delta/3$.
By the condition on $T$ in Section~\ref{sec:cstrl-us} and Lemma~\ref{lem:us_stop_time}, we have $T>t_s$ on $\mathcal E$.
Thus, $T-t_s\ge1$.
On the same event, Lemma~\ref{lem:us_delta_hat} yields \(\widehat{\varrho}\ge\varrho/2>0\) and
$C^\top\theta^{P,\widehat\pi^\dagger}
\le
\widehat{\boldsymbol\gamma}
<
\boldsymbol\tau.$
Moreover, since \(\widehat\gamma_j=\tau_j-\widehat{\varrho}\) for every \(j\in[n]\), we have
$\widehat\phi_{\mathrm{safe}}
=
\max_{j\in[n]}
\frac{d^2}{(\tau_j-\widehat\gamma_j)^2}
=
\frac{d^2}{\widehat{\varrho}^2}
\le
\frac{4d^2}{\varrho^2}.$
Fix any estimation-phase history on which \(\mathcal E\) holds.
Then the pair \((\widehat\pi^\dagger,\widehat{\boldsymbol\gamma})\) is fixed.
After episode \(t_s\), Algorithm~\ref{alg:cstrl_us} runs \texttt{CSTRL} for \(T-t_s\) episodes using this pair.
The proof of Theorem~\ref{thm:subopt} therefore applies with $\delta$ replaced by $\delta/3$, $T$ by $T-t_s$, and $\phi_{\mathrm{safe}}$ by $\widehat{\phi}_{\mathrm{safe}}$.
This gives the bound in Theorem~\ref{thm:us_subopt} with conditional probability at least $1-\delta/3$.
Combining this conditional guarantee with the estimation event gives an overall probability of at least $1-\delta$.
This proves Theorem~\ref{thm:us_subopt}.
\end{proof}

\subsection{Proof of Theorem~\ref{thm:us_safety}}

\begin{proof}
Let
$\mathcal E
:=
\mathcal E_P(\delta/3)
\cap
\mathcal E_M(\delta/3)
\cap
\mathcal E_{R,C,T}(\delta/3)$.
Since $\mathtt{Alg}^P$ is run with $\delta/3$ in place of $\delta$, the first two events each hold with probability at least $1-\delta/3$.
For any fixed history of the estimation phase, the subsequent \texttt{CSTRL} run uses $\delta/3$ in place of $\delta$, and thus $\mathcal E_{R,C,T}(\delta/3)$ holds with conditional probability at least $1-\delta/3$.
A union bound therefore gives $\Pr(\mathcal E)\ge1-\delta$.
It thus suffices to prove the result on $\mathcal E$.

By the condition on $T$ in Section~\ref{sec:cstrl-us} and Lemma~\ref{lem:us_stop_time}, we have $T>t_s$ on $\mathcal E$.
On the same event, Lemma~\ref{lem:us_delta_hat} gives
\(C^\top\theta^{P,\widehat\pi^\dagger}
\le
\widehat{\boldsymbol\gamma}
<
\boldsymbol\tau\).
We now bound the cumulative violation over all episodes.
During the estimation phase, the policy executed in each episode may be unsafe.
Since every constraint value is at most \(d\) and \(\tau_j\ge 0\) for every \(j\in[n]\), we have \(\mathrm{Violation}(\pi^t)\le d\).
Therefore, we have \(\sum_{t=1}^{t_s}\mathrm{Violation}(\pi^t) \le d t_s\).

After episode \(t_s\), the algorithm runs \texttt{CSTRL} for the remaining $T-t_s$ episodes with \(\widehat\pi^\dagger\) and \(\widehat{\boldsymbol\gamma}\).
Given the estimation-phase history, the pair \((\widehat\pi^\dagger,\widehat{\boldsymbol\gamma})\) is fixed.
On \(\mathcal E\), the argument in the proof of Theorem~\ref{thm:safety} therefore applies with \(\widehat{\boldsymbol\gamma}\) as a conservative upper bound on the costs of \(\widehat\pi^\dagger\).
Consequently, every policy executed in \texttt{CSTRL} is safe, which gives $\sum_{t=t_s+1}^{T}\mathrm{Violation}(\pi^t)=0$.
Combining the preceding bounds gives
$\sum_{t=1}^{T}\mathrm{Violation}(\pi^t)
=
\sum_{t=1}^{t_s}\mathrm{Violation}(\pi^t)
\le
d t_s.$
Moreover, Lemma~\ref{lem:us_stop_time} shows that $t_s=\widetilde{\mathcal O}((H^4|\mathcal S|^2|\mathcal A|+nd^2)/\varrho^2)$ on $\mathcal E$.
This upper bound is independent of $T$.

Finally, we have \(\theta^{P,\pi^t}\in\Theta_{\mathrm{safe}}\) for every \(t\in[t_s+1,T]\).
Since \(\Theta_{\mathrm{safe}}\) is convex, the $\bar\theta$ returned by \texttt{CSTRL} also belongs to \(\Theta_{\mathrm{safe}}\).
Let \(\widehat\pi=\pi^{\bar\theta}\).
By the correspondence between policies and occupancy measures established in Appendix~\ref{app:safe_value_polytope}, we have \(\theta^{P,\widehat\pi}=\bar\theta\).
Therefore, we obtain \(\mathrm{Violation}(\widehat\pi)=0\).
This completes the proof.
\end{proof}

\subsection{Proof of Theorem~\ref{thm:pfcstrl-us-subopt}}

\begin{proof}
Let
$\mathcal E
:=
\mathcal E_P(\delta/3)
\cap
\mathcal E_M(\delta/3)$.
Since $\mathtt{Alg}^P$ is run with $\delta/3$ in place of $\delta$, a union bound gives $\Pr(\mathcal E)\ge1-2\delta/3$.
By the condition on $T$ in Section~\ref{sec:cstrl-us} and Lemma~\ref{lem:us_stop_time}, we have $T>t_s$ on $\mathcal E$.
On the same event, Lemma~\ref{lem:us_delta_hat} gives \(\widehat{\varrho}\ge\varrho/2>0\) and
\(C^\top\theta^{P,\widehat\pi^\dagger}
\le
\widehat{\boldsymbol\gamma}
<
\boldsymbol\tau\).
Fix any estimation-phase history on which \(\mathcal E\) holds.
Then the pair \((\widehat\pi^\dagger,\widehat{\boldsymbol\gamma})\) is fixed.
The algorithm runs Algorithms~\ref{alg:PFCSTRL} and~\ref{alg:pfcstrl-plan-rf} for \(T-t_s\) exploration episodes and \(K\) planning iterations, respectively, using this pair.
The proof of Theorem~\ref{thm:pfcstrl_subopt} therefore applies with $\delta$ replaced by $\delta/3$, $T$ by $T-t_s$, and $\phi_{\mathrm{safe}}$ by $\widehat{\phi}_{\mathrm{safe}}=d^2/\widehat{\varrho}^2\le 4d^2/\varrho^2$.
This gives the bound in Theorem~\ref{thm:pfcstrl-us-subopt} with conditional probability at least \(1-\delta/3\).
Combining this conditional guarantee with the estimation event gives an overall probability of at least \(1-\delta\).
This proves Theorem~\ref{thm:pfcstrl-us-subopt}.
\end{proof}

\subsection{Proof of Theorem~\ref{thm:pfcstrl-us-safety}}

\begin{proof}
Let
$\mathcal E
:=
\mathcal E_P(\delta/3)
\cap
\mathcal E_M(\delta/3)
\cap
\mathcal E_{C,T,\mathrm{fix}}(\delta/3)$.
Since $\mathtt{Alg}^P$ is run with $\delta/3$ in place of $\delta$, the first two events each hold with probability at least $1-\delta/3$.
For any fixed history of the estimation phase for which both events hold, \texttt{PF-CSTRL} is run with $\delta/3$ in place of $\delta$, and thus $\mathcal E_{C,T,\mathrm{fix}}(\delta/3)$ holds with conditional probability at least $1-\delta/3$ for the subsequent $T-t_s$ exploration episodes and $K$ planning iterations.
A union bound therefore gives $\Pr(\mathcal E)\ge1-\delta$.
Lemma~\ref{lem:us_stop_time} gives $t_s=\widetilde{\mathcal O}((H^4|\mathcal S|^2|\mathcal A|+nd^2)/\varrho^2)$ on $\mathcal E$.
Each policy executed during the estimation phase has violation at most \(d\), and therefore
$\sum_{t=1}^{t_s}
\mathrm{Violation}(\pi^t)
\le
d t_s.$
Lemma~\ref{lem:us_delta_hat} further gives
$C^\top\theta^{P,\widehat\pi^\dagger}
\le
\widehat{\boldsymbol\gamma}
<
\boldsymbol\tau.$
Given the estimation-phase history, the pair \((\widehat\pi^\dagger,\widehat{\boldsymbol\gamma})\) is fixed.
On \(\mathcal E\), the proof of Theorem~\ref{thm:safety-pfcstrl} applies with this pair.
Consequently, every policy executed during preference-free exploration and every main policy generated during planning is safe.
In particular, we have
$\sum_{t=t_s+1}^{T}
\mathrm{Violation}(\pi^t)
=
0.$
Combining the violation bounds for the estimation phase and the exploration stage gives
$\sum_{t=1}^{T}
\mathrm{Violation}(\pi^t)
\le
d t_s.$
The returned policy \(\widehat{\pi}\) also satisfies $\mathrm{Violation}(\widehat{\pi}) = 0.$
This proves Theorem~\ref{thm:pfcstrl-us-safety}.
\end{proof}

\section{Other Supporting Lemmas}
\label{app:aux}

\begin{lemma}[\citet{tseng2008accelerated,wei2020online}]
\label{lem:kl-three-point}  
Let $f: \Delta_m \mapsto \mathbb{R}$ be a convex function, and let $D_{\mathrm{KL}}(\boldsymbol{x}\Vert\boldsymbol{y}):=\sum_{i=1}^m x_i \log\frac{x_i}{y_i}$ for $\boldsymbol{x}\in\Delta_m$ and $\boldsymbol{y}\in\Delta_m^o$.
For any $\beta>0$, $\boldsymbol{z} \in \Delta_m$, and $\boldsymbol{y} \in \Delta_m^o$, if
$\boldsymbol{x}^{\mathrm{opt}}
\in
\argmin_{\boldsymbol{x}\in\Delta_m}
\{f(\boldsymbol{x}) + \beta D_{\mathrm{KL}}(\boldsymbol{x}\Vert\boldsymbol y)\},$
then we have $f(\boldsymbol{x}^{\mathrm{opt}}) + \beta D_{\mathrm{KL}} (\boldsymbol{x}^{\mathrm{opt}}\Vert \boldsymbol{y}) \leq f(\boldsymbol{z}) + \beta D_{\mathrm{KL}} (\boldsymbol{z}\Vert \boldsymbol{y}) - \beta D_{\mathrm{KL}} (\boldsymbol{z}\Vert \boldsymbol{x}^{\mathrm{opt}}).$
\end{lemma}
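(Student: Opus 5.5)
This is the standard three-point inequality for Bregman-proximal steps, specialized to the KL divergence on the simplex. The plan is to combine the first-order optimality condition of $\boldsymbol x^{\mathrm{opt}}$ with the three-point identity for KL.

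\textbf{Step 1: interiority.} First I would check that $\boldsymbol x^{\mathrm{opt}}\in\Delta_m^o$. Since $\boldsymbol y\in\Delta_m^o$, the map $\boldsymbol x\mapsto D_{\mathrm{KL}}(\boldsymbol x\Vert\boldsymbol y)$ has partial derivative $\log(x_i/y_i)+1\to-\infty$ as $x_i\to0^+$. Because $f$ is convex and finite on $\Delta_m$, it has bounded one-sided directional derivatives toward interior points. Hence moving mass toward a zero coordinate strictly decreases the objective, so the minimizer has all coordinates positive.

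\textbf{Step 2: optimality condition.} At an interior minimizer, convexity of $F(\boldsymbol x):=f(\boldsymbol x)+\beta D_{\mathrm{KL}}(\boldsymbol x\Vert\boldsymbol y)$ gives a subgradient $\boldsymbol g\in\partial f(\boldsymbol x^{\mathrm{opt}})$ such that
\[\langle \boldsymbol g+\beta(\log\boldsymbol x^{\mathrm{opt}}-\log\boldsymbol y),\,\boldsymbol z-\boldsymbol x^{\mathrm{opt}}\rangle\ge0\]
for all $\boldsymbol z\in\Delta_m$. The constant $+1$ in the gradient drops out because $\sum_i(z_i-x_i^{\mathrm{opt}})=0$. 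Obtaining this subgradient is a direct application of the subdifferential sum rule, which is valid here since the KL term is differentiable at the interior point.

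\textbf{Step 3: three-point identity.} Direct expansion of the logarithms gives
\[D_{\mathrm{KL}}(\boldsymbol z\Vert\boldsymbol y)-D_{\mathrm{KL}}(\boldsymbol z\Vert\boldsymbol x^{\mathrm{opt}})-D_{\mathrm{KL}}(\boldsymbol x^{\mathrm{opt}}\Vert\boldsymbol y)=\langle \log\boldsymbol x^{\mathrm{opt}}-\log\boldsymbol y,\,\boldsymbol z-\boldsymbol x^{\mathrm{opt}}\rangle.\]
Combining this identity with Step 2 and with the subgradient inequality $f(\boldsymbol z)\ge f(\boldsymbol x^{\mathrm{opt}})+\langle\boldsymbol g,\boldsymbol z-\boldsymbol x^{\mathrm{opt}}\rangle$ yields
\[f(\boldsymbol z)-f(\boldsymbol x^{\mathrm{opt}})\ge-\beta\bigl[D_{\mathrm{KL}}(\boldsymbol z\Vert\boldsymbol y)-D_{\mathrm{KL}}(\boldsymbol z\Vert\boldsymbol x^{\mathrm{opt}})-D_{\mathrm{KL}}(\boldsymbol x^{\mathrm{opt}}\Vert\boldsymbol y)\bigr].\]
Rearranging gives the claim.

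\textbf{Main obstacle.} The only delicate step is Step 1, namely interiority together with the existence of a subgradient of $f$ at $\boldsymbol x^{\mathrm{opt}}$ when $f$ is merely convex. If that is problematic, I would use a fallback that avoids subgradients. Set $\boldsymbol x_s:=(1-s)\boldsymbol x^{\mathrm{opt}}+s\boldsymbol z$. Optimality gives $F(\boldsymbol x_s)\ge F(\boldsymbol x^{\mathrm{opt}})$, and convexity gives $f(\boldsymbol x_s)\le(1-s)f(\boldsymbol x^{\mathrm{opt}})+sf(\boldsymbol z)$. Together these give
\[s\bigl(f(\boldsymbol z)-f(\boldsymbol x^{\mathrm{opt}})\bigr)+\beta\bigl(D_{\mathrm{KL}}(\boldsymbol x_s\Vert\boldsymbol y)-D_{\mathrm{KL}}(\boldsymbol x^{\mathrm{opt}}\Vert\boldsymbol y)\bigr)\ge0.\]
Dividing by $s$ and letting $s\to0^+$ uses only the directional derivative of KL at $\boldsymbol x^{\mathrm{opt}}$ in direction $\boldsymbol z-\boldsymbol x^{\mathrm{opt}}$. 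At an interior point this derivative equals $\langle\log\boldsymbol x^{\mathrm{opt}}-\log\boldsymbol y,\boldsymbol z-\boldsymbol x^{\mathrm{opt}}\rangle$. The same limit argument, applied with $\boldsymbol z$ at a vertex, establishes interiority.
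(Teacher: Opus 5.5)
Your proof is correct, and it is essentially the standard argument behind this result, which the paper does not prove but imports from \citet{tseng2008accelerated} and \citet{wei2020online}: first-order optimality at $\boldsymbol{x}^{\mathrm{opt}}$ combined with the KL three-point identity. Your interiority step is also right, since the directional derivative of $f$ toward a vertex or interior point is bounded above by $f(\boldsymbol z)-f(\boldsymbol{x}^{\mathrm{opt}})$ while that of the entropy is $-\infty$; this step is needed for $D_{\mathrm{KL}}(\boldsymbol z\Vert\boldsymbol{x}^{\mathrm{opt}})$ to be defined, and your subgradient-free fallback via $\boldsymbol x_s=(1-s)\boldsymbol{x}^{\mathrm{opt}}+s\boldsymbol z$ already handles it rigorously on its own.
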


\begin{lemma}[\citet{wei2020online,qiu2023gradient}]
\label{lem:mix} 
Suppose that $\boldsymbol{x}\in\Delta_m$ and $\widetilde x_i = (1-\alpha) x_i + \alpha/m$.
Then for all $\boldsymbol{y} \in\Delta_m$,
we have $D_{\mathrm{KL}}(\boldsymbol{y}\Vert\widetilde {\boldsymbol{x}}) - D_{\mathrm{KL}}(\boldsymbol{y}\Vert\boldsymbol{x}) \leq \alpha \log m,$
and $\|\widetilde {\boldsymbol{x}} - \boldsymbol{x}\|_1 \leq 2\alpha$.
\end{lemma}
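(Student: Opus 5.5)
The plan is to prove both claims directly from the definition of the mixture $\widetilde{\boldsymbol x}=(1-\alpha)\boldsymbol x+\alpha\,\boldsymbol u$, where $\boldsymbol u:=(1/m,\ldots,1/m)$ is the uniform vector on $\Delta_m$. No earlier result of the paper is needed.

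\emph{KL claim.} First I write out the difference. Since the $y_i\log y_i$ terms cancel,
\[
D_{\mathrm{KL}}(\boldsymbol y\Vert\widetilde{\boldsymbol x})-D_{\mathrm{KL}}(\boldsymbol y\Vert\boldsymbol x)=\sum_{i:\,y_i>0}y_i\bigl(\log x_i-\log\widetilde x_i\bigr).
\]
The only subtle point is degenerate coordinates. If some $i$ has $y_i>0$ and $x_i=0$, then $D_{\mathrm{KL}}(\boldsymbol y\Vert\boldsymbol x)=+\infty$ while $D_{\mathrm{KL}}(\boldsymbol y\Vert\widetilde{\boldsymbol x})<\infty$ because $\widetilde x_i\ge\alpha/m>0$ (for $\alpha>0$), so the claim holds trivially. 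I therefore assume $x_i>0$ whenever $y_i>0$.

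The key step is concavity of the logarithm applied to the convex combination. It gives
\[
\log\widetilde x_i\ge(1-\alpha)\log x_i+\alpha\log(1/m).
\]
Substituting this into the sum yields
\[
\sum_i y_i\bigl(\log x_i-\log\widetilde x_i\bigr)\le\alpha\sum_i y_i\log x_i+\alpha\log m.
\]
Since $x_i\le1$, every term $\log x_i$ is at most $0$, and $y_i\ge0$. The first term is therefore nonpositive, which gives the bound $\alpha\log m$.

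I expect this concavity step to be the only real idea. A naive pointwise bound $\widetilde x_i\ge(1-\alpha)x_i$ only gives $-\log(1-\alpha)$, which is the wrong form. Using the uniform component through concavity is what produces the $\log m$ factor.

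\emph{$\ell_1$ claim.} Here $\widetilde{\boldsymbol x}-\boldsymbol x=\alpha(\boldsymbol u-\boldsymbol x)$. The triangle inequality then gives
\[
\|\widetilde{\boldsymbol x}-\boldsymbol x\|_1\le\alpha\bigl(\|\boldsymbol u\|_1+\|\boldsymbol x\|_1\bigr)=2\alpha,
\]
since both $\boldsymbol u$ and $\boldsymbol x$ lie in the simplex.
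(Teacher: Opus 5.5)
Your proof is correct and complete. Coordinatewise concavity of $\log$ is exactly convexity of $D_{\mathrm{KL}}(\boldsymbol y\Vert\cdot)$ in its second argument, so your intermediate bound $\alpha\bigl(\log m+\sum_i y_i\log x_i\bigr)$ equals $\alpha\bigl(D_{\mathrm{KL}}(\boldsymbol y\Vert\boldsymbol u)-D_{\mathrm{KL}}(\boldsymbol y\Vert\boldsymbol x)\bigr)$. This is the standard mixing argument behind the cited result; the paper does not reprove the lemma and only imports it from prior work. Your $\ell_1$ bound is also fine, and the implicit assumption $\alpha\in[0,1]$ holds for the paper's choice $\alpha_t=1/t^2$.
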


\newpage
\bibliography{main}

\end{document}